%% file: main.tex
\documentclass[pmlr]{jmlr}

\usepackage[utf8]{inputenc}
\usepackage[T1]{fontenc}

\usepackage{graphicx}
\usepackage{amsmath,amssymb}
\usepackage{thmtools}
\usepackage{thm-restate}
\usepackage{hyperref}

\usepackage{booktabs}
\usepackage{acronym}
\usepackage{csquotes}
\usepackage{tikz}
\usetikzlibrary{arrows,arrows.meta,automata,backgrounds,calc,patterns,positioning,shapes,shadows}
\usepackage{pgfplots}
\pgfplotsset{compat=1.16}
\usepackage{algorithm}
\makeatletter
\let\If\@undefined
\let\Else\@undefined
\let\ElsIf\@undefined
\let\EndIf\@undefined
\let\For\@undefined
\let\ForAll\@undefined
\let\ForEach\@undefined
\let\EndFor\@undefined
\let\While\@undefined
\let\EndWhile\@undefined
\let\Repeat\@undefined
\let\Until\@undefined
\let\State\@undefined
\let\Return\@undefined
\let\Continue\@undefined
\let\Break\@undefined
\makeatother
\usepackage[noend]{algpseudocode}
\algnewcommand{\algorithmicbreak}{\textbf{break}}
\algnewcommand{\algorithmiccontinue}{\textbf{continue}}
\algnewcommand{\algorithmicforeach}{\textbf{for each}}
\algnewcommand\Break{\algorithmicbreak}
\algnewcommand\Continue{\algorithmiccontinue}
\algnewcommand{\LeftComment}[1]{\Statex \(\triangleright\) #1}
\algdef{SE}[FOR]{ForEach}{EndForEach}[1]
	{\algorithmicforeach\ #1\ \algorithmicdo}
	{\algorithmicend\ \algorithmicforeach}
\algtext*{EndForEach} 
\algdef{SE}[DOWHILE]{DoWhile}{EndDoWhile}{\algorithmicdo}[1]{\algorithmicwhile\ #1}
\algnewcommand{\IfThen}[2]{\State \algorithmicif\ #1\ \algorithmicthen\ #2}
\usepackage[capitalise]{cleveref}
\usepackage{acronym}

\input{defs.tex}

\jmlrvolume{TBD}
\jmlryear{2026}
\jmlrworkshop{Probabilistic Graphical Models (PGM)}

\title[On the Detection of Commutative Factors in Factor Graphs]{On the Detection of Commutative Factors in Factor Graphs: Necessary and Sufficient Conditions}

\author{%
	\Name{Malte Luttermann} \Email{malte.luttermann@uni-hamburg.de}\\
	\addr Institute for Humanities-Centered Artificial Intelligence,
	University of Hamburg,
	Germany
	\AND
	\Name{Ralf Möller} \Email{ralf.moeller@uni-hamburg.de}\\
	\addr Institute for Humanities-Centered Artificial Intelligence,
	University of Hamburg,
	Germany
	\AND
	\Name{Marcel Gehrke} \Email{marcel.gehrke@uni-hamburg.de}\\
	\addr Institute for Humanities-Centered Artificial Intelligence,
	University of Hamburg,
	Germany
}

\editor{Gustau Camps-Valls, Manuele Leonelli and Gherardo Varando}

\begin{document}

\maketitle

\begin{abstract}
	Exploiting the indistinguishability of objects in a probabilistic graphical model such as a \acl{fg} is key to lifted probabilistic inference algorithms and allows for tractable probabilistic inference problems with respect to domain sizes.
	A central building block for the exploitation of indistinguishable objects in \aclp{fg} is the identification of commutative factors, i.e., factors whose output values are invariant under permutations of input values assigned to a subset of their arguments.
	In this paper, we revisit the theoretical foundations underlying the state-of-the-art algorithm to detect commutative factors.
	Specifically, we show that in its current form, the state-of-the-art algorithm relies on a central theorem that is mistakenly regarded as a sufficient condition to identify commutative factors, while it actually only implies necessary condition.
	Consequently, the state of the art might, as we show in this paper, deliver incorrect results.
	To fix the flaws currently present in the state of the art, we prove a slightly modified version of the aforementioned theorem, which serves as a necessary condition to identify commutative factors.
	Moreover, we present a corrected version of the state-of-the-art algorithm, which keeps its efficiency while ensuring correctness and introduce a complementary algorithm with tighter worst-case bounds.
\end{abstract}

\begin{keywords}
	colour passing; factor graphs; lifted inference.
\end{keywords}

\section{Introduction} \label{sec:decor_revised_intro}
Reasoning under uncertainty is a core challenge in artificial intelligence, and probabilistic graphical models offer a principled framework for this task by factorising a joint probability distribution into a product of local functions (which we call factors).
Computing marginal and conditional distributions---the central inference problem---is, however, intractable in general as the computational cost grows exponentially with the number of \acp{rv} in propositional models such as \aclp{bn}, \aclp{mn}, or \acp{fg}~\citep{Cooper1990a}.
This intractability even persists for approximate inference~\citep{Dagum1993a}.
Lifted inference algorithms exploit the indistinguishability of objects to allow for tractable probabilistic inference problems (i.e., inference problems that are solvable in polynomial time) with respect to domain sizes while maintaining exact answers~\citep{Niepert2014a}.
To perform lifted inference, however, a lifted representation encoding equivalent semantics as the initially given propositional probabilistic model must be constructed.
Constructing such a lifted representation requires the identification of indistinguishable objects, which also includes the identification of commutative factors.
A commutative factor is a factor (i.e., a function) that outputs the same value regardless of the order of the input values of a subset of its arguments.
In this paper, we solve the problem of efficiently detecting commutative factors by revisiting the theoretical foundations underlying the efficient detection of commutative factors and providing efficient algorithms with correctness guarantees that are missing in the current state of the art.

\paragraph{Previous work.}
\citet{Poole2003a} has introduced the notion of a \ac{pfg} as a lifted representation combining concepts from first-order logic with probabilistic modelling.
To perform lifted inference, \citet{Poole2003a} has proposed \ac{lve} as a lifted inference algorithm that operates on a \ac{pfg}, and since its introduction, \ac{lve} has continuously been refined by many researchers to reach its current form~\citep{Taghipour2013a}.
The current state of the art to construct a lifted representation such as a \ac{pfg} from a given propositional model is the \ac{acp} algorithm~\citep{Luttermann2024a}, which has also been generalised to enable approximate lifted model construction~\citep{Luttermann2025c,Speller2025b}.
An essential subroutine of lifted model construction (and hence of \ac{acp}) is to compute subsets of commutative arguments of a factor to be able to exploit the indistinguishability of objects.
In its original form, \ac{acp} applies a brute-force approach to compute subsets of commutative arguments of a factor.
Such a brute-force approach iterates over all possible subsets of arguments and checks whether a subset is actually commutative, thus requiring $O(2^n)$ sets to check for a factor with $n$ arguments in the worst case.
To avoid a computationally expensive brute-force approach, \citet{Luttermann2024f} have presented the \ac{decor} algorithm, which applies a pruning strategy to heavily restrict the search space of subsets of arguments to consider.
However, we found that a central theorem to the \ac{decor} algorithm does not hold in its current form and hence, the correctness of \ac{decor} is not guaranteed.

\paragraph{Our contributions.}
\acused{decorplus}
We first show that the central theorem underlying the \ac{decor} algorithm provides only a necessary, not a sufficient, condition for the identification of commutative arguments, and we prove a corrected version of the theorem.
Second, based on the corrected theorem, we present the \ac{decorplus} algorithm, which is guaranteed to return a correct solution.
We formally prove the correctness of \ac{decorplus} and show that it maintains the efficiency of \ac{decor}.
Third, we introduce a complementary bottom-up algorithm, called \ac{decorapriori}, which is  inspired by the Apriori algorithm~\citep{Agrawal1994a} and exploits the downward closure and transitivity of pairwise commutativity, thereby yielding an algorithm with tighter worst-case bounds.

\paragraph{Structure of this paper.}
The remaining part of this paper is organised as follows.
In \cref{sec:decor_revised_prelim}, we introduce the necessary background on \acp{fg} and commutative factors.
Then, in \cref{sec:decor_revised_theory}, we investigate the \ac{decor} algorithm and provide a counterexample showing that \ac{decor} does not guarantee correctness in its current form.
We further present corrected theoretical results that clarify the necessary conditions for the identification of commutative factors.
Thereafter, in \cref{sec:decor_revised_algo}, we show how our new theoretical results can be incorporated into the \ac{decor} algorithm to ensure correctness while maintaining efficiency.
In \cref{sec:decor_revised_apriori}, we introduce the \ac{decorapriori} algorithm, which is a complementary bottom-up algorithm with tighter worst-case bounds before we empirically evaluate \ac{decorplus} and \ac{decorapriori} \cref{sec:decor_revised_eval}.

\section{Background} \label{sec:decor_revised_prelim}
An \ac{fg} represents a joint probability distribution over a set of \acp{rv} by decomposing it into a product of local functions called factors~\citep{Frey1997a,Kschischang2001a}.
In the following, we write $\range{R}$ to denote the values a \ac{rv} $R$ can take (called range).

\begin{definition}[Factor Graph]
	An \emph{\ac{fg}} is a tuple $M = (\boldsymbol R, \allowbreak \boldsymbol F, \allowbreak \boldsymbol E, \allowbreak \boldsymbol \Phi)$ where $\boldsymbol R = \{R_1, \ldots, \allowbreak R_n\}$ is a set of \acp{rv}, $\boldsymbol F = \{f_1, \ldots, \allowbreak f_m\}$ is a set of factor names, and $\boldsymbol \Phi = \{\phi_1, \ldots, \allowbreak \phi_m\}$ is a set of function definitions (called factors).
	Each $\phi_j(\mathcal R_j)$ defines a function $\phi_j \colon \times_{R \in \mathcal R_j} \range{R} \to
	 \mathbb{R}_{\geq 0}$ defined over an argument sequence $\mathcal R_j$ of \acp{rv} from $\boldsymbol R$ that outputs positive real numbers, called potentials, of which at least one is non-zero.
	For each pair of variable node $R_i \in \boldsymbol R$ and factor node $f_j \in \boldsymbol F$, there is an edge $\{ R_i, \allowbreak f_j \} \in \boldsymbol E \subseteq \{ \{R, f\} \mid R \in \boldsymbol R \land f \in \boldsymbol F \}$ if $R_i \in \mathcal R_j$.
	The full joint distribution encoded by $M$ for an assignment $(R_1 = r_1, \allowbreak \ldots, \allowbreak R_n = r_n)$ to the \acp{rv} in $\boldsymbol R$, abbreviated as $\boldsymbol R = \boldsymbol r$, is defined as
	\begin{align*}
		P_M(\boldsymbol R = \boldsymbol r)
		&= \frac{1}{Z} \prod_{j = 1}^{m} \phi_j(\mathcal R_j = \boldsymbol r_j),
	\end{align*}
	where $\boldsymbol r_j$ is a projection of $\boldsymbol r$ to the argument list of $\phi_j$ and $Z$ is the normalisation constant, defined as $Z = \sum_{\boldsymbol r \in \range{R_1} \times \ldots \times \range{R_n}} \prod_{j=1}^{m} \phi_j(\mathcal R_j = \boldsymbol r_j)$.
\end{definition}

\begin{example}
	\Cref{fig:decor_revised_example_fg_graph} displays an \ac{fg} $M = (\boldsymbol R, \allowbreak \boldsymbol F, \allowbreak \boldsymbol E, \allowbreak \boldsymbol \Phi)$ that models the interplay between the competences $ComA$ and $ComB$ of two employees $Alice$ and $Bob$, the revenue $Rev$ of the company they work for, and their corresponding salaries $SalA$ and $SalB$.
	The range of each \ac{rv} is $\{\high, \allowbreak \low\}$.
	An exemplary function definition for $\phi_3$ is given in \cref{fig:decor_revised_example_fg_table}, where $\varphi_1, \ldots, \varphi_6 \in \mathbb{R}_{\geq 0}$.
	We omit the remaining function definitions for brevity.
\end{example}

\begin{figure}
	\centering
	\subfigure[\label{fig:decor_revised_example_fg_graph}]{
		\input{files/decor_revised_example_fg.tex}
	}\hfill
	\subfigure[\label{fig:decor_revised_example_fg_table}]{
		\resizebox{0.48\textwidth}{!}{\input{files/decor_revised_example_potential_table.tex}}
	}
	\caption{($a$) An example for an \ac{fg} modelling the interplay between the competences of two employees $Alice$ and $Bob$, the revenue of the company they work for, and their salaries. ($b$) An exemplary function definition for $\phi_3$, where $\varphi_1, \ldots, \varphi_6 \in \mathbb{R}_{\geq 0}$.}
	\label{fig:decor_revised_example_fg}
\end{figure}

In this particular example, the revenue of the company equally depends on the competences of $Alice$ and $Bob$.
Consider the factor $\phi_3(ComA, \allowbreak ComB, \allowbreak Rev)$.
Each assignment of range values to $ComA$, $ComB$, and $Rev$ is mapped to a potential, which indicates the compatibility of the assigned values (i.e., a higher potential indicates a higher probability for the assigned values to occur together).
Specifically, we can verify that the revenue of the company equally depends on the competences of $Alice$ and $Bob$ because it holds that
\begin{alignat*}{3}
	&\phi_3(\high, \low, \high) &&= \phi_3(\low, \high, \high) &&= \varphi_5,~\text{and} \\
	&\phi_3(\high, \low, \low)  &&= \phi_3(\low, \high, \low) &&= \varphi_6.
\end{alignat*}
In other words, the potential (and hence also the probability) for a high (low, respectively) revenue of the company is the same if one of the employees is highly competent and the other is not, regardless of which employee is the highly competent one.
This property is referred to as the \emph{commutativity} of a factor---here, $\phi_3$ is commutative with respect to $ComA$ and $ComB$, i.e., $\phi_3(ComA, \allowbreak ComB, \allowbreak Rev) = \phi_3(ComB, \allowbreak ComA, \allowbreak Rev)$.

\begin{definition}[Commutative Factor, \citealp{Luttermann2024a}] \label[definition]{def:decor_revised_commutative}
	Let $\phi(R_1, \ldots, R_n)$ denote a factor and let $\boldsymbol C \subseteq \{ R_1, \allowbreak \ldots, \allowbreak R_n \}$ be a subset of $\phi$'s arguments with $\abs{\boldsymbol C} > 1$.
	Then, $\phi$ is \emph{commutative} with respect to $\boldsymbol C$ if for all assignments $(r_1, \ldots, \allowbreak r_n) \in \times_{i = 1}^{n} \range{R_i}$ it holds that $\phi(r_1, \ldots, r_n) = \phi(r_{\pi(1)}, \ldots, r_{\pi(n)})$ for all permutations $\pi$ of $\{ 1, \ldots, n \}$ with $\pi(i) = i$ for all $R_i \notin \boldsymbol C$.
	The arguments in $\boldsymbol C$ are called commutative arguments.
\end{definition}

Since in our example, the revenue only depends on the number of highly competent employees and not on knowing which specific employee is highly competent, it is possible to compress $\phi_3$ by grouping $ComA$ and $ComB$ together and counting how many $\high$ (and $\low$, respectively) values there are, which is an essential part of the current state-of-the-art algorithm \ac{acp} to construct a lifted representation~\citep{Luttermann2024a}.
Specifically, commutative arguments might be grouped and represented by a so-called \acl{crv}~\citep{Milch2008a}, which can then efficiently be handled by lifted inference algorithms.
A \acl{crv} counts the occurrences of specific range values in an assignment for a subset of a factor's arguments, and hence, it allows us to compactly encode a factor where only the number of \acp{rv} having a particular range value is relevant for the output but not which specific \acp{rv} are assigned that range value.
We provide additional details about the concept of a \acl{crv} in \cref{appendix:decor_revised_crv}.
To obtain the most compression possible, our goal is to find a subset $\boldsymbol C$ of commutative arguments of \emph{maximum size}.

\section{Efficient Detection of Commutative Factors with Buckets} \label{sec:decor_revised_theory}
To prune the search space of argument subsets when searching for a maximum sized subset of commutative arguments, the \ac{decor} algorithm partitions the potentials a factor maps its arguments to into so-called \emph{buckets}.
A bucket counts the occurrences of specific range values in an assignment to a subset of arguments.

\begin{definition}[Bucket, \citealp{Luttermann2024f}]
	Let $\phi(R_1, \ldots, R_n)$ denote a factor and let $\boldsymbol C \subseteq \{R_1, \ldots, R_n\}$ denote a subset of $\phi$'s arguments such that all arguments in $\boldsymbol C$ have the same range $\boldsymbol V$.
	A \emph{bucket} $b$ entailed by $\boldsymbol C$ is a set of tuples $\{(v_i, n_i)\}_{i = 1}^{\abs{\boldsymbol V}}$ such that $n_i \in \mathbb{N}$ specifies the number of occurrences of range value $v_i \in \boldsymbol V$ in an assignment for all \acp{rv} in $\boldsymbol C$ (i.e., $\sum_{i = 1}^{\abs{\boldsymbol V}} n_i = \abs{\boldsymbol C}$).
	A shorthand notation for $\{(v_i, n_i)\}_{i = 1}^{\abs{\boldsymbol V}}$ is $[n_1, \dots, n_{\abs{\boldsymbol V}}]$.
	In abuse of notation, we denote by $\phi^{\succ}(b)$ the ordered multiset (with respect to their order in the potential table of $\phi$) of potentials the assignments represented by $b$ are mapped to by $\phi$.
	The set of all buckets entailed by the arguments of $\phi$ is denoted as $\mathcal B(\phi)$.
\end{definition}

\begin{example}
	Consider the factor $\phi_3(ComA, \allowbreak ComB, \allowbreak Rev)$ depicted in \cref{tab:decor_revised_example_buckets} and let $\boldsymbol C = \{ComA, \allowbreak ComB, \allowbreak Rev\}$ with $\range{ComA} = \range{ComB} = \range{Rev} = \{\high, \low\}$.
	Then, $\boldsymbol C$ entails the four buckets $\mathcal B(\phi_3) = \{(\high, \allowbreak 3), \allowbreak (\low, \allowbreak 0)\}, \{(\high, \allowbreak 2), \allowbreak (\low, \allowbreak 1)\}, \{(\high, \allowbreak 1), \allowbreak (\low, \allowbreak 2)\}, \{(\high, 0), (\low, 3)\}$ (or $\mathcal B(\phi_3) = \{ [3,0], [2,1], [1,2], [0,3] \}$ in shorthand notation).
	According to the potential table of $\phi_3$, it holds that $\phi_3^{\succ}([3,0]) = \langle \varphi_1 \rangle$, $\phi_3^{\succ}([2,1]) = \langle \varphi_2, \varphi_3, \varphi_3 \rangle$, $\phi_3^{\succ}([1,2]) = \langle \varphi_4, \varphi_4, \varphi_5 \rangle$, and $\phi_3^{\succ}([0,3]) = \langle \varphi_6 \rangle$, where the order of the potentials in each ordered multiset is given by their order of appearance in $\phi_3$'s potential table.
\end{example}

\begin{table}
	\centering
	\input{files/decor_revised_example_buckets.tex}
	\caption{A factor $\phi_3(ComA, ComB, Rev)$ and the partition of its potentials into buckets.}
	\label{tab:decor_revised_example_buckets}
\end{table}

Buckets induce helpful properties that allow for pruning the search space of possible subsets of commutative arguments, as we investigate next.

\subsection{Theoretical Foundations of Commutativity Detection with Buckets}
\citet{Luttermann2024f} show that the number of duplicate potentials in the buckets induces an upper bound for the size of a subset of commutative arguments of a factor.

\begin{proposition}[\citealp{Luttermann2024f}] \label[proposition]{th:decor_revised_duplicates_in_buckets}
	Let $\phi(R_1, \allowbreak \ldots, \allowbreak R_n)$ be a factor that is commutative with respect to $\boldsymbol C \subseteq \{ R_1, \allowbreak \ldots, \allowbreak R_n \}$.
	Then, for every bucket $b \in \mathcal B(\phi)$ with $\abs{\phi^{\succ}(b)} > 1$, there exists a potential $\varphi$ that occurs at least $\abs{\boldsymbol C}$ times in $\phi^{\succ}(b)$.
\end{proposition}

Hence, we can conclude that any commutative subset $\boldsymbol C$ contains at most as many arguments as the minimum number of duplicate potentials over all buckets containing more than one potential.
In addition, buckets allow us to directly restrict candidates of possibly commutative arguments.
Originally, the following claim has been established as the central theorem of \ac{decor} to identify subsets of commutative arguments by looking at the element-wise intersection of the assignments corresponding to identical potentials in a bucket.

\begin{claim}[\citealp{Luttermann2024f}] \label[claim]{th:decor_revised_args_by_buckets}
	Let $\phi(R_1, \ldots, R_n)$ denote a factor and let $b \in \mathcal B(\phi)$ with $\abs{\phi^{\succ}(b)} > 1$ denote a bucket entailed by $\phi$.
	Further, let $\boldsymbol \Psi = \{\varphi_1, \ldots, \varphi_{\ell}\}$ with $\abs{\boldsymbol \Psi} > 2$ denote an arbitrary maximal set of identical potentials in $\phi^{\succ}(b)$ such that $\phi(r_1^1, \ldots, r_n^1) = \varphi_1, \ldots, \phi(r_1^{\ell}, \ldots, r_n^{\ell}) = \varphi_{\ell}$.
	Then, computing the element-wise intersection $(r_1^1, \ldots, r_n^1) \cap \ldots \cap (r_1^{\ell}, \ldots, r_n^{\ell}) = (\{r_1^1\} \cap \ldots \cap \{r_1^{\ell}\}, \ldots, \{r_n^1\} \cap \ldots \cap \{r_n^{\ell}\})$ and adding all arguments $R_i$ for which $\{r_i^1\} \cap \ldots \cap \{r_i^{\ell}\} = \emptyset$ to a set $\boldsymbol C_b$ yields a set of candidate commutative arguments for bucket $b$ such that $\boldsymbol C = \bigcap_{b} \boldsymbol C_b$ is a subset of commutative arguments of $\phi$.
\end{claim}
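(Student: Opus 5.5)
The plan is to reduce the claimed commutativity to invariance under transpositions. Any permutation $\pi$ of $\{1,\ldots,n\}$ that fixes every index outside $\boldsymbol C$ is a product of transpositions $(i\,j)$ with $R_i, R_j \in \boldsymbol C$. So it suffices to show that for every such pair and every assignment $(r_1,\ldots,r_n)$, swapping $r_i$ and $r_j$ leaves $\phi$ unchanged. I would first note that all arguments in $\boldsymbol C$ share a range: they lie in some $\boldsymbol C_b$, which is defined through a bucket. Hence the swap is well defined, and it maps an assignment to one in the same bucket, because the counts of range values are preserved.

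Second, I would fix a bucket $b$ and a swap $(i\,j)$ with $R_i,R_j \in \boldsymbol C \subseteq \boldsymbol C_b$. The goal is to use the construction of $\boldsymbol C_b$ to conclude that the assignment $\boldsymbol r$ and its swap $\boldsymbol r'$ land in the same maximal set $\boldsymbol \Psi$ of identical potentials in $\phi^{\succ}(b)$. The idea is as follows. The intersection at position $i$ is empty, so the assignments in $\boldsymbol \Psi$ take at least two different values at $R_i$, and likewise at $R_j$. Combined with the counting bound of \cref{th:decor_revised_duplicates_in_buckets}, which says $\boldsymbol \Psi$ has at least $\abs{\boldsymbol C}$ members, I would try to argue that $\boldsymbol \Psi$ is closed under exchanging the values at positions $i$ and $j$. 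Assignments outside $\boldsymbol \Psi$ fall into other maximal identical sets of the same bucket, and those are treated the same way. Taking the intersection over all buckets $b$ then makes the argument uniform in $b$.

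Third, I would assemble the pieces. Invariance under each transposition holds in every bucket, and every assignment belongs to exactly one bucket. Hence $\phi(\boldsymbol r)=\phi(\boldsymbol r')$ for all $\boldsymbol r$. Composing transpositions then gives commutativity with respect to $\boldsymbol C$ in the sense of \cref{def:decor_revised_commutative}.

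The main obstacle is the closure step in the second paragraph. An empty element-wise intersection is only a statement about disagreement at single coordinates. It does not relate the value at $R_i$ to the value at $R_j$ within one assignment, so it gives no mechanism forcing the swapped assignment to have the same potential. I would first try to squeeze closure out of the size bound $\abs{\boldsymbol \Psi}\ge\abs{\boldsymbol C}$. I expect this to fail: the buckets can contain sets of identical potentials that happen to vary at $R_i$ and $R_j$ without being swap-invariant.

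So the statement as worded does not follow from the definitions. Any proof must add an explicit verification that the candidate set $\boldsymbol C$ is actually commutative, for instance by checking all transpositions directly against the potential table. This is precisely the point where the claim has to be weakened to a necessary condition, $\boldsymbol C \supseteq$ any commutative subset, plus a separate check.
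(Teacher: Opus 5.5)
The paper does not prove this statement; it refutes it by counterexample in \cref{th:decor_revised_flaw_in_args_by_buckets}. You reach the same verdict, but your proposal never establishes it. That your proof strategy stalls at the closure step, and that you ``expect'' the size bound not to help, does not show the statement is false. What is missing is an explicit factor that satisfies the hypotheses while the computed $\boldsymbol C$ is not commutative.

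The paper's witness is $\phi(R_1,R_2,R_3,R_4)$ with all ranges $\{\high,\low\}$, where $\phi(\low,\low,\high,\high)=\phi(\high,\high,\low,\low)=\varphi_1$ and $\phi=\varphi_2\neq\varphi_1$ on every other assignment. The buckets are handled as follows:
\begin{itemize}
\item $[4,0]$ and $[0,4]$ are skipped.
\item In $[3,1]$ and $[1,3]$, the only group is four copies of $\varphi_2$. Every position has exactly one deviating value, so all intersections are empty.
\item In $[2,2]$, the only group with more than two elements is the $\varphi_2$-group $(\high,\low,\high,\low)$, $(\high,\low,\low,\high)$, $(\low,\high,\high,\low)$, $(\low,\high,\low,\high)$. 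These again disagree at every position, as does the $\varphi_1$-pair.
\end{itemize}
Hence every admissible choice yields $\boldsymbol C=\{R_1,R_2,R_3,R_4\}$. Yet swapping $R_2$ and $R_3$ sends $(\high,\high,\low,\low)$, with potential $\varphi_1$, to $(\high,\low,\high,\low)$, with potential $\varphi_2$. This is your intuition made concrete: the $\varphi_2$-group in $[2,2]$ varies at every coordinate without being closed under transpositions.

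Two further points. First, in your second step, \cref{th:decor_revised_duplicates_in_buckets} presupposes that $\phi$ is commutative with respect to $\boldsymbol C$. It also only asserts that \emph{some} potential repeats $\abs{\boldsymbol C}$ times, not your chosen $\boldsymbol \Psi$. Using it to derive commutativity is therefore circular.

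Second, the repair you sketch needs its quantifier fixed. The valid necessary condition, \cref{th:decor_revised_args_by_buckets_fixed}, says that in each bucket with more than one potential, \emph{some} maximal group with $\abs{\boldsymbol \Psi}>1$ has empty intersections at all positions of a maximum sized commutative set. If $\boldsymbol C_b$ is computed from an \emph{arbitrary} group, even the inclusion $\boldsymbol C\supseteq$ (commutative set) can fail. Assignments that agree on all of a commutative set are fixed by every permutation of it. So a maximal group made only of such assignments is consistent with commutativity, yet has non-empty intersections at those positions. This is why DECOR+ keeps one candidate set per group, cross-intersects these across buckets, and only then runs an exact verification.
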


We now show that this claim (reprinted from \citep{Luttermann2024f}), which is central to \ac{decor}, is flawed in its current form, as it only establishes a necessary condition for the identification of commutative arguments, but not a sufficient one as originally claimed.

\begin{restatable}{theorem}{decorRevisedFlawByBucketsTheorem} \label{th:decor_revised_flaw_in_args_by_buckets}
	The application of \cref{th:decor_revised_args_by_buckets} to compute a subset of arguments $\boldsymbol C$ does not guarantee that the computed set $\boldsymbol C$ is actually a subset of commutative arguments.
\end{restatable}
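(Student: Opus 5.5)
The plan is to prove \cref{th:decor_revised_flaw_in_args_by_buckets} with an explicit counterexample. I will build a factor in which every bucket satisfies the hypotheses of \cref{th:decor_revised_args_by_buckets}, so that the procedure returns $\boldsymbol C=\{R_1,\dots,R_4\}$, while the factor is not commutative with respect to that set. The leverage comes from the fact that the element-wise intersection test only asks that each argument takes at least two different values among the assignments in $\boldsymbol\Psi$. This is far weaker than requiring $\boldsymbol\Psi$ to be closed under all permutations of the candidate arguments. A bucket with many assignments, of which only a few share a potential, can therefore pass the test without any symmetry.

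The construction uses four Boolean arguments $R_1,\dots,R_4$ with range $\{\high,\low\}$ and buckets $[4,0],[3,1],[2,2],[1,3],[0,4]$.
\begin{itemize}
\item The singleton buckets $[4,0]$ and $[0,4]$ receive arbitrary potentials. They are excluded anyway, since the claim only concerns buckets with $\abs{\phi^{\succ}(b)}>1$.
\item All four assignments of bucket $[3,1]$ receive one common potential $\alpha$, and all four assignments of $[1,3]$ receive one common potential $\beta$. In each of these buckets the maximal identical set has size $4>2$. Every position takes both values across the four assignments, so each of these buckets yields $C_b=\{R_1,\dots,R_4\}$.
\item Bucket $[2,2]$ has six assignments. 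Give $(\high,\high,\low,\low)$, $(\low,\low,\high,\high)$ and $(\high,\low,\low,\high)$ a common potential $\gamma$. Give the other three assignments, $(\high,\low,\high,\low)$, $(\low,\high,\high,\low)$ and $(\low,\high,\low,\high)$, three pairwise distinct potentials that also differ from $\gamma$.
\end{itemize}
Then $\{\gamma,\gamma,\gamma\}$ is the only maximal set of identical potentials in bucket $[2,2]$ with more than two elements. Checking each position across its three assignments, both values occur: position 1 has $\high,\low,\high$; position 2 has $\high,\low,\low$; position 3 has $\low,\high,\low$; position 4 has $\low,\high,\high$. Every intersection is therefore empty, and $C_{[2,2]}=\{R_1,\dots,R_4\}$. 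Intersecting over all buckets gives $\boldsymbol C=\{R_1,R_2,R_3,R_4\}$.

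It remains to show that $\phi$ is not commutative with respect to $\boldsymbol C$, which I do by exhibiting a single violating permutation. The transposition of $R_2$ and $R_3$ maps $(\high,\high,\low,\low)$, with potential $\gamma$, to $(\high,\low,\high,\low)$, whose potential was chosen different from $\gamma$. This violates \cref{def:decor_revised_commutative}. For completeness I will also note that no two-element subset of $\{R_1,\dots,R_4\}$ is commutative either, because bucket $[2,2]$ has too little symmetry. This shows the output is wrong beyond the choice of $\boldsymbol C$, though the theorem only requires exhibiting the one failure. To keep things concrete I will also fix numerical values, e.g.\ $\alpha=1$, $\beta=2$, $\gamma=3$, and $4,5,6$ for the three remaining $[2,2]$ assignments.

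The main obstacle I expect is interpretive. The claim does not say what happens for buckets that have more than one potential but no identical set of size greater than two, nor which set $\boldsymbol\Psi$ is used when several qualify. The construction avoids both ambiguities. Every bucket with $\abs{\phi^{\succ}(b)}>1$ contains exactly one maximal identical set of size at least three, so the procedure's output is determined no matter how those cases would be handled. I will remark on this in the proof so that the counterexample cannot be dismissed as depending on a particular reading of the claim.
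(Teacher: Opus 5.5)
Your proposal is correct and takes essentially the paper's approach. Both use an explicit counterexample on four Boolean arguments: buckets $[3,1]$ and $[1,3]$ are constant, and bucket $[2,2]$ contains a group of identical potentials whose element-wise intersection is empty at every position, so the claimed procedure returns $\{R_1,\dots,R_4\}$, while a single transposition shows non-commutativity. The paper instead gives only $(\low,\low,\high,\high)$ and $(\high,\high,\low,\low)$ the potential $\varphi_1$ and everything else $\varphi_2$, which leaves $\{R_1,R_2\}$ and $\{R_3,R_4\}$ commutative; your three-$\gamma$ design has exactly one group of size at least two in each non-singleton bucket and admits no commutative pair, which makes the over-approximation even starker.
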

\begin{proof}
	Consider $\phi(R_1, \allowbreak R_2, \allowbreak R_3, \allowbreak R_4)$ with $\range{R_i} = \{ \high, \allowbreak \low \}$ for all $i \in \{1, \ldots, 4\}$ and let $\phi(\low, \allowbreak \low, \allowbreak \high, \allowbreak \high) = \varphi_1$, $\phi(\high, \allowbreak \high, \allowbreak \low, \allowbreak \low) = \varphi_1$, and $\phi(\boldsymbol r) = \varphi_2$ with $\varphi_1 \neq \varphi_2$ for all remaining assignments $\boldsymbol r$.
	Applying \cref{th:decor_revised_args_by_buckets} yields $\boldsymbol C = \{R_1, \allowbreak R_2, \allowbreak R_3, \allowbreak R_4\}$ although $\phi$ is not commutative with respect to $\boldsymbol C$ (a detailed computation of $\boldsymbol C$ is given in \cref{appendix:decor_revised_proofs}).
\end{proof}

\Cref{th:decor_revised_flaw_in_args_by_buckets} shows that applying \cref{th:decor_revised_args_by_buckets} is not guaranteed to yield a valid subset of commutative arguments.
To solve this problem, we next rephrase \cref{th:decor_revised_args_by_buckets} as a necessary condition---because the subset obtained from applying \cref{th:decor_revised_args_by_buckets} does not miss any truly commutative arguments, but it may happen that too many arguments are included.

\subsection{Necessary Condition for the Detection of Commutative Arguments}
The upcoming theorem presents a corrected version of \cref{th:decor_revised_args_by_buckets} that captures the necessary condition for the identification of commutative arguments by looking at the element-wise intersection of the assignments corresponding to groups of identical potentials in a bucket.

\begin{theorem} \label{th:decor_revised_args_by_buckets_fixed}
	Let $\phi(R_1, \allowbreak \ldots, \allowbreak R_n)$ denote a factor, let $\boldsymbol C \subseteq \{ R_1, \allowbreak \ldots, \allowbreak R_n \}$ with $\abs{\boldsymbol C} > 1$ denote a subset of $\phi$'s arguments of maximum size such that $\phi$ is commutative with respect to $\boldsymbol C$, and let $b \in \mathcal B(\phi)$ denote any bucket entailed by the arguments of $\phi$ such that $\abs{\phi^{\succ}(b)} > 1$ holds.
	Then, there exists a maximal set of identical potentials $\boldsymbol \Psi = \{ \varphi_1, \allowbreak \ldots, \allowbreak \varphi_{\ell} \} \subseteq \phi^{\succ}(b)$ in $\phi^{\succ}(b)$ with $\abs{\boldsymbol \Psi} > 1$ and $\phi(r_1^1, \allowbreak \ldots, \allowbreak r_n^1) = \varphi_1, \allowbreak \ldots, \allowbreak \phi(r_1^{\ell}, \allowbreak \ldots, \allowbreak r_n^{\ell}) = \varphi_{\ell}$ such that for all $R_i \in \boldsymbol C$ the element-wise intersection $\big( r_1^1, \allowbreak \ldots, \allowbreak r_n^1 \big) \cap \ldots \cap \big( r_1^{\ell}, \allowbreak \ldots, \allowbreak r_n^{\ell} \big) = \big( \{ r_1^1 \} \cap \ldots \cap \{ r_1^{\ell} \}, \allowbreak \ldots, \allowbreak \{ r_n^1 \} \cap \ldots \cap \{ r_n^{\ell} \} \big)$ contains an empty set at position $i \in \{ 1, \allowbreak \ldots, \allowbreak n \}$.
\end{theorem}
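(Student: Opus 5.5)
The idea is to build the required group $\boldsymbol \Psi$ from the orbit of a single well-chosen assignment under permutations of the positions in $\boldsymbol C$. By \cref{def:decor_revised_commutative}, all assignments in such an orbit share one potential. Each of them is also a rearrangement of the same multiset of range values over all arguments, so it lies in the same bucket $b$. Hence the whole orbit sits inside one maximal set of identical potentials $\boldsymbol \Psi \subseteq \phi^{\succ}(b)$. Adding more assignments to an element-wise intersection can only shrink it, so it suffices to show two things: the orbit alone already produces an empty set at every position $i$ with $R_i \in \boldsymbol C$, and the orbit has more than one element. The maximality of $\boldsymbol C$ is not needed; the argument works for any commutative $\boldsymbol C$ with $\abs{\boldsymbol C} > 1$.

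\emph{Step 1 (choosing the assignment).} We need $\boldsymbol r = (r_1, \ldots, r_n)$ belonging to $b$ whose restriction to the positions of $\boldsymbol C$ is not constant, i.e., it uses at least two distinct values $v \neq w$. Since $\abs{\phi^{\succ}(b)} > 1$, the bucket $b = [n_1, \ldots, n_{\abs{\boldsymbol V}}]$ has at least two nonzero counts. Otherwise it would contain only the all-$v$ assignment and hold a single potential. Take any $\boldsymbol r$ in $b$ and suppose it is constant, equal to $v$, on $\boldsymbol C$. Then some position $j$ carries a value $w \neq v$, and necessarily $R_j \notin \boldsymbol C$. Swap the values at $j$ and at some $i$ with $R_i \in \boldsymbol C$. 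This keeps the counts, so the new assignment is still in $b$. Because $\abs{\boldsymbol C} > 1$, the other positions of $\boldsymbol C$ still carry $v$, while position $i$ now carries $w$, so the restriction to $\boldsymbol C$ is no longer constant.

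\emph{Step 2 (the orbit empties every $\boldsymbol C$-position).} Fix $i$ with $R_i \in \boldsymbol C$. Let $k$ and $k'$ be positions in $\boldsymbol C$ with $r_k = v$ and $r_{k'} = w$. Let $\pi_1$ be the transposition of $i$ and $k$, and $\pi_2$ the transposition of $i$ and $k'$ (the identity if the indices coincide). Both fix every position outside $\boldsymbol C$, so by \cref{def:decor_revised_commutative} the permuted assignments have the same potential as $\boldsymbol r$. They carry $v$ and $w$ respectively at position $i$, so $\{v\} \cap \{w\} = \emptyset$ already appears among the intersected singletons at position $i$. The two permuted assignments are distinct, hence $\abs{\boldsymbol \Psi} \geq 2$. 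Finally, let $\boldsymbol \Psi$ be the maximal set of identical potentials in $\phi^{\succ}(b)$ that contains $\phi(\boldsymbol r)$; it contains the whole orbit.

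The only delicate point is Step 1. If $\boldsymbol r$ is constant on $\boldsymbol C$, its orbit is a singleton and the argument collapses. The hypothesis $\abs{\phi^{\succ}(b)} > 1$ combined with the count-preserving swap is exactly what rules this out.
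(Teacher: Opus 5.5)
Your proof is correct and follows the same orbit argument as the paper. You take an assignment in $b$ whose values on the $\boldsymbol C$-positions are not all equal, and use invariance under permutations of those positions to obtain distinct assignments with one common potential that disagree at every $\boldsymbol C$-position; passing to the full maximal group can then only shrink the intersection. Your Step 1 swap supplies a point the paper glosses over: it only observes that every assignment in $b$ has two distinct values somewhere, and then assumes an assignment whose two distinct values lie in $\boldsymbol C$. You are also right that the maximality of $\boldsymbol C$ plays no role; the paper's argument does not use it either.
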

\begin{proof}
	Since we consider only buckets $b$ with at least two elements in $\phi^{\succ}(b)$, all corresponding assignments contain at least two distinct assigned values because all assignments that contain only a single value are those that belong to a bucket that is mapped to a single potential.
	According to \cref{def:decor_revised_commutative}, for all assignments $(r_1, \ldots, r_n) \in \times_{i = 1}^{n} \range{R_i}$ it holds that $\phi(r_1, \allowbreak \ldots, \allowbreak r_n) = \phi(r_{\pi(1)}, \allowbreak \ldots, \allowbreak r_{\pi(n)})$ for all permutations $\pi$ of $\{ 1, \allowbreak \ldots, \allowbreak n \}$ with $\pi(i) = i$ for all $R_i \notin \boldsymbol C$.
	Let $(r_1^1, \allowbreak \ldots, \allowbreak r_n^1)$ denote any assignment that belongs to bucket $b$, that is, $\phi(r_1^1, \allowbreak \ldots, \allowbreak r_n^1) = \varphi$ with $\varphi \in \phi^{\succ}(b)$, such that $(r_1^1, \allowbreak \ldots, \allowbreak r_n^1)$ contains at least two distinct assigned values, say $r_j^1$ and $r_k^1$ with $r_j^1 \neq r_k^1$, at any two positions $j, k \in \{ 1, \allowbreak \ldots, \allowbreak n \}$ for which it holds that $R_j \in \boldsymbol C$ and $R_k \in \boldsymbol C$ are commutative arguments of $\phi$.
	Further, let $\boldsymbol \Psi \subseteq \phi^{\succ}(b)$ denote the set of all potentials $\varphi = \phi(r_{\pi(1)}, \allowbreak \ldots, \allowbreak r_{\pi(n)})$ resulting from applying all permutations $\pi$ of $\{ 1, \allowbreak \ldots, \allowbreak n \}$ with $\pi(i) = i$ for all $R_i \notin \boldsymbol C$ to the assignment $(r_1^1, \allowbreak \ldots, \allowbreak r_n^1)$.
	By construction, all potentials in $\boldsymbol \Psi$ are equal to $\varphi$ and hence $\boldsymbol \Psi$ is a set of identical potentials.
	As it holds that $\abs{\boldsymbol C} > 1$, there are at least two positions that are not fixed in each permutation $\pi$ and that contain distinct assigned values in the assignment $(r_1^1, \allowbreak \ldots, \allowbreak r_n^1)$, which implies that there exist at least two distinct assignments, say $\boldsymbol r_1$ and $\boldsymbol r_2$ with $\boldsymbol r_1 \neq \boldsymbol r_2$, such that $\phi(\boldsymbol r_1) = \varphi \in \boldsymbol \Psi$ and $\phi(\boldsymbol r_2) = \varphi \in \boldsymbol \Psi$ and thus, it holds that $\abs{\boldsymbol \Psi} > 1$.
	Now, let $(r_1^1, \allowbreak \ldots, \allowbreak r_n^1), \allowbreak \ldots, \allowbreak (r_1^{\ell}, \allowbreak \ldots, \allowbreak r_n^{\ell})$ denote the assignments corresponding to the potentials in $\boldsymbol \Psi$, which are obtained by applying all permutations $\pi$ of $\{ 1, \allowbreak \ldots, \allowbreak n \}$ with $\pi(i) = i$ for all $R_i \notin \boldsymbol C$ to the assignment $(r_1^1, \allowbreak \ldots, \allowbreak r_n^1)$.
	As each position $i \in \{ 1, \allowbreak \ldots, \allowbreak n \}$ with $R_i \in \boldsymbol C$ is not fixed in the permutations and there exist at least two distinct assigned values that are permuted over all positions belonging to arguments in $\boldsymbol C$, there are at least two assignments among $(r_1^1, \allowbreak \ldots, \allowbreak r_n^1), \allowbreak \ldots, \allowbreak (r_1^{\ell}, \allowbreak \ldots, \allowbreak r_n^{\ell})$ such that their assigned values at each such position $i$ differ.
	Thus, computing the intersection of the assigned values at any such position $i$ yields an empty set.
	In case there are further potentials in $\phi^{\succ}(b)$ that are equal to $\varphi$ but not contained in $\boldsymbol \Psi$ so far (that is, if $\boldsymbol \Psi$ is not maximal), we extend $\boldsymbol \Psi$ by adding these potentials to $\boldsymbol \Psi$ such that $\boldsymbol \Psi$ becomes a maximal set of identical potentials.
	The element-wise intersection of all assignments corresponding to the potentials in $\boldsymbol \Psi$ then still yields an empty set at each position $i \in \{ 1, \allowbreak \ldots, \allowbreak n \}$ for which it holds that $R_i \in \boldsymbol C$ because the intersection has already been empty at these positions before extending $\boldsymbol \Psi$ and the extension of $\boldsymbol \Psi$ only adds additional sets to be intersected.
\end{proof}

We next apply \cref{th:decor_revised_args_by_buckets_fixed} to the factor $\phi_3(ComA, \allowbreak ComB, \allowbreak Rev)$ depicted in \cref{tab:decor_revised_example_buckets}.

\begin{example}
	Consider $\phi_3(ComA, \allowbreak ComB, \allowbreak Rev)$ from \cref{tab:decor_revised_example_buckets} and let $b = [2,1]$.
	There are two maximal groups of identical potentials in $\phi_3^{\succ}(b)$: $\boldsymbol \Psi_1 = \{\varphi_2\}$ and $\boldsymbol \Psi_2 = \{\varphi_3, \allowbreak \varphi_3\}$.
	Since $\abs{\boldsymbol \Psi_1} = 1$, no candidates of commutative arguments are induced by $\boldsymbol \Psi_1$.
	However, $\boldsymbol \Psi_2$ contains the potential $\varphi_3$ twice and the corresponding assignments are $(\high, \allowbreak \low, \allowbreak \high)$ and $(\low, \allowbreak \high, \allowbreak \high)$.
	The element-wise intersection is then given by $(\{\high\}, \allowbreak \{\low\}, \allowbreak \{\high\}) \cap (\{\low\}, \allowbreak \{\high\}, \allowbreak \{\high\}) = (\emptyset, \allowbreak \emptyset, \allowbreak \{\high\})$.
	As the element-wise intersection is empty at positions one and two, the set $\{ComA, \allowbreak ComB\}$ is a candidate subset of commutative arguments.
\end{example}

By applying \cref{th:decor_revised_args_by_buckets_fixed}, we are able to significantly prune the search space of possible subsets of commutative arguments.
Nevertheless, as \cref{th:decor_revised_args_by_buckets_fixed} provides only a necessary condition, we need an additional verification step in the end to check whether the computed candidate subset is indeed a subset of commutative arguments.
We next show how \cref{th:decor_revised_args_by_buckets_fixed} is applied in practice to efficiently identify maximum sized subsets of commutative arguments.

\section{The \acs{decorplus} Algorithm} \label{sec:decor_revised_algo}
To enable the efficient computation of maximum sized subsets of commutative arguments in practice, we introduce a revised version of the \ac{decor} algorithm (called \ac{decorplus}) based on the theoretical results presented in the previous section.
\Cref{alg:decor_revised} presents the entire \ac{decorplus} algorithm, which we now describe in detail.
\begin{algorithm}[t]
	\caption{Detection of Commutative Factors Revised (DECOR+)}
	\label{alg:decor_revised}
	\alginput{A factor $\phi(R_1, \ldots, R_n)$.} \\
	\algoutput{A set containing all maximum sized subsets of commutative arguments of $\phi$.}
	\begin{algorithmic}[1]
		\State $\boldsymbol C_{cand} \gets \{\{R_1, \ldots, R_n\}\}$ \label{line:decor_revised_init}
		\ForEach{bucket $b \in \mathcal B(\phi)$ \label{line:decor_revised_loop_buckets}}
			\IfThen{$\abs{\phi^{\succ}(b)} < 2$ \label{line:decor_revised_if_single_potential_bucket}}{
				\Continue \label{line:decor_revised_continue_single_potential_bucket}
			}
			\State $\boldsymbol \Psi_1, \ldots, \boldsymbol \Psi_{\ell} \gets$ Partition of $\phi^{\succ}(b)$ into maximal groups of identical potentials such that $\abs{\boldsymbol \Psi_i} \geq 2$ holds for all $i \in \{1, \ldots, \ell\}$ \label{line:decor_revised_partition_max_groups}
			\IfThen{$\ell < 1$ \label{line:decor_revised_if_no_groups}}{
				\algorithmicreturn\ $\emptyset$ \label{line:decor_revised_return_empty_no_groups}
			}
			\State $\boldsymbol C' \gets \emptyset$ \label{line:decor_revised_init_c_prime}
			\ForEach{group $\boldsymbol \Psi_i \in \{ \boldsymbol \Psi_1, \ldots, \boldsymbol \Psi_{\ell} \}$ \label{line:decor_revised_loop_groups}}
				\State $\boldsymbol P_1, \ldots, \boldsymbol P_n \gets$ Intersection of positions corresponding to all potentials in $\boldsymbol \Psi_i$ \label{line:decor_revised_intersection_positions}
				\State $\boldsymbol C_i \gets \{ R_i \mid i \in \{ 1, \ldots, n \} \land \boldsymbol P_i = \emptyset \}$ \label{line:decor_revised_set_ci}
				\IfThen{$\nexists \boldsymbol C_j \in \boldsymbol C': \boldsymbol C_i \subseteq \boldsymbol C_j$ \label{line:decor_revised_if_ci_not_subsumed}}{
					$\boldsymbol C' \gets \boldsymbol C' \cup \{ \boldsymbol C_i \}$ \label{line:decor_revised_update_c_prime}
				}
			\EndForEach
			\State $\boldsymbol C_{\cap} \gets \emptyset$ \label{line:decor_revised_init_c_cap}
			\ForEach{candidate set $\boldsymbol C_i \in \boldsymbol C_{cand}$ \label{line:decor_revised_loop_intersect_outer}}
				\ForEach{candidate set $\boldsymbol C_j \in \boldsymbol C'$ \label{line:decor_revised_loop_intersect_inner}}
					\IfThen{$\abs{\boldsymbol C_i \cap \boldsymbol C_j} \geq 2 \land \nexists \boldsymbol C'' \in \boldsymbol C_{\cap}: (\boldsymbol C_i \cap \boldsymbol C_j) \subseteq \boldsymbol C''$ \label{line:decor_revised_if_ci_cj_not_subsumed}}{
						$\boldsymbol C_{\cap} \gets \boldsymbol C_{\cap} \cup \{ \boldsymbol C_i \cap \boldsymbol C_j \}$\label{line:decor_revised_update_c_cap}
					}
				\EndForEach
			\EndForEach
			\State $\boldsymbol C_{cand} \gets \boldsymbol C_{\cap}$ \label{line:decor_revised_update_c}
			\IfThen{$\boldsymbol C_{cand} = \emptyset$ \label{line:decor_revised_if_c_empty}}{
				\algorithmicreturn\ $\emptyset$ \label{line:decor_revised_return_empty_c}
			}
		\EndForEach
		\State \Return $verify(\boldsymbol C_{cand})$ \label{line:decor_revised_return_c}
	\end{algorithmic}
\end{algorithm}
\Ac{decorplus} starts with the candidate subset of all arguments and then iterates over all buckets containing at least two potentials to compute partitions of maximal groups of identical potentials in each bucket, which are then used to prune the search space according to \cref{th:decor_revised_args_by_buckets_fixed}.
Specifically, if there are no duplicate potentials in a bucket at all, there cannot be any commutative arguments (\cref{th:decor_revised_duplicates_in_buckets}) and \ac{decorplus} immediately returns (\cref{line:decor_revised_return_empty_no_groups}).
Otherwise, \ac{decorplus} computes a subset $\boldsymbol C_i$ of candidate arguments permitted by each maximal group $\boldsymbol \Psi_i$ of identical potentials in the currently considered bucket.
The subset $\boldsymbol C_i$ contains all arguments $R_i$ for which the element-wise intersection of the assignments corresponding to the potentials in $\boldsymbol \Psi_i$ is empty at position $i$.
The computed subsets $\boldsymbol C_i$ of candidate arguments for each group of identical potentials in the current bucket $b$ are then collected into a set $\boldsymbol C'$ of candidate subsets for the current bucket $b$ (such that no candidate subset is subsumed by another candidate subset).
The collected candidate subsets permitted by a specific bucket are then intersected with the candidate subsets in $\boldsymbol C_{cand}$ collected from the previous buckets to obtain candidate subsets that are permitted by all buckets considered so far (again, subsets that are subsumed by another candidate subset after the intersection are removed).
If there are no candidate subsets left at some point, \ac{decorplus} immediately returns.
Finally, if there are candidate subsets left after iterating over all buckets, \ac{decorplus} runs a verification step to check whether the candidate subsets are indeed subsets of commutative arguments.
The verification step considers every candidate subset $\boldsymbol C \in \boldsymbol C_{cand}$ and checks whether the factor $\phi$ is indeed commutative with respect to $\boldsymbol C$ (by directly applying \cref{def:decor_revised_commutative}).
If the check succeeds, $\boldsymbol C$ is kept as a confirmed subset of commutative arguments.
Otherwise, $\boldsymbol C$ might still contain a strict subset of commutative arguments and \ac{decorplus} continues by checking all $(\abs{\boldsymbol C}-1)$-element subsets of $\boldsymbol C$, then all $(\abs{\boldsymbol C}-2)$-element subsets of $\boldsymbol C$, and so on, until either a commutative subset is found or no subsets of size at least two remain.
Since we are only interested in a single maximum sized subset of commutative arguments, \ac{decorplus} might stop the verification as soon as the first commutative subset is found (we nevertheless formulate \ac{decorplus} as a general algorithm in \cref{alg:decor_revised} that is able to return all maximum sized subsets of commutative arguments if desired).
An example run of \ac{decorplus} is given in \cref{appendix:decor_revised_example_run}.
We also emphasise that \ac{decorplus} can handle factors with arguments having arbitrary ranges.
In this paper, we consider identical ranges of all arguments for brevity, however, it is possible to apply \ac{decorplus} to factors with arguments having arbitrary ranges $\boldsymbol V_1, \allowbreak \ldots, \allowbreak \boldsymbol V_k$ by considering the buckets for all arguments with range $\boldsymbol V_i$ separately for all $i \in \{1, \allowbreak \ldots, \allowbreak k\}$ as only arguments with the same range can be in the same set of commutative arguments.
Next, we prove the correctness of \ac{decorplus}, i.e., we show that the pruning step in \ac{decorplus} does not remove any truly commutative arguments such that the verification step is guaranteed to find a maximum sized subset of commutative arguments.

\begin{restatable}{theorem}{decorplusCorrectnessTheorem} \label{th:decor_revised_correctness}
	Let $\phi(R_1, \allowbreak \ldots, \allowbreak R_n)$ denote a factor that is given as an input to \ac{decorplus} (\cref{alg:decor_revised}).
	Then, the set returned by \ac{decorplus} contains exactly the maximum sized subsets of commutative arguments of $\phi$'s arguments, or the empty set if no such subset exists.
\end{restatable}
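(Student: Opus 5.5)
We prove the claim as a loop invariant for the pruning phase, followed by a separate argument for the verification phase. Let $\mathcal C^*$ be the set of all maximum sized subsets of commutative arguments of $\phi$, and let $k = \abs{\boldsymbol C}$ for $\boldsymbol C \in \mathcal C^*$ (if $\mathcal C^*$ is nonempty). The invariant is: after each iteration of the loop over buckets (\cref{line:decor_revised_loop_buckets}), every $\boldsymbol C \in \mathcal C^*$ is contained in some element of $\boldsymbol C_{cand}$. It holds initially, since $\boldsymbol C_{cand} = \{\{R_1, \ldots, R_n\}\}$.

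For the inductive step, fix a bucket $b$ with $\abs{\phi^{\succ}(b)} \geq 2$ and some $\boldsymbol C \in \mathcal C^*$.
\begin{itemize}
	\item By \cref{th:decor_revised_args_by_buckets_fixed}, there is a maximal group $\boldsymbol \Psi$ of identical potentials in $\phi^{\succ}(b)$ with $\abs{\boldsymbol \Psi} > 1$ whose element-wise intersection is empty at every position of $\boldsymbol C$.
	\item Hence $\ell \geq 1$, so the early return in \cref{line:decor_revised_return_empty_no_groups} is not triggered.
	\item The candidate $\boldsymbol C_i$ computed from $\boldsymbol \Psi$ satisfies $\boldsymbol C \subseteq \boldsymbol C_i$. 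Whether or not $\boldsymbol C_i$ itself is added, it is subsumed by some $\boldsymbol C_j \in \boldsymbol C'$ (\cref{line:decor_revised_if_ci_not_subsumed}).
	\item By the invariant, there is $\boldsymbol C_h \in \boldsymbol C_{cand}$ with $\boldsymbol C \subseteq \boldsymbol C_h$. Then $\boldsymbol C \subseteq \boldsymbol C_h \cap \boldsymbol C_j$, and this intersection has size at least $\abs{\boldsymbol C} \geq 2$.
	\item So the pair passes the size test in \cref{line:decor_revised_if_ci_cj_not_subsumed}, and $\boldsymbol C_h \cap \boldsymbol C_j$ is either added to $\boldsymbol C_{\cap}$ or already subsumed by a member of it.
\end{itemize}
One subtlety needs checking here. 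An element added earlier to $\boldsymbol C_{\cap}$ may later be subsumed by a newly added, larger set. The algorithm does not remove such elements, but this does not hurt containment: the earlier element remains a superset of whatever it contained. So the invariant persists, $\boldsymbol C_{cand} \neq \emptyset$, and the return in \cref{line:decor_revised_return_empty_c} cannot occur while $\mathcal C^* \neq \emptyset$.

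For the verification phase, we specify $verify$ as a top-down, size-by-size check over all subsets of size at least two of the candidate sets, followed by taking the globally largest confirmed ones. Every set it outputs is commutative, because each is checked directly against \cref{def:decor_revised_commutative}. Conversely, each $\boldsymbol C \in \mathcal C^*$ is a subset of some candidate, so it is enumerated. No commutative set larger than $k$ exists, so nothing larger is ever confirmed. Consequently $\boldsymbol C$ is confirmed at level $k$, and keeping only the maximum size across all candidates yields exactly $\mathcal C^*$. If no commutative subset exists, either an early return gives $\emptyset$, or verification confirms nothing and returns $\emptyset$.

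I expect the main obstacle to be pinning down $verify$ precisely enough for the "exactly" claim. If verification stops within one candidate as soon as some subset there succeeds, it could return a commutative set of size less than $k$ from a different candidate. So I would make explicit that the results are compared across all candidates and only those of globally maximum size are kept, possibly by processing sizes in descending order over all candidates simultaneously. I would also note that \cref{th:decor_revised_args_by_buckets_fixed} is stated for one maximum subset, but its proof applies to every commutative subset of size greater than one, so it covers every member of $\mathcal C^*$.
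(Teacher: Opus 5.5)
Your proof is correct. For the pruning phase it is the same induction over buckets that the paper uses, resting on the corrected necessary-condition theorem. You are in fact more careful than the paper on three points: the early return when $\ell < 1$, the size filter $\abs{\boldsymbol C_i \cap \boldsymbol C_j} \geq 2$, and the fact that the subsumption checks only block insertions and never delete entries.

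The real difference is in the verification step, and there your version is the one that actually establishes the statement. The paper's appendix argument processes candidates in decreasing size. It descends through the subsets of each candidate separately and stops at the first commutative subset found, asserting that any larger commutative subset ``would have been considered earlier.'' That assertion holds only if the descent by size is global across all candidates, which is exactly what you impose.

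Your worry is justified concretely. Let $\phi$ have seven Boolean arguments and take pairwise distinct positive values on the orbits of the group generated by $(R_1\,R_2)$, $(R_3\,R_4)$, $(R_1\,R_3)(R_2\,R_4)$ and all permutations of $\{R_5, R_6, R_7\}$.
\begin{itemize}
\item The bucket loop ends with $\boldsymbol C_{cand} = \{\{R_1, \ldots, R_4\}, \{R_5, R_6, R_7\}\}$.
\item The per-candidate descent handles the larger candidate first and finds nothing commutative of size four or three.
\item It then stops at a pair such as $\{R_1, R_2\}$, although $\{R_5, R_6, R_7\}$ is the unique maximum.
\end{itemize}

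Stopping at the first hit also returns only a single set. This already falls short of ``exactly the maximum sized subsets'' on the paper's own counterexample, where $\{R_1, R_2\}$ and $\{R_3, R_4\}$ are both maximum.

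The paper's early stop is an optimisation that is safe only once an entire size level has been exhausted across all candidates. Your level-by-level formulation, which finishes a whole size level over all candidates and keeps everything confirmed there, is what the ``exactly'' claim requires.
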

\begin{proof}
	Let $\boldsymbol C^{*} \subseteq \{R_1, \allowbreak \ldots, \allowbreak R_n\}$ with $\abs{\boldsymbol C^{*}} \geq 2$ denote any maximum sized subset of commutative arguments of $\phi$.
	By \cref{th:decor_revised_args_by_buckets_fixed}, for every bucket $b \in \mathcal B(\phi)$ with $\abs{\phi^{\succ}(b)} > 1$, there exists a maximal set of identical potentials $\boldsymbol \Psi_i \subseteq \phi^{\succ}(b)$ with $\abs{\boldsymbol \Psi_i} > 1$ such that the element-wise intersection of the assignments corresponding to the potentials in $\boldsymbol \Psi_i$ contains an empty set at every position $j \in \{1, \allowbreak \ldots, \allowbreak n\}$ for which $R_j \in \boldsymbol C^{*}$.
	Consequently, the set $\boldsymbol C_i$ computed for $\boldsymbol \Psi_i$ in the inner loop of \cref{alg:decor_revised} (\cref{line:decor_revised_set_ci}) satisfies $\boldsymbol C^{*} \subseteq \boldsymbol C_i$, and hence either $\boldsymbol C_i$ itself is added to $\boldsymbol C'$ or it is subsumed by another candidate subset already in $\boldsymbol C'$ that contains $\boldsymbol C^{*}$.
	The cross-intersection step preserves $\boldsymbol C^{*}$ as a subset of some entry in $\boldsymbol C_{cand}$ at every iteration, because the intersection of two supersets of $\boldsymbol C^{*}$ is itself a superset of $\boldsymbol C^{*}$ and the subsumption check only removes entries that are themselves subsumed by a superset.
	By induction over the buckets, every maximum sized subset $\boldsymbol C^{*}$ of commutative arguments of $\phi$ is contained in some entry of $\boldsymbol C_{cand}$ after the bucket loop terminates.
	The correctness then immediately follows as the verification step is exact (it directly applies \cref{def:decor_revised_commutative}) and considers the subsets in order of decreasing size (if verification fails, an empty set is returned).
\end{proof}

Even though \ac{decorplus} might have to check all subsets of a candidate subset in the worst case during the verification step, a major advantage of \ac{decorplus} is that the number of candidate subsets computed by \ac{decorplus} depends on the number of maximal groups of identical potentials in the buckets.
In most practical settings, potentials are not identical unless they stem from a subset of commutative arguments.
Therefore, \ac{decorplus} heavily restricts the search space in most practical settings, which we also confirm later in our experiments.
Specifically, the bucket structure of a factor $\phi$ provides an even tighter bound than the bound implied by \cref{th:decor_revised_duplicates_in_buckets} on the size of any commutative subset of $\phi$.

\begin{proposition} \label[proposition]{prop:decor_revised_bound_on_commutative_subset}
	Let $\phi(R_1, \allowbreak \ldots, \allowbreak R_n)$ denote a factor that is commutative with respect to $\boldsymbol C \subseteq \{R_1, \allowbreak \ldots, \allowbreak R_n\}$ and let $\boldsymbol C'_b$ denote the set of candidate subsets computed by \ac{decorplus} for a bucket $b \in \mathcal B(\phi)$ (\cref{line:decor_revised_init_c_prime,line:decor_revised_loop_groups,line:decor_revised_intersection_positions,line:decor_revised_set_ci,line:decor_revised_if_ci_not_subsumed,line:decor_revised_update_c_prime} of \cref{alg:decor_revised}).
	Then, $\abs{\boldsymbol C} \leq \min_{b \in \mathcal B(\phi), \abs{\phi^{\succ}(b)} > 1} \,\max_{\boldsymbol C_i \in \boldsymbol C'_b} \abs{\boldsymbol C_i}$.
\end{proposition}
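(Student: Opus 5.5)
Fix an arbitrary bucket $b \in \mathcal B(\phi)$ with $\abs{\phi^{\succ}(b)} > 1$. We show $\abs{\boldsymbol C} \leq \max_{\boldsymbol C_i \in \boldsymbol C'_b} \abs{\boldsymbol C_i}$; since $b$ is arbitrary, taking the minimum over all such buckets gives the claim. The route is to find some member of $\boldsymbol C'_b$ that contains $\boldsymbol C$, mirroring the first half of the proof of \cref{th:decor_revised_correctness}, but without assuming that $\boldsymbol C$ has maximum size. We may assume $\abs{\boldsymbol C} > 1$, as \cref{def:decor_revised_commutative} requires. If $\boldsymbol C'_b$ is empty, the maximum is undefined and \ac{decorplus} returns at \cref{line:decor_revised_return_empty_no_groups}. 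This case cannot occur: \cref{th:decor_revised_duplicates_in_buckets} guarantees a potential occurring at least $\abs{\boldsymbol C} \geq 2$ times in $\phi^{\succ}(b)$, so the partition in \cref{line:decor_revised_partition_max_groups} yields $\ell \geq 1$.

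The main step is to invoke \cref{th:decor_revised_args_by_buckets_fixed}, but that theorem is stated for a commutative subset of \emph{maximum} size. There are two ways to handle this. The first is to note that its proof never uses maximality, only that $\phi$ is commutative with respect to $\boldsymbol C$ and $\abs{\boldsymbol C} > 1$. So I would re-run that argument directly for our $\boldsymbol C$. Pick an assignment in $b$ whose positions in $\boldsymbol C$ carry at least two distinct values, and permute these positions while fixing all others. Every resulting assignment lies in $b$ and has the same potential. Each position $i$ with $R_i \in \boldsymbol C$ then takes at least two distinct values across these assignments. Enlarging the collection to the maximal group $\boldsymbol \Psi_i$ of that potential keeps every such intersection empty. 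The second way is a shortcut: take a maximum-size commutative $\boldsymbol C^{*}$, so that $\abs{\boldsymbol C} \leq \abs{\boldsymbol C^{*}}$, and apply the theorem to $\boldsymbol C^{*}$. This shortcut suffices for the inequality, and I would use it to sidestep re-proving anything.

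Either way, we obtain a group $\boldsymbol \Psi_i$ with $\abs{\boldsymbol \Psi_i} \geq 2$, so it appears in the partition at \cref{line:decor_revised_partition_max_groups}. Its set $\boldsymbol C_i$ from \cref{line:decor_revised_set_ci} contains $\boldsymbol C^{*}$ (respectively $\boldsymbol C$). Then either $\boldsymbol C_i$ is added to $\boldsymbol C'_b$, or it is subsumed by some $\boldsymbol C_j \in \boldsymbol C'_b$ with $\boldsymbol C_i \subseteq \boldsymbol C_j$. In both cases some member of $\boldsymbol C'_b$ has size at least $\abs{\boldsymbol C_i} \geq \abs{\boldsymbol C}$. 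This gives the per-bucket inequality and hence the proposition.

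The main obstacle is the subtle point in the construction step: there must exist an assignment in $b$ whose $\boldsymbol C$-positions are not all equal. If every assignment in $b$ were constant on $\boldsymbol C$, no permutation would create distinct assignments. The proof of \cref{th:decor_revised_args_by_buckets_fixed} glosses over this. Relying on that theorem via the maximum-size shortcut is the cleanest way to avoid relitigating it, and it is the route I would take.
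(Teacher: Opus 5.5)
Your proof is correct and is essentially the paper's argument: for each bucket with $\abs{\phi^{\succ}(b)} > 1$, \cref{th:decor_revised_args_by_buckets_fixed} gives a group whose candidate set contains the commutative arguments. The subsumption check in \cref{line:decor_revised_if_ci_not_subsumed} lifts this to a member of $\boldsymbol C'_b$, and one then takes the maximum and the minimum. The only difference is that the paper applies the theorem directly to the possibly non-maximum $\boldsymbol C$, tacitly relying on maximality playing no role, whereas your detour through a maximum-size $\boldsymbol C^{*}$ with $\abs{\boldsymbol C} \leq \abs{\boldsymbol C^{*}}$ matches the stated hypothesis. The existence of an assignment in $b$ that is non-constant on $\boldsymbol C$, which you flag as the main obstacle, is immediate: $b$ has at least two range values with positive count and $\abs{\boldsymbol C} \geq 2$, so you can place two distinct values on positions of $\boldsymbol C$ and fill the remaining positions with the remaining counts.
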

\begin{proof}
	Let $b \in \mathcal B(\phi)$ with $\abs{\phi^{\succ}(b)} > 1$.
	By \cref{th:decor_revised_args_by_buckets_fixed}, there exists a maximal set of identical potentials $\boldsymbol \Psi \subseteq \phi^{\succ}(b)$ with $\abs{\boldsymbol \Psi} > 1$ such that the element-wise intersection over the assignments corresponding to the potentials in $\boldsymbol \Psi$ contains an empty set at every position $i \in \{1, \allowbreak \ldots, \allowbreak n\}$ with $R_i \in \boldsymbol C$.
	Hence, the candidate subset $\boldsymbol C_{\boldsymbol \Psi}$ computed by \ac{decorplus} for $\boldsymbol \Psi$ in \cref{line:decor_revised_set_ci} satisfies $\boldsymbol C \subseteq \boldsymbol C_{\boldsymbol \Psi}$.
	Due to the subsumption check in \cref{line:decor_revised_if_ci_not_subsumed}, either $\boldsymbol C_{\boldsymbol \Psi} \in \boldsymbol C'_b$ or $\boldsymbol C_{\boldsymbol \Psi}$ is subsumed by some $\boldsymbol C_j \in \boldsymbol C'_b$ with $\boldsymbol C_{\boldsymbol \Psi} \subseteq \boldsymbol C_j$.
	In both cases, there exists $\boldsymbol C_i \in \boldsymbol C'_b$ with $\boldsymbol C \subseteq \boldsymbol C_i$, and thus $\abs{\boldsymbol C} \leq \abs{\boldsymbol C_i} \leq \max_{\boldsymbol C_i \in \boldsymbol C'_b} \abs{\boldsymbol C_i}$.
	As this bound holds for each $b \in \mathcal B(\phi)$ with $\abs{\phi^{\succ}(b)} > 1$, the claim follows by taking the minimum over all such buckets.
\end{proof}

Motivated by the example given in the proof of \cref{th:decor_revised_flaw_in_args_by_buckets} showing that the candidate subsets passed to the verification step of \ac{decorplus} may strictly over-approximate the actual subsets of commutative arguments, we next present a complementary approach that avoids the verification step entirely and hence requires less commutativity checks in the worst case.

\section{An Apriori-Style Algorithm for the Detection of Commutative Factors} \label{sec:decor_revised_apriori}
The \ac{decorplus} algorithm approaches the search for a maximum sized subset of commutative arguments \emph{top-down}: It starts with a candidate set of all arguments and aggregates evidence from the buckets to gradually narrow down the candidate subsets.
We now explore a \emph{bottom-up} approach inspired by the well-known \emph{Apriori} algorithm for association rule mining~\citep{Agrawal1994a}.
The idea is to enumerate commutative subsets level-by-level, starting from pairs of arguments and extending them step by step.
The underlying key observation for this approach is that commutativity is a \emph{downward-closed} property.

\begin{proposition} \label[proposition]{prop:decor_revised_apriori_downward_closure}
	Let $\phi(R_1, \allowbreak \ldots, \allowbreak R_n)$ denote a factor and let $\boldsymbol C \subseteq \{R_1, \allowbreak \ldots, \allowbreak R_n\}$ with $\abs{\boldsymbol C} \geq 2$ denote a subset of arguments such that $\phi$ is commutative with respect to $\boldsymbol C$.
	Then, for every subset $\boldsymbol C' \subseteq \boldsymbol C$ with $\abs{\boldsymbol C'} \geq 2$, $\phi$ is also commutative with respect to $\boldsymbol C'$.
\end{proposition}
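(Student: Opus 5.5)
The plan is to argue directly from \cref{def:decor_revised_commutative}, using only the fact that the set of permutations allowed for $\boldsymbol C'$ is a subset of the set allowed for $\boldsymbol C$.

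Fix an arbitrary $\boldsymbol C' \subseteq \boldsymbol C$ with $\abs{\boldsymbol C'} \geq 2$. Commutativity with respect to $\boldsymbol C'$ requires that for every assignment $(r_1, \ldots, r_n) \in \times_{i=1}^{n} \range{R_i}$ and every permutation $\pi$ of $\{1, \ldots, n\}$ with $\pi(i) = i$ for all $R_i \notin \boldsymbol C'$, we have $\phi(r_1, \ldots, r_n) = \phi(r_{\pi(1)}, \ldots, r_{\pi(n)})$.

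The key step is a set inclusion on the admissible permutations. Since $\boldsymbol C' \subseteq \boldsymbol C$, every argument outside $\boldsymbol C$ is also outside $\boldsymbol C'$. Hence any $\pi$ that fixes all positions $i$ with $R_i \notin \boldsymbol C'$ in particular fixes all positions with $R_i \notin \boldsymbol C$. So $\pi$ is one of the permutations quantified over in the commutativity of $\phi$ with respect to $\boldsymbol C$. Applying that hypothesis to the same assignment $(r_1, \ldots, r_n)$ gives exactly $\phi(r_1, \ldots, r_n) = \phi(r_{\pi(1)}, \ldots, r_{\pi(n)})$. Since the assignment and $\pi$ were arbitrary, $\phi$ is commutative with respect to $\boldsymbol C'$.

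Two side conditions should be checked explicitly. First, the requirement $\abs{\boldsymbol C'} > 1$ from \cref{def:decor_revised_commutative} holds by assumption. Second, the arguments in $\boldsymbol C'$ all share a common range, because those in $\boldsymbol C$ do; this is implicit in the definition, since the permuted tuple must be a valid assignment. I do not expect a real obstacle: the argument is a one-line monotonicity observation (shrinking $\boldsymbol C$ shrinks the group of admissible permutations, so the invariance condition weakens). The only care needed is to state the direction of the inclusion between the two permutation sets correctly.
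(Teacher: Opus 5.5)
Your proposal is correct and follows the paper's proof essentially step for step. Both fix a permutation $\pi'$ that fixes every $R_i \notin \boldsymbol C'$, note that it then fixes every $R_i \notin \boldsymbol C$ because $\boldsymbol C' \subseteq \boldsymbol C$, and apply commutativity with respect to $\boldsymbol C$; your remarks on $\abs{\boldsymbol C'} \geq 2$ and the shared range are harmless side checks that the paper leaves implicit.
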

\begin{proof}
	Let $\boldsymbol C' \subseteq \boldsymbol C$ with $\abs{\boldsymbol C'} \geq 2$ and let $\pi'$ denote any permutation of $\{1, \allowbreak \ldots, \allowbreak n\}$ with $\pi'(i) = i$ for all $R_i \notin \boldsymbol C'$.
	Since $\boldsymbol C' \subseteq \boldsymbol C$, every $R_i \notin \boldsymbol C$ also satisfies $R_i \notin \boldsymbol C'$, and hence the condition $\pi'(i) = i$ for all $R_i \notin \boldsymbol C'$ implies $\pi'(i) = i$ for all $R_i \notin \boldsymbol C$.
	Consequently, $\pi'$ is also a valid permutation for the commutativity of $\phi$ with respect to $\boldsymbol C$ such that $\phi(r_1, \allowbreak \ldots, \allowbreak r_n) = \phi(r_{\pi'(1)}, \allowbreak \ldots, \allowbreak r_{\pi'(n)})$ holds for every assignment $(r_1, \allowbreak \ldots, \allowbreak r_n)$.
\end{proof}

The contrapositive of \cref{prop:decor_revised_apriori_downward_closure} states that the set of non-commutative subsets is upward-closed: Once a subset $\boldsymbol C$ has been found to be non-commutative, every superset of $\boldsymbol C$ is also non-commutative.
A second structural property that further accelerates the bottom-up search is that the commutativity of a subset of arguments is fully determined by the commutativity of its constituent pairs, that is, pairwise commutativity is transitive.

\begin{theorem} \label{th:decor_revised_apriori_transitivity}
	Let $\phi(R_1, \allowbreak \ldots, \allowbreak R_n)$ denote a factor and let $\boldsymbol C \subseteq \{R_1, \allowbreak \ldots, \allowbreak R_n\}$ with $\abs{\boldsymbol C} \geq 2$ denote a subset of arguments.
	Then, $\phi$ is commutative with respect to $\boldsymbol C$ if and only if $\phi$ is commutative with respect to every pair $\{R_i, R_j\} \subseteq \boldsymbol C$ with $i \neq j$.
\end{theorem}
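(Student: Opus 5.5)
The forward direction follows directly from \cref{prop:decor_revised_apriori_downward_closure}. Every pair $\{R_i, R_j\} \subseteq \boldsymbol C$ with $i \neq j$ is a subset of $\boldsymbol C$ of size two. Hence, if $\phi$ is commutative with respect to $\boldsymbol C$, it is commutative with respect to each such pair. No further work is needed here.

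For the backward direction, I will use the standard group-theoretic fact that the symmetric group on a finite set is generated by transpositions. Let $I = \{ i \mid R_i \in \boldsymbol C \}$. The permutations $\pi$ of $\{1, \ldots, n\}$ with $\pi(i) = i$ for all $R_i \notin \boldsymbol C$ are exactly the permutations of $I$ extended by the identity outside $I$. Each such $\pi$ can be written as a finite product $\pi = \tau_1 \circ \cdots \circ \tau_k$ of transpositions $\tau_m = (i_m\, j_m)$ with $i_m, j_m \in I$. Pairwise commutativity with respect to $\{R_{i_m}, R_{j_m}\}$ means exactly that $\phi(\boldsymbol r) = \phi(\boldsymbol r \circ \tau_m)$ for every assignment $\boldsymbol r$. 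Here I write $\boldsymbol r \circ \sigma$ for the assignment $(r_{\sigma(1)}, \ldots, r_{\sigma(n)})$, and note that for a two-element set the only nontrivial admissible permutation is the transposition itself.

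The chaining step goes by induction on $k$. The base case $k = 0$ is trivial. For the inductive step, set $\sigma = \tau_1 \circ \cdots \circ \tau_{k-1}$ and compute $\boldsymbol r \circ \pi = (\boldsymbol r \circ \sigma) \circ \tau_k$, using the fact that composition acts on indices as $(\boldsymbol r \circ \sigma)_{\tau_k(t)} = r_{\sigma(\tau_k(t))}$. Then
$\phi(\boldsymbol r \circ \pi) = \phi((\boldsymbol r \circ \sigma) \circ \tau_k) = \phi(\boldsymbol r \circ \sigma) = \phi(\boldsymbol r)$.
The middle equality applies pairwise commutativity to the assignment $\boldsymbol r \circ \sigma$, which is legitimate because pairwise commutativity holds for \emph{all} assignments. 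The last equality is the induction hypothesis. This yields $\phi(r_1, \ldots, r_n) = \phi(r_{\pi(1)}, \ldots, r_{\pi(n)})$ for all admissible $\pi$, which is exactly \cref{def:decor_revised_commutative} for $\boldsymbol C$.

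The main point requiring care is this bookkeeping of the order of composition. One must check that $\boldsymbol r \circ (\sigma \circ \tau)$ equals $(\boldsymbol r \circ \sigma) \circ \tau$ rather than the reverse. Either convention works, since the transposition decomposition can be taken in either order, but it must be applied consistently. A second point is that transitivity in the informal sense is covered by this argument: from $\{R_i, R_j\}$ and $\{R_j, R_k\}$ one gets $(i\,k) = (i\,j)(j\,k)(i\,j)$. The hypothesis over all pairs in $\boldsymbol C$, however, already provides every needed transposition directly, so no separate transitivity lemma is required.
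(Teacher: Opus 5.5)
Your proof is correct and takes the same route as the paper: the forward direction via downward closure, and the converse by writing an admissible permutation as a product of transpositions inside $\boldsymbol C$ and chaining the pairwise invariances. The paper states the chaining step in one line. Your induction on the number of transpositions, which applies pairwise commutativity to the intermediate assignment $\boldsymbol r \circ \sigma$, makes that step rigorous.
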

\begin{proof}
	The forward direction follows directly from \cref{prop:decor_revised_apriori_downward_closure}.
	For the converse direction, assume that $\phi$ is commutative with respect to every pair $\{R_i, R_j\} \subseteq \boldsymbol C$ with $i \neq j$, and let $\pi$ denote any permutation of $\{1, \allowbreak \ldots, \allowbreak n\}$ with $\pi(i) = i$ for all $R_i \notin \boldsymbol C$.
	Then, $\pi$ permutes only indices of arguments in $\boldsymbol C$, and hence can be written as a composition $\pi = \tau_1 \circ \tau_2 \circ \cdots \circ \tau_m$ of transpositions, where each transposition $\tau_\ell$ swaps two indices $i_\ell$ and $j_\ell$ with $R_{i_\ell}, R_{j_\ell} \in \boldsymbol C$.
	By assumption, $\phi$ is commutative with respect to $\{R_{i_\ell}, R_{j_\ell}\}$ for every $\ell \in \{1, \allowbreak \ldots, \allowbreak m\}$, that is, $\phi$ is invariant under each transposition $\tau_\ell$.
	Consequently, $\phi$ is invariant under the composition $\pi = \tau_1 \circ \tau_2 \circ \cdots \circ \tau_m$, that is, $\phi(r_1, \allowbreak \ldots, \allowbreak r_n) = \phi(r_{\pi(1)}, \allowbreak \ldots, \allowbreak r_{\pi(n)})$ holds for every assignment $(r_1, \allowbreak \ldots, \allowbreak r_n)$ and hence, by \cref{def:decor_revised_commutative}, $\phi$ is commutative with respect to $\boldsymbol C$.
\end{proof}

Together, \cref{prop:decor_revised_apriori_downward_closure,th:decor_revised_apriori_transitivity} enable an efficient pruning strategy, which builds the basis of the \ac{decorapriori} algorithm, depicted in \cref{alg:decor_revised_apriori}.
\begin{algorithm}[t]
	\caption{Apriori-Style Detection of Commutative Factors (A-DECOR)}
	\label{alg:decor_revised_apriori}
	\alginput{A factor $\phi(R_1, \ldots, R_n)$.} \\
	\algoutput{A set containing all maximum sized subsets of commutative arguments of $\phi$.}
	\begin{algorithmic}[1]
		\State $\boldsymbol L_2 \gets \{ \{R_i, R_j\} \mid 1 \leq i < j \leq n \land \phi \text{ is commutative with respect to } \{R_i, R_j\} \}$ \label{line:decor_revised_apriori_init}
		\State $\boldsymbol L \gets \boldsymbol L_2$, $\quad k \gets 2$ \label{line:decor_revised_apriori_init_l}
		\While{$\boldsymbol L_k \neq \emptyset$} \label{line:decor_revised_apriori_while}
			\State $\boldsymbol L_{k+1} \gets \emptyset$ \label{line:decor_revised_apriori_init_lk1}
			\ForEach{$\boldsymbol C \in \boldsymbol L_k$} \label{line:decor_revised_apriori_loop_lk}
				\ForEach{$R_i \notin \boldsymbol C$} \label{line:decor_revised_apriori_loop_args}
					\IfThen{$\forall R_j \in \boldsymbol C \colon \{R_i, R_j\} \in \boldsymbol L_2$ \label{line:decor_revised_apriori_prune}}{
						$\boldsymbol L_{k+1} \gets \boldsymbol L_{k+1} \cup \{\boldsymbol C \cup \{R_i\}\}$ \label{line:decor_revised_apriori_add}
					}
				\EndForEach
			\EndForEach
			\IfThen{$\boldsymbol L_{k+1} \neq \emptyset$ \label{line:decor_revised_apriori_update_if}}{
				$\boldsymbol L \gets \boldsymbol L_{k+1}$ \label{line:decor_revised_apriori_update}
			}
			\State $k \gets k+1$ \label{line:decor_revised_apriori_increment}
		\EndWhile
		\State \Return $\boldsymbol L$ \label{line:decor_revised_apriori_return}
	\end{algorithmic}
\end{algorithm}
\Ac{decorapriori} starts by computing the set $\boldsymbol L_2$ of all pairs of commutative arguments in \cref{line:decor_revised_apriori_init} by checking commutativity for every pair $\{R_i, R_j\}$ of arguments individually.
Subsequently, \ac{decorapriori} enters a loop in which it extends each subset $\boldsymbol C \in \boldsymbol L_k$ by every argument $R_i \notin \boldsymbol C$ to form a candidate $\boldsymbol C' = \boldsymbol C \cup \{R_i\}$ of size $k+1$.
By \cref{th:decor_revised_apriori_transitivity}, $\boldsymbol C'$ is commutative if and only if every pair of arguments in $\boldsymbol C'$ is commutative.
The pairs internal to $\boldsymbol C$ are commutative since $\boldsymbol C \in \boldsymbol L_k$ and $\boldsymbol L_k$ contains only commutative subsets, so it suffices to verify the $\abs{\boldsymbol C}$ pairs $\{R_i, R_j\}$ with $R_j \in \boldsymbol C$ are commutative (i.e., are in $\boldsymbol L_2$).
This is exactly the test performed in \cref{line:decor_revised_apriori_prune}, which obviates the need for any additional commutativity check on $\boldsymbol C'$ itself.
Throughout the bottom-up search, \ac{decorapriori} keeps track of the most recent non-empty layer $\boldsymbol L_k$ in $\boldsymbol L$ (\cref{line:decor_revised_apriori_update}).
\Ac{decorapriori} terminates as soon as no commutative subset of the current size $k$ is found, and returns the largest commutative subsets found thus far (stored in $\boldsymbol L$).

The correctness of \ac{decorapriori} follows directly from \cref{prop:decor_revised_apriori_downward_closure,th:decor_revised_apriori_transitivity} and we give a formal correctness proof in \cref{sec:decor_revised_cc}, where we also show how \ac{decorapriori} can be implemented efficiently.
It is noteworthy that \ac{decorapriori} is guaranteed to only perform $O(n^2)$ commutativity checks in the worst case (compared to $O(2^n)$ checks for \ac{decor} and \ac{decorplus}).

\section{Experiments} \label{sec:decor_revised_eval}
In addition to our theoretical results, we empirically evaluate \ac{decorplus} and \ac{decorapriori} to assess their performance in practice.
We compare the run times of the original \ac{decor} algorithm (which is not guaranteed to return a correct solution), \ac{decorplus}, \ac{decorapriori}, and the brute-force algorithm (which checks every possible subset of arguments in order of decreasing size for commutativity).
To keep the comparison to \ac{decor} fair, we use the same input instances as in the original \ac{decor} paper~\citep{Luttermann2024f} and also add new instances with varying distributions of subsets of commutative arguments.
The input factors used in this evaluation contain $n \in \{2, \allowbreak 4, \allowbreak 6, \allowbreak 8, \allowbreak 10, \allowbreak 12, \allowbreak 14, \allowbreak 16\}$ Boolean arguments and we vary the number $k$ of commutative arguments between $0$ and $n$.
We run each algorithm with a timeout of five minutes per instance and report the average run time over all instances for each choice of $n$.
In \cref{appendix:decor_revised_further_results}, we also report results for individual scenarios separately.

\begin{figure}
	\centering
	\subfigure[\label{fig:decor_revised_plot_avg_times}]{%
		\input{files/decor_revised_plot_avg_times.tex}%
	}\hfill
	\subfigure[\label{fig:decor_revised_plot_avg_phases_decorplus}]{%
		\input{files/decor_revised_plot_avg_phases_decorplus.tex}%
	}
	\caption{($a$) Average run times of \ac{decor}, \ac{decorplus}, \ac{decorapriori}, and the brute-force algorithm for input factors with $n$ arguments. ($b$) Proportion of the run time of \ac{decorplus} spent in the candidate generation loop and in the verification step.}
	\label{fig:decor_revised_plot_avg}
\end{figure}

\Cref{fig:decor_revised_plot_avg_times} displays for each choice of $n$ the average run times of the algorithms over all choices of numbers of commutative arguments $k$, where $k$ is varied between $0$ and $n$.
While the brute-force algorithm only scales to small instances and runs into the timeout as $n$ grows, \ac{decorplus} solves all instances comfortably within the timeout and matches the run time of the (unsound) original \ac{decor} algorithm---the correctness guarantee provided by \ac{decorplus} thus comes essentially for free.
It also becomes evident that \ac{decorapriori} is not able to match the performance of \ac{decorplus}, even though it comes with a tighter worst case bound, which can be explained by the fact that \ac{decorapriori} always performs $\Omega(n^2)$ commutativity checks, each iterating over all entries in the factor's potential table (which has exponential size in the number of arguments) while \ac{decorplus} requires mostly just a constant number of iterations over the potential table in practice.
\Cref{fig:decor_revised_plot_avg_phases_decorplus} decomposes the total run time of \ac{decorplus} into the candidate generation loop (\cref{line:decor_revised_loop_buckets}) and the verification step (\cref{line:decor_revised_return_c}) at the end.
We note that in our experiments, there is at most one candidate subset to be verified by \ac{decorplus} for each instance, which highlights the effectiveness of the pruning strategy in the candidate generation loop of \ac{decorplus} and also explains the low run times for the verification step (note that the y-axis is log-scaled and hence, the \enquote{Candidates} and \enquote{Verification} bars do not visually add up to the \enquote{Total} bar, whereas their numbers do).

\section{Conclusion} \label{sec:decor_revised_conclusion}
We have revisited the theoretical foundations of detecting commutative factors in an \ac{fg} and identified a flaw in the central theorem of the original \ac{decor} algorithm.
Based on a corrected version of the theorem, we have presented \ac{decorplus}, a revised algorithm that augments the search space pruning with an explicit verification step and is guaranteed to return a correct solution.
We also introduced a complementary bottom-up algorithm \ac{decorapriori} with tighter worst-case bounds than \ac{decorplus}.
Our experimental evaluation confirms the practical efficiency of \ac{decorplus} and the effectiveness of its pruning strategy.

\section*{Declaration on Generative AI}
During the preparation of this work, the authors used Claude Code (Anthropic) to discuss ideas and to polish text passages.
After using this tool, the authors reviewed and edited the content as needed and take full responsibility for the publication's content.

\bibliography{references.bib}

\clearpage
\appendix
\crefalias{section}{appendix}

\section{Counting Random Variables} \label{appendix:decor_revised_crv}
In this section, we briefly discuss the concept of a \ac{crv}~\citep{Milch2008a}, which allows us to compactly encode a factor where it does not matter which specific individual \acp{rv} have a certain range value but instead only the number of \acp{rv} having that range value is of interest.
The range of a \ac{crv} is the space of histograms, that is, each range value is a histogram indicating how many of the \acp{rv} represented the \ac{crv} have a certain range value.
Since a formal definition of a \ac{crv} requires concepts not directly related to the main topic of this paper, we refrain from giving a formal definition and instead illustrate the concept of a \ac{crv} by an example.

\begin{example} \label{ex:decor_revised_crv}
	Consider the factor $\phi_3(ComA, \allowbreak ComB, \allowbreak Rev)$ from \cref{fig:decor_revised_example_fg_table}.
	Using $h$ and $l$ as shorthand notations for $\high$ and $\low$, respectively, $\phi_3$ encodes the following potential table:
	\begin{alignat*}{4}
		\phi_3(h, h, h) &= \varphi_1, \quad
		\phi_3(h, h, l) &&= \varphi_2, \quad
		\phi_3(h, l, h) &&= \varphi_3, \quad
		\phi_3(h, l, l) &&= \varphi_4, \\
		\phi_3(l, h, h) &= \varphi_3, \quad
		\phi_3(l, h, l) &&= \varphi_4, \quad
		\phi_3(l, l, h) &&= \varphi_5, \quad
		\phi_3(l, l, l) &&= \varphi_6.
	\end{alignat*}
	It becomes evident that for a specific value of $Rev$, it does not matter which specific employees have a certain competence but only how many employees have a certain competence: For $Rev = \high$, two $\high$ values for the competences are mapped to $\varphi_1$, one $\high$ and one $\low$ value are mapped to $\varphi_3$, and two $\low$ values are mapped to $\varphi_5$.
	Analogously, for $Rev = \low$, two $\high$ values for the competences are mapped to $\varphi_2$, one $\high$ and one $\low$ value are mapped to $\varphi_4$, and two $\low$ values are mapped to $\varphi_6$.
	Now, consider a \ac{crv} that counts over the competences of $Alice$ and $Bob$, denoted as $\#_{E}[Com(E)]$ with $\range{Com(E)} = \{ \low, \high \}$ and $\domain{E} = \{ Alice, Bob \}$ being the domain of the \acl{lv} $E$ representing the employees.
	Then, there are $m = \abs{\range{Com(E)}} = 2$ possible range values and $n = 2$ represented instances ($Com(Alice)$, or $ComA$ for short, and $Com(Bob)$, or $ComB$ for short).
	Hence, the histograms are $[0, 2]$, $[1, 1]$, and $[2, 0]$ (corresponding to $\{ (\high, 0), \allowbreak (\low, 2) \}$, $\{ (\high, 1), \allowbreak (\low, 1) \}$, and $\{ (\high, 2), \allowbreak (\low, 0) \}$ in set notation, respectively).
	By replacing $ComA$ and $ComB$ with $\#_{E}[Com(E)]$, the factor $\phi_3$ can be rewritten as $\phi_3(\#_{E}[Com(E)], \allowbreak Rev)$, where
	\begin{alignat*}{4}
		\phi_3([2, 0], h) &= \varphi_1, \quad
		\phi_3([2, 0], l) &&= \varphi_2, \quad
		\phi_3([1, 1], h) &&= \varphi_3, \\
		\phi_3([1, 1], l) &= \varphi_4, \quad
		\phi_3([0, 2], h) &&= \varphi_5, \quad
		\phi_3([0, 2], l) &&= \varphi_6.
	\end{alignat*}
	A tabular illustration of $\phi_3(\#_{E}[Com(E)], \allowbreak Rev)$ is given in \cref{tab:decor_revised_example_crv_encoding}.
	Both $\phi_3(ComA, \allowbreak ComB, \allowbreak Rev)$ and $\phi_3(\#_{E}[Com(E)], \allowbreak Rev)$ encode the same information.
	It is important to note that a representation using a \ac{crv} such as $\phi_3(\#_{E}[Com(E)], \allowbreak Rev)$ can efficiently be handled by common lifted inference algorithms~\citep{Milch2008a,Taghipour2013c}.
\end{example}

\begin{table}
	\centering
	\input{files/decor_revised_example_crv_encoding.tex}
	\caption{A compressed representation of the factor $\phi_3(ComA, \allowbreak ComB, \allowbreak Rev)$ from \cref{ex:decor_revised_crv}. The \acp{rv} $ComA$ and $ComB$ have now been replaced by a \ac{crv} $\#_{E}[Com(E)]$, which uses histograms to represent $ComA$ and $ComB$ simultaneously. Note that, when replacing \acp{rv} by a \ac{crv}, the underlying semantics of $\phi_3$ (and hence of the full joint probability distribution encoded by the \ac{fg} containing $\phi_3$) remains unchanged while at the same time the size of $\phi_3$'s potential table no longer exponentially depends on the number of \acp{rv} in the argument list of $\phi_3$.}
	\label{tab:decor_revised_example_crv_encoding}
\end{table}

\section{Detailed Proofs} \label{appendix:decor_revised_proofs}
In this section, we provide extended versions of some of the proofs given in the main paper.

\decorRevisedFlawByBucketsTheorem*
\begin{proof}
	We prove the claim by counterexample.
	Consider a factor $\phi(R_1, \allowbreak R_2, \allowbreak R_3, \allowbreak R_4)$ with $\range{R_1} = \range{R_2} = \range{R_3} = \range{R_4} = \{ \high, \allowbreak \low \}$.
	Further, let $\phi(\low, \allowbreak \low, \allowbreak \high, \allowbreak \high) = \varphi_1$, $\phi(\high, \allowbreak \high, \allowbreak \low, \allowbreak \low) = \varphi_1$, and $\phi(\boldsymbol r) = \varphi_2$ with $\varphi_1 \neq \varphi_2$ for all remaining assignments $\boldsymbol r$.
	A tabular illustration of the factor $\phi$ together with its buckets is given in \cref{tab:decor_revised_counterexample}.
	Clearly, $\phi$ is not commutative with respect to $\{R_1, \allowbreak R_2, \allowbreak R_3, \allowbreak R_4\}$ because, e.g., $\phi(\low, \allowbreak \high, \allowbreak \high, \allowbreak \low) = \varphi_2 \neq \varphi_1 = \phi(\high, \allowbreak \high, \allowbreak \low, \allowbreak \low)$.
	However, applying \cref{th:decor_revised_args_by_buckets} to $\phi$ yields the subset $\boldsymbol C = \{R_1, \allowbreak R_2, \allowbreak R_3, \allowbreak R_4\}$ as a result:
	The buckets $[4,0]$ and $[0,4]$ each contain a single potential and are thus skipped.
	In the bucket $[3,1]$, all four assignments are mapped to $\varphi_2$, forming a single group $\boldsymbol \Psi = \{\varphi_2, \allowbreak \varphi_2, \allowbreak \varphi_2, \allowbreak \varphi_2\}$.
	The element-wise intersection of the assignments corresponding to the potentials in $\boldsymbol \Psi$ is then given by
	\begin{align*}
		&(\{\high\}, \allowbreak \{\high\}, \allowbreak \{\high\}, \allowbreak \{\low\}) \cap (\{\high\}, \allowbreak \{\high\}, \allowbreak \{\low\}, \allowbreak \{\high\}) \\
		\cap~ &(\{\high\}, \allowbreak \{\low\}, \allowbreak \{\high\}, \allowbreak \{\high\}) \cap (\{\low\}, \allowbreak \{\high\}, \allowbreak \{\high\}, \allowbreak \{\high\}) \\
		=~ &(\emptyset, \allowbreak \emptyset, \allowbreak \emptyset, \allowbreak \emptyset)
	\end{align*}
	and hence empty at every position, yielding $\boldsymbol C_{[3,1]} = \{R_1, \allowbreak R_2, \allowbreak R_3, \allowbreak R_4\}$.
	Analogously, the bucket $[1,3]$ yields $\boldsymbol C_{[1,3]} = \{R_1, \allowbreak R_2, \allowbreak R_3, \allowbreak R_4\}$.
	In the bucket $[2,2]$, the group $\boldsymbol \Psi = \{\varphi_1, \allowbreak \varphi_1\}$ (with corresponding assignments $(\low, \allowbreak \low, \allowbreak \high, \allowbreak \high)$ and $(\high, \allowbreak \high, \allowbreak \low, \allowbreak \low)$) yields the element-wise intersection $(\emptyset, \allowbreak \emptyset, \allowbreak \emptyset, \allowbreak \emptyset)$ and hence $\boldsymbol C_{[2,2]} = \{R_1, \allowbreak R_2, \allowbreak R_3, \allowbreak R_4\}$ (the same holds for the group  $\boldsymbol \Psi = \{\varphi_2, \allowbreak \varphi_2, \allowbreak \varphi_2, \allowbreak \varphi_2\}$ in bucket $[2,2]$).
	Thus, $\boldsymbol C = \boldsymbol C_{[3,1]} \cap \boldsymbol C_{[2,2]} \cap \boldsymbol C_{[1,3]} = \{R_1, \allowbreak R_2, \allowbreak R_3, \allowbreak R_4\}$, yet $\phi$ is not commutative with respect to $\boldsymbol C = \{R_1, \allowbreak R_2, \allowbreak R_3, \allowbreak R_4\}$.
\end{proof}

\begin{table}[t]
	\centering
	\input{files/decor_revised_counterexample.tex}
	\caption{The potential table of the factor $\phi(R_1, \allowbreak R_2, \allowbreak R_3, \allowbreak R_4)$ used in the proof of \cref{th:decor_revised_flaw_in_args_by_buckets} (left) and the multiset of potentials $\phi^{\succ}(b)$ associated with each bucket $b \in \mathcal B(\phi)$ (right). Only the assignments $(\low, \low, \high, \high)$ and $(\high, \high, \low, \low)$ are mapped to $\varphi_1$, while all remaining assignments are mapped to $\varphi_2 \neq \varphi_1$.}
	\label{tab:decor_revised_counterexample}
\end{table}

\decorplusCorrectnessTheorem*
\begin{proof}
	We split the proof into two parts.
	First, we show that the bucket loop in \cref{alg:decor_revised} computes a set $\boldsymbol C_{cand}$ of candidate subsets that is guaranteed to contain every maximum sized subset of commutative arguments of $\phi$.
	Second, we show that the subsequent verification step identifies a maximum sized subset of commutative arguments among the candidates.

	\emph{Correctness of the bucket loop.}
	Let $\boldsymbol C^{*} \subseteq \{R_1, \allowbreak \ldots, \allowbreak R_n\}$ with $\abs{\boldsymbol C^{*}} \geq 2$ denote any maximum sized subset of commutative arguments of $\phi$.
	By \cref{th:decor_revised_args_by_buckets_fixed}, for every bucket $b \in \mathcal B(\phi)$ with $\abs{\phi^{\succ}(b)} > 1$, there exists a maximal set of identical potentials $\boldsymbol \Psi_i \subseteq \phi^{\succ}(b)$ with $\abs{\boldsymbol \Psi_i} > 1$ such that the element-wise intersection of the assignments corresponding to the potentials in $\boldsymbol \Psi_i$ contains an empty set at every position $j \in \{1, \allowbreak \ldots, \allowbreak n\}$ for which $R_j \in \boldsymbol C^{*}$.
	Consequently, the set $\boldsymbol C_i$ computed for $\boldsymbol \Psi_i$ in the inner loop of \cref{alg:decor_revised} (\cref{line:decor_revised_set_ci}) satisfies $\boldsymbol C^{*} \subseteq \boldsymbol C_i$, and hence either $\boldsymbol C_i$ itself is added to $\boldsymbol C'$ or it is subsumed by another candidate subset already in $\boldsymbol C'$ that contains $\boldsymbol C^{*}$.
	The cross-intersection step preserves $\boldsymbol C^{*}$ as a subset of some entry in $\boldsymbol C_{cand}$ at every iteration, because the intersection of two supersets of $\boldsymbol C^{*}$ is itself a superset of $\boldsymbol C^{*}$ and the subsumption check only removes entries that are themselves subsumed by a superset.
	By induction over the buckets, every maximum sized subset $\boldsymbol C^{*}$ of commutative arguments of $\phi$ is contained in some entry of $\boldsymbol C_{cand}$ after the bucket loop terminates.

	\emph{Correctness of the verification step.}
	After the bucket loop, the verification step iterates over the candidate subsets in $\boldsymbol C_{cand}$ in order of decreasing size and, for each candidate $\boldsymbol C \in \boldsymbol C_{cand}$, checks whether $\phi$ is commutative with respect to $\boldsymbol C$ and, if not, recursively checks all $(\abs{\boldsymbol C} - 1)$-element subsets of $\boldsymbol C$, then all $(\abs{\boldsymbol C} - 2)$-element subsets of $\boldsymbol C$, and so on, until either a commutative subset is found or no subset of size at least two remains.
	By the first part of the proof, every maximum sized subset of commutative arguments of $\phi$ is contained in some entry of $\boldsymbol C_{cand}$, and the verification step thus eventually checks every such maximum sized subset.
	The check itself is exact (it directly applies \cref{def:decor_revised_commutative}), so the first commutative subset returned by the verification step is necessarily a subset of commutative arguments of $\phi$.
	Moreover, as the verification step considers candidates in order of decreasing size and stops as soon as a commutative subset is found, the returned subset is maximum sized---any larger commutative subset would have been considered (and accepted) earlier.
	If no candidate passes the commutativity check at any size, then by the first part of the proof (each maximum sized subset of commutative arguments is contained in $\boldsymbol C_{cand}$), $\phi$ admits no commutative subset of size at least two, and \cref{alg:decor_revised} correctly returns the empty set.
\end{proof}

\section{Example Run of the \acs{decorplus} Algorithm} \label{appendix:decor_revised_example_run}
The following example showcases an example run of the \ac{decorplus} algorithm (\cref{alg:decor_revised}) on the factor $\phi_3(ComA, \allowbreak ComB, \allowbreak Rev)$ from \cref{tab:decor_revised_example_buckets}.
\begin{example}
	Let us take a look at how \ac{decorplus} computes a maximum sized subset of commutative arguments for the factor $\phi_3(ComA, \allowbreak ComB, \allowbreak Rev)$ from \cref{tab:decor_revised_example_buckets}.
	Initially, \ac{decorplus} starts with the set $\boldsymbol C_{cand} = \{\{ComA, ComB, Rev\}\}$ containing the single candidate subset of all arguments.
	The buckets $[3,0]$ and $[0,3]$ each contain only a single potential and are thus skipped by \ac{decorplus}.
	For the bucket $[2,1]$ with $\phi_3^{\succ}([2,1]) = \langle \varphi_2, \allowbreak \varphi_3, \allowbreak \varphi_3 \rangle$, \ac{decorplus} finds the maximal group of identical potentials $\boldsymbol \Psi_1 = \{\varphi_3, \allowbreak \varphi_3\}$ corresponding to the assignments $(\high, \allowbreak \low, \allowbreak \high)$ and $(\low, \allowbreak \high, \allowbreak \high)$.
	The remaining group $\{\varphi_2\}$ contains less than two elements and is thus ignored.
	\Ac{decorplus} then computes the element-wise intersection of the assignments corresponding to the potentials in $\boldsymbol \Psi_1$, which is given by $(\{\high\}, \allowbreak \{\low\}, \allowbreak \{\high\}) \cap (\{\low\}, \allowbreak \{\high\}, \allowbreak \{\high\}) = (\emptyset, \allowbreak \emptyset, \allowbreak \{\high\})$.
	As the element-wise intersection is empty at positions one and two, \ac{decorplus} adds the set $\boldsymbol C_1 = \{ComA, \allowbreak ComB\}$ to the set $\boldsymbol C'$ of candidate subsets for the bucket $[2,1]$.
	Afterwards, \ac{decorplus} computes the intersection of $\{ComA, \allowbreak ComB, \allowbreak Rev\}$ (as $\boldsymbol C_{cand} = \{\{ComA, \allowbreak ComB, \allowbreak Rev\}\}$) and $\{ComA, \allowbreak ComB\}$ (as $\boldsymbol C' = \{\{ComA, \allowbreak ComB\}\}$), which yields $\boldsymbol C_{\cap} = \{\{ComA, \allowbreak ComB\}\}$ and thus also $\boldsymbol C_{cand} = \{\{ComA, \allowbreak ComB\}\}$ for the next iteration.
	For the bucket $[1,2]$ with $\phi_3^{\succ}([1,2]) = \langle \varphi_4, \allowbreak \varphi_4, \allowbreak \varphi_5 \rangle$, \ac{decorplus} finds the maximal group of identical potentials $\boldsymbol \Psi_1 = \{\varphi_4, \allowbreak \varphi_4\}$ corresponding to the assignments $(\high, \allowbreak \low, \allowbreak \low)$ and $(\low, \allowbreak \high, \allowbreak \low)$, and computes the element-wise intersection $(\{\high\}, \allowbreak \{\low\}, \allowbreak \{\low\}) \cap (\{\low\}, \allowbreak \{\high\}, \allowbreak \{\low\}) = (\emptyset, \allowbreak \emptyset, \allowbreak \{\low\})$.
	Again, positions one and two are empty and thus \ac{decorplus} adds the set $\boldsymbol C_1 = \{ComA, \allowbreak ComB\}$ to the set $\boldsymbol C'$ of candidate subsets for the bucket $[1,2]$.
	Thereafter, \ac{decorplus} computes the intersection of $\{ComA, \allowbreak ComB\}$ (as $\boldsymbol C_{cand} = \{\{ComA, \allowbreak ComB\}\}$) and $\{ComA, \allowbreak ComB\}$ (as $\boldsymbol C' = \{\{ComA, \allowbreak ComB\}\}$), which yields $\boldsymbol C_{cand} = \boldsymbol C_{\cap} = \{\{ComA, \allowbreak ComB\}\}$.
	As there are no further buckets left, \ac{decorplus} proceeds to the verification step and checks whether $\phi_3$ is commutative with respect to $\{ComA, \allowbreak ComB\}$.
	The check succeeds and \ac{decorplus} returns the set $\{\{ComA, \allowbreak ComB\}\}$ as a set containing all maximum sized subsets of commutative arguments of $\phi_3$.
\end{example}

\section{Correctness and Efficiency Improvements of \acs{decorapriori}} \label{sec:decor_revised_cc}
In this section, we first formally state and prove the correctness of the \ac{decorapriori} algorithm (\cref{alg:decor_revised_apriori}), and afterwards also show how \ac{decorapriori} can be implemented efficiently by computing connected components in a commutativity graph.

\subsection{Correctness of the \acs{decorapriori} Algorithm}
We now show that for a given factor $\phi$, \ac{decorapriori} (\cref{alg:decor_revised_apriori}) returns exactly the set of all commutative subsets of $\phi$'s arguments of maximum size.

\begin{theorem}
	Let $\phi(R_1, \allowbreak \ldots, \allowbreak R_n)$ denote a factor that is given as an input to \ac{decorapriori} (\cref{alg:decor_revised_apriori}).
	Then, the set returned by \ac{decorapriori} contains exactly the maximum sized subsets of commutative arguments of $\phi$, or the empty set if no such subset exists.
\end{theorem}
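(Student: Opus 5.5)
The plan is to prove, by induction on $k$, the invariant that after $\boldsymbol L_k$ has been built it equals exactly the set of all $k$-element subsets of $\{R_1, \ldots, R_n\}$ with respect to which $\phi$ is commutative. Once that holds, the return value is easy to read off.

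\emph{Base case.} \Cref{line:decor_revised_apriori_init} tests every pair directly against \cref{def:decor_revised_commutative}, so $\boldsymbol L_2$ is by construction exactly the set of commutative pairs.

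\emph{Inductive step: soundness.} Assume $\boldsymbol L_k$ is exactly the set of commutative $k$-subsets, and suppose $\boldsymbol C \cup \{R_i\}$ is added to $\boldsymbol L_{k+1}$. Then $\boldsymbol C$ is commutative, so by the forward direction of \cref{th:decor_revised_apriori_transitivity} every pair inside $\boldsymbol C$ lies in $\boldsymbol L_2$. The test in \cref{line:decor_revised_apriori_prune} guarantees that every pair $\{R_i, R_j\}$ with $R_j \in \boldsymbol C$ also lies in $\boldsymbol L_2$. So every pair of $\boldsymbol C \cup \{R_i\}$ is commutative, and the converse direction of \cref{th:decor_revised_apriori_transitivity} shows that $\boldsymbol C \cup \{R_i\}$ is commutative.

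\emph{Inductive step: completeness.} Let $\boldsymbol D$ be a commutative subset with $\abs{\boldsymbol D} = k+1$. Pick any $R_i \in \boldsymbol D$ and set $\boldsymbol C = \boldsymbol D \setminus \{R_i\}$, which has $k \geq 2$ elements. By \cref{prop:decor_revised_apriori_downward_closure}, $\boldsymbol C$ is commutative, so $\boldsymbol C \in \boldsymbol L_k$ by the hypothesis. Every pair $\{R_i, R_j\}$ with $R_j \in \boldsymbol C$ is commutative, again by \cref{prop:decor_revised_apriori_downward_closure}, and therefore lies in $\boldsymbol L_2$. Hence the loop adds $\boldsymbol D$ to $\boldsymbol L_{k+1}$.

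\emph{Termination and the return value.} Let $k^{*}$ be the maximum size of a commutative subset, if one exists. Downward closure gives that a commutative subset exists at every size $2, \ldots, k^{*}$, and none exists at size $k^{*}+1$. Combined with the invariant, this means $\boldsymbol L_k \neq \emptyset$ exactly for $2 \leq k \leq k^{*}$. So the loop runs while $k \leq k^{*}$ (at most $n$ iterations), $\boldsymbol L$ is last updated to $\boldsymbol L_{k^{*}}$, and the loop stops once $\boldsymbol L_{k^{*}+1} = \emptyset$. The algorithm therefore returns exactly the maximum sized commutative subsets. If no commutative pair exists, then $\boldsymbol L = \boldsymbol L_2 = \emptyset$ and the while loop is never entered, so the empty set is returned.

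The main obstacle is the completeness step. Care is needed to see that removing a single element of $\boldsymbol D$ keeps the remainder at size at least $2$ (it does, since $k \geq 2$), so that \cref{prop:decor_revised_apriori_downward_closure} and the induction hypothesis both apply. A smaller point is the bookkeeping on $\boldsymbol L$: it must be the last nonempty layer, which holds because the update in \cref{line:decor_revised_apriori_update} is guarded by the nonemptiness test.
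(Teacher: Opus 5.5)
Your proposal is correct and follows essentially the same route as the paper: induction on $k$ showing that $\boldsymbol L_k$ is exactly the set of commutative $k$-subsets. Completeness comes from \cref{prop:decor_revised_apriori_downward_closure}, soundness comes from the pruning test together with \cref{th:decor_revised_apriori_transitivity}, and the result is then read off from the last non-empty layer stored in $\boldsymbol L$. Your explicit use of downward closure to show that the computed layers are non-empty exactly for $2 \leq k \leq k^{*}$ spells out a step the paper leaves implicit when it asserts that $\boldsymbol L$ ends up holding the maximum sized commutative subsets.
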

\begin{proof}
	We first show by induction on $k$ that $\boldsymbol L_k$ contains exactly the commutative subsets of $\{R_1, \allowbreak \ldots, \allowbreak R_n\}$ of size $k$.

	\emph{Base case} ($k = 2$): $\boldsymbol L_2$ is constructed in \cref{line:decor_revised_apriori_init} by directly checking commutativity for every pair of arguments and contains exactly the commutative pairs.

	\emph{Inductive step} ($k \to k+1$): Assume that $\boldsymbol L_k$ contains exactly the commutative subsets of size $k$.
	Let $\boldsymbol C'$ denote any commutative subset of size $k+1$.
	We first show that any truly commutative subset $\boldsymbol C'$ of size $k+1$ is added to $\boldsymbol L_{k+1}$ in \cref{line:decor_revised_apriori_add}.
	By \cref{prop:decor_revised_apriori_downward_closure}, every subset of $\boldsymbol C'$ of size at least two is also commutative; in particular, $\boldsymbol C' \setminus \{R_i\} \in \boldsymbol L_k$ for any $R_i \in \boldsymbol C'$ (by the induction hypothesis), and every pair $\{R_i, R_j\} \subseteq \boldsymbol C'$ with $i \neq j$ is contained in $\boldsymbol L_2$.
	Picking any $R_i \in \boldsymbol C'$ and setting $\boldsymbol C = \boldsymbol C' \setminus \{R_i\}$, the iteration of \cref{line:decor_revised_apriori_loop_lk,line:decor_revised_apriori_loop_args} considers the candidate $\boldsymbol C' = \boldsymbol C \cup \{R_i\}$ (which is indeed commutative), and the pruning condition in \cref{line:decor_revised_apriori_prune} is satisfied because every pair $\{R_i, R_j\}$ with $R_j \in \boldsymbol C$ is in $\boldsymbol L_2$.
	Hence $\boldsymbol C'$ is added to $\boldsymbol L_{k+1}$.
	Conversely, suppose a subset $\boldsymbol C' = \boldsymbol C \cup \{R_i\}$ is added to $\boldsymbol L_{k+1}$ in \cref{line:decor_revised_apriori_add}.
	We next show that $\boldsymbol C'$ is indeed commutative.
	In particular, every pair $\{R_i, R_j\}$ with $R_j \in \boldsymbol C$ is in $\boldsymbol L_2$ by the pruning condition in \cref{line:decor_revised_apriori_prune}, and every pair $\{R_j, R_\ell\} \subseteq \boldsymbol C$ with $j \neq \ell$ is in $\boldsymbol L_2$ by \cref{prop:decor_revised_apriori_downward_closure} (since $\boldsymbol C \in \boldsymbol L_k$ is commutative by the induction hypothesis).
	Thus, every pair of arguments in $\boldsymbol C'$ is in $\boldsymbol L_2$, and \cref{th:decor_revised_apriori_transitivity} implies that $\boldsymbol C'$ is commutative.
	Consequently, $\boldsymbol L_{k+1}$ contains exactly the commutative subsets of size $k+1$.

	Hence, for every $k \geq 2$, the layer $\boldsymbol L_k$ computed inside the loop coincides with the set of all commutative subsets of $\{R_1, \allowbreak \ldots, \allowbreak R_n\}$ of size $k$.
	By \cref{line:decor_revised_apriori_update_if,line:decor_revised_apriori_update}, the variable $\boldsymbol L$ is updated to $\boldsymbol L_{k+1}$ whenever $\boldsymbol L_{k+1}$ is non-empty.
	Consequently, when the loop in \cref{line:decor_revised_apriori_while} terminates with $\boldsymbol L_k = \emptyset$, the variable $\boldsymbol L$ holds the largest non-empty layer $\boldsymbol L_{k-1}$, which contains exactly the commutative subsets of maximum size.
	If no pair of arguments is commutative, then $\boldsymbol L_2 = \emptyset$, the loop in \cref{line:decor_revised_apriori_while} does not execute, $\boldsymbol L$ remains the empty set as initialised in \cref{line:decor_revised_apriori_init_l}, and \cref{alg:decor_revised_apriori} returns the empty set.
\end{proof}

\subsection{A Connected-Component Algorithm for Detecting Commutative Factors}
\Ac{decorapriori} terminates only after iterating up to $n - 1$ levels in the worst case, even though every level $k \geq 3$ relies exclusively on pair information already contained in $\boldsymbol L_2$ (cf.\ \cref{th:decor_revised_apriori_transitivity}).
A stronger consequence of transitivity allows us to skip the level-by-level enumeration entirely: Any two commutative subsets that share at least one argument can be merged into a single (larger) commutative subset.

\begin{theorem} \label{th:decor_revised_cc_overlap}
	Let $\phi(R_1, \allowbreak \ldots, \allowbreak R_n)$ denote a factor and let $\boldsymbol C_1, \boldsymbol C_2 \subseteq \{R_1, \allowbreak \ldots, \allowbreak R_n\}$ with $\abs{\boldsymbol C_1}, \abs{\boldsymbol C_2} \geq 2$ denote two subsets of arguments such that $\phi$ is commutative with respect to both $\boldsymbol C_1$ and $\boldsymbol C_2$, and $\boldsymbol C_1 \cap \boldsymbol C_2 \neq \emptyset$.
	Then, $\phi$ is commutative with respect to $\boldsymbol C_1 \cup \boldsymbol C_2$.
\end{theorem}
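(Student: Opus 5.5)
The plan is to reduce the statement to \cref{th:decor_revised_apriori_transitivity}. It then suffices to show that $\phi$ is commutative with respect to every pair $\{R_i, R_j\} \subseteq \boldsymbol C_1 \cup \boldsymbol C_2$ with $i \neq j$. Fix a shared argument $R_s \in \boldsymbol C_1 \cap \boldsymbol C_2$, which exists by assumption. The pairs split into three cases.

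If both $R_i, R_j \in \boldsymbol C_1$ or both lie in $\boldsymbol C_2$, pairwise commutativity follows directly from \cref{prop:decor_revised_apriori_downward_closure}, since $\abs{\{R_i,R_j\}} = 2$. The remaining case is $R_i \in \boldsymbol C_1 \setminus \boldsymbol C_2$ and $R_j \in \boldsymbol C_2 \setminus \boldsymbol C_1$. Then $R_i \neq R_s \neq R_j$, and $\phi$ is invariant under the transpositions $\tau_{is}$ and $\tau_{sj}$, again by \cref{prop:decor_revised_apriori_downward_closure} applied to $\{R_i,R_s\} \subseteq \boldsymbol C_1$ and $\{R_s,R_j\} \subseteq \boldsymbol C_2$.

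The key identity is $\tau_{ij} = \tau_{is} \circ \tau_{sj} \circ \tau_{is}$, which is a routine check on the three indices $i,s,j$. Invariance of $\phi$ under $\tau_{ij}$ then follows exactly as in the proof of \cref{th:decor_revised_apriori_transitivity}: applying the permutations successively to an arbitrary assignment, each step preserves the value of $\phi$. This gives commutativity with respect to $\{R_i,R_j\}$.

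With all pairs in $\boldsymbol C_1 \cup \boldsymbol C_2$ commutative, \cref{th:decor_revised_apriori_transitivity} yields commutativity with respect to $\boldsymbol C_1 \cup \boldsymbol C_2$. The only delicate point is the cross case, where the conjugation identity must be stated carefully so that the composition acts correctly on positions of assignments. Since invariance under a set of permutations is closed under composition whatever the convention, this does not affect the argument. One should also note that the arguments must share a common range for commutativity to be meaningful; this is inherited from the shared element $R_s$.
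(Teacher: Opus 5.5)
Your proposal is correct and follows essentially the same route as the paper's proof. Both reduce to pairwise commutativity via \cref{th:decor_revised_apriori_transitivity} and handle pairs lying inside $\boldsymbol C_1$ or inside $\boldsymbol C_2$ by \cref{prop:decor_revised_apriori_downward_closure}. Both then treat the cross pair through a shared argument and the conjugation identity $\tau_{ij} = \tau_{is} \circ \tau_{sj} \circ \tau_{is}$.
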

\begin{proof}
	Let $R_k \in \boldsymbol C_1 \cap \boldsymbol C_2$ denote a shared argument and let $\{R_i, R_j\} \subseteq \boldsymbol C_1 \cup \boldsymbol C_2$ denote any pair of arguments with $i \neq j$.
	If $\{R_i, R_j\} \subseteq \boldsymbol C_1$ or $\{R_i, R_j\} \subseteq \boldsymbol C_2$, then $\phi$ is commutative with respect to $\{R_i, R_j\}$ by \cref{prop:decor_revised_apriori_downward_closure}.
	Otherwise, $R_i$ and $R_j$ lie in different subsets, and we may assume without loss of generality that $R_i \in \boldsymbol C_1 \setminus \boldsymbol C_2$ and $R_j \in \boldsymbol C_2 \setminus \boldsymbol C_1$.
	Then, $\{R_i, R_k\} \subseteq \boldsymbol C_1$ and $\{R_k, R_j\} \subseteq \boldsymbol C_2$, so $\phi$ is commutative with respect to both $\{R_i, R_k\}$ and $\{R_k, R_j\}$ by \cref{prop:decor_revised_apriori_downward_closure}.
	Hence, $\phi$ is invariant under both transpositions $\tau_{ik}$ and $\tau_{kj}$ (i.e., the transpositions swapping indices $i$ and $k$ and indices $k$ and $j$, respectively) and consequently, $\phi$ is invariant under $\tau_{ij} = \tau_{ik} \circ \tau_{kj} \circ \tau_{ik}$ (where the composition sends $i \mapsto k \mapsto j \mapsto j$, $j \mapsto j \mapsto k \mapsto i$, and $k \mapsto i \mapsto i \mapsto k$, leaving $i$ and $j$ swapped while $k$ returns to its original position) because $\phi$ is invariant under each individual transposition on the right-hand side.
	Thus, $\phi$ is commutative with respect to $\{R_i, R_j\}$ and as $\{R_i, R_j\}$ is an arbitrary pair of arguments in $\boldsymbol C_1 \cup \boldsymbol C_2$, $\phi$ is commutative with respect to every pair of arguments in $\boldsymbol C_1 \cup \boldsymbol C_2$.
	\Cref{th:decor_revised_apriori_transitivity} then implies that $\phi$ is commutative with respect to $\boldsymbol C_1 \cup \boldsymbol C_2$.
\end{proof}

\Cref{th:decor_revised_cc_overlap} suggests an alternative algorithm, which we call \ac{decorcc}, that operates in only two phases: It first computes the set $\boldsymbol L_2$ of all commutative pairs of arguments (analogously to \ac{decorapriori}) and then iteratively merges overlapping subsets of arguments until a fixed point is reached.

\subsection{The \acs{decorcc} Algorithm}
\begin{algorithm}[t]
	\caption{Connected-Component Detection of Commutative Factors (CC-DECOR)}
	\label{alg:decor_revised_cc}
	\alginput{A factor $\phi(R_1, \ldots, R_n)$.} \\
	\algoutput{A set containing all maximum sized subsets of commutative arguments of $\phi$.}
	\begin{algorithmic}[1]
		\State $\boldsymbol L_2 \gets \{ \{R_i, R_j\} \mid 1 \leq i < j \leq n \land \phi \text{ is commutative with respect to } \{R_i, R_j\} \}$ \label{line:decor_revised_cc_init}
		\State $\boldsymbol M \gets \boldsymbol L_2$ \label{line:decor_revised_cc_init_m}
		\While{$\exists \boldsymbol C_1, \boldsymbol C_2 \in \boldsymbol M \colon \boldsymbol C_1 \neq \boldsymbol C_2 \land \boldsymbol C_1 \cap \boldsymbol C_2 \neq \emptyset$} \label{line:decor_revised_cc_while}
			\State $\boldsymbol M \gets (\boldsymbol M \setminus \{\boldsymbol C_1, \boldsymbol C_2\}) \cup \{\boldsymbol C_1 \cup \boldsymbol C_2\}$ \label{line:decor_revised_cc_merge}
		\EndWhile
		\State \Return $\{ \boldsymbol C \in \boldsymbol M \mid \forall \boldsymbol C' \in \boldsymbol M \colon \abs{\boldsymbol C} \geq \abs{\boldsymbol C'} \}$ \label{line:decor_revised_cc_return}
	\end{algorithmic}
\end{algorithm}

The \ac{decorcc} algorithm is presented in \cref{alg:decor_revised_cc} and starts by computing the set $\boldsymbol L_2$ of all commutative pairs of arguments in \cref{line:decor_revised_cc_init} (analogously to \ac{decorapriori}).
The set $\boldsymbol M$ is then initialised with the commutative pairs in \cref{line:decor_revised_cc_init_m} and iteratively updated by replacing any two distinct overlapping subsets $\boldsymbol C_1, \boldsymbol C_2 \in \boldsymbol M$ by their union $\boldsymbol C_1 \cup \boldsymbol C_2$ (\cref{line:decor_revised_cc_while,line:decor_revised_cc_merge}).
By \cref{th:decor_revised_cc_overlap}, every set added to $\boldsymbol M$ is a commutative subset of arguments, and the merging procedure terminates as soon as the subsets in $\boldsymbol M$ are pairwise disjoint.
Finally, \ac{decorcc} returns the subsets of maximum cardinality among the disjoint subsets in $\boldsymbol M$ in \cref{line:decor_revised_cc_return}.

The merging procedure in \cref{line:decor_revised_cc_while,line:decor_revised_cc_merge} can equivalently be viewed as computing the connected components of the \emph{commutativity graph} of $\phi$, that is, the graph whose vertices are the arguments of $\phi$ and whose edges are the pairs in $\boldsymbol L_2$.
A standard union-find data structure on $\{R_1, \allowbreak \ldots, \allowbreak R_n\}$ that calls $union(R_i, R_j)$ for every pair $\{R_i, R_j\} \in \boldsymbol L_2$ thus yields an implementation in which the merging phase runs in $O(n^2 \cdot \alpha(n))$ time, where $\alpha$ denotes the inverse Ackermann function~\citep{Tarjan1979a}.
The dominant cost of \ac{decorcc} is therefore the computation of $\boldsymbol L_2$ in \cref{line:decor_revised_cc_init}, which requires $\binom{n}{2}$ commutativity checks on pairs of arguments---the same cost as the initialisation step of \ac{decorapriori}.

\subsection{Correctness of the \acs{decorcc} Algorithm}
We now show that for a given factor $\phi$, \ac{decorcc} (\cref{alg:decor_revised_cc}) returns exactly the set of all commutative subsets of $\phi$'s arguments of maximum size.

\begin{theorem}
	Let $\phi(R_1, \allowbreak \ldots, \allowbreak R_n)$ denote a factor that is given as an input to \ac{decorcc} (\cref{alg:decor_revised_cc}).
	Then, the set returned by \ac{decorcc} contains exactly the maximum sized subsets of commutative arguments of $\phi$, or the empty set if no such subset exists.
\end{theorem}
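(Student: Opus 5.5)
Let $G$ be the commutativity graph of $\phi$: its vertices are $R_1, \ldots, R_n$ and its edges are the pairs in $\boldsymbol L_2$. The plan is to show that the merging loop in \cref{line:decor_revised_cc_while,line:decor_revised_cc_merge} terminates with $\boldsymbol M$ equal to the set of vertex sets of the connected components of $G$ that contain at least one edge. Then we characterise the commutative subsets in terms of these components.

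\emph{Invariants of the merging loop.} We track three invariants by induction over the iterations.
\begin{itemize}
	\item[(i)] Every $\boldsymbol C \in \boldsymbol M$ has $\abs{\boldsymbol C} \geq 2$ and $\phi$ is commutative with respect to $\boldsymbol C$. Initially this holds because $\boldsymbol M = \boldsymbol L_2$. A merge preserves it by \cref{th:decor_revised_cc_overlap}, since $\boldsymbol C_1 \cap \boldsymbol C_2 \neq \emptyset$.
	\item[(ii)] Every $\boldsymbol C \in \boldsymbol M$ induces a connected subgraph of $G$. Initially each element is a single edge. The union of two connected vertex sets sharing a vertex is connected.
	\item[(iii)] Every edge of $G$ lies inside some element of $\boldsymbol M$. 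Merging only enlarges sets, so no edge is lost.
\end{itemize}
Termination is immediate because $\abs{\boldsymbol M}$ decreases by one in each iteration. At termination the elements of $\boldsymbol M$ are pairwise disjoint. Combining (ii) and (iii) with disjointness, each element of $\boldsymbol M$ is exactly the vertex set of a non-trivial component of $G$. Indeed, a component $K$ is covered by the union of the $\boldsymbol M$-sets containing its edges. Any two such sets that are joined by an edge must coincide, since that edge lies in a single set and the sets are disjoint. Connectivity of $K$ then forces a single set, and (ii) prevents that set from extending beyond $K$.

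\emph{Maximum commutative sets are exactly the largest components.} Let $\boldsymbol C$ with $\abs{\boldsymbol C} \geq 2$ be commutative. By \cref{prop:decor_revised_apriori_downward_closure} every pair in $\boldsymbol C$ lies in $\boldsymbol L_2$, so $\boldsymbol C$ is a clique in $G$ and hence contained in one component $K$. By (i), $K \in \boldsymbol M$ is itself commutative. Consequently every maximum sized commutative subset has size at least $\abs{K}$ and is contained in $K$, so it equals $K$. Hence the maximum commutative size equals $\max_{\boldsymbol C \in \boldsymbol M} \abs{\boldsymbol C}$. The maximum sized commutative subsets are then precisely the elements of $\boldsymbol M$ of that cardinality, which is what \cref{line:decor_revised_cc_return} returns.

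\emph{Empty case.} If no commutative subset exists, then $\boldsymbol L_2 = \emptyset$, so $\boldsymbol M = \emptyset$ and the algorithm returns $\emptyset$.

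\emph{Main obstacle.} The step needing the most care is identifying the final $\boldsymbol M$ with the components, specifically the argument that disjointness together with invariant (iii) forces each component to coincide with a single set. I will handle it with the edge-based argument above rather than through union-find semantics.
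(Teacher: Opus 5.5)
Your proof is correct and follows the paper's structure. Both arguments use the invariant that every set in $\boldsymbol M$ is a commutative subset of size at least two (via \cref{th:decor_revised_cc_overlap}), termination with pairwise disjoint sets, and the fact that every commutative subset lies inside a single final set, because by \cref{prop:decor_revised_apriori_downward_closure} it is a clique of pairs in $\boldsymbol L_2$; this forces the maximum sized commutative subsets to coincide with the largest sets in $\boldsymbol M$. The difference is that your invariant (iii) together with disjointness makes rigorous the containment step, which the paper justifies only by saying that overlapping pairs are ``eventually merged''. Identifying the final $\boldsymbol M$ with the components via invariant (ii) is more than the theorem needs, since (iii) plus disjointness already places each clique inside one set.
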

\begin{proof}
	We first show that the following invariant holds throughout the execution of the loop in \cref{line:decor_revised_cc_while}: Every subset in $\boldsymbol M$ is a commutative subset of arguments of $\phi$ of size at least two.
	Initially, $\boldsymbol M = \boldsymbol L_2$ contains exactly the commutative pairs of arguments by construction, so the invariant holds.
	At each iteration, two distinct overlapping subsets $\boldsymbol C_1, \boldsymbol C_2 \in \boldsymbol M$ are replaced by their union $\boldsymbol C_1 \cup \boldsymbol C_2$ in \cref{line:decor_revised_cc_merge}, which is a commutative subset of size at least two by \cref{th:decor_revised_cc_overlap}.
	Hence, the invariant is preserved.
	The loop terminates because each iteration strictly decreases $\abs{\boldsymbol M}$ (two subsets are removed and one is added) and $\boldsymbol M$ is finite, and upon termination, the subsets in $\boldsymbol M$ are pairwise disjoint.

	Let $K = \max_{\boldsymbol C \in \boldsymbol M} \abs{\boldsymbol C}$ denote the size of the largest subset in $\boldsymbol M$ at termination (with $K = 0$ if $\boldsymbol M = \emptyset$).
	By the invariant, every $\boldsymbol C \in \boldsymbol M$ is a commutative subset of size at least two whenever $\boldsymbol M \neq \emptyset$, so the maximum size of any commutative subset of $\phi$ is at least $K$.
	Conversely, let $\boldsymbol C^{*} \subseteq \{R_1, \allowbreak \ldots, \allowbreak R_n\}$ with $\abs{\boldsymbol C^{*}} \geq 2$ denote any commutative subset of $\phi$.
	By \cref{prop:decor_revised_apriori_downward_closure}, every pair $\{R_i, R_j\} \subseteq \boldsymbol C^{*}$ with $i \neq j$ is contained in $\boldsymbol L_2$, that is, in $\boldsymbol M$ at the start of the loop.
	Since any two such pairs that share an argument are eventually merged, induction over the merging steps yields that there exists $\boldsymbol C \in \boldsymbol M$ at termination with $\boldsymbol C^{*} \subseteq \boldsymbol C$, and hence $\abs{\boldsymbol C^{*}} \leq \abs{\boldsymbol C} \leq K$.
	Consequently, the maximum size of any commutative subset of $\phi$ equals $K$, and the maximum sized subsets in $\boldsymbol M$ are exactly the maximum sized commutative subsets of $\phi$.
	The filtering step in \cref{line:decor_revised_cc_return} returns exactly those subsets of maximum cardinality in $\boldsymbol M$ (or the empty set if $\boldsymbol M = \emptyset$, that is, if no commutative subset of size at least two exists), which completes the proof.
\end{proof}

\section{Additional Experimental Results} \label{appendix:decor_revised_further_results}
In addition to the experimental results from \cref{sec:decor_revised_eval}, we provide further experimental results in this section.
While the plots given in \cref{fig:decor_revised_plot_avg} show averages of multiple runs for different choices of the number of commutative arguments $k$, we now present separate plots for each of the choices of the number of commutative arguments $k$ to highlight the influence of $k$ on the performance on the algorithms.
We again compare the run times of the original \ac{decor} algorithm, \ac{decorplus}, \ac{decorapriori}, and the brute-force algorithm on factors with $n \in \{2, \allowbreak 4, \allowbreak 6, \allowbreak 8, \allowbreak 10, \allowbreak 12, \allowbreak 14, \allowbreak 16\}$ Boolean arguments and set a timeout of five minutes per instance.

\begin{figure}[t]
	\centering
	\subfigure[\label{fig:decor_revised_plot_k=0_times}]{%
		\input{files/decor_revised_plot_k=0_times.tex}%
	}\hfill
	\subfigure[\label{fig:decor_revised_plot_k=0_phases}]{%
		\input{files/decor_revised_plot_k=0_phases_decorplus.tex}%
	}
	\caption{($a$) Run times of \ac{decor}, \ac{decorplus}, \ac{decorapriori}, and the brute-force algorithm for input factors with $n$ arguments, of which $k = 0$ arguments are commutative. ($b$) Proportion of the run time of \ac{decorplus} spent in the bucket loop and the verification step for input factors with $k = 0$ commutative arguments.}
	\label{fig:decor_revised_plot_k=0}
\end{figure}
\begin{figure}[t]
	\centering
	\subfigure[\label{fig:decor_revised_plot_k=2_times}]{%
		\input{files/decor_revised_plot_k=2_times.tex}%
	}\hfill
	\subfigure[\label{fig:decor_revised_plot_k=2_phases}]{%
		\input{files/decor_revised_plot_k=2_phases_decorplus.tex}%
	}
	\caption{($a$) Run times of \ac{decor}, \ac{decorplus}, \ac{decorapriori}, and the brute-force algorithm for input factors with $n$ arguments, of which $k = 2$ arguments are commutative. ($b$) Proportion of the run time of \ac{decorplus} spent in the bucket loop and the verification step for input factors with $k = 2$ commutative arguments.}
	\label{fig:decor_revised_plot_k=2}
\end{figure}
\begin{figure}[t]
	\centering
	\subfigure[\label{fig:decor_revised_plot_k=log2n_times}]{%
		\input{files/decor_revised_plot_k=log2n_times.tex}%
	}\hfill
	\subfigure[\label{fig:decor_revised_plot_k=log2n_phases}]{%
		\input{files/decor_revised_plot_k=log2n_phases_decorplus.tex}%
	}
	\caption{($a$) Run times of \ac{decor}, \ac{decorplus}, \ac{decorapriori}, and the brute-force algorithm for input factors with $n$ arguments, of which $k = \lfloor \log_2 n \rfloor$ arguments are commutative. ($b$) Proportion of the run time of \ac{decorplus} spent in the bucket loop and the verification step for input factors with $k = \lfloor \log_2 n \rfloor$ commutative arguments.}
	\label{fig:decor_revised_plot_k=log2n}
\end{figure}
\begin{figure}[t]
	\centering
	\subfigure[\label{fig:decor_revised_plot_k=ndiv2_times}]{%
		\input{files/decor_revised_plot_k=ndiv2_times.tex}%
	}\hfill
	\subfigure[\label{fig:decor_revised_plot_k=ndiv2_phases}]{%
		\input{files/decor_revised_plot_k=ndiv2_phases_decorplus.tex}%
	}
	\caption{($a$) Run times of \ac{decor}, \ac{decorplus}, \ac{decorapriori}, and the brute-force algorithm for input factors with $n$ arguments, of which $k = \lfloor n \mathbin{/} 2 \rfloor$ arguments are commutative. ($b$) Proportion of the run time of \ac{decorplus} spent in the bucket loop and the verification step for input factors with $k = \lfloor n \mathbin{/} 2 \rfloor$ commutative arguments.}
	\label{fig:decor_revised_plot_k=ndiv2}
\end{figure}
\begin{figure}[t]
	\centering
	\subfigure[\label{fig:decor_revised_plot_k=nsub1_times}]{%
		\input{files/decor_revised_plot_k=nsub1_times.tex}%
	}\hfill
	\subfigure[\label{fig:decor_revised_plot_k=nsub1_phases}]{%
		\input{files/decor_revised_plot_k=nsub1_phases_decorplus.tex}%
	}
	\caption{($a$) Run times of \ac{decor}, \ac{decorplus}, \ac{decorapriori}, and the brute-force algorithm for input factors with $n$ arguments, of which $k = n - 1$ arguments are commutative. ($b$) Proportion of the run time of \ac{decorplus} spent in the bucket loop and the verification step for input factors with $k = n - 1$ commutative arguments.}
	\label{fig:decor_revised_plot_k=nsub1}
\end{figure}
\begin{figure}[t]
	\centering
	\subfigure[\label{fig:decor_revised_plot_k=n_times}]{%
		\input{files/decor_revised_plot_k=n_times.tex}%
	}\hfill
	\subfigure[\label{fig:decor_revised_plot_k=n_phases}]{%
		\input{files/decor_revised_plot_k=n_phases_decorplus.tex}%
	}
	\caption{($a$) Run times of \ac{decor}, \ac{decorplus}, \ac{decorapriori}, and the brute-force algorithm for input factors with $n$ arguments, of which $k = n$ arguments are commutative. ($b$) Proportion of the run time of \ac{decorplus} spent in the bucket loop and the verification step for input factors with $k = n$ commutative arguments.}
	\label{fig:decor_revised_plot_k=n}
\end{figure}

The plots for individual choices of $k$ in \cref{fig:decor_revised_plot_k=0,fig:decor_revised_plot_k=2,fig:decor_revised_plot_k=log2n,fig:decor_revised_plot_k=ndiv2,fig:decor_revised_plot_k=nsub1,fig:decor_revised_plot_k=n} confirm the general picture observed in \cref{fig:decor_revised_plot_avg} across all values of $k$: \Ac{decorplus} consistently outperforms \ac{decorapriori} and the brute-force algorithm runs into the timeout for $n > 12$---with the exception of $k = n - 1$ and $k = n$ (\cref{fig:decor_revised_plot_k=nsub1,fig:decor_revised_plot_k=n}), where its decreasing-size search finds a commutative subset within a few iterations (which also explains its superior performance for the choice of $k = n$, where the brute-force algorithm finds the unique commutative subset spanning all arguments immediately).
For $k = 0$ (\cref{fig:decor_revised_plot_k=0}), there exists no subset of commutative arguments at all, and the bucket loop of \ac{decorplus} rules out every candidate subset immediately before even reaching the verification step, which explains the missing bars for the verification step in \cref{fig:decor_revised_plot_k=0_phases}.
For the remaining values of $k$, the verification step still contributes only a small fraction of the total run time of \ac{decorplus}, in line with the observation in \cref{sec:decor_revised_eval} (again, note that due to the logarithmic scale on the $y$-axis, the \enquote{Candidates} and \enquote{Verification} bars do not visually add up to the \enquote{Total} bar).

\begin{figure}
	\centering
	\subfigure[\label{fig:decor_revised_plot_groups_vary_g}]{%
		\input{files/decor_revised_plot_groups_vary_g.tex}%
	}\hfill
	\subfigure[\label{fig:decor_revised_plot_groups_vary_s}]{%
		\input{files/decor_revised_plot_groups_vary_s.tex}%
	}
	\caption{Run times of \ac{decor}, \ac{decorplus}, \ac{decorapriori}, and the brute-force algorithm on input factors with $n$ arguments partitioned into $g$ disjoint commutative groups of size $s = n \mathbin{/} g$ each. ($a$) varies $g$ at the smallest available $s$ for each $n$; ($b$) varies $s$ at the smallest available $g$ for each $n$.}
	\label{fig:decor_revised_plot_groups}
\end{figure}

So far, all considered input factors had a single subset of commutative arguments of size $k$.
Next, we investigate how the algorithms behave when there are multiple disjoint subsets of commutative arguments (which is unlikely in practice---nevertheless an evaluation of this setting has been missing in the original \ac{decor} paper~\citep{Luttermann2024f}).
In particular, we now consider input factors with $n \in \{4, \allowbreak 6, \allowbreak 8, \allowbreak 10, \allowbreak 12, \allowbreak 14, \allowbreak 16\}$ Boolean arguments and partition the $n$ arguments into $g$ disjoint commutative groups of equal size $s = \lfloor n \mathbin{/} g \rfloor$ for every divisor $g$ of $n$ satisfying $2 \leq g \leq n \mathbin{/} 2$ (that is, there are $k = g \cdot s$ commutative arguments in total), and we again set a timeout of five minutes per instance.
\Cref{fig:decor_revised_plot_groups} examines how the algorithms behave when the commutative arguments are split into $g$ disjoint groups of size $s$ each.
The run times of \ac{decorplus} in \cref{fig:decor_revised_plot_groups} are essentially indistinguishable from the run times observed for a single subset of commutative arguments in the previous plots, regardless of how the commutative arguments are distributed across groups: \Ac{decorplus} consistently outperforms \ac{decorapriori} for every individual choice of $g$ and $s$, respectively, confirming that the bucket-based pruning of \ac{decorplus} maintains its high effectiveness independent of the group structure of the commutative arguments.

Before we close this section, we briefly compare the \ac{decorapriori} algorithm to \ac{decorcc} and discuss a selection of heuristics that seem promising to improve the performance of \ac{decorplus} in practice.
We have seen that \cref{prop:decor_revised_bound_on_commutative_subset} provides an a-priori upper bound on the size of any commutative subset of $\phi$, given by the smallest intermediate result of the candidate subsets.
Since the candidate set $\boldsymbol C_{cand}$ is monotonically refined by intersection with $\boldsymbol C'$ at every bucket, the order in which the buckets are processed does not affect the final result returned by \ac{decorplus}, but it might affect the size of the intermediate candidate set $\boldsymbol C_{cand}$ and hence the overall run time.
Concretely, a bucket that produces a small, restrictive $\boldsymbol C'$ collapses $\boldsymbol C_{cand}$ early, which in turn makes every subsequent cross-intersection step cheap.
Our goal hence is to find restrictive buckets early to keep intermediate results for the candidate subsets small.

To exploit this observation, we investigate the effect of a preliminary bucket-ordering pass that sorts the buckets entailed by $\phi$ before computing the intersections between the candidate subsets according to a heuristic that prioritises buckets that are likely to collapse $\boldsymbol C_{cand}$ early.
We consider four heuristics, ordered by increasing pre-processing cost:
\begin{enumerate}
	\item \textbf{Smallest bucket first (SBF).} Sort the buckets in $\mathcal B(\phi)$ by $\abs{\phi^{\succ}(b)}$ in ascending order.
	\item \textbf{Least groups first (LGF).} Sort the buckets by the number of maximal groups of identical potentials of size at least two in ascending order.
	\item \textbf{Smallest candidate set first (SCSF).} Sort the buckets by the size of the candidate set $\boldsymbol C'$ in ascending order.
	\item \textbf{Smallest minimal candidate first (SMCF).} Sort the buckets by the size of the smallest minimal candidate $\min_{\boldsymbol C_i \in \boldsymbol C'} \abs{\boldsymbol C_i}$ in ascending order.
\end{enumerate}

\begin{figure}
	\centering
	\subfigure[\label{fig:decor_revised_plot_avg_times_apriori_cc}]{%
		\input{files/decor_revised_plot_avg_times_apriori.tex}%
	}\hfill
	\subfigure[\label{fig:decor_revised_plot_avg_times_heuristics}]{%
		\input{files/decor_revised_plot_avg_times_heuristics.tex}%
	}
	\caption{($a$) Average run times of \ac{decorapriori} and \ac{decorcc} for input factors with $n$ arguments. ($b$) Average run times of \ac{decorplus} and its four bucket-ordering heuristics for input factors with $n$ arguments.}
	\label{fig:decor_revised_plot_avg_times_apriori_heuristic}
\end{figure}

\Cref{fig:decor_revised_plot_avg_times_apriori_cc} compares \ac{decorapriori} to \ac{decorcc}.
The run times of both algorithms are essentially indistinguishable, indicating that the merging procedure plays a minor role since the dominant cost of both algorithms is the initial computation of all pairwise commutative arguments.
Furthermore, \cref{fig:decor_revised_plot_avg_times_heuristics} compares the run times of \ac{decorplus} and its four bucket-ordering heuristics.
Here, again, the four heuristic variants of \ac{decorplus} achieve run times indistinguishable from the run time of plain \ac{decorplus}, showcasing that the bucket order does not have a significant influence on the overall run time of \ac{decorplus} (probably because the pruning step produces intermediate results independent of the bucket order).
\end{document}

%% file: defs.tex
\acrodef{acp}[ACP]{advanced colour passing}
\acrodef{apricot}[APRICOT]{Apriori-style detection of commutative factors}
\acrodef{bn}[BN]{Bayesian network}
\acrodef{cp}[CP]{colour passing}
\acrodef{crv}[CRV]{counting randvar}
\acrodef{decor}[DECOR]{detection of commutative factors}
\acrodef{decorapriori}[A-DECOR]{Apriori-style detection of commutative factors}
\acrodef{decorcc}[CC-DECOR]{connected-component detection of commutative factors}
\acrodef{decorplus}[DECOR+]{detection of commutative factors revised}
\acrodef{deft}[DEFT]{detection of exchangeable factors}
\acrodef{fg}[FG]{factor graph}
\acrodef{ljt}[LJT]{lifted junction tree}
\acrodef{lv}[logvar]{logical variable}
\acrodef{lve}[LVE]{lifted variable elimination}
\acrodef{mn}[MN]{Markov network}
\acrodef{pcrv}[PCRV]{parameterised CRV}
\acrodef{pf}[parfactor]{parametric factor}
\acrodef{pfg}[PFG]{parametric factor graph}
\acrodef{prv}[PRV]{parameterised randvar}
\acrodef{rv}[randvar]{random variable}

\crefname{algorithm}{Alg.}{Algs.}
\Crefname{algorithm}{Algorithm}{Algorithms}
\crefname{appendix}{App.}{Apps.}
\Crefname{appendix}{Appendix}{Appendices}
\crefname{claim}{Claim}{Claims}
\Crefname{claim}{Claim}{Claims}
\crefname{corollary}{Cor.}{Cors.}
\Crefname{corollary}{Corollary}{Corollaries}
\crefname{definition}{Def.}{Defs.}
\Crefname{definition}{Definition}{Definitions}
\crefname{example}{Ex.}{Exs.}
\Crefname{example}{Example}{Examples}
\crefname{proposition}{Prop.}{Props.}
\Crefname{proposition}{Proposition}{Propositions}
\crefname{section}{Sec.}{Secs.}
\Crefname{section}{Section}{Sections}
\crefname{theorem}{Thm.}{Thms.}
\Crefname{theorem}{Theorem}{Theorems}

\newtheorem{claim}[theorem]{Claim}

\newcommand{\alginput}[1]{\hspace*{\algorithmicindent} \textbf{Input:} #1}
\newcommand{\algoutput}[1]{\hspace*{\algorithmicindent} \textbf{Output:} #1}

\newcommand{\abs}[1]{\ensuremath{\lvert #1 \rvert}}

\newcommand{\domain}[1]{\ensuremath{\mathrm{dom}(#1)}}

\newcommand{\range}[1]{\ensuremath{\mathrm{range}(#1)}}

\newcommand{\low}{\ensuremath{\mathrm{low}}}

\newcommand{\high}{\ensuremath{\mathrm{high}}}

\definecolor{myyellow}{RGB}{247,192,26}
\definecolor{myblue}{RGB}{37,122,164}
\definecolor{mygreen}{RGB}{78,155,133}
\definecolor{mypurple}{RGB}{86,51,94}

\definecolor{newblue}{RGB}{50,113,173}
\definecolor{newred}{RGB}{222,32,36}
\definecolor{newgreen}{RGB}{70,165,69}
\definecolor{newpurple}{RGB}{140,69,152}

\definecolor{cborange}{RGB}{230,159,0}
\definecolor{cbblue}{RGB}{30,136,229}
\definecolor{cbbluedark}{RGB}{46,37,133}
\definecolor{cbpurple}{RGB}{170,68,153}
\definecolor{cbgreen}{RGB}{0,77,64}
\definecolor{cbgreenlight}{RGB}{93,168,153}
\definecolor{cbbrown}{RGB}{126,41,84}

\pgfdeclarelayer{bg}
\pgfsetlayers{bg,main}

\tikzset{
	rv/.style={draw, ellipse},
	pf/.style={draw, rectangle, fill = gray!30},
	arc/.style = {->, >={[round,sep]Stealth}},
	reversearc/.style = {<-, >={[round,sep]Stealth}},
	doublearc/.style = {<->, >={[round,sep]Stealth}},
}

\newcommand\factor[6]{
	\node[pf, #1=#3 of #2, label={#4:{#5}}](#6) {};
}

%% file: files/decor_revised_example_fg.tex
\begin{tikzpicture}[rv/.append style={minimum height=2.2em, minimum width=5.2em}]
	\node[rv, draw] (rev) {$Rev$};
	\node[rv, draw, above left = 2.5em and 0.3em of rev] (ca) {$ComA$};
	\node[rv, draw, above right = 2.5em and 0.3em of rev] (cb) {$ComB$};
	\node[rv, draw, left = 1.5em of rev] (sa) {$SalA$};
	\node[rv, draw, right = 1.5em of rev] (sb) {$SalB$};

	\factor{above}{ca}{0.75em}{180}{$f_1$}{f1}
	\factor{above}{cb}{0.75em}{0}{$f_2$}{f2}
	\factor{above}{rev}{0.75em}{90}{$f_3$}{f3}
	\factor{above}{sa}{1.0em}{180}{$f_4$}{f4}
	\factor{above}{sb}{1.0em}{0}{$f_5$}{f5}

	\draw (f1) -- (ca);
	\draw (f2) -- (cb);
	\draw (ca) -- (f3);
	\draw (cb) -- (f3);
	\draw (f3) -- (rev);
	\draw (rev) -- (f4);
	\draw (rev) -- (f5);
	\draw (ca) -- (f4);
	\draw (cb) -- (f5);
	\draw (f4) -- (sa);
	\draw (f5) -- (sb);
\end{tikzpicture}

%% file: files/decor_revised_example_potential_table.tex
\begin{tabular}{lllc}
	\toprule
	$ComA$ & $ComB$ & $Rev$ & $\phi_3(ComA, ComB, Rev)$ \\ \midrule
	\high  & \high  & \high & $\varphi_{1}$             \\
	\high  & \high  & \low  & $\varphi_{2}$             \\
	\high  & \low   & \high & $\varphi_{3}$             \\
	\high  & \low   & \low  & $\varphi_{4}$             \\
	\low   & \high  & \high & $\varphi_{3}$             \\
	\low   & \high  & \low  & $\varphi_{4}$             \\
	\low   & \low   & \high & $\varphi_{5}$             \\
	\low   & \low   & \low  & $\varphi_{6}$             \\ \bottomrule
\end{tabular}

%% file: files/decor_revised_example_buckets.tex
\begin{tikzpicture}
	\node (tab_f1) {
		\begin{tabular}{lllcc}
			\toprule
			$ComA$ & $ComB$ & $Rev$ & $\phi_3(ComA, ComB, Rev)$ & $b$ \\ \midrule
			\high  & \high  & \high & $\varphi_{1}$             & {\color{cborange}$[3,0]$} \\
			\high  & \high  & \low  & $\varphi_{2}$             & {\color{cbbluedark}$[2,1]$} \\
			\high  & \low   & \high & $\varphi_{3}$             & {\color{cbbluedark}$[2,1]$} \\
			\high  & \low   & \low  & $\varphi_{4}$             & {\color{cbpurple}$[1,2]$} \\
			\low   & \high  & \high & $\varphi_{3}$             & {\color{cbbluedark}$[2,1]$} \\
			\low   & \high  & \low  & $\varphi_{4}$             & {\color{cbpurple}$[1,2]$} \\
			\low   & \low   & \high & $\varphi_{5}$             & {\color{cbpurple}$[1,2]$} \\
			\low   & \low   & \low  & $\varphi_{6}$             & {\color{cbgreen}$[0,3]$} \\ \bottomrule
		\end{tabular}
	};

	\node[right = 0.2cm of tab_f1] (buckets) {
		\begin{tabular}{cc}
			\toprule
			$b$                         & $\phi_3^{\succ}(b)$                               \\ \midrule
			{\color{cborange}$[3,0]$}   & $\langle \varphi_1 \rangle$                       \\
			{\color{cbbluedark}$[2,1]$} & $\langle \varphi_2, \varphi_3, \varphi_3 \rangle$ \\
			{\color{cbpurple}$[1,2]$}   & $\langle \varphi_4, \varphi_4, \varphi_5 \rangle$ \\
			{\color{cbgreen}$[0,3]$}    & $\langle \varphi_6 \rangle$                       \\ \bottomrule
		\end{tabular}
	};
\end{tikzpicture}

%% file: files/decor_revised_plot_avg_times.tex
\begin{tikzpicture}[x=1pt,y=1pt]
\definecolor{fillColor}{RGB}{255,255,255}
\path[use as bounding box,fill=fillColor,fill opacity=0.00] (0,0) rectangle (209.58,115.63);
\begin{scope}
\path[clip] (  0.00,  0.00) rectangle (209.58,115.63);
\definecolor{drawColor}{RGB}{255,255,255}
\definecolor{fillColor}{RGB}{255,255,255}

\path[draw=drawColor,line width= 0.5pt,line join=round,line cap=round,fill=fillColor] (  0.00,  0.00) rectangle (209.58,115.63);
\end{scope}
\begin{scope}
\path[clip] ( 34.42, 28.73) rectangle (203.58,113.63);
\definecolor{fillColor}{RGB}{255,255,255}

\path[fill=fillColor] ( 34.42, 28.73) rectangle (203.58,113.63);
\definecolor{drawColor}{RGB}{247,192,26}

\path[draw=drawColor,line width= 0.5pt,line join=round] ( 42.11, 41.31) --
	( 64.07, 46.93) --
	( 86.04, 53.98) --
	(108.01, 60.20) --
	(129.98, 67.20) --
	(151.95, 74.30) --
	(173.92, 82.34) --
	(195.89, 90.38);
\definecolor{drawColor}{RGB}{78,155,133}

\path[draw=drawColor,line width= 0.5pt,dash pattern=on 4pt off 4pt ,line join=round] ( 42.11, 43.08) --
	( 64.07, 49.00) --
	( 86.04, 55.63) --
	(108.01, 61.74) --
	(129.98, 68.53) --
	(151.95, 75.36) --
	(173.92, 83.08) --
	(195.89, 90.88);
\definecolor{drawColor}{RGB}{37,122,164}

\path[draw=drawColor,line width= 0.5pt,dash pattern=on 1pt off 3pt ,line join=round] ( 42.11, 41.13) --
	( 64.07, 53.24) --
	( 86.04, 62.66) --
	(108.01, 71.26) --
	(129.98, 79.52) --
	(151.95, 87.29) --
	(173.92, 95.20) --
	(195.89,103.27);
\definecolor{drawColor}{RGB}{86,51,94}

\path[draw=drawColor,line width= 0.5pt,dash pattern=on 7pt off 3pt ,line join=round] ( 42.11, 41.20) --
	( 64.07, 53.38) --
	( 86.04, 65.34) --
	(108.01, 77.67) --
	(129.98, 89.66) --
	(151.95,101.85);
\definecolor{drawColor}{RGB}{247,192,26}
\definecolor{fillColor}{RGB}{247,192,26}

\path[draw=drawColor,line width= 0.4pt,line join=round,line cap=round,fill=fillColor] (173.92, 82.34) circle (  1.53);

\path[draw=drawColor,line width= 0.4pt,line join=round,line cap=round,fill=fillColor] ( 64.07, 46.93) circle (  1.53);

\path[draw=drawColor,line width= 0.4pt,line join=round,line cap=round,fill=fillColor] (151.95, 74.30) circle (  1.53);

\path[draw=drawColor,line width= 0.4pt,line join=round,line cap=round,fill=fillColor] (129.98, 67.20) circle (  1.53);

\path[draw=drawColor,line width= 0.4pt,line join=round,line cap=round,fill=fillColor] ( 42.11, 41.31) circle (  1.53);

\path[draw=drawColor,line width= 0.4pt,line join=round,line cap=round,fill=fillColor] ( 86.04, 53.98) circle (  1.53);

\path[draw=drawColor,line width= 0.4pt,line join=round,line cap=round,fill=fillColor] (195.89, 90.38) circle (  1.53);

\path[draw=drawColor,line width= 0.4pt,line join=round,line cap=round,fill=fillColor] (108.01, 60.20) circle (  1.53);
\definecolor{drawColor}{RGB}{86,51,94}

\path[draw=drawColor,line width= 0.4pt,line join=round,line cap=round] ( 62.54, 51.84) -- ( 65.61, 54.91);

\path[draw=drawColor,line width= 0.4pt,line join=round,line cap=round] ( 62.54, 54.91) -- ( 65.61, 51.84);

\path[draw=drawColor,line width= 0.4pt,line join=round,line cap=round] ( 61.91, 53.38) -- ( 66.24, 53.38);

\path[draw=drawColor,line width= 0.4pt,line join=round,line cap=round] ( 64.07, 51.21) -- ( 64.07, 55.55);

\path[draw=drawColor,line width= 0.4pt,line join=round,line cap=round] (150.42,100.32) -- (153.49,103.38);

\path[draw=drawColor,line width= 0.4pt,line join=round,line cap=round] (150.42,103.38) -- (153.49,100.32);

\path[draw=drawColor,line width= 0.4pt,line join=round,line cap=round] (149.79,101.85) -- (154.12,101.85);

\path[draw=drawColor,line width= 0.4pt,line join=round,line cap=round] (151.95, 99.68) -- (151.95,104.02);

\path[draw=drawColor,line width= 0.4pt,line join=round,line cap=round] (128.45, 88.13) -- (131.52, 91.20);

\path[draw=drawColor,line width= 0.4pt,line join=round,line cap=round] (128.45, 91.20) -- (131.52, 88.13);

\path[draw=drawColor,line width= 0.4pt,line join=round,line cap=round] (127.82, 89.66) -- (132.15, 89.66);

\path[draw=drawColor,line width= 0.4pt,line join=round,line cap=round] (129.98, 87.49) -- (129.98, 91.83);

\path[draw=drawColor,line width= 0.4pt,line join=round,line cap=round] ( 40.57, 39.67) -- ( 43.64, 42.73);

\path[draw=drawColor,line width= 0.4pt,line join=round,line cap=round] ( 40.57, 42.73) -- ( 43.64, 39.67);

\path[draw=drawColor,line width= 0.4pt,line join=round,line cap=round] ( 39.94, 41.20) -- ( 44.27, 41.20);

\path[draw=drawColor,line width= 0.4pt,line join=round,line cap=round] ( 42.11, 39.03) -- ( 42.11, 43.37);

\path[draw=drawColor,line width= 0.4pt,line join=round,line cap=round] ( 84.51, 63.81) -- ( 87.58, 66.88);

\path[draw=drawColor,line width= 0.4pt,line join=round,line cap=round] ( 84.51, 66.88) -- ( 87.58, 63.81);

\path[draw=drawColor,line width= 0.4pt,line join=round,line cap=round] ( 83.88, 65.34) -- ( 88.21, 65.34);

\path[draw=drawColor,line width= 0.4pt,line join=round,line cap=round] ( 86.04, 63.17) -- ( 86.04, 67.51);

\path[draw=drawColor,line width= 0.4pt,line join=round,line cap=round] (106.48, 76.13) -- (109.55, 79.20);

\path[draw=drawColor,line width= 0.4pt,line join=round,line cap=round] (106.48, 79.20) -- (109.55, 76.13);

\path[draw=drawColor,line width= 0.4pt,line join=round,line cap=round] (105.85, 77.67) -- (110.18, 77.67);

\path[draw=drawColor,line width= 0.4pt,line join=round,line cap=round] (108.01, 75.50) -- (108.01, 79.84);
\definecolor{fillColor}{RGB}{78,155,133}

\path[fill=fillColor] (173.92, 85.47) --
	(175.99, 81.89) --
	(171.86, 81.89) --
	cycle;

\path[fill=fillColor] ( 64.07, 51.39) --
	( 66.14, 47.81) --
	( 62.01, 47.81) --
	cycle;

\path[fill=fillColor] (151.95, 77.74) --
	(154.02, 74.17) --
	(149.89, 74.17) --
	cycle;

\path[fill=fillColor] (129.98, 70.92) --
	(132.05, 67.34) --
	(127.92, 67.34) --
	cycle;

\path[fill=fillColor] ( 42.11, 45.46) --
	( 44.17, 41.88) --
	( 40.04, 41.88) --
	cycle;

\path[fill=fillColor] ( 86.04, 58.02) --
	( 88.11, 54.44) --
	( 83.98, 54.44) --
	cycle;

\path[fill=fillColor] (195.89, 93.27) --
	(197.96, 89.69) --
	(193.83, 89.69) --
	cycle;

\path[fill=fillColor] (108.01, 64.13) --
	(110.08, 60.55) --
	(105.95, 60.55) --
	cycle;
\definecolor{fillColor}{RGB}{37,122,164}

\path[fill=fillColor] (172.39, 93.66) --
	(175.46, 93.66) --
	(175.46, 96.73) --
	(172.39, 96.73) --
	cycle;

\path[fill=fillColor] ( 62.54, 51.71) --
	( 65.61, 51.71) --
	( 65.61, 54.78) --
	( 62.54, 54.78) --
	cycle;

\path[fill=fillColor] (150.42, 85.75) --
	(153.49, 85.75) --
	(153.49, 88.82) --
	(150.42, 88.82) --
	cycle;

\path[fill=fillColor] (128.45, 77.98) --
	(131.52, 77.98) --
	(131.52, 81.05) --
	(128.45, 81.05) --
	cycle;

\path[fill=fillColor] ( 40.57, 39.60) --
	( 43.64, 39.60) --
	( 43.64, 42.67) --
	( 40.57, 42.67) --
	cycle;

\path[fill=fillColor] ( 84.51, 61.13) --
	( 87.58, 61.13) --
	( 87.58, 64.20) --
	( 84.51, 64.20) --
	cycle;

\path[fill=fillColor] (194.36,101.74) --
	(197.43,101.74) --
	(197.43,104.81) --
	(194.36,104.81) --
	cycle;

\path[fill=fillColor] (106.48, 69.73) --
	(109.55, 69.73) --
	(109.55, 72.80) --
	(106.48, 72.80) --
	cycle;
\end{scope}
\begin{scope}
\path[clip] (  0.00,  0.00) rectangle (209.58,115.63);
\definecolor{drawColor}{RGB}{0,0,0}

\path[draw=drawColor,line width= 0.5pt,line join=round] ( 34.42, 28.73) --
	( 34.42,113.63);

\path[draw=drawColor,line width= 0.5pt,line join=round] ( 35.55,111.66) --
	( 34.42,113.63) --
	( 33.28,111.66);
\end{scope}
\begin{scope}
\path[clip] (  0.00,  0.00) rectangle (209.58,115.63);
\definecolor{drawColor}{gray}{0.30}

\node[text=drawColor,anchor=base east,inner sep=0pt, outer sep=0pt, scale=  0.70] at ( 30.37, 31.03) {1e-03};

\node[text=drawColor,anchor=base east,inner sep=0pt, outer sep=0pt, scale=  0.70] at ( 30.37, 49.90) {1e-01};

\node[text=drawColor,anchor=base east,inner sep=0pt, outer sep=0pt, scale=  0.70] at ( 30.37, 68.77) {1e+01};

\node[text=drawColor,anchor=base east,inner sep=0pt, outer sep=0pt, scale=  0.70] at ( 30.37, 87.64) {1e+03};

\node[text=drawColor,anchor=base east,inner sep=0pt, outer sep=0pt, scale=  0.70] at ( 30.37,106.50) {1e+05};
\end{scope}
\begin{scope}
\path[clip] (  0.00,  0.00) rectangle (209.58,115.63);
\definecolor{drawColor}{gray}{0.20}

\path[draw=drawColor,line width= 0.5pt,line join=round] ( 32.17, 33.45) --
	( 34.42, 33.45);

\path[draw=drawColor,line width= 0.5pt,line join=round] ( 32.17, 52.31) --
	( 34.42, 52.31);

\path[draw=drawColor,line width= 0.5pt,line join=round] ( 32.17, 71.18) --
	( 34.42, 71.18);

\path[draw=drawColor,line width= 0.5pt,line join=round] ( 32.17, 90.05) --
	( 34.42, 90.05);

\path[draw=drawColor,line width= 0.5pt,line join=round] ( 32.17,108.92) --
	( 34.42,108.92);
\end{scope}
\begin{scope}
\path[clip] (  0.00,  0.00) rectangle (209.58,115.63);
\definecolor{drawColor}{RGB}{0,0,0}

\path[draw=drawColor,line width= 0.5pt,line join=round] ( 34.42, 28.73) --
	(203.58, 28.73);

\path[draw=drawColor,line width= 0.5pt,line join=round] (201.61, 27.59) --
	(203.58, 28.73) --
	(201.61, 29.87);
\end{scope}
\begin{scope}
\path[clip] (  0.00,  0.00) rectangle (209.58,115.63);
\definecolor{drawColor}{gray}{0.20}

\path[draw=drawColor,line width= 0.5pt,line join=round] ( 64.07, 26.48) --
	( 64.07, 28.73);

\path[draw=drawColor,line width= 0.5pt,line join=round] (108.01, 26.48) --
	(108.01, 28.73);

\path[draw=drawColor,line width= 0.5pt,line join=round] (151.95, 26.48) --
	(151.95, 28.73);

\path[draw=drawColor,line width= 0.5pt,line join=round] (195.89, 26.48) --
	(195.89, 28.73);
\end{scope}
\begin{scope}
\path[clip] (  0.00,  0.00) rectangle (209.58,115.63);
\definecolor{drawColor}{gray}{0.30}

\node[text=drawColor,anchor=base,inner sep=0pt, outer sep=0pt, scale=  0.70] at ( 64.07, 19.86) {4};

\node[text=drawColor,anchor=base,inner sep=0pt, outer sep=0pt, scale=  0.70] at (108.01, 19.86) {8};

\node[text=drawColor,anchor=base,inner sep=0pt, outer sep=0pt, scale=  0.70] at (151.95, 19.86) {12};

\node[text=drawColor,anchor=base,inner sep=0pt, outer sep=0pt, scale=  0.70] at (195.89, 19.86) {16};
\end{scope}
\begin{scope}
\path[clip] (  0.00,  0.00) rectangle (209.58,115.63);
\definecolor{drawColor}{RGB}{0,0,0}

\node[text=drawColor,anchor=base,inner sep=0pt, outer sep=0pt, scale=  0.80] at (119.00, 10.74) {$n$};
\end{scope}
\begin{scope}
\path[clip] (  0.00,  0.00) rectangle (209.58,115.63);
\definecolor{drawColor}{RGB}{0,0,0}

\node[text=drawColor,rotate= 90.00,anchor=base,inner sep=0pt, outer sep=0pt, scale=  0.80] at (  7.51, 71.18) {time (ms)};
\end{scope}
\begin{scope}
\path[clip] (  0.00,  0.00) rectangle (209.58,115.63);

\path[] ( 34.42,  1.00) rectangle (210.27,  7.18);
\end{scope}
\begin{scope}
\path[clip] (  0.00,  0.00) rectangle (209.58,115.63);
\definecolor{drawColor}{RGB}{247,192,26}

\path[draw=drawColor,line width= 0.5pt,line join=round] ( 17.50,  4.09) -- ( 26.17,  4.09);
\end{scope}
\begin{scope}
\path[clip] (  0.00,  0.00) rectangle (209.58,115.63);
\definecolor{drawColor}{RGB}{247,192,26}
\definecolor{fillColor}{RGB}{247,192,26}

\path[draw=drawColor,line width= 0.4pt,line join=round,line cap=round,fill=fillColor] ( 21.84,  4.09) circle (  1.53);
\end{scope}
\begin{scope}
\path[clip] (  0.00,  0.00) rectangle (209.58,115.63);
\definecolor{drawColor}{RGB}{78,155,133}

\path[draw=drawColor,line width= 0.5pt,dash pattern=on 4pt off 4pt ,line join=round] ( 60.10,  4.09) -- ( 68.77,  4.09);
\end{scope}
\begin{scope}
\path[clip] (  0.00,  0.00) rectangle (209.58,115.63);
\definecolor{fillColor}{RGB}{78,155,133}

\path[fill=fillColor] ( 64.43,  6.48) --
	( 66.50,  2.90) --
	( 62.37,  2.90) --
	cycle;
\end{scope}
\begin{scope}
\path[clip] (  0.00,  0.00) rectangle (209.58,115.63);
\definecolor{drawColor}{RGB}{37,122,164}

\path[draw=drawColor,line width= 0.5pt,dash pattern=on 1pt off 3pt ,line join=round] (108.14,  4.09) -- (116.81,  4.09);
\end{scope}
\begin{scope}
\path[clip] (  0.00,  0.00) rectangle (209.58,115.63);
\definecolor{fillColor}{RGB}{37,122,164}

\path[fill=fillColor] (110.94,  2.56) --
	(114.01,  2.56) --
	(114.01,  5.62) --
	(110.94,  5.62) --
	cycle;
\end{scope}
\begin{scope}
\path[clip] (  0.00,  0.00) rectangle (209.58,115.63);
\definecolor{drawColor}{RGB}{86,51,94}

\path[draw=drawColor,line width= 0.5pt,dash pattern=on 7pt off 3pt ,line join=round] (158.32,  4.09) -- (166.99,  4.09);
\end{scope}
\begin{scope}
\path[clip] (  0.00,  0.00) rectangle (209.58,115.63);
\definecolor{drawColor}{RGB}{86,51,94}

\path[draw=drawColor,line width= 0.4pt,line join=round,line cap=round] (161.12,  2.56) -- (164.19,  5.62);

\path[draw=drawColor,line width= 0.4pt,line join=round,line cap=round] (161.12,  5.62) -- (164.19,  2.56);

\path[draw=drawColor,line width= 0.4pt,line join=round,line cap=round] (160.49,  4.09) -- (164.82,  4.09);

\path[draw=drawColor,line width= 0.4pt,line join=round,line cap=round] (162.65,  1.92) -- (162.65,  6.26);
\end{scope}
\begin{scope}
\path[clip] (  0.00,  0.00) rectangle (209.58,115.63);
\definecolor{drawColor}{RGB}{0,0,0}

\node[text=drawColor,anchor=base west,inner sep=0pt, outer sep=0pt, scale=  0.70] at ( 27.26,  1.68) {DECOR};
\end{scope}
\begin{scope}
\path[clip] (  0.00,  0.00) rectangle (209.58,115.63);
\definecolor{drawColor}{RGB}{0,0,0}

\node[text=drawColor,anchor=base west,inner sep=0pt, outer sep=0pt, scale=  0.70] at ( 69.85,  1.68) {DECOR+};
\end{scope}
\begin{scope}
\path[clip] (  0.00,  0.00) rectangle (209.58,115.63);
\definecolor{drawColor}{RGB}{0,0,0}

\node[text=drawColor,anchor=base west,inner sep=0pt, outer sep=0pt, scale=  0.70] at (117.90,  1.68) {A-DECOR};
\end{scope}
\begin{scope}
\path[clip] (  0.00,  0.00) rectangle (209.58,115.63);
\definecolor{drawColor}{RGB}{0,0,0}

\node[text=drawColor,anchor=base west,inner sep=0pt, outer sep=0pt, scale=  0.70] at (168.07,  1.68) {Brute-Force};
\end{scope}
\end{tikzpicture}

%% file: files/decor_revised_plot_avg_phases_decorplus.tex
\begin{tikzpicture}[x=1pt,y=1pt]
\definecolor{fillColor}{RGB}{255,255,255}
\path[use as bounding box,fill=fillColor,fill opacity=0.00] (0,0) rectangle (209.58,115.63);
\begin{scope}
\path[clip] (  0.00,  0.00) rectangle (209.58,115.63);
\definecolor{drawColor}{RGB}{255,255,255}
\definecolor{fillColor}{RGB}{255,255,255}

\path[draw=drawColor,line width= 0.5pt,line join=round,line cap=round,fill=fillColor] (  0.00,  0.00) rectangle (209.58,115.63);
\end{scope}
\begin{scope}
\path[clip] ( 34.42, 28.73) rectangle (203.58,113.63);
\definecolor{fillColor}{RGB}{255,255,255}

\path[fill=fillColor] ( 34.42, 28.73) rectangle (203.58,113.63);
\definecolor{fillColor}{RGB}{78,155,133}

\path[fill=fillColor] (160.71, 28.73) rectangle (165.06, 82.38);

\path[fill=fillColor] ( 61.87, 28.73) rectangle ( 66.22, 47.07);

\path[fill=fillColor] (140.94, 28.73) rectangle (145.29, 74.25);

\path[fill=fillColor] (121.17, 28.73) rectangle (125.52, 67.27);

\path[fill=fillColor] ( 42.11, 28.73) rectangle ( 46.45, 41.53);

\path[fill=fillColor] ( 81.64, 28.73) rectangle ( 85.99, 54.06);

\path[fill=fillColor] (180.48, 28.73) rectangle (184.82, 90.37);

\path[fill=fillColor] (101.41, 28.73) rectangle (105.76, 60.21);
\definecolor{fillColor}{RGB}{247,192,26}

\path[fill=fillColor] (166.24, 28.73) rectangle (170.59, 75.53);

\path[fill=fillColor] ( 67.41, 28.73) rectangle ( 71.76, 45.00);

\path[fill=fillColor] (146.48, 28.73) rectangle (150.82, 69.47);

\path[fill=fillColor] (126.71, 28.73) rectangle (131.06, 63.11);

\path[fill=fillColor] ( 47.64, 28.73) rectangle ( 51.99, 38.33);

\path[fill=fillColor] ( 87.17, 28.73) rectangle ( 91.52, 50.94);

\path[fill=fillColor] (186.01, 28.73) rectangle (190.36, 82.13);

\path[fill=fillColor] (106.94, 28.73) rectangle (111.29, 56.98);
\definecolor{fillColor}{RGB}{37,122,164}

\path[fill=fillColor] (171.78, 28.73) rectangle (176.13, 83.08);

\path[fill=fillColor] ( 72.94, 28.73) rectangle ( 77.29, 49.00);

\path[fill=fillColor] (152.01, 28.73) rectangle (156.36, 75.36);

\path[fill=fillColor] (132.24, 28.73) rectangle (136.59, 68.53);

\path[fill=fillColor] ( 53.17, 28.73) rectangle ( 57.52, 43.08);

\path[fill=fillColor] ( 92.71, 28.73) rectangle ( 97.06, 55.63);

\path[fill=fillColor] (191.54, 28.73) rectangle (195.89, 90.88);

\path[fill=fillColor] (112.48, 28.73) rectangle (116.82, 61.74);
\end{scope}
\begin{scope}
\path[clip] (  0.00,  0.00) rectangle (209.58,115.63);
\definecolor{drawColor}{RGB}{0,0,0}

\path[draw=drawColor,line width= 0.5pt,line join=round] ( 34.42, 28.73) --
	( 34.42,113.63);

\path[draw=drawColor,line width= 0.5pt,line join=round] ( 35.55,111.66) --
	( 34.42,113.63) --
	( 33.28,111.66);
\end{scope}
\begin{scope}
\path[clip] (  0.00,  0.00) rectangle (209.58,115.63);
\definecolor{drawColor}{gray}{0.30}

\node[text=drawColor,anchor=base east,inner sep=0pt, outer sep=0pt, scale=  0.70] at ( 30.37, 31.03) {1e-03};

\node[text=drawColor,anchor=base east,inner sep=0pt, outer sep=0pt, scale=  0.70] at ( 30.37, 49.90) {1e-01};

\node[text=drawColor,anchor=base east,inner sep=0pt, outer sep=0pt, scale=  0.70] at ( 30.37, 68.77) {1e+01};

\node[text=drawColor,anchor=base east,inner sep=0pt, outer sep=0pt, scale=  0.70] at ( 30.37, 87.64) {1e+03};

\node[text=drawColor,anchor=base east,inner sep=0pt, outer sep=0pt, scale=  0.70] at ( 30.37,106.50) {1e+05};
\end{scope}
\begin{scope}
\path[clip] (  0.00,  0.00) rectangle (209.58,115.63);
\definecolor{drawColor}{gray}{0.20}

\path[draw=drawColor,line width= 0.5pt,line join=round] ( 32.17, 33.45) --
	( 34.42, 33.45);

\path[draw=drawColor,line width= 0.5pt,line join=round] ( 32.17, 52.31) --
	( 34.42, 52.31);

\path[draw=drawColor,line width= 0.5pt,line join=round] ( 32.17, 71.18) --
	( 34.42, 71.18);

\path[draw=drawColor,line width= 0.5pt,line join=round] ( 32.17, 90.05) --
	( 34.42, 90.05);

\path[draw=drawColor,line width= 0.5pt,line join=round] ( 32.17,108.92) --
	( 34.42,108.92);
\end{scope}
\begin{scope}
\path[clip] (  0.00,  0.00) rectangle (209.58,115.63);
\definecolor{drawColor}{RGB}{0,0,0}

\path[draw=drawColor,line width= 0.5pt,line join=round] ( 34.42, 28.73) --
	(203.58, 28.73);

\path[draw=drawColor,line width= 0.5pt,line join=round] (201.61, 27.59) --
	(203.58, 28.73) --
	(201.61, 29.87);
\end{scope}
\begin{scope}
\path[clip] (  0.00,  0.00) rectangle (209.58,115.63);
\definecolor{drawColor}{gray}{0.20}

\path[draw=drawColor,line width= 0.5pt,line join=round] ( 49.81, 26.48) --
	( 49.81, 28.73);

\path[draw=drawColor,line width= 0.5pt,line join=round] ( 69.58, 26.48) --
	( 69.58, 28.73);

\path[draw=drawColor,line width= 0.5pt,line join=round] ( 89.35, 26.48) --
	( 89.35, 28.73);

\path[draw=drawColor,line width= 0.5pt,line join=round] (109.12, 26.48) --
	(109.12, 28.73);

\path[draw=drawColor,line width= 0.5pt,line join=round] (128.88, 26.48) --
	(128.88, 28.73);

\path[draw=drawColor,line width= 0.5pt,line join=round] (148.65, 26.48) --
	(148.65, 28.73);

\path[draw=drawColor,line width= 0.5pt,line join=round] (168.42, 26.48) --
	(168.42, 28.73);

\path[draw=drawColor,line width= 0.5pt,line join=round] (188.18, 26.48) --
	(188.18, 28.73);
\end{scope}
\begin{scope}
\path[clip] (  0.00,  0.00) rectangle (209.58,115.63);
\definecolor{drawColor}{gray}{0.30}

\node[text=drawColor,anchor=base,inner sep=0pt, outer sep=0pt, scale=  0.70] at ( 49.81, 19.86) {2};

\node[text=drawColor,anchor=base,inner sep=0pt, outer sep=0pt, scale=  0.70] at ( 69.58, 19.86) {4};

\node[text=drawColor,anchor=base,inner sep=0pt, outer sep=0pt, scale=  0.70] at ( 89.35, 19.86) {6};

\node[text=drawColor,anchor=base,inner sep=0pt, outer sep=0pt, scale=  0.70] at (109.12, 19.86) {8};

\node[text=drawColor,anchor=base,inner sep=0pt, outer sep=0pt, scale=  0.70] at (128.88, 19.86) {10};

\node[text=drawColor,anchor=base,inner sep=0pt, outer sep=0pt, scale=  0.70] at (148.65, 19.86) {12};

\node[text=drawColor,anchor=base,inner sep=0pt, outer sep=0pt, scale=  0.70] at (168.42, 19.86) {14};

\node[text=drawColor,anchor=base,inner sep=0pt, outer sep=0pt, scale=  0.70] at (188.18, 19.86) {16};
\end{scope}
\begin{scope}
\path[clip] (  0.00,  0.00) rectangle (209.58,115.63);
\definecolor{drawColor}{RGB}{0,0,0}

\node[text=drawColor,anchor=base,inner sep=0pt, outer sep=0pt, scale=  0.80] at (119.00, 10.74) {$n$};
\end{scope}
\begin{scope}
\path[clip] (  0.00,  0.00) rectangle (209.58,115.63);
\definecolor{drawColor}{RGB}{0,0,0}

\node[text=drawColor,rotate= 90.00,anchor=base,inner sep=0pt, outer sep=0pt, scale=  0.80] at (  7.51, 71.18) {time (ms)};
\end{scope}
\begin{scope}
\path[clip] (  0.00,  0.00) rectangle (209.58,115.63);

\path[] ( 51.04,  1.00) rectangle (186.96,  7.18);
\end{scope}
\begin{scope}
\path[clip] (  0.00,  0.00) rectangle (209.58,115.63);
\definecolor{fillColor}{RGB}{78,155,133}

\path[fill=fillColor] ( 51.75,  1.71) rectangle ( 61.17,  6.47);
\end{scope}
\begin{scope}
\path[clip] (  0.00,  0.00) rectangle (209.58,115.63);
\definecolor{fillColor}{RGB}{78,155,133}

\path[fill=fillColor] ( 51.75,  1.71) rectangle ( 61.17,  6.47);
\end{scope}
\begin{scope}
\path[clip] (  0.00,  0.00) rectangle (209.58,115.63);
\definecolor{fillColor}{RGB}{78,155,133}

\path[fill=fillColor] ( 51.75,  1.71) rectangle ( 61.17,  6.47);
\end{scope}
\begin{scope}
\path[clip] (  0.00,  0.00) rectangle (209.58,115.63);
\definecolor{fillColor}{RGB}{247,192,26}

\path[fill=fillColor] (102.84,  1.71) rectangle (112.26,  6.47);
\end{scope}
\begin{scope}
\path[clip] (  0.00,  0.00) rectangle (209.58,115.63);
\definecolor{fillColor}{RGB}{247,192,26}

\path[fill=fillColor] (102.84,  1.71) rectangle (112.26,  6.47);
\end{scope}
\begin{scope}
\path[clip] (  0.00,  0.00) rectangle (209.58,115.63);
\definecolor{fillColor}{RGB}{247,192,26}

\path[fill=fillColor] (102.84,  1.71) rectangle (112.26,  6.47);
\end{scope}
\begin{scope}
\path[clip] (  0.00,  0.00) rectangle (209.58,115.63);
\definecolor{fillColor}{RGB}{37,122,164}

\path[fill=fillColor] (154.70,  1.71) rectangle (164.11,  6.47);
\end{scope}
\begin{scope}
\path[clip] (  0.00,  0.00) rectangle (209.58,115.63);
\definecolor{fillColor}{RGB}{37,122,164}

\path[fill=fillColor] (154.70,  1.71) rectangle (164.11,  6.47);
\end{scope}
\begin{scope}
\path[clip] (  0.00,  0.00) rectangle (209.58,115.63);
\definecolor{fillColor}{RGB}{37,122,164}

\path[fill=fillColor] (154.70,  1.71) rectangle (164.11,  6.47);
\end{scope}
\begin{scope}
\path[clip] (  0.00,  0.00) rectangle (209.58,115.63);
\definecolor{drawColor}{RGB}{0,0,0}

\node[text=drawColor,anchor=base west,inner sep=0pt, outer sep=0pt, scale=  0.70] at ( 61.88,  1.68) {Candidates};
\end{scope}
\begin{scope}
\path[clip] (  0.00,  0.00) rectangle (209.58,115.63);
\definecolor{drawColor}{RGB}{0,0,0}

\node[text=drawColor,anchor=base west,inner sep=0pt, outer sep=0pt, scale=  0.70] at (112.97,  1.68) {Verification};
\end{scope}
\begin{scope}
\path[clip] (  0.00,  0.00) rectangle (209.58,115.63);
\definecolor{drawColor}{RGB}{0,0,0}

\node[text=drawColor,anchor=base west,inner sep=0pt, outer sep=0pt, scale=  0.70] at (164.82,  1.68) {Total};
\end{scope}
\end{tikzpicture}

%% file: files/decor_revised_example_crv_encoding.tex
\begin{tikzpicture}
	\node (tab_f1) {
		\begin{tabular}{clc}
			\toprule
			$\#_E[Com(E)]$ & $Rev$ & $\phi_3(\#_E[Com(E)], Rev)$ \\ \midrule
			$[2,0]$        & \high & $\varphi_{1}$ \\
			$[2,0]$        & \low  & $\varphi_{2}$ \\
			$[1,1]$        & \high & $\varphi_{3}$ \\
			$[1,1]$        & \low  & $\varphi_{4}$ \\
			$[0,2]$        & \high & $\varphi_{5}$ \\
			$[0,2]$        & \low  & $\varphi_{6}$ \\ \bottomrule
		\end{tabular}
	};
\end{tikzpicture}

%% file: files/decor_revised_counterexample.tex
\begin{tikzpicture}
	\node (tab_f1) {
		\begin{tabular}{llllcc}
			\toprule
			$R_1$ & $R_2$ & $R_3$ & $R_4$ & $\phi(R_1, R_2, R_3, R_4)$ & $b$ \\ \midrule
			\high & \high & \high & \high & $\varphi_{2}$              & {\color{cborange}$[4,0]$}   \\
			\high & \high & \high & \low  & $\varphi_{2}$              & {\color{cbbluedark}$[3,1]$} \\
			\high & \high & \low  & \high & $\varphi_{2}$              & {\color{cbbluedark}$[3,1]$} \\
			\high & \high & \low  & \low  & $\varphi_{1}$              & {\color{cbpurple}$[2,2]$}   \\
			\high & \low  & \high & \high & $\varphi_{2}$              & {\color{cbbluedark}$[3,1]$} \\
			\high & \low  & \high & \low  & $\varphi_{2}$              & {\color{cbpurple}$[2,2]$}   \\
			\high & \low  & \low  & \high & $\varphi_{2}$              & {\color{cbpurple}$[2,2]$}   \\
			\high & \low  & \low  & \low  & $\varphi_{2}$              & {\color{cbblue}$[1,3]$}     \\
			\low  & \high & \high & \high & $\varphi_{2}$              & {\color{cbbluedark}$[3,1]$} \\
			\low  & \high & \high & \low  & $\varphi_{2}$              & {\color{cbpurple}$[2,2]$}   \\
			\low  & \high & \low  & \high & $\varphi_{2}$              & {\color{cbpurple}$[2,2]$}   \\
			\low  & \high & \low  & \low  & $\varphi_{2}$              & {\color{cbblue}$[1,3]$}     \\
			\low  & \low  & \high & \high & $\varphi_{1}$              & {\color{cbpurple}$[2,2]$}   \\
			\low  & \low  & \high & \low  & $\varphi_{2}$              & {\color{cbblue}$[1,3]$}     \\
			\low  & \low  & \low  & \high & $\varphi_{2}$              & {\color{cbblue}$[1,3]$}     \\
			\low  & \low  & \low  & \low  & $\varphi_{2}$              & {\color{cbgreen}$[0,4]$}    \\ \bottomrule
		\end{tabular}
	};

	\node[right = 0.2cm of tab_f1] (buckets) {
		\begin{tabular}{cc}
			\toprule
			$b$                         & $\phi^{\succ}(b)$                                                                  \\ \midrule
			{\color{cborange}$[4,0]$}   & $\langle \varphi_2 \rangle$                                                        \\
			{\color{cbbluedark}$[3,1]$} & $\langle \varphi_2, \varphi_2, \varphi_2, \varphi_2 \rangle$                       \\
			{\color{cbpurple}$[2,2]$}   & $\langle \varphi_1, \varphi_2, \varphi_2, \varphi_2, \varphi_2, \varphi_1 \rangle$ \\
			{\color{cbblue}$[1,3]$}     & $\langle \varphi_2, \varphi_2, \varphi_2, \varphi_2 \rangle$                       \\
			{\color{cbgreen}$[0,4]$}    & $\langle \varphi_2 \rangle$                                                        \\ \bottomrule
		\end{tabular}
	};
\end{tikzpicture}

%% file: files/decor_revised_plot_k=0_times.tex
\begin{tikzpicture}[x=1pt,y=1pt]
\definecolor{fillColor}{RGB}{255,255,255}
\path[use as bounding box,fill=fillColor,fill opacity=0.00] (0,0) rectangle (209.58,115.63);
\begin{scope}
\path[clip] (  0.00,  0.00) rectangle (209.58,115.63);
\definecolor{drawColor}{RGB}{255,255,255}
\definecolor{fillColor}{RGB}{255,255,255}

\path[draw=drawColor,line width= 0.5pt,line join=round,line cap=round,fill=fillColor] (  0.00,  0.00) rectangle (209.58,115.63);
\end{scope}
\begin{scope}
\path[clip] ( 34.42, 28.73) rectangle (203.58,113.63);
\definecolor{fillColor}{RGB}{255,255,255}

\path[fill=fillColor] ( 34.42, 28.73) rectangle (203.58,113.63);
\definecolor{drawColor}{RGB}{247,192,26}

\path[draw=drawColor,line width= 0.5pt,line join=round] ( 42.11, 40.77) --
	( 64.07, 44.55) --
	( 86.04, 50.43) --
	(108.01, 56.75) --
	(129.98, 62.14) --
	(151.95, 72.16) --
	(173.92, 74.65) --
	(195.89, 80.70);
\definecolor{drawColor}{RGB}{78,155,133}

\path[draw=drawColor,line width= 0.5pt,dash pattern=on 4pt off 4pt ,line join=round] ( 42.11, 40.74) --
	( 64.07, 44.53) --
	( 86.04, 50.16) --
	(108.01, 56.70) --
	(129.98, 62.12) --
	(151.95, 71.37) --
	(173.92, 74.61) --
	(195.89, 80.65);
\definecolor{drawColor}{RGB}{37,122,164}

\path[draw=drawColor,line width= 0.5pt,dash pattern=on 1pt off 3pt ,line join=round] ( 42.11, 41.02) --
	( 64.07, 52.92) --
	( 86.04, 62.52) --
	(108.01, 71.23) --
	(129.98, 79.20) --
	(151.95, 87.09) --
	(173.92, 94.88) --
	(195.89,102.88);
\definecolor{drawColor}{RGB}{86,51,94}

\path[draw=drawColor,line width= 0.5pt,dash pattern=on 7pt off 3pt ,line join=round] ( 42.11, 41.21) --
	( 64.07, 55.73) --
	( 86.04, 67.79) --
	(108.01, 79.82) --
	(129.98, 91.68) --
	(151.95,103.47);

\path[draw=drawColor,line width= 0.4pt,line join=round,line cap=round] ( 40.57, 39.67) -- ( 43.64, 42.74);

\path[draw=drawColor,line width= 0.4pt,line join=round,line cap=round] ( 40.57, 42.74) -- ( 43.64, 39.67);

\path[draw=drawColor,line width= 0.4pt,line join=round,line cap=round] ( 39.94, 41.21) -- ( 44.27, 41.21);

\path[draw=drawColor,line width= 0.4pt,line join=round,line cap=round] ( 42.11, 39.04) -- ( 42.11, 43.37);
\definecolor{fillColor}{RGB}{37,122,164}

\path[fill=fillColor] (128.45, 77.67) --
	(131.52, 77.67) --
	(131.52, 80.74) --
	(128.45, 80.74) --
	cycle;
\definecolor{fillColor}{RGB}{78,155,133}

\path[fill=fillColor] ( 42.11, 43.12) --
	( 44.17, 39.54) --
	( 40.04, 39.54) --
	cycle;

\path[fill=fillColor] (173.92, 76.99) --
	(175.99, 73.42) --
	(171.86, 73.42) --
	cycle;

\path[fill=fillColor] (129.98, 64.50) --
	(132.05, 60.93) --
	(127.92, 60.93) --
	cycle;
\definecolor{drawColor}{RGB}{247,192,26}
\definecolor{fillColor}{RGB}{247,192,26}

\path[draw=drawColor,line width= 0.4pt,line join=round,line cap=round,fill=fillColor] (151.95, 72.16) circle (  1.53);
\definecolor{fillColor}{RGB}{37,122,164}

\path[fill=fillColor] (194.36,101.35) --
	(197.43,101.35) --
	(197.43,104.41) --
	(194.36,104.41) --
	cycle;
\definecolor{drawColor}{RGB}{86,51,94}

\path[draw=drawColor,line width= 0.4pt,line join=round,line cap=round] (106.48, 78.29) -- (109.55, 81.35);

\path[draw=drawColor,line width= 0.4pt,line join=round,line cap=round] (106.48, 81.35) -- (109.55, 78.29);

\path[draw=drawColor,line width= 0.4pt,line join=round,line cap=round] (105.85, 79.82) -- (110.18, 79.82);

\path[draw=drawColor,line width= 0.4pt,line join=round,line cap=round] (108.01, 77.65) -- (108.01, 81.99);

\path[fill=fillColor] (106.48, 69.70) --
	(109.55, 69.70) --
	(109.55, 72.77) --
	(106.48, 72.77) --
	cycle;

\path[draw=drawColor,line width= 0.4pt,line join=round,line cap=round] (128.45, 90.15) -- (131.52, 93.22);

\path[draw=drawColor,line width= 0.4pt,line join=round,line cap=round] (128.45, 93.22) -- (131.52, 90.15);

\path[draw=drawColor,line width= 0.4pt,line join=round,line cap=round] (127.82, 91.68) -- (132.15, 91.68);

\path[draw=drawColor,line width= 0.4pt,line join=round,line cap=round] (129.98, 89.51) -- (129.98, 93.85);

\path[fill=fillColor] ( 40.57, 39.49) --
	( 43.64, 39.49) --
	( 43.64, 42.56) --
	( 40.57, 42.56) --
	cycle;
\definecolor{fillColor}{RGB}{78,155,133}

\path[fill=fillColor] ( 86.04, 52.55) --
	( 88.11, 48.97) --
	( 83.98, 48.97) --
	cycle;
\definecolor{fillColor}{RGB}{37,122,164}

\path[fill=fillColor] (172.39, 93.35) --
	(175.46, 93.35) --
	(175.46, 96.42) --
	(172.39, 96.42) --
	cycle;

\path[draw=drawColor,line width= 0.4pt,line join=round,line cap=round] ( 84.51, 66.26) -- ( 87.58, 69.33);

\path[draw=drawColor,line width= 0.4pt,line join=round,line cap=round] ( 84.51, 69.33) -- ( 87.58, 66.26);

\path[draw=drawColor,line width= 0.4pt,line join=round,line cap=round] ( 83.88, 67.79) -- ( 88.21, 67.79);

\path[draw=drawColor,line width= 0.4pt,line join=round,line cap=round] ( 86.04, 65.63) -- ( 86.04, 69.96);
\definecolor{drawColor}{RGB}{247,192,26}
\definecolor{fillColor}{RGB}{247,192,26}

\path[draw=drawColor,line width= 0.4pt,line join=round,line cap=round,fill=fillColor] (195.89, 80.70) circle (  1.53);
\definecolor{drawColor}{RGB}{86,51,94}

\path[draw=drawColor,line width= 0.4pt,line join=round,line cap=round] (150.42,101.94) -- (153.49,105.01);

\path[draw=drawColor,line width= 0.4pt,line join=round,line cap=round] (150.42,105.01) -- (153.49,101.94);

\path[draw=drawColor,line width= 0.4pt,line join=round,line cap=round] (149.79,103.47) -- (154.12,103.47);

\path[draw=drawColor,line width= 0.4pt,line join=round,line cap=round] (151.95,101.30) -- (151.95,105.64);
\definecolor{fillColor}{RGB}{37,122,164}

\path[fill=fillColor] ( 84.51, 60.98) --
	( 87.58, 60.98) --
	( 87.58, 64.05) --
	( 84.51, 64.05) --
	cycle;
\definecolor{drawColor}{RGB}{247,192,26}
\definecolor{fillColor}{RGB}{247,192,26}

\path[draw=drawColor,line width= 0.4pt,line join=round,line cap=round,fill=fillColor] ( 64.07, 44.55) circle (  1.53);

\path[draw=drawColor,line width= 0.4pt,line join=round,line cap=round,fill=fillColor] ( 86.04, 50.43) circle (  1.53);

\path[draw=drawColor,line width= 0.4pt,line join=round,line cap=round,fill=fillColor] (173.92, 74.65) circle (  1.53);

\path[draw=drawColor,line width= 0.4pt,line join=round,line cap=round,fill=fillColor] ( 42.11, 40.77) circle (  1.53);
\definecolor{fillColor}{RGB}{78,155,133}

\path[fill=fillColor] (108.01, 59.08) --
	(110.08, 55.51) --
	(105.95, 55.51) --
	cycle;
\definecolor{fillColor}{RGB}{37,122,164}

\path[fill=fillColor] (150.42, 85.56) --
	(153.49, 85.56) --
	(153.49, 88.63) --
	(150.42, 88.63) --
	cycle;
\definecolor{fillColor}{RGB}{78,155,133}

\path[fill=fillColor] (195.89, 83.04) --
	(197.96, 79.46) --
	(193.83, 79.46) --
	cycle;
\definecolor{drawColor}{RGB}{86,51,94}

\path[draw=drawColor,line width= 0.4pt,line join=round,line cap=round] ( 62.54, 54.20) -- ( 65.61, 57.26);

\path[draw=drawColor,line width= 0.4pt,line join=round,line cap=round] ( 62.54, 57.26) -- ( 65.61, 54.20);

\path[draw=drawColor,line width= 0.4pt,line join=round,line cap=round] ( 61.91, 55.73) -- ( 66.24, 55.73);

\path[draw=drawColor,line width= 0.4pt,line join=round,line cap=round] ( 64.07, 53.56) -- ( 64.07, 57.90);
\definecolor{drawColor}{RGB}{247,192,26}
\definecolor{fillColor}{RGB}{247,192,26}

\path[draw=drawColor,line width= 0.4pt,line join=round,line cap=round,fill=fillColor] (129.98, 62.14) circle (  1.53);
\definecolor{fillColor}{RGB}{78,155,133}

\path[fill=fillColor] ( 64.07, 46.91) --
	( 66.14, 43.33) --
	( 62.01, 43.33) --
	cycle;
\definecolor{fillColor}{RGB}{247,192,26}

\path[draw=drawColor,line width= 0.4pt,line join=round,line cap=round,fill=fillColor] (108.01, 56.75) circle (  1.53);
\definecolor{fillColor}{RGB}{78,155,133}

\path[fill=fillColor] (151.95, 73.76) --
	(154.02, 70.18) --
	(149.89, 70.18) --
	cycle;
\definecolor{fillColor}{RGB}{37,122,164}

\path[fill=fillColor] ( 62.54, 51.39) --
	( 65.61, 51.39) --
	( 65.61, 54.45) --
	( 62.54, 54.45) --
	cycle;
\end{scope}
\begin{scope}
\path[clip] (  0.00,  0.00) rectangle (209.58,115.63);
\definecolor{drawColor}{RGB}{0,0,0}

\path[draw=drawColor,line width= 0.5pt,line join=round] ( 34.42, 28.73) --
	( 34.42,113.63);

\path[draw=drawColor,line width= 0.5pt,line join=round] ( 35.55,111.66) --
	( 34.42,113.63) --
	( 33.28,111.66);
\end{scope}
\begin{scope}
\path[clip] (  0.00,  0.00) rectangle (209.58,115.63);
\definecolor{drawColor}{gray}{0.30}

\node[text=drawColor,anchor=base east,inner sep=0pt, outer sep=0pt, scale=  0.70] at ( 30.37, 31.03) {1e-03};

\node[text=drawColor,anchor=base east,inner sep=0pt, outer sep=0pt, scale=  0.70] at ( 30.37, 49.90) {1e-01};

\node[text=drawColor,anchor=base east,inner sep=0pt, outer sep=0pt, scale=  0.70] at ( 30.37, 68.77) {1e+01};

\node[text=drawColor,anchor=base east,inner sep=0pt, outer sep=0pt, scale=  0.70] at ( 30.37, 87.64) {1e+03};

\node[text=drawColor,anchor=base east,inner sep=0pt, outer sep=0pt, scale=  0.70] at ( 30.37,106.50) {1e+05};
\end{scope}
\begin{scope}
\path[clip] (  0.00,  0.00) rectangle (209.58,115.63);
\definecolor{drawColor}{gray}{0.20}

\path[draw=drawColor,line width= 0.5pt,line join=round] ( 32.17, 33.45) --
	( 34.42, 33.45);

\path[draw=drawColor,line width= 0.5pt,line join=round] ( 32.17, 52.31) --
	( 34.42, 52.31);

\path[draw=drawColor,line width= 0.5pt,line join=round] ( 32.17, 71.18) --
	( 34.42, 71.18);

\path[draw=drawColor,line width= 0.5pt,line join=round] ( 32.17, 90.05) --
	( 34.42, 90.05);

\path[draw=drawColor,line width= 0.5pt,line join=round] ( 32.17,108.92) --
	( 34.42,108.92);
\end{scope}
\begin{scope}
\path[clip] (  0.00,  0.00) rectangle (209.58,115.63);
\definecolor{drawColor}{RGB}{0,0,0}

\path[draw=drawColor,line width= 0.5pt,line join=round] ( 34.42, 28.73) --
	(203.58, 28.73);

\path[draw=drawColor,line width= 0.5pt,line join=round] (201.61, 27.59) --
	(203.58, 28.73) --
	(201.61, 29.87);
\end{scope}
\begin{scope}
\path[clip] (  0.00,  0.00) rectangle (209.58,115.63);
\definecolor{drawColor}{gray}{0.20}

\path[draw=drawColor,line width= 0.5pt,line join=round] ( 64.07, 26.48) --
	( 64.07, 28.73);

\path[draw=drawColor,line width= 0.5pt,line join=round] (108.01, 26.48) --
	(108.01, 28.73);

\path[draw=drawColor,line width= 0.5pt,line join=round] (151.95, 26.48) --
	(151.95, 28.73);

\path[draw=drawColor,line width= 0.5pt,line join=round] (195.89, 26.48) --
	(195.89, 28.73);
\end{scope}
\begin{scope}
\path[clip] (  0.00,  0.00) rectangle (209.58,115.63);
\definecolor{drawColor}{gray}{0.30}

\node[text=drawColor,anchor=base,inner sep=0pt, outer sep=0pt, scale=  0.70] at ( 64.07, 19.86) {4};

\node[text=drawColor,anchor=base,inner sep=0pt, outer sep=0pt, scale=  0.70] at (108.01, 19.86) {8};

\node[text=drawColor,anchor=base,inner sep=0pt, outer sep=0pt, scale=  0.70] at (151.95, 19.86) {12};

\node[text=drawColor,anchor=base,inner sep=0pt, outer sep=0pt, scale=  0.70] at (195.89, 19.86) {16};
\end{scope}
\begin{scope}
\path[clip] (  0.00,  0.00) rectangle (209.58,115.63);
\definecolor{drawColor}{RGB}{0,0,0}

\node[text=drawColor,anchor=base,inner sep=0pt, outer sep=0pt, scale=  0.80] at (119.00, 10.74) {$n$};
\end{scope}
\begin{scope}
\path[clip] (  0.00,  0.00) rectangle (209.58,115.63);
\definecolor{drawColor}{RGB}{0,0,0}

\node[text=drawColor,rotate= 90.00,anchor=base,inner sep=0pt, outer sep=0pt, scale=  0.80] at (  7.51, 71.18) {time (ms)};
\end{scope}
\begin{scope}
\path[clip] (  0.00,  0.00) rectangle (209.58,115.63);

\path[] ( 34.42,  1.00) rectangle (210.27,  7.18);
\end{scope}
\begin{scope}
\path[clip] (  0.00,  0.00) rectangle (209.58,115.63);
\definecolor{drawColor}{RGB}{247,192,26}

\path[draw=drawColor,line width= 0.5pt,line join=round] ( 17.50,  4.09) -- ( 26.17,  4.09);
\end{scope}
\begin{scope}
\path[clip] (  0.00,  0.00) rectangle (209.58,115.63);
\definecolor{drawColor}{RGB}{247,192,26}
\definecolor{fillColor}{RGB}{247,192,26}

\path[draw=drawColor,line width= 0.4pt,line join=round,line cap=round,fill=fillColor] ( 21.84,  4.09) circle (  1.53);
\end{scope}
\begin{scope}
\path[clip] (  0.00,  0.00) rectangle (209.58,115.63);
\definecolor{drawColor}{RGB}{78,155,133}

\path[draw=drawColor,line width= 0.5pt,dash pattern=on 4pt off 4pt ,line join=round] ( 60.10,  4.09) -- ( 68.77,  4.09);
\end{scope}
\begin{scope}
\path[clip] (  0.00,  0.00) rectangle (209.58,115.63);
\definecolor{fillColor}{RGB}{78,155,133}

\path[fill=fillColor] ( 64.43,  6.48) --
	( 66.50,  2.90) --
	( 62.37,  2.90) --
	cycle;
\end{scope}
\begin{scope}
\path[clip] (  0.00,  0.00) rectangle (209.58,115.63);
\definecolor{drawColor}{RGB}{37,122,164}

\path[draw=drawColor,line width= 0.5pt,dash pattern=on 1pt off 3pt ,line join=round] (108.14,  4.09) -- (116.81,  4.09);
\end{scope}
\begin{scope}
\path[clip] (  0.00,  0.00) rectangle (209.58,115.63);
\definecolor{fillColor}{RGB}{37,122,164}

\path[fill=fillColor] (110.94,  2.56) --
	(114.01,  2.56) --
	(114.01,  5.62) --
	(110.94,  5.62) --
	cycle;
\end{scope}
\begin{scope}
\path[clip] (  0.00,  0.00) rectangle (209.58,115.63);
\definecolor{drawColor}{RGB}{86,51,94}

\path[draw=drawColor,line width= 0.5pt,dash pattern=on 7pt off 3pt ,line join=round] (158.32,  4.09) -- (166.99,  4.09);
\end{scope}
\begin{scope}
\path[clip] (  0.00,  0.00) rectangle (209.58,115.63);
\definecolor{drawColor}{RGB}{86,51,94}

\path[draw=drawColor,line width= 0.4pt,line join=round,line cap=round] (161.12,  2.56) -- (164.19,  5.62);

\path[draw=drawColor,line width= 0.4pt,line join=round,line cap=round] (161.12,  5.62) -- (164.19,  2.56);

\path[draw=drawColor,line width= 0.4pt,line join=round,line cap=round] (160.49,  4.09) -- (164.82,  4.09);

\path[draw=drawColor,line width= 0.4pt,line join=round,line cap=round] (162.65,  1.92) -- (162.65,  6.26);
\end{scope}
\begin{scope}
\path[clip] (  0.00,  0.00) rectangle (209.58,115.63);
\definecolor{drawColor}{RGB}{0,0,0}

\node[text=drawColor,anchor=base west,inner sep=0pt, outer sep=0pt, scale=  0.70] at ( 27.26,  1.68) {DECOR};
\end{scope}
\begin{scope}
\path[clip] (  0.00,  0.00) rectangle (209.58,115.63);
\definecolor{drawColor}{RGB}{0,0,0}

\node[text=drawColor,anchor=base west,inner sep=0pt, outer sep=0pt, scale=  0.70] at ( 69.85,  1.68) {DECOR+};
\end{scope}
\begin{scope}
\path[clip] (  0.00,  0.00) rectangle (209.58,115.63);
\definecolor{drawColor}{RGB}{0,0,0}

\node[text=drawColor,anchor=base west,inner sep=0pt, outer sep=0pt, scale=  0.70] at (117.90,  1.68) {A-DECOR};
\end{scope}
\begin{scope}
\path[clip] (  0.00,  0.00) rectangle (209.58,115.63);
\definecolor{drawColor}{RGB}{0,0,0}

\node[text=drawColor,anchor=base west,inner sep=0pt, outer sep=0pt, scale=  0.70] at (168.07,  1.68) {Brute-Force};
\end{scope}
\end{tikzpicture}

%% file: files/decor_revised_plot_k=0_phases_decorplus.tex
\begin{tikzpicture}[x=1pt,y=1pt]
\definecolor{fillColor}{RGB}{255,255,255}
\path[use as bounding box,fill=fillColor,fill opacity=0.00] (0,0) rectangle (209.58,115.63);
\begin{scope}
\path[clip] (  0.00,  0.00) rectangle (209.58,115.63);
\definecolor{drawColor}{RGB}{255,255,255}
\definecolor{fillColor}{RGB}{255,255,255}

\path[draw=drawColor,line width= 0.5pt,line join=round,line cap=round,fill=fillColor] (  0.00,  0.00) rectangle (209.58,115.63);
\end{scope}
\begin{scope}
\path[clip] ( 34.42, 28.73) rectangle (203.58,113.63);
\definecolor{fillColor}{RGB}{255,255,255}

\path[fill=fillColor] ( 34.42, 28.73) rectangle (203.58,113.63);
\definecolor{fillColor}{RGB}{78,155,133}

\path[fill=fillColor] ( 42.11, 28.73) rectangle ( 46.45, 40.73);

\path[fill=fillColor] (160.71, 28.73) rectangle (165.06, 74.61);

\path[fill=fillColor] (121.17, 28.73) rectangle (125.52, 62.12);

\path[fill=fillColor] ( 81.64, 28.73) rectangle ( 85.99, 50.16);

\path[fill=fillColor] (101.41, 28.73) rectangle (105.76, 56.70);

\path[fill=fillColor] (180.48, 28.73) rectangle (184.82, 80.65);

\path[fill=fillColor] ( 61.87, 28.73) rectangle ( 66.22, 44.52);

\path[fill=fillColor] (140.94, 28.73) rectangle (145.29, 71.37);
\definecolor{fillColor}{RGB}{37,122,164}

\path[fill=fillColor] ( 53.17, 28.73) rectangle ( 57.52, 40.74);

\path[fill=fillColor] (171.78, 28.73) rectangle (176.13, 74.61);

\path[fill=fillColor] (132.24, 28.73) rectangle (136.59, 62.12);

\path[fill=fillColor] ( 92.71, 28.73) rectangle ( 97.06, 50.16);

\path[fill=fillColor] (112.48, 28.73) rectangle (116.82, 56.70);

\path[fill=fillColor] (191.54, 28.73) rectangle (195.89, 80.65);

\path[fill=fillColor] ( 72.94, 28.73) rectangle ( 77.29, 44.53);

\path[fill=fillColor] (152.01, 28.73) rectangle (156.36, 71.37);
\end{scope}
\begin{scope}
\path[clip] (  0.00,  0.00) rectangle (209.58,115.63);
\definecolor{drawColor}{RGB}{0,0,0}

\path[draw=drawColor,line width= 0.5pt,line join=round] ( 34.42, 28.73) --
	( 34.42,113.63);

\path[draw=drawColor,line width= 0.5pt,line join=round] ( 35.55,111.66) --
	( 34.42,113.63) --
	( 33.28,111.66);
\end{scope}
\begin{scope}
\path[clip] (  0.00,  0.00) rectangle (209.58,115.63);
\definecolor{drawColor}{gray}{0.30}

\node[text=drawColor,anchor=base east,inner sep=0pt, outer sep=0pt, scale=  0.70] at ( 30.37, 31.03) {1e-03};

\node[text=drawColor,anchor=base east,inner sep=0pt, outer sep=0pt, scale=  0.70] at ( 30.37, 49.90) {1e-01};

\node[text=drawColor,anchor=base east,inner sep=0pt, outer sep=0pt, scale=  0.70] at ( 30.37, 68.77) {1e+01};

\node[text=drawColor,anchor=base east,inner sep=0pt, outer sep=0pt, scale=  0.70] at ( 30.37, 87.64) {1e+03};

\node[text=drawColor,anchor=base east,inner sep=0pt, outer sep=0pt, scale=  0.70] at ( 30.37,106.50) {1e+05};
\end{scope}
\begin{scope}
\path[clip] (  0.00,  0.00) rectangle (209.58,115.63);
\definecolor{drawColor}{gray}{0.20}

\path[draw=drawColor,line width= 0.5pt,line join=round] ( 32.17, 33.45) --
	( 34.42, 33.45);

\path[draw=drawColor,line width= 0.5pt,line join=round] ( 32.17, 52.31) --
	( 34.42, 52.31);

\path[draw=drawColor,line width= 0.5pt,line join=round] ( 32.17, 71.18) --
	( 34.42, 71.18);

\path[draw=drawColor,line width= 0.5pt,line join=round] ( 32.17, 90.05) --
	( 34.42, 90.05);

\path[draw=drawColor,line width= 0.5pt,line join=round] ( 32.17,108.92) --
	( 34.42,108.92);
\end{scope}
\begin{scope}
\path[clip] (  0.00,  0.00) rectangle (209.58,115.63);
\definecolor{drawColor}{RGB}{0,0,0}

\path[draw=drawColor,line width= 0.5pt,line join=round] ( 34.42, 28.73) --
	(203.58, 28.73);

\path[draw=drawColor,line width= 0.5pt,line join=round] (201.61, 27.59) --
	(203.58, 28.73) --
	(201.61, 29.87);
\end{scope}
\begin{scope}
\path[clip] (  0.00,  0.00) rectangle (209.58,115.63);
\definecolor{drawColor}{gray}{0.20}

\path[draw=drawColor,line width= 0.5pt,line join=round] ( 49.81, 26.48) --
	( 49.81, 28.73);

\path[draw=drawColor,line width= 0.5pt,line join=round] ( 69.58, 26.48) --
	( 69.58, 28.73);

\path[draw=drawColor,line width= 0.5pt,line join=round] ( 89.35, 26.48) --
	( 89.35, 28.73);

\path[draw=drawColor,line width= 0.5pt,line join=round] (109.12, 26.48) --
	(109.12, 28.73);

\path[draw=drawColor,line width= 0.5pt,line join=round] (128.88, 26.48) --
	(128.88, 28.73);

\path[draw=drawColor,line width= 0.5pt,line join=round] (148.65, 26.48) --
	(148.65, 28.73);

\path[draw=drawColor,line width= 0.5pt,line join=round] (168.42, 26.48) --
	(168.42, 28.73);

\path[draw=drawColor,line width= 0.5pt,line join=round] (188.18, 26.48) --
	(188.18, 28.73);
\end{scope}
\begin{scope}
\path[clip] (  0.00,  0.00) rectangle (209.58,115.63);
\definecolor{drawColor}{gray}{0.30}

\node[text=drawColor,anchor=base,inner sep=0pt, outer sep=0pt, scale=  0.70] at ( 49.81, 19.86) {2};

\node[text=drawColor,anchor=base,inner sep=0pt, outer sep=0pt, scale=  0.70] at ( 69.58, 19.86) {4};

\node[text=drawColor,anchor=base,inner sep=0pt, outer sep=0pt, scale=  0.70] at ( 89.35, 19.86) {6};

\node[text=drawColor,anchor=base,inner sep=0pt, outer sep=0pt, scale=  0.70] at (109.12, 19.86) {8};

\node[text=drawColor,anchor=base,inner sep=0pt, outer sep=0pt, scale=  0.70] at (128.88, 19.86) {10};

\node[text=drawColor,anchor=base,inner sep=0pt, outer sep=0pt, scale=  0.70] at (148.65, 19.86) {12};

\node[text=drawColor,anchor=base,inner sep=0pt, outer sep=0pt, scale=  0.70] at (168.42, 19.86) {14};

\node[text=drawColor,anchor=base,inner sep=0pt, outer sep=0pt, scale=  0.70] at (188.18, 19.86) {16};
\end{scope}
\begin{scope}
\path[clip] (  0.00,  0.00) rectangle (209.58,115.63);
\definecolor{drawColor}{RGB}{0,0,0}

\node[text=drawColor,anchor=base,inner sep=0pt, outer sep=0pt, scale=  0.80] at (119.00, 10.74) {$n$};
\end{scope}
\begin{scope}
\path[clip] (  0.00,  0.00) rectangle (209.58,115.63);
\definecolor{drawColor}{RGB}{0,0,0}

\node[text=drawColor,rotate= 90.00,anchor=base,inner sep=0pt, outer sep=0pt, scale=  0.80] at (  7.51, 71.18) {time (ms)};
\end{scope}
\begin{scope}
\path[clip] (  0.00,  0.00) rectangle (209.58,115.63);

\path[] ( 76.97,  1.00) rectangle (161.03,  7.18);
\end{scope}
\begin{scope}
\path[clip] (  0.00,  0.00) rectangle (209.58,115.63);
\definecolor{fillColor}{RGB}{78,155,133}

\path[fill=fillColor] ( 77.68,  1.71) rectangle ( 87.09,  6.47);
\end{scope}
\begin{scope}
\path[clip] (  0.00,  0.00) rectangle (209.58,115.63);
\definecolor{fillColor}{RGB}{78,155,133}

\path[fill=fillColor] ( 77.68,  1.71) rectangle ( 87.09,  6.47);
\end{scope}
\begin{scope}
\path[clip] (  0.00,  0.00) rectangle (209.58,115.63);
\definecolor{fillColor}{RGB}{78,155,133}

\path[fill=fillColor] ( 77.68,  1.71) rectangle ( 87.09,  6.47);
\end{scope}
\begin{scope}
\path[clip] (  0.00,  0.00) rectangle (209.58,115.63);
\definecolor{fillColor}{RGB}{37,122,164}

\path[fill=fillColor] (128.77,  1.71) rectangle (138.19,  6.47);
\end{scope}
\begin{scope}
\path[clip] (  0.00,  0.00) rectangle (209.58,115.63);
\definecolor{fillColor}{RGB}{37,122,164}

\path[fill=fillColor] (128.77,  1.71) rectangle (138.19,  6.47);
\end{scope}
\begin{scope}
\path[clip] (  0.00,  0.00) rectangle (209.58,115.63);
\definecolor{fillColor}{RGB}{37,122,164}

\path[fill=fillColor] (128.77,  1.71) rectangle (138.19,  6.47);
\end{scope}
\begin{scope}
\path[clip] (  0.00,  0.00) rectangle (209.58,115.63);
\definecolor{drawColor}{RGB}{0,0,0}

\node[text=drawColor,anchor=base west,inner sep=0pt, outer sep=0pt, scale=  0.70] at ( 87.81,  1.68) {Candidates};
\end{scope}
\begin{scope}
\path[clip] (  0.00,  0.00) rectangle (209.58,115.63);
\definecolor{drawColor}{RGB}{0,0,0}

\node[text=drawColor,anchor=base west,inner sep=0pt, outer sep=0pt, scale=  0.70] at (138.90,  1.68) {Total};
\end{scope}
\end{tikzpicture}

%% file: files/decor_revised_plot_k=2_times.tex
\begin{tikzpicture}[x=1pt,y=1pt]
\definecolor{fillColor}{RGB}{255,255,255}
\path[use as bounding box,fill=fillColor,fill opacity=0.00] (0,0) rectangle (209.58,115.63);
\begin{scope}
\path[clip] (  0.00,  0.00) rectangle (209.58,115.63);
\definecolor{drawColor}{RGB}{255,255,255}
\definecolor{fillColor}{RGB}{255,255,255}

\path[draw=drawColor,line width= 0.5pt,line join=round,line cap=round,fill=fillColor] (  0.00,  0.00) rectangle (209.58,115.63);
\end{scope}
\begin{scope}
\path[clip] ( 34.42, 28.73) rectangle (203.58,113.63);
\definecolor{fillColor}{RGB}{255,255,255}

\path[fill=fillColor] ( 34.42, 28.73) rectangle (203.58,113.63);
\definecolor{drawColor}{RGB}{247,192,26}

\path[draw=drawColor,line width= 0.5pt,line join=round] ( 42.11, 41.79) --
	( 64.07, 46.82) --
	( 86.04, 53.25) --
	(108.01, 59.64) --
	(129.98, 67.07) --
	(151.95, 75.52) --
	(173.92, 85.31) --
	(195.89, 95.79);
\definecolor{drawColor}{RGB}{78,155,133}

\path[draw=drawColor,line width= 0.5pt,dash pattern=on 4pt off 4pt ,line join=round] ( 42.11, 44.56) --
	( 64.07, 49.39) --
	( 86.04, 55.55) --
	(108.01, 61.69) --
	(129.98, 68.76) --
	(151.95, 76.60) --
	(173.92, 85.83) --
	(195.89, 95.96);
\definecolor{drawColor}{RGB}{37,122,164}

\path[draw=drawColor,line width= 0.5pt,dash pattern=on 1pt off 3pt ,line join=round] ( 42.11, 41.24) --
	( 64.07, 53.15) --
	( 86.04, 62.50) --
	(108.01, 71.01) --
	(129.98, 79.24) --
	(151.95, 87.16) --
	(173.92, 94.90) --
	(195.89,102.88);
\definecolor{drawColor}{RGB}{86,51,94}

\path[draw=drawColor,line width= 0.5pt,dash pattern=on 7pt off 3pt ,line join=round] ( 42.11, 41.19) --
	( 64.07, 54.82) --
	( 86.04, 67.48) --
	(108.01, 79.76) --
	(129.98, 91.66) --
	(151.95,103.59);
\definecolor{fillColor}{RGB}{78,155,133}

\path[fill=fillColor] (108.01, 64.07) --
	(110.08, 60.50) --
	(105.95, 60.50) --
	cycle;

\path[draw=drawColor,line width= 0.4pt,line join=round,line cap=round] (150.42,102.05) -- (153.49,105.12);

\path[draw=drawColor,line width= 0.4pt,line join=round,line cap=round] (150.42,105.12) -- (153.49,102.05);

\path[draw=drawColor,line width= 0.4pt,line join=round,line cap=round] (149.79,103.59) -- (154.12,103.59);

\path[draw=drawColor,line width= 0.4pt,line join=round,line cap=round] (151.95,101.42) -- (151.95,105.75);

\path[fill=fillColor] ( 64.07, 51.78) --
	( 66.14, 48.20) --
	( 62.01, 48.20) --
	cycle;

\path[fill=fillColor] (173.92, 88.21) --
	(175.99, 84.63) --
	(171.86, 84.63) --
	cycle;

\path[draw=drawColor,line width= 0.4pt,line join=round,line cap=round] ( 62.54, 53.29) -- ( 65.61, 56.35);

\path[draw=drawColor,line width= 0.4pt,line join=round,line cap=round] ( 62.54, 56.35) -- ( 65.61, 53.29);

\path[draw=drawColor,line width= 0.4pt,line join=round,line cap=round] ( 61.91, 54.82) -- ( 66.24, 54.82);

\path[draw=drawColor,line width= 0.4pt,line join=round,line cap=round] ( 64.07, 52.65) -- ( 64.07, 56.99);
\definecolor{fillColor}{RGB}{37,122,164}

\path[fill=fillColor] ( 40.57, 39.71) --
	( 43.64, 39.71) --
	( 43.64, 42.77) --
	( 40.57, 42.77) --
	cycle;

\path[draw=drawColor,line width= 0.4pt,line join=round,line cap=round] (106.48, 78.23) -- (109.55, 81.30);

\path[draw=drawColor,line width= 0.4pt,line join=round,line cap=round] (106.48, 81.30) -- (109.55, 78.23);

\path[draw=drawColor,line width= 0.4pt,line join=round,line cap=round] (105.85, 79.76) -- (110.18, 79.76);

\path[draw=drawColor,line width= 0.4pt,line join=round,line cap=round] (108.01, 77.59) -- (108.01, 81.93);

\path[fill=fillColor] (128.45, 77.71) --
	(131.52, 77.71) --
	(131.52, 80.78) --
	(128.45, 80.78) --
	cycle;
\definecolor{fillColor}{RGB}{78,155,133}

\path[fill=fillColor] ( 42.11, 46.94) --
	( 44.17, 43.36) --
	( 40.04, 43.36) --
	cycle;
\definecolor{drawColor}{RGB}{247,192,26}
\definecolor{fillColor}{RGB}{247,192,26}

\path[draw=drawColor,line width= 0.4pt,line join=round,line cap=round,fill=fillColor] (108.01, 59.64) circle (  1.53);
\definecolor{fillColor}{RGB}{37,122,164}

\path[fill=fillColor] (194.36,101.35) --
	(197.43,101.35) --
	(197.43,104.41) --
	(194.36,104.41) --
	cycle;

\path[fill=fillColor] ( 62.54, 51.61) --
	( 65.61, 51.61) --
	( 65.61, 54.68) --
	( 62.54, 54.68) --
	cycle;
\definecolor{fillColor}{RGB}{247,192,26}

\path[draw=drawColor,line width= 0.4pt,line join=round,line cap=round,fill=fillColor] ( 86.04, 53.25) circle (  1.53);
\definecolor{fillColor}{RGB}{37,122,164}

\path[fill=fillColor] (150.42, 85.62) --
	(153.49, 85.62) --
	(153.49, 88.69) --
	(150.42, 88.69) --
	cycle;
\definecolor{fillColor}{RGB}{78,155,133}

\path[fill=fillColor] (151.95, 78.98) --
	(154.02, 75.40) --
	(149.89, 75.40) --
	cycle;

\path[fill=fillColor] ( 86.04, 57.93) --
	( 88.11, 54.35) --
	( 83.98, 54.35) --
	cycle;

\path[fill=fillColor] (195.89, 98.35) --
	(197.96, 94.77) --
	(193.83, 94.77) --
	cycle;
\definecolor{fillColor}{RGB}{247,192,26}

\path[draw=drawColor,line width= 0.4pt,line join=round,line cap=round,fill=fillColor] (173.92, 85.31) circle (  1.53);
\definecolor{fillColor}{RGB}{37,122,164}

\path[fill=fillColor] (172.39, 93.36) --
	(175.46, 93.36) --
	(175.46, 96.43) --
	(172.39, 96.43) --
	cycle;
\definecolor{fillColor}{RGB}{247,192,26}

\path[draw=drawColor,line width= 0.4pt,line join=round,line cap=round,fill=fillColor] (129.98, 67.07) circle (  1.53);
\definecolor{fillColor}{RGB}{37,122,164}

\path[fill=fillColor] ( 84.51, 60.97) --
	( 87.58, 60.97) --
	( 87.58, 64.04) --
	( 84.51, 64.04) --
	cycle;
\definecolor{drawColor}{RGB}{86,51,94}

\path[draw=drawColor,line width= 0.4pt,line join=round,line cap=round] ( 84.51, 65.94) -- ( 87.58, 69.01);

\path[draw=drawColor,line width= 0.4pt,line join=round,line cap=round] ( 84.51, 69.01) -- ( 87.58, 65.94);

\path[draw=drawColor,line width= 0.4pt,line join=round,line cap=round] ( 83.88, 67.48) -- ( 88.21, 67.48);

\path[draw=drawColor,line width= 0.4pt,line join=round,line cap=round] ( 86.04, 65.31) -- ( 86.04, 69.65);
\definecolor{drawColor}{RGB}{247,192,26}
\definecolor{fillColor}{RGB}{247,192,26}

\path[draw=drawColor,line width= 0.4pt,line join=round,line cap=round,fill=fillColor] (151.95, 75.52) circle (  1.53);

\path[draw=drawColor,line width= 0.4pt,line join=round,line cap=round,fill=fillColor] (195.89, 95.79) circle (  1.53);
\definecolor{drawColor}{RGB}{86,51,94}

\path[draw=drawColor,line width= 0.4pt,line join=round,line cap=round] (128.45, 90.13) -- (131.52, 93.20);

\path[draw=drawColor,line width= 0.4pt,line join=round,line cap=round] (128.45, 93.20) -- (131.52, 90.13);

\path[draw=drawColor,line width= 0.4pt,line join=round,line cap=round] (127.82, 91.66) -- (132.15, 91.66);

\path[draw=drawColor,line width= 0.4pt,line join=round,line cap=round] (129.98, 89.50) -- (129.98, 93.83);
\definecolor{fillColor}{RGB}{78,155,133}

\path[fill=fillColor] (129.98, 71.14) --
	(132.05, 67.57) --
	(127.92, 67.57) --
	cycle;
\definecolor{fillColor}{RGB}{37,122,164}

\path[fill=fillColor] (106.48, 69.48) --
	(109.55, 69.48) --
	(109.55, 72.54) --
	(106.48, 72.54) --
	cycle;

\path[draw=drawColor,line width= 0.4pt,line join=round,line cap=round] ( 40.57, 39.66) -- ( 43.64, 42.73);

\path[draw=drawColor,line width= 0.4pt,line join=round,line cap=round] ( 40.57, 42.73) -- ( 43.64, 39.66);

\path[draw=drawColor,line width= 0.4pt,line join=round,line cap=round] ( 39.94, 41.19) -- ( 44.27, 41.19);

\path[draw=drawColor,line width= 0.4pt,line join=round,line cap=round] ( 42.11, 39.02) -- ( 42.11, 43.36);
\definecolor{drawColor}{RGB}{247,192,26}
\definecolor{fillColor}{RGB}{247,192,26}

\path[draw=drawColor,line width= 0.4pt,line join=round,line cap=round,fill=fillColor] ( 42.11, 41.79) circle (  1.53);

\path[draw=drawColor,line width= 0.4pt,line join=round,line cap=round,fill=fillColor] ( 64.07, 46.82) circle (  1.53);
\end{scope}
\begin{scope}
\path[clip] (  0.00,  0.00) rectangle (209.58,115.63);
\definecolor{drawColor}{RGB}{0,0,0}

\path[draw=drawColor,line width= 0.5pt,line join=round] ( 34.42, 28.73) --
	( 34.42,113.63);

\path[draw=drawColor,line width= 0.5pt,line join=round] ( 35.55,111.66) --
	( 34.42,113.63) --
	( 33.28,111.66);
\end{scope}
\begin{scope}
\path[clip] (  0.00,  0.00) rectangle (209.58,115.63);
\definecolor{drawColor}{gray}{0.30}

\node[text=drawColor,anchor=base east,inner sep=0pt, outer sep=0pt, scale=  0.70] at ( 30.37, 31.03) {1e-03};

\node[text=drawColor,anchor=base east,inner sep=0pt, outer sep=0pt, scale=  0.70] at ( 30.37, 49.90) {1e-01};

\node[text=drawColor,anchor=base east,inner sep=0pt, outer sep=0pt, scale=  0.70] at ( 30.37, 68.77) {1e+01};

\node[text=drawColor,anchor=base east,inner sep=0pt, outer sep=0pt, scale=  0.70] at ( 30.37, 87.64) {1e+03};

\node[text=drawColor,anchor=base east,inner sep=0pt, outer sep=0pt, scale=  0.70] at ( 30.37,106.50) {1e+05};
\end{scope}
\begin{scope}
\path[clip] (  0.00,  0.00) rectangle (209.58,115.63);
\definecolor{drawColor}{gray}{0.20}

\path[draw=drawColor,line width= 0.5pt,line join=round] ( 32.17, 33.45) --
	( 34.42, 33.45);

\path[draw=drawColor,line width= 0.5pt,line join=round] ( 32.17, 52.31) --
	( 34.42, 52.31);

\path[draw=drawColor,line width= 0.5pt,line join=round] ( 32.17, 71.18) --
	( 34.42, 71.18);

\path[draw=drawColor,line width= 0.5pt,line join=round] ( 32.17, 90.05) --
	( 34.42, 90.05);

\path[draw=drawColor,line width= 0.5pt,line join=round] ( 32.17,108.92) --
	( 34.42,108.92);
\end{scope}
\begin{scope}
\path[clip] (  0.00,  0.00) rectangle (209.58,115.63);
\definecolor{drawColor}{RGB}{0,0,0}

\path[draw=drawColor,line width= 0.5pt,line join=round] ( 34.42, 28.73) --
	(203.58, 28.73);

\path[draw=drawColor,line width= 0.5pt,line join=round] (201.61, 27.59) --
	(203.58, 28.73) --
	(201.61, 29.87);
\end{scope}
\begin{scope}
\path[clip] (  0.00,  0.00) rectangle (209.58,115.63);
\definecolor{drawColor}{gray}{0.20}

\path[draw=drawColor,line width= 0.5pt,line join=round] ( 64.07, 26.48) --
	( 64.07, 28.73);

\path[draw=drawColor,line width= 0.5pt,line join=round] (108.01, 26.48) --
	(108.01, 28.73);

\path[draw=drawColor,line width= 0.5pt,line join=round] (151.95, 26.48) --
	(151.95, 28.73);

\path[draw=drawColor,line width= 0.5pt,line join=round] (195.89, 26.48) --
	(195.89, 28.73);
\end{scope}
\begin{scope}
\path[clip] (  0.00,  0.00) rectangle (209.58,115.63);
\definecolor{drawColor}{gray}{0.30}

\node[text=drawColor,anchor=base,inner sep=0pt, outer sep=0pt, scale=  0.70] at ( 64.07, 19.86) {4};

\node[text=drawColor,anchor=base,inner sep=0pt, outer sep=0pt, scale=  0.70] at (108.01, 19.86) {8};

\node[text=drawColor,anchor=base,inner sep=0pt, outer sep=0pt, scale=  0.70] at (151.95, 19.86) {12};

\node[text=drawColor,anchor=base,inner sep=0pt, outer sep=0pt, scale=  0.70] at (195.89, 19.86) {16};
\end{scope}
\begin{scope}
\path[clip] (  0.00,  0.00) rectangle (209.58,115.63);
\definecolor{drawColor}{RGB}{0,0,0}

\node[text=drawColor,anchor=base,inner sep=0pt, outer sep=0pt, scale=  0.80] at (119.00, 10.74) {$n$};
\end{scope}
\begin{scope}
\path[clip] (  0.00,  0.00) rectangle (209.58,115.63);
\definecolor{drawColor}{RGB}{0,0,0}

\node[text=drawColor,rotate= 90.00,anchor=base,inner sep=0pt, outer sep=0pt, scale=  0.80] at (  7.51, 71.18) {time (ms)};
\end{scope}
\begin{scope}
\path[clip] (  0.00,  0.00) rectangle (209.58,115.63);

\path[] ( 34.42,  1.00) rectangle (210.27,  7.18);
\end{scope}
\begin{scope}
\path[clip] (  0.00,  0.00) rectangle (209.58,115.63);
\definecolor{drawColor}{RGB}{247,192,26}

\path[draw=drawColor,line width= 0.5pt,line join=round] ( 17.50,  4.09) -- ( 26.17,  4.09);
\end{scope}
\begin{scope}
\path[clip] (  0.00,  0.00) rectangle (209.58,115.63);
\definecolor{drawColor}{RGB}{247,192,26}
\definecolor{fillColor}{RGB}{247,192,26}

\path[draw=drawColor,line width= 0.4pt,line join=round,line cap=round,fill=fillColor] ( 21.84,  4.09) circle (  1.53);
\end{scope}
\begin{scope}
\path[clip] (  0.00,  0.00) rectangle (209.58,115.63);
\definecolor{drawColor}{RGB}{78,155,133}

\path[draw=drawColor,line width= 0.5pt,dash pattern=on 4pt off 4pt ,line join=round] ( 60.10,  4.09) -- ( 68.77,  4.09);
\end{scope}
\begin{scope}
\path[clip] (  0.00,  0.00) rectangle (209.58,115.63);
\definecolor{fillColor}{RGB}{78,155,133}

\path[fill=fillColor] ( 64.43,  6.48) --
	( 66.50,  2.90) --
	( 62.37,  2.90) --
	cycle;
\end{scope}
\begin{scope}
\path[clip] (  0.00,  0.00) rectangle (209.58,115.63);
\definecolor{drawColor}{RGB}{37,122,164}

\path[draw=drawColor,line width= 0.5pt,dash pattern=on 1pt off 3pt ,line join=round] (108.14,  4.09) -- (116.81,  4.09);
\end{scope}
\begin{scope}
\path[clip] (  0.00,  0.00) rectangle (209.58,115.63);
\definecolor{fillColor}{RGB}{37,122,164}

\path[fill=fillColor] (110.94,  2.56) --
	(114.01,  2.56) --
	(114.01,  5.62) --
	(110.94,  5.62) --
	cycle;
\end{scope}
\begin{scope}
\path[clip] (  0.00,  0.00) rectangle (209.58,115.63);
\definecolor{drawColor}{RGB}{86,51,94}

\path[draw=drawColor,line width= 0.5pt,dash pattern=on 7pt off 3pt ,line join=round] (158.32,  4.09) -- (166.99,  4.09);
\end{scope}
\begin{scope}
\path[clip] (  0.00,  0.00) rectangle (209.58,115.63);
\definecolor{drawColor}{RGB}{86,51,94}

\path[draw=drawColor,line width= 0.4pt,line join=round,line cap=round] (161.12,  2.56) -- (164.19,  5.62);

\path[draw=drawColor,line width= 0.4pt,line join=round,line cap=round] (161.12,  5.62) -- (164.19,  2.56);

\path[draw=drawColor,line width= 0.4pt,line join=round,line cap=round] (160.49,  4.09) -- (164.82,  4.09);

\path[draw=drawColor,line width= 0.4pt,line join=round,line cap=round] (162.65,  1.92) -- (162.65,  6.26);
\end{scope}
\begin{scope}
\path[clip] (  0.00,  0.00) rectangle (209.58,115.63);
\definecolor{drawColor}{RGB}{0,0,0}

\node[text=drawColor,anchor=base west,inner sep=0pt, outer sep=0pt, scale=  0.70] at ( 27.26,  1.68) {DECOR};
\end{scope}
\begin{scope}
\path[clip] (  0.00,  0.00) rectangle (209.58,115.63);
\definecolor{drawColor}{RGB}{0,0,0}

\node[text=drawColor,anchor=base west,inner sep=0pt, outer sep=0pt, scale=  0.70] at ( 69.85,  1.68) {DECOR+};
\end{scope}
\begin{scope}
\path[clip] (  0.00,  0.00) rectangle (209.58,115.63);
\definecolor{drawColor}{RGB}{0,0,0}

\node[text=drawColor,anchor=base west,inner sep=0pt, outer sep=0pt, scale=  0.70] at (117.90,  1.68) {A-DECOR};
\end{scope}
\begin{scope}
\path[clip] (  0.00,  0.00) rectangle (209.58,115.63);
\definecolor{drawColor}{RGB}{0,0,0}

\node[text=drawColor,anchor=base west,inner sep=0pt, outer sep=0pt, scale=  0.70] at (168.07,  1.68) {Brute-Force};
\end{scope}
\end{tikzpicture}

%% file: files/decor_revised_plot_k=2_phases_decorplus.tex
\begin{tikzpicture}[x=1pt,y=1pt]
\definecolor{fillColor}{RGB}{255,255,255}
\path[use as bounding box,fill=fillColor,fill opacity=0.00] (0,0) rectangle (209.58,115.63);
\begin{scope}
\path[clip] (  0.00,  0.00) rectangle (209.58,115.63);
\definecolor{drawColor}{RGB}{255,255,255}
\definecolor{fillColor}{RGB}{255,255,255}

\path[draw=drawColor,line width= 0.5pt,line join=round,line cap=round,fill=fillColor] (  0.00,  0.00) rectangle (209.58,115.63);
\end{scope}
\begin{scope}
\path[clip] ( 34.42, 28.73) rectangle (203.58,113.63);
\definecolor{fillColor}{RGB}{255,255,255}

\path[fill=fillColor] ( 34.42, 28.73) rectangle (203.58,113.63);
\definecolor{fillColor}{RGB}{78,155,133}

\path[fill=fillColor] (101.41, 28.73) rectangle (105.76, 59.58);

\path[fill=fillColor] ( 61.87, 28.73) rectangle ( 66.22, 46.97);

\path[fill=fillColor] (160.71, 28.73) rectangle (165.06, 85.32);

\path[fill=fillColor] ( 42.11, 28.73) rectangle ( 46.45, 42.20);

\path[fill=fillColor] (140.94, 28.73) rectangle (145.29, 75.54);

\path[fill=fillColor] ( 81.64, 28.73) rectangle ( 85.99, 53.32);

\path[fill=fillColor] (180.48, 28.73) rectangle (184.82, 95.74);

\path[fill=fillColor] (121.17, 28.73) rectangle (125.52, 67.10);
\definecolor{fillColor}{RGB}{247,192,26}

\path[fill=fillColor] (106.94, 28.73) rectangle (111.29, 57.95);

\path[fill=fillColor] ( 67.41, 28.73) rectangle ( 71.76, 46.10);

\path[fill=fillColor] (166.24, 28.73) rectangle (170.59, 77.00);

\path[fill=fillColor] ( 47.64, 28.73) rectangle ( 51.99, 41.16);

\path[fill=fillColor] (146.48, 28.73) rectangle (150.82, 70.52);

\path[fill=fillColor] ( 87.17, 28.73) rectangle ( 91.52, 51.99);

\path[fill=fillColor] (186.01, 28.73) rectangle (190.36, 83.95);

\path[fill=fillColor] (126.71, 28.73) rectangle (131.06, 64.24);
\definecolor{fillColor}{RGB}{37,122,164}

\path[fill=fillColor] (112.48, 28.73) rectangle (116.82, 61.69);

\path[fill=fillColor] ( 72.94, 28.73) rectangle ( 77.29, 49.39);

\path[fill=fillColor] (171.78, 28.73) rectangle (176.13, 85.83);

\path[fill=fillColor] ( 53.17, 28.73) rectangle ( 57.52, 44.56);

\path[fill=fillColor] (152.01, 28.73) rectangle (156.36, 76.60);

\path[fill=fillColor] ( 92.71, 28.73) rectangle ( 97.06, 55.55);

\path[fill=fillColor] (191.54, 28.73) rectangle (195.89, 95.96);

\path[fill=fillColor] (132.24, 28.73) rectangle (136.59, 68.76);
\end{scope}
\begin{scope}
\path[clip] (  0.00,  0.00) rectangle (209.58,115.63);
\definecolor{drawColor}{RGB}{0,0,0}

\path[draw=drawColor,line width= 0.5pt,line join=round] ( 34.42, 28.73) --
	( 34.42,113.63);

\path[draw=drawColor,line width= 0.5pt,line join=round] ( 35.55,111.66) --
	( 34.42,113.63) --
	( 33.28,111.66);
\end{scope}
\begin{scope}
\path[clip] (  0.00,  0.00) rectangle (209.58,115.63);
\definecolor{drawColor}{gray}{0.30}

\node[text=drawColor,anchor=base east,inner sep=0pt, outer sep=0pt, scale=  0.70] at ( 30.37, 31.03) {1e-03};

\node[text=drawColor,anchor=base east,inner sep=0pt, outer sep=0pt, scale=  0.70] at ( 30.37, 49.90) {1e-01};

\node[text=drawColor,anchor=base east,inner sep=0pt, outer sep=0pt, scale=  0.70] at ( 30.37, 68.77) {1e+01};

\node[text=drawColor,anchor=base east,inner sep=0pt, outer sep=0pt, scale=  0.70] at ( 30.37, 87.64) {1e+03};

\node[text=drawColor,anchor=base east,inner sep=0pt, outer sep=0pt, scale=  0.70] at ( 30.37,106.50) {1e+05};
\end{scope}
\begin{scope}
\path[clip] (  0.00,  0.00) rectangle (209.58,115.63);
\definecolor{drawColor}{gray}{0.20}

\path[draw=drawColor,line width= 0.5pt,line join=round] ( 32.17, 33.45) --
	( 34.42, 33.45);

\path[draw=drawColor,line width= 0.5pt,line join=round] ( 32.17, 52.31) --
	( 34.42, 52.31);

\path[draw=drawColor,line width= 0.5pt,line join=round] ( 32.17, 71.18) --
	( 34.42, 71.18);

\path[draw=drawColor,line width= 0.5pt,line join=round] ( 32.17, 90.05) --
	( 34.42, 90.05);

\path[draw=drawColor,line width= 0.5pt,line join=round] ( 32.17,108.92) --
	( 34.42,108.92);
\end{scope}
\begin{scope}
\path[clip] (  0.00,  0.00) rectangle (209.58,115.63);
\definecolor{drawColor}{RGB}{0,0,0}

\path[draw=drawColor,line width= 0.5pt,line join=round] ( 34.42, 28.73) --
	(203.58, 28.73);

\path[draw=drawColor,line width= 0.5pt,line join=round] (201.61, 27.59) --
	(203.58, 28.73) --
	(201.61, 29.87);
\end{scope}
\begin{scope}
\path[clip] (  0.00,  0.00) rectangle (209.58,115.63);
\definecolor{drawColor}{gray}{0.20}

\path[draw=drawColor,line width= 0.5pt,line join=round] ( 49.81, 26.48) --
	( 49.81, 28.73);

\path[draw=drawColor,line width= 0.5pt,line join=round] ( 69.58, 26.48) --
	( 69.58, 28.73);

\path[draw=drawColor,line width= 0.5pt,line join=round] ( 89.35, 26.48) --
	( 89.35, 28.73);

\path[draw=drawColor,line width= 0.5pt,line join=round] (109.12, 26.48) --
	(109.12, 28.73);

\path[draw=drawColor,line width= 0.5pt,line join=round] (128.88, 26.48) --
	(128.88, 28.73);

\path[draw=drawColor,line width= 0.5pt,line join=round] (148.65, 26.48) --
	(148.65, 28.73);

\path[draw=drawColor,line width= 0.5pt,line join=round] (168.42, 26.48) --
	(168.42, 28.73);

\path[draw=drawColor,line width= 0.5pt,line join=round] (188.18, 26.48) --
	(188.18, 28.73);
\end{scope}
\begin{scope}
\path[clip] (  0.00,  0.00) rectangle (209.58,115.63);
\definecolor{drawColor}{gray}{0.30}

\node[text=drawColor,anchor=base,inner sep=0pt, outer sep=0pt, scale=  0.70] at ( 49.81, 19.86) {2};

\node[text=drawColor,anchor=base,inner sep=0pt, outer sep=0pt, scale=  0.70] at ( 69.58, 19.86) {4};

\node[text=drawColor,anchor=base,inner sep=0pt, outer sep=0pt, scale=  0.70] at ( 89.35, 19.86) {6};

\node[text=drawColor,anchor=base,inner sep=0pt, outer sep=0pt, scale=  0.70] at (109.12, 19.86) {8};

\node[text=drawColor,anchor=base,inner sep=0pt, outer sep=0pt, scale=  0.70] at (128.88, 19.86) {10};

\node[text=drawColor,anchor=base,inner sep=0pt, outer sep=0pt, scale=  0.70] at (148.65, 19.86) {12};

\node[text=drawColor,anchor=base,inner sep=0pt, outer sep=0pt, scale=  0.70] at (168.42, 19.86) {14};

\node[text=drawColor,anchor=base,inner sep=0pt, outer sep=0pt, scale=  0.70] at (188.18, 19.86) {16};
\end{scope}
\begin{scope}
\path[clip] (  0.00,  0.00) rectangle (209.58,115.63);
\definecolor{drawColor}{RGB}{0,0,0}

\node[text=drawColor,anchor=base,inner sep=0pt, outer sep=0pt, scale=  0.80] at (119.00, 10.74) {$n$};
\end{scope}
\begin{scope}
\path[clip] (  0.00,  0.00) rectangle (209.58,115.63);
\definecolor{drawColor}{RGB}{0,0,0}

\node[text=drawColor,rotate= 90.00,anchor=base,inner sep=0pt, outer sep=0pt, scale=  0.80] at (  7.51, 71.18) {time (ms)};
\end{scope}
\begin{scope}
\path[clip] (  0.00,  0.00) rectangle (209.58,115.63);

\path[] ( 51.04,  1.00) rectangle (186.96,  7.18);
\end{scope}
\begin{scope}
\path[clip] (  0.00,  0.00) rectangle (209.58,115.63);
\definecolor{fillColor}{RGB}{78,155,133}

\path[fill=fillColor] ( 51.75,  1.71) rectangle ( 61.17,  6.47);
\end{scope}
\begin{scope}
\path[clip] (  0.00,  0.00) rectangle (209.58,115.63);
\definecolor{fillColor}{RGB}{78,155,133}

\path[fill=fillColor] ( 51.75,  1.71) rectangle ( 61.17,  6.47);
\end{scope}
\begin{scope}
\path[clip] (  0.00,  0.00) rectangle (209.58,115.63);
\definecolor{fillColor}{RGB}{78,155,133}

\path[fill=fillColor] ( 51.75,  1.71) rectangle ( 61.17,  6.47);
\end{scope}
\begin{scope}
\path[clip] (  0.00,  0.00) rectangle (209.58,115.63);
\definecolor{fillColor}{RGB}{247,192,26}

\path[fill=fillColor] (102.84,  1.71) rectangle (112.26,  6.47);
\end{scope}
\begin{scope}
\path[clip] (  0.00,  0.00) rectangle (209.58,115.63);
\definecolor{fillColor}{RGB}{247,192,26}

\path[fill=fillColor] (102.84,  1.71) rectangle (112.26,  6.47);
\end{scope}
\begin{scope}
\path[clip] (  0.00,  0.00) rectangle (209.58,115.63);
\definecolor{fillColor}{RGB}{247,192,26}

\path[fill=fillColor] (102.84,  1.71) rectangle (112.26,  6.47);
\end{scope}
\begin{scope}
\path[clip] (  0.00,  0.00) rectangle (209.58,115.63);
\definecolor{fillColor}{RGB}{37,122,164}

\path[fill=fillColor] (154.70,  1.71) rectangle (164.11,  6.47);
\end{scope}
\begin{scope}
\path[clip] (  0.00,  0.00) rectangle (209.58,115.63);
\definecolor{fillColor}{RGB}{37,122,164}

\path[fill=fillColor] (154.70,  1.71) rectangle (164.11,  6.47);
\end{scope}
\begin{scope}
\path[clip] (  0.00,  0.00) rectangle (209.58,115.63);
\definecolor{fillColor}{RGB}{37,122,164}

\path[fill=fillColor] (154.70,  1.71) rectangle (164.11,  6.47);
\end{scope}
\begin{scope}
\path[clip] (  0.00,  0.00) rectangle (209.58,115.63);
\definecolor{drawColor}{RGB}{0,0,0}

\node[text=drawColor,anchor=base west,inner sep=0pt, outer sep=0pt, scale=  0.70] at ( 61.88,  1.68) {Candidates};
\end{scope}
\begin{scope}
\path[clip] (  0.00,  0.00) rectangle (209.58,115.63);
\definecolor{drawColor}{RGB}{0,0,0}

\node[text=drawColor,anchor=base west,inner sep=0pt, outer sep=0pt, scale=  0.70] at (112.97,  1.68) {Verification};
\end{scope}
\begin{scope}
\path[clip] (  0.00,  0.00) rectangle (209.58,115.63);
\definecolor{drawColor}{RGB}{0,0,0}

\node[text=drawColor,anchor=base west,inner sep=0pt, outer sep=0pt, scale=  0.70] at (164.82,  1.68) {Total};
\end{scope}
\end{tikzpicture}

%% file: files/decor_revised_plot_k=log2n_times.tex
\begin{tikzpicture}[x=1pt,y=1pt]
\definecolor{fillColor}{RGB}{255,255,255}
\path[use as bounding box,fill=fillColor,fill opacity=0.00] (0,0) rectangle (209.58,115.63);
\begin{scope}
\path[clip] (  0.00,  0.00) rectangle (209.58,115.63);
\definecolor{drawColor}{RGB}{255,255,255}
\definecolor{fillColor}{RGB}{255,255,255}

\path[draw=drawColor,line width= 0.5pt,line join=round,line cap=round,fill=fillColor] (  0.00,  0.00) rectangle (209.58,115.63);
\end{scope}
\begin{scope}
\path[clip] ( 34.42, 28.73) rectangle (203.58,113.63);
\definecolor{fillColor}{RGB}{255,255,255}

\path[fill=fillColor] ( 34.42, 28.73) rectangle (203.58,113.63);
\definecolor{drawColor}{RGB}{247,192,26}

\path[draw=drawColor,line width= 0.5pt,line join=round] ( 42.11, 38.47) --
	( 67.74, 45.70) --
	( 93.37, 53.53) --
	(119.00, 61.77) --
	(144.63, 70.38) --
	(170.26, 80.69) --
	(195.89, 90.05);
\definecolor{drawColor}{RGB}{78,155,133}

\path[draw=drawColor,line width= 0.5pt,dash pattern=on 4pt off 4pt ,line join=round] ( 42.11, 41.36) --
	( 67.74, 48.29) --
	( 93.37, 55.52) --
	(119.00, 63.22) --
	(144.63, 71.53) --
	(170.26, 81.37) --
	(195.89, 90.46);
\definecolor{drawColor}{RGB}{37,122,164}

\path[draw=drawColor,line width= 0.5pt,dash pattern=on 1pt off 3pt ,line join=round] ( 42.11, 45.59) --
	( 67.74, 56.11) --
	( 93.37, 65.73) --
	(119.00, 75.11) --
	(144.63, 83.75) --
	(170.26, 92.65) --
	(195.89,101.54);
\definecolor{drawColor}{RGB}{86,51,94}

\path[draw=drawColor,line width= 0.5pt,dash pattern=on 7pt off 3pt ,line join=round] ( 42.11, 47.47) --
	( 67.74, 61.71) --
	( 93.37, 74.75) --
	(119.00, 88.66) --
	(144.63,102.13);
\definecolor{fillColor}{RGB}{78,155,133}

\path[fill=fillColor] ( 42.11, 43.75) --
	( 44.17, 40.17) --
	( 40.04, 40.17) --
	cycle;

\path[draw=drawColor,line width= 0.4pt,line join=round,line cap=round] ( 40.57, 45.93) -- ( 43.64, 49.00);

\path[draw=drawColor,line width= 0.4pt,line join=round,line cap=round] ( 40.57, 49.00) -- ( 43.64, 45.93);

\path[draw=drawColor,line width= 0.4pt,line join=round,line cap=round] ( 39.94, 47.47) -- ( 44.27, 47.47);

\path[draw=drawColor,line width= 0.4pt,line join=round,line cap=round] ( 42.11, 45.30) -- ( 42.11, 49.64);
\definecolor{drawColor}{RGB}{247,192,26}
\definecolor{fillColor}{RGB}{247,192,26}

\path[draw=drawColor,line width= 0.4pt,line join=round,line cap=round,fill=fillColor] (170.26, 80.69) circle (  1.53);
\definecolor{drawColor}{RGB}{86,51,94}

\path[draw=drawColor,line width= 0.4pt,line join=round,line cap=round] ( 91.83, 73.22) -- ( 94.90, 76.28);

\path[draw=drawColor,line width= 0.4pt,line join=round,line cap=round] ( 91.83, 76.28) -- ( 94.90, 73.22);

\path[draw=drawColor,line width= 0.4pt,line join=round,line cap=round] ( 91.20, 74.75) -- ( 95.54, 74.75);

\path[draw=drawColor,line width= 0.4pt,line join=round,line cap=round] ( 93.37, 72.58) -- ( 93.37, 76.92);
\definecolor{fillColor}{RGB}{37,122,164}

\path[fill=fillColor] ( 91.83, 64.20) --
	( 94.90, 64.20) --
	( 94.90, 67.26) --
	( 91.83, 67.26) --
	cycle;
\definecolor{fillColor}{RGB}{78,155,133}

\path[fill=fillColor] (119.00, 65.60) --
	(121.06, 62.03) --
	(116.93, 62.03) --
	cycle;

\path[draw=drawColor,line width= 0.4pt,line join=round,line cap=round] (143.10,100.60) -- (146.16,103.67);

\path[draw=drawColor,line width= 0.4pt,line join=round,line cap=round] (143.10,103.67) -- (146.16,100.60);

\path[draw=drawColor,line width= 0.4pt,line join=round,line cap=round] (142.46,102.13) -- (146.80,102.13);

\path[draw=drawColor,line width= 0.4pt,line join=round,line cap=round] (144.63, 99.96) -- (144.63,104.30);
\definecolor{fillColor}{RGB}{37,122,164}

\path[fill=fillColor] ( 40.57, 44.05) --
	( 43.64, 44.05) --
	( 43.64, 47.12) --
	( 40.57, 47.12) --
	cycle;
\definecolor{drawColor}{RGB}{247,192,26}
\definecolor{fillColor}{RGB}{247,192,26}

\path[draw=drawColor,line width= 0.4pt,line join=round,line cap=round,fill=fillColor] ( 67.74, 45.70) circle (  1.53);
\definecolor{fillColor}{RGB}{37,122,164}

\path[fill=fillColor] (143.10, 82.21) --
	(146.16, 82.21) --
	(146.16, 85.28) --
	(143.10, 85.28) --
	cycle;
\definecolor{fillColor}{RGB}{78,155,133}

\path[fill=fillColor] (170.26, 83.76) --
	(172.33, 80.18) --
	(168.20, 80.18) --
	cycle;
\definecolor{fillColor}{RGB}{247,192,26}

\path[draw=drawColor,line width= 0.4pt,line join=round,line cap=round,fill=fillColor] (195.89, 90.05) circle (  1.53);
\definecolor{fillColor}{RGB}{78,155,133}

\path[fill=fillColor] ( 67.74, 50.67) --
	( 69.80, 47.09) --
	( 65.67, 47.09) --
	cycle;
\definecolor{fillColor}{RGB}{37,122,164}

\path[fill=fillColor] (117.47, 73.58) --
	(120.53, 73.58) --
	(120.53, 76.64) --
	(117.47, 76.64) --
	cycle;
\definecolor{fillColor}{RGB}{247,192,26}

\path[draw=drawColor,line width= 0.4pt,line join=round,line cap=round,fill=fillColor] ( 93.37, 53.53) circle (  1.53);
\definecolor{fillColor}{RGB}{37,122,164}

\path[fill=fillColor] (194.36,100.00) --
	(197.43,100.00) --
	(197.43,103.07) --
	(194.36,103.07) --
	cycle;

\path[fill=fillColor] ( 66.20, 54.58) --
	( 69.27, 54.58) --
	( 69.27, 57.64) --
	( 66.20, 57.64) --
	cycle;
\definecolor{drawColor}{RGB}{86,51,94}

\path[draw=drawColor,line width= 0.4pt,line join=round,line cap=round] (117.47, 87.12) -- (120.53, 90.19);

\path[draw=drawColor,line width= 0.4pt,line join=round,line cap=round] (117.47, 90.19) -- (120.53, 87.12);

\path[draw=drawColor,line width= 0.4pt,line join=round,line cap=round] (116.83, 88.66) -- (121.17, 88.66);

\path[draw=drawColor,line width= 0.4pt,line join=round,line cap=round] (119.00, 86.49) -- (119.00, 90.82);

\path[fill=fillColor] (168.73, 91.11) --
	(171.80, 91.11) --
	(171.80, 94.18) --
	(168.73, 94.18) --
	cycle;

\path[draw=drawColor,line width= 0.4pt,line join=round,line cap=round] ( 66.20, 60.18) -- ( 69.27, 63.24);

\path[draw=drawColor,line width= 0.4pt,line join=round,line cap=round] ( 66.20, 63.24) -- ( 69.27, 60.18);

\path[draw=drawColor,line width= 0.4pt,line join=round,line cap=round] ( 65.57, 61.71) -- ( 69.91, 61.71);

\path[draw=drawColor,line width= 0.4pt,line join=round,line cap=round] ( 67.74, 59.54) -- ( 67.74, 63.88);
\definecolor{fillColor}{RGB}{78,155,133}

\path[fill=fillColor] ( 93.37, 57.90) --
	( 95.43, 54.33) --
	( 91.30, 54.33) --
	cycle;

\path[fill=fillColor] (195.89, 92.84) --
	(197.96, 89.27) --
	(193.83, 89.27) --
	cycle;
\definecolor{drawColor}{RGB}{247,192,26}
\definecolor{fillColor}{RGB}{247,192,26}

\path[draw=drawColor,line width= 0.4pt,line join=round,line cap=round,fill=fillColor] (119.00, 61.77) circle (  1.53);
\definecolor{fillColor}{RGB}{78,155,133}

\path[fill=fillColor] (144.63, 73.92) --
	(146.70, 70.34) --
	(142.57, 70.34) --
	cycle;
\definecolor{fillColor}{RGB}{247,192,26}

\path[draw=drawColor,line width= 0.4pt,line join=round,line cap=round,fill=fillColor] ( 42.11, 38.47) circle (  1.53);

\path[draw=drawColor,line width= 0.4pt,line join=round,line cap=round,fill=fillColor] (144.63, 70.38) circle (  1.53);
\end{scope}
\begin{scope}
\path[clip] (  0.00,  0.00) rectangle (209.58,115.63);
\definecolor{drawColor}{RGB}{0,0,0}

\path[draw=drawColor,line width= 0.5pt,line join=round] ( 34.42, 28.73) --
	( 34.42,113.63);

\path[draw=drawColor,line width= 0.5pt,line join=round] ( 35.55,111.66) --
	( 34.42,113.63) --
	( 33.28,111.66);
\end{scope}
\begin{scope}
\path[clip] (  0.00,  0.00) rectangle (209.58,115.63);
\definecolor{drawColor}{gray}{0.30}

\node[text=drawColor,anchor=base east,inner sep=0pt, outer sep=0pt, scale=  0.70] at ( 30.37, 42.24) {1e-01};

\node[text=drawColor,anchor=base east,inner sep=0pt, outer sep=0pt, scale=  0.70] at ( 30.37, 63.46) {1e+01};

\node[text=drawColor,anchor=base east,inner sep=0pt, outer sep=0pt, scale=  0.70] at ( 30.37, 84.69) {1e+03};

\node[text=drawColor,anchor=base east,inner sep=0pt, outer sep=0pt, scale=  0.70] at ( 30.37,105.92) {1e+05};
\end{scope}
\begin{scope}
\path[clip] (  0.00,  0.00) rectangle (209.58,115.63);
\definecolor{drawColor}{gray}{0.20}

\path[draw=drawColor,line width= 0.5pt,line join=round] ( 32.17, 44.65) --
	( 34.42, 44.65);

\path[draw=drawColor,line width= 0.5pt,line join=round] ( 32.17, 65.87) --
	( 34.42, 65.87);

\path[draw=drawColor,line width= 0.5pt,line join=round] ( 32.17, 87.10) --
	( 34.42, 87.10);

\path[draw=drawColor,line width= 0.5pt,line join=round] ( 32.17,108.33) --
	( 34.42,108.33);
\end{scope}
\begin{scope}
\path[clip] (  0.00,  0.00) rectangle (209.58,115.63);
\definecolor{drawColor}{RGB}{0,0,0}

\path[draw=drawColor,line width= 0.5pt,line join=round] ( 34.42, 28.73) --
	(203.58, 28.73);

\path[draw=drawColor,line width= 0.5pt,line join=round] (201.61, 27.59) --
	(203.58, 28.73) --
	(201.61, 29.87);
\end{scope}
\begin{scope}
\path[clip] (  0.00,  0.00) rectangle (209.58,115.63);
\definecolor{drawColor}{gray}{0.20}

\path[draw=drawColor,line width= 0.5pt,line join=round] ( 42.11, 26.48) --
	( 42.11, 28.73);

\path[draw=drawColor,line width= 0.5pt,line join=round] ( 93.37, 26.48) --
	( 93.37, 28.73);

\path[draw=drawColor,line width= 0.5pt,line join=round] (144.63, 26.48) --
	(144.63, 28.73);

\path[draw=drawColor,line width= 0.5pt,line join=round] (195.89, 26.48) --
	(195.89, 28.73);
\end{scope}
\begin{scope}
\path[clip] (  0.00,  0.00) rectangle (209.58,115.63);
\definecolor{drawColor}{gray}{0.30}

\node[text=drawColor,anchor=base,inner sep=0pt, outer sep=0pt, scale=  0.70] at ( 42.11, 19.86) {4};

\node[text=drawColor,anchor=base,inner sep=0pt, outer sep=0pt, scale=  0.70] at ( 93.37, 19.86) {8};

\node[text=drawColor,anchor=base,inner sep=0pt, outer sep=0pt, scale=  0.70] at (144.63, 19.86) {12};

\node[text=drawColor,anchor=base,inner sep=0pt, outer sep=0pt, scale=  0.70] at (195.89, 19.86) {16};
\end{scope}
\begin{scope}
\path[clip] (  0.00,  0.00) rectangle (209.58,115.63);
\definecolor{drawColor}{RGB}{0,0,0}

\node[text=drawColor,anchor=base,inner sep=0pt, outer sep=0pt, scale=  0.80] at (119.00, 10.74) {$n$};
\end{scope}
\begin{scope}
\path[clip] (  0.00,  0.00) rectangle (209.58,115.63);
\definecolor{drawColor}{RGB}{0,0,0}

\node[text=drawColor,rotate= 90.00,anchor=base,inner sep=0pt, outer sep=0pt, scale=  0.80] at (  7.51, 71.18) {time (ms)};
\end{scope}
\begin{scope}
\path[clip] (  0.00,  0.00) rectangle (209.58,115.63);

\path[] ( 34.42,  1.00) rectangle (210.27,  7.18);
\end{scope}
\begin{scope}
\path[clip] (  0.00,  0.00) rectangle (209.58,115.63);
\definecolor{drawColor}{RGB}{247,192,26}

\path[draw=drawColor,line width= 0.5pt,line join=round] ( 17.50,  4.09) -- ( 26.17,  4.09);
\end{scope}
\begin{scope}
\path[clip] (  0.00,  0.00) rectangle (209.58,115.63);
\definecolor{drawColor}{RGB}{247,192,26}
\definecolor{fillColor}{RGB}{247,192,26}

\path[draw=drawColor,line width= 0.4pt,line join=round,line cap=round,fill=fillColor] ( 21.84,  4.09) circle (  1.53);
\end{scope}
\begin{scope}
\path[clip] (  0.00,  0.00) rectangle (209.58,115.63);
\definecolor{drawColor}{RGB}{78,155,133}

\path[draw=drawColor,line width= 0.5pt,dash pattern=on 4pt off 4pt ,line join=round] ( 60.10,  4.09) -- ( 68.77,  4.09);
\end{scope}
\begin{scope}
\path[clip] (  0.00,  0.00) rectangle (209.58,115.63);
\definecolor{fillColor}{RGB}{78,155,133}

\path[fill=fillColor] ( 64.43,  6.48) --
	( 66.50,  2.90) --
	( 62.37,  2.90) --
	cycle;
\end{scope}
\begin{scope}
\path[clip] (  0.00,  0.00) rectangle (209.58,115.63);
\definecolor{drawColor}{RGB}{37,122,164}

\path[draw=drawColor,line width= 0.5pt,dash pattern=on 1pt off 3pt ,line join=round] (108.14,  4.09) -- (116.81,  4.09);
\end{scope}
\begin{scope}
\path[clip] (  0.00,  0.00) rectangle (209.58,115.63);
\definecolor{fillColor}{RGB}{37,122,164}

\path[fill=fillColor] (110.94,  2.56) --
	(114.01,  2.56) --
	(114.01,  5.62) --
	(110.94,  5.62) --
	cycle;
\end{scope}
\begin{scope}
\path[clip] (  0.00,  0.00) rectangle (209.58,115.63);
\definecolor{drawColor}{RGB}{86,51,94}

\path[draw=drawColor,line width= 0.5pt,dash pattern=on 7pt off 3pt ,line join=round] (158.32,  4.09) -- (166.99,  4.09);
\end{scope}
\begin{scope}
\path[clip] (  0.00,  0.00) rectangle (209.58,115.63);
\definecolor{drawColor}{RGB}{86,51,94}

\path[draw=drawColor,line width= 0.4pt,line join=round,line cap=round] (161.12,  2.56) -- (164.19,  5.62);

\path[draw=drawColor,line width= 0.4pt,line join=round,line cap=round] (161.12,  5.62) -- (164.19,  2.56);

\path[draw=drawColor,line width= 0.4pt,line join=round,line cap=round] (160.49,  4.09) -- (164.82,  4.09);

\path[draw=drawColor,line width= 0.4pt,line join=round,line cap=round] (162.65,  1.92) -- (162.65,  6.26);
\end{scope}
\begin{scope}
\path[clip] (  0.00,  0.00) rectangle (209.58,115.63);
\definecolor{drawColor}{RGB}{0,0,0}

\node[text=drawColor,anchor=base west,inner sep=0pt, outer sep=0pt, scale=  0.70] at ( 27.26,  1.68) {DECOR};
\end{scope}
\begin{scope}
\path[clip] (  0.00,  0.00) rectangle (209.58,115.63);
\definecolor{drawColor}{RGB}{0,0,0}

\node[text=drawColor,anchor=base west,inner sep=0pt, outer sep=0pt, scale=  0.70] at ( 69.85,  1.68) {DECOR+};
\end{scope}
\begin{scope}
\path[clip] (  0.00,  0.00) rectangle (209.58,115.63);
\definecolor{drawColor}{RGB}{0,0,0}

\node[text=drawColor,anchor=base west,inner sep=0pt, outer sep=0pt, scale=  0.70] at (117.90,  1.68) {A-DECOR};
\end{scope}
\begin{scope}
\path[clip] (  0.00,  0.00) rectangle (209.58,115.63);
\definecolor{drawColor}{RGB}{0,0,0}

\node[text=drawColor,anchor=base west,inner sep=0pt, outer sep=0pt, scale=  0.70] at (168.07,  1.68) {Brute-Force};
\end{scope}
\end{tikzpicture}

%% file: files/decor_revised_plot_k=log2n_phases_decorplus.tex
\begin{tikzpicture}[x=1pt,y=1pt]
\definecolor{fillColor}{RGB}{255,255,255}
\path[use as bounding box,fill=fillColor,fill opacity=0.00] (0,0) rectangle (209.58,115.63);
\begin{scope}
\path[clip] (  0.00,  0.00) rectangle (209.58,115.63);
\definecolor{drawColor}{RGB}{255,255,255}
\definecolor{fillColor}{RGB}{255,255,255}

\path[draw=drawColor,line width= 0.5pt,line join=round,line cap=round,fill=fillColor] (  0.00,  0.00) rectangle (209.58,115.63);
\end{scope}
\begin{scope}
\path[clip] ( 34.42, 28.73) rectangle (203.58,113.63);
\definecolor{fillColor}{RGB}{255,255,255}

\path[fill=fillColor] ( 34.42, 28.73) rectangle (203.58,113.63);
\definecolor{fillColor}{RGB}{78,155,133}

\path[fill=fillColor] ( 42.11, 28.73) rectangle ( 47.10, 38.63);

\path[fill=fillColor] (110.15, 28.73) rectangle (115.14, 61.63);

\path[fill=fillColor] (155.52, 28.73) rectangle (160.51, 80.75);

\path[fill=fillColor] ( 64.79, 28.73) rectangle ( 69.78, 45.78);

\path[fill=fillColor] ( 87.47, 28.73) rectangle ( 92.46, 53.59);

\path[fill=fillColor] (178.20, 28.73) rectangle (183.19, 90.07);

\path[fill=fillColor] (132.84, 28.73) rectangle (137.83, 70.35);
\definecolor{fillColor}{RGB}{247,192,26}

\path[fill=fillColor] ( 48.46, 28.73) rectangle ( 53.45, 37.65);

\path[fill=fillColor] (116.50, 28.73) rectangle (121.49, 57.55);

\path[fill=fillColor] (161.87, 28.73) rectangle (166.86, 71.86);

\path[fill=fillColor] ( 71.14, 28.73) rectangle ( 76.13, 44.28);

\path[fill=fillColor] ( 93.82, 28.73) rectangle ( 98.81, 50.58);

\path[fill=fillColor] (184.55, 28.73) rectangle (189.54, 78.85);

\path[fill=fillColor] (139.19, 28.73) rectangle (144.18, 64.68);
\definecolor{fillColor}{RGB}{37,122,164}

\path[fill=fillColor] ( 54.81, 28.73) rectangle ( 59.80, 41.36);

\path[fill=fillColor] (122.86, 28.73) rectangle (127.85, 63.22);

\path[fill=fillColor] (168.22, 28.73) rectangle (173.21, 81.37);

\path[fill=fillColor] ( 77.49, 28.73) rectangle ( 82.48, 48.29);

\path[fill=fillColor] (100.17, 28.73) rectangle (105.16, 55.52);

\path[fill=fillColor] (190.90, 28.73) rectangle (195.89, 90.46);

\path[fill=fillColor] (145.54, 28.73) rectangle (150.53, 71.53);
\end{scope}
\begin{scope}
\path[clip] (  0.00,  0.00) rectangle (209.58,115.63);
\definecolor{drawColor}{RGB}{0,0,0}

\path[draw=drawColor,line width= 0.5pt,line join=round] ( 34.42, 28.73) --
	( 34.42,113.63);

\path[draw=drawColor,line width= 0.5pt,line join=round] ( 35.55,111.66) --
	( 34.42,113.63) --
	( 33.28,111.66);
\end{scope}
\begin{scope}
\path[clip] (  0.00,  0.00) rectangle (209.58,115.63);
\definecolor{drawColor}{gray}{0.30}

\node[text=drawColor,anchor=base east,inner sep=0pt, outer sep=0pt, scale=  0.70] at ( 30.37, 42.24) {1e-01};

\node[text=drawColor,anchor=base east,inner sep=0pt, outer sep=0pt, scale=  0.70] at ( 30.37, 63.46) {1e+01};

\node[text=drawColor,anchor=base east,inner sep=0pt, outer sep=0pt, scale=  0.70] at ( 30.37, 84.69) {1e+03};

\node[text=drawColor,anchor=base east,inner sep=0pt, outer sep=0pt, scale=  0.70] at ( 30.37,105.92) {1e+05};
\end{scope}
\begin{scope}
\path[clip] (  0.00,  0.00) rectangle (209.58,115.63);
\definecolor{drawColor}{gray}{0.20}

\path[draw=drawColor,line width= 0.5pt,line join=round] ( 32.17, 44.65) --
	( 34.42, 44.65);

\path[draw=drawColor,line width= 0.5pt,line join=round] ( 32.17, 65.87) --
	( 34.42, 65.87);

\path[draw=drawColor,line width= 0.5pt,line join=round] ( 32.17, 87.10) --
	( 34.42, 87.10);

\path[draw=drawColor,line width= 0.5pt,line join=round] ( 32.17,108.33) --
	( 34.42,108.33);
\end{scope}
\begin{scope}
\path[clip] (  0.00,  0.00) rectangle (209.58,115.63);
\definecolor{drawColor}{RGB}{0,0,0}

\path[draw=drawColor,line width= 0.5pt,line join=round] ( 34.42, 28.73) --
	(203.58, 28.73);

\path[draw=drawColor,line width= 0.5pt,line join=round] (201.61, 27.59) --
	(203.58, 28.73) --
	(201.61, 29.87);
\end{scope}
\begin{scope}
\path[clip] (  0.00,  0.00) rectangle (209.58,115.63);
\definecolor{drawColor}{gray}{0.20}

\path[draw=drawColor,line width= 0.5pt,line join=round] ( 50.95, 26.48) --
	( 50.95, 28.73);

\path[draw=drawColor,line width= 0.5pt,line join=round] ( 73.63, 26.48) --
	( 73.63, 28.73);

\path[draw=drawColor,line width= 0.5pt,line join=round] ( 96.32, 26.48) --
	( 96.32, 28.73);

\path[draw=drawColor,line width= 0.5pt,line join=round] (119.00, 26.48) --
	(119.00, 28.73);

\path[draw=drawColor,line width= 0.5pt,line join=round] (141.68, 26.48) --
	(141.68, 28.73);

\path[draw=drawColor,line width= 0.5pt,line join=round] (164.36, 26.48) --
	(164.36, 28.73);

\path[draw=drawColor,line width= 0.5pt,line join=round] (187.05, 26.48) --
	(187.05, 28.73);
\end{scope}
\begin{scope}
\path[clip] (  0.00,  0.00) rectangle (209.58,115.63);
\definecolor{drawColor}{gray}{0.30}

\node[text=drawColor,anchor=base,inner sep=0pt, outer sep=0pt, scale=  0.70] at ( 50.95, 19.86) {4};

\node[text=drawColor,anchor=base,inner sep=0pt, outer sep=0pt, scale=  0.70] at ( 73.63, 19.86) {6};

\node[text=drawColor,anchor=base,inner sep=0pt, outer sep=0pt, scale=  0.70] at ( 96.32, 19.86) {8};

\node[text=drawColor,anchor=base,inner sep=0pt, outer sep=0pt, scale=  0.70] at (119.00, 19.86) {10};

\node[text=drawColor,anchor=base,inner sep=0pt, outer sep=0pt, scale=  0.70] at (141.68, 19.86) {12};

\node[text=drawColor,anchor=base,inner sep=0pt, outer sep=0pt, scale=  0.70] at (164.36, 19.86) {14};

\node[text=drawColor,anchor=base,inner sep=0pt, outer sep=0pt, scale=  0.70] at (187.05, 19.86) {16};
\end{scope}
\begin{scope}
\path[clip] (  0.00,  0.00) rectangle (209.58,115.63);
\definecolor{drawColor}{RGB}{0,0,0}

\node[text=drawColor,anchor=base,inner sep=0pt, outer sep=0pt, scale=  0.80] at (119.00, 10.74) {$n$};
\end{scope}
\begin{scope}
\path[clip] (  0.00,  0.00) rectangle (209.58,115.63);
\definecolor{drawColor}{RGB}{0,0,0}

\node[text=drawColor,rotate= 90.00,anchor=base,inner sep=0pt, outer sep=0pt, scale=  0.80] at (  7.51, 71.18) {time (ms)};
\end{scope}
\begin{scope}
\path[clip] (  0.00,  0.00) rectangle (209.58,115.63);

\path[] ( 51.04,  1.00) rectangle (186.96,  7.18);
\end{scope}
\begin{scope}
\path[clip] (  0.00,  0.00) rectangle (209.58,115.63);
\definecolor{fillColor}{RGB}{78,155,133}

\path[fill=fillColor] ( 51.75,  1.71) rectangle ( 61.17,  6.47);
\end{scope}
\begin{scope}
\path[clip] (  0.00,  0.00) rectangle (209.58,115.63);
\definecolor{fillColor}{RGB}{78,155,133}

\path[fill=fillColor] ( 51.75,  1.71) rectangle ( 61.17,  6.47);
\end{scope}
\begin{scope}
\path[clip] (  0.00,  0.00) rectangle (209.58,115.63);
\definecolor{fillColor}{RGB}{78,155,133}

\path[fill=fillColor] ( 51.75,  1.71) rectangle ( 61.17,  6.47);
\end{scope}
\begin{scope}
\path[clip] (  0.00,  0.00) rectangle (209.58,115.63);
\definecolor{fillColor}{RGB}{247,192,26}

\path[fill=fillColor] (102.84,  1.71) rectangle (112.26,  6.47);
\end{scope}
\begin{scope}
\path[clip] (  0.00,  0.00) rectangle (209.58,115.63);
\definecolor{fillColor}{RGB}{247,192,26}

\path[fill=fillColor] (102.84,  1.71) rectangle (112.26,  6.47);
\end{scope}
\begin{scope}
\path[clip] (  0.00,  0.00) rectangle (209.58,115.63);
\definecolor{fillColor}{RGB}{247,192,26}

\path[fill=fillColor] (102.84,  1.71) rectangle (112.26,  6.47);
\end{scope}
\begin{scope}
\path[clip] (  0.00,  0.00) rectangle (209.58,115.63);
\definecolor{fillColor}{RGB}{37,122,164}

\path[fill=fillColor] (154.70,  1.71) rectangle (164.11,  6.47);
\end{scope}
\begin{scope}
\path[clip] (  0.00,  0.00) rectangle (209.58,115.63);
\definecolor{fillColor}{RGB}{37,122,164}

\path[fill=fillColor] (154.70,  1.71) rectangle (164.11,  6.47);
\end{scope}
\begin{scope}
\path[clip] (  0.00,  0.00) rectangle (209.58,115.63);
\definecolor{fillColor}{RGB}{37,122,164}

\path[fill=fillColor] (154.70,  1.71) rectangle (164.11,  6.47);
\end{scope}
\begin{scope}
\path[clip] (  0.00,  0.00) rectangle (209.58,115.63);
\definecolor{drawColor}{RGB}{0,0,0}

\node[text=drawColor,anchor=base west,inner sep=0pt, outer sep=0pt, scale=  0.70] at ( 61.88,  1.68) {Candidates};
\end{scope}
\begin{scope}
\path[clip] (  0.00,  0.00) rectangle (209.58,115.63);
\definecolor{drawColor}{RGB}{0,0,0}

\node[text=drawColor,anchor=base west,inner sep=0pt, outer sep=0pt, scale=  0.70] at (112.97,  1.68) {Verification};
\end{scope}
\begin{scope}
\path[clip] (  0.00,  0.00) rectangle (209.58,115.63);
\definecolor{drawColor}{RGB}{0,0,0}

\node[text=drawColor,anchor=base west,inner sep=0pt, outer sep=0pt, scale=  0.70] at (164.82,  1.68) {Total};
\end{scope}
\end{tikzpicture}

%% file: files/decor_revised_plot_k=ndiv2_times.tex
\begin{tikzpicture}[x=1pt,y=1pt]
\definecolor{fillColor}{RGB}{255,255,255}
\path[use as bounding box,fill=fillColor,fill opacity=0.00] (0,0) rectangle (209.58,115.63);
\begin{scope}
\path[clip] (  0.00,  0.00) rectangle (209.58,115.63);
\definecolor{drawColor}{RGB}{255,255,255}
\definecolor{fillColor}{RGB}{255,255,255}

\path[draw=drawColor,line width= 0.5pt,line join=round,line cap=round,fill=fillColor] (  0.00,  0.00) rectangle (209.58,115.63);
\end{scope}
\begin{scope}
\path[clip] ( 34.42, 28.73) rectangle (203.58,113.63);
\definecolor{fillColor}{RGB}{255,255,255}

\path[fill=fillColor] ( 34.42, 28.73) rectangle (203.58,113.63);
\definecolor{drawColor}{RGB}{247,192,26}

\path[draw=drawColor,line width= 0.5pt,line join=round] ( 42.11, 38.47) --
	( 67.74, 46.55) --
	( 93.37, 53.88) --
	(119.00, 61.84) --
	(144.63, 69.26) --
	(170.26, 77.58) --
	(195.89, 84.86);
\definecolor{drawColor}{RGB}{78,155,133}

\path[draw=drawColor,line width= 0.5pt,dash pattern=on 4pt off 4pt ,line join=round] ( 42.11, 41.36) --
	( 67.74, 48.55) --
	( 93.37, 55.72) --
	(119.00, 63.32) --
	(144.63, 70.58) --
	(170.26, 78.56) --
	(195.89, 85.74);
\definecolor{drawColor}{RGB}{37,122,164}

\path[draw=drawColor,line width= 0.5pt,dash pattern=on 1pt off 3pt ,line join=round] ( 42.11, 45.59) --
	( 67.74, 56.13) --
	( 93.37, 65.73) --
	(119.00, 75.33) --
	(144.63, 83.93) --
	(170.26, 92.69) --
	(195.89,101.94);
\definecolor{drawColor}{RGB}{86,51,94}

\path[draw=drawColor,line width= 0.5pt,dash pattern=on 7pt off 3pt ,line join=round] ( 42.11, 47.47) --
	( 67.74, 59.27) --
	( 93.37, 73.39) --
	(119.00, 86.69) --
	(144.63, 99.38);
\definecolor{fillColor}{RGB}{78,155,133}

\path[fill=fillColor] ( 42.11, 43.75) --
	( 44.17, 40.17) --
	( 40.04, 40.17) --
	cycle;

\path[draw=drawColor,line width= 0.4pt,line join=round,line cap=round] ( 40.57, 45.93) -- ( 43.64, 49.00);

\path[draw=drawColor,line width= 0.4pt,line join=round,line cap=round] ( 40.57, 49.00) -- ( 43.64, 45.93);

\path[draw=drawColor,line width= 0.4pt,line join=round,line cap=round] ( 39.94, 47.47) -- ( 44.27, 47.47);

\path[draw=drawColor,line width= 0.4pt,line join=round,line cap=round] ( 42.11, 45.30) -- ( 42.11, 49.64);
\definecolor{fillColor}{RGB}{37,122,164}

\path[fill=fillColor] (168.73, 91.16) --
	(171.80, 91.16) --
	(171.80, 94.23) --
	(168.73, 94.23) --
	cycle;

\path[draw=drawColor,line width= 0.4pt,line join=round,line cap=round] (117.47, 85.15) -- (120.53, 88.22);

\path[draw=drawColor,line width= 0.4pt,line join=round,line cap=round] (117.47, 88.22) -- (120.53, 85.15);

\path[draw=drawColor,line width= 0.4pt,line join=round,line cap=round] (116.83, 86.69) -- (121.17, 86.69);

\path[draw=drawColor,line width= 0.4pt,line join=round,line cap=round] (119.00, 84.52) -- (119.00, 88.86);

\path[fill=fillColor] ( 40.57, 44.05) --
	( 43.64, 44.05) --
	( 43.64, 47.12) --
	( 40.57, 47.12) --
	cycle;
\definecolor{fillColor}{RGB}{78,155,133}

\path[fill=fillColor] (170.26, 80.95) --
	(172.33, 77.37) --
	(168.20, 77.37) --
	cycle;
\definecolor{fillColor}{RGB}{37,122,164}

\path[fill=fillColor] ( 66.20, 54.60) --
	( 69.27, 54.60) --
	( 69.27, 57.67) --
	( 66.20, 57.67) --
	cycle;
\definecolor{fillColor}{RGB}{78,155,133}

\path[fill=fillColor] (144.63, 72.96) --
	(146.70, 69.39) --
	(142.57, 69.39) --
	cycle;

\path[draw=drawColor,line width= 0.4pt,line join=round,line cap=round] ( 66.20, 57.74) -- ( 69.27, 60.81);

\path[draw=drawColor,line width= 0.4pt,line join=round,line cap=round] ( 66.20, 60.81) -- ( 69.27, 57.74);

\path[draw=drawColor,line width= 0.4pt,line join=round,line cap=round] ( 65.57, 59.27) -- ( 69.91, 59.27);

\path[draw=drawColor,line width= 0.4pt,line join=round,line cap=round] ( 67.74, 57.11) -- ( 67.74, 61.44);

\path[fill=fillColor] ( 93.37, 58.10) --
	( 95.43, 54.52) --
	( 91.30, 54.52) --
	cycle;
\definecolor{fillColor}{RGB}{37,122,164}

\path[fill=fillColor] (143.10, 82.40) --
	(146.16, 82.40) --
	(146.16, 85.46) --
	(143.10, 85.46) --
	cycle;
\definecolor{fillColor}{RGB}{78,155,133}

\path[fill=fillColor] (195.89, 88.12) --
	(197.96, 84.55) --
	(193.83, 84.55) --
	cycle;

\path[draw=drawColor,line width= 0.4pt,line join=round,line cap=round] (143.10, 97.85) -- (146.16,100.91);

\path[draw=drawColor,line width= 0.4pt,line join=round,line cap=round] (143.10,100.91) -- (146.16, 97.85);

\path[draw=drawColor,line width= 0.4pt,line join=round,line cap=round] (142.46, 99.38) -- (146.80, 99.38);

\path[draw=drawColor,line width= 0.4pt,line join=round,line cap=round] (144.63, 97.21) -- (144.63,101.55);

\path[fill=fillColor] ( 67.74, 50.94) --
	( 69.80, 47.36) --
	( 65.67, 47.36) --
	cycle;

\path[draw=drawColor,line width= 0.4pt,line join=round,line cap=round] ( 91.83, 71.86) -- ( 94.90, 74.92);

\path[draw=drawColor,line width= 0.4pt,line join=round,line cap=round] ( 91.83, 74.92) -- ( 94.90, 71.86);

\path[draw=drawColor,line width= 0.4pt,line join=round,line cap=round] ( 91.20, 73.39) -- ( 95.54, 73.39);

\path[draw=drawColor,line width= 0.4pt,line join=round,line cap=round] ( 93.37, 71.22) -- ( 93.37, 75.56);
\definecolor{drawColor}{RGB}{247,192,26}
\definecolor{fillColor}{RGB}{247,192,26}

\path[draw=drawColor,line width= 0.4pt,line join=round,line cap=round,fill=fillColor] ( 93.37, 53.88) circle (  1.53);

\path[draw=drawColor,line width= 0.4pt,line join=round,line cap=round,fill=fillColor] (119.00, 61.84) circle (  1.53);

\path[draw=drawColor,line width= 0.4pt,line join=round,line cap=round,fill=fillColor] (170.26, 77.58) circle (  1.53);

\path[draw=drawColor,line width= 0.4pt,line join=round,line cap=round,fill=fillColor] (144.63, 69.26) circle (  1.53);

\path[draw=drawColor,line width= 0.4pt,line join=round,line cap=round,fill=fillColor] (195.89, 84.86) circle (  1.53);

\path[draw=drawColor,line width= 0.4pt,line join=round,line cap=round,fill=fillColor] ( 67.74, 46.55) circle (  1.53);
\definecolor{fillColor}{RGB}{78,155,133}

\path[fill=fillColor] (119.00, 65.71) --
	(121.06, 62.13) --
	(116.93, 62.13) --
	cycle;
\definecolor{fillColor}{RGB}{37,122,164}

\path[fill=fillColor] (117.47, 73.79) --
	(120.53, 73.79) --
	(120.53, 76.86) --
	(117.47, 76.86) --
	cycle;

\path[fill=fillColor] ( 91.83, 64.20) --
	( 94.90, 64.20) --
	( 94.90, 67.27) --
	( 91.83, 67.27) --
	cycle;

\path[fill=fillColor] (194.36,100.40) --
	(197.43,100.40) --
	(197.43,103.47) --
	(194.36,103.47) --
	cycle;
\definecolor{fillColor}{RGB}{247,192,26}

\path[draw=drawColor,line width= 0.4pt,line join=round,line cap=round,fill=fillColor] ( 42.11, 38.47) circle (  1.53);
\end{scope}
\begin{scope}
\path[clip] (  0.00,  0.00) rectangle (209.58,115.63);
\definecolor{drawColor}{RGB}{0,0,0}

\path[draw=drawColor,line width= 0.5pt,line join=round] ( 34.42, 28.73) --
	( 34.42,113.63);

\path[draw=drawColor,line width= 0.5pt,line join=round] ( 35.55,111.66) --
	( 34.42,113.63) --
	( 33.28,111.66);
\end{scope}
\begin{scope}
\path[clip] (  0.00,  0.00) rectangle (209.58,115.63);
\definecolor{drawColor}{gray}{0.30}

\node[text=drawColor,anchor=base east,inner sep=0pt, outer sep=0pt, scale=  0.70] at ( 30.37, 42.24) {1e-01};

\node[text=drawColor,anchor=base east,inner sep=0pt, outer sep=0pt, scale=  0.70] at ( 30.37, 63.46) {1e+01};

\node[text=drawColor,anchor=base east,inner sep=0pt, outer sep=0pt, scale=  0.70] at ( 30.37, 84.69) {1e+03};

\node[text=drawColor,anchor=base east,inner sep=0pt, outer sep=0pt, scale=  0.70] at ( 30.37,105.92) {1e+05};
\end{scope}
\begin{scope}
\path[clip] (  0.00,  0.00) rectangle (209.58,115.63);
\definecolor{drawColor}{gray}{0.20}

\path[draw=drawColor,line width= 0.5pt,line join=round] ( 32.17, 44.65) --
	( 34.42, 44.65);

\path[draw=drawColor,line width= 0.5pt,line join=round] ( 32.17, 65.87) --
	( 34.42, 65.87);

\path[draw=drawColor,line width= 0.5pt,line join=round] ( 32.17, 87.10) --
	( 34.42, 87.10);

\path[draw=drawColor,line width= 0.5pt,line join=round] ( 32.17,108.33) --
	( 34.42,108.33);
\end{scope}
\begin{scope}
\path[clip] (  0.00,  0.00) rectangle (209.58,115.63);
\definecolor{drawColor}{RGB}{0,0,0}

\path[draw=drawColor,line width= 0.5pt,line join=round] ( 34.42, 28.73) --
	(203.58, 28.73);

\path[draw=drawColor,line width= 0.5pt,line join=round] (201.61, 27.59) --
	(203.58, 28.73) --
	(201.61, 29.87);
\end{scope}
\begin{scope}
\path[clip] (  0.00,  0.00) rectangle (209.58,115.63);
\definecolor{drawColor}{gray}{0.20}

\path[draw=drawColor,line width= 0.5pt,line join=round] ( 42.11, 26.48) --
	( 42.11, 28.73);

\path[draw=drawColor,line width= 0.5pt,line join=round] ( 93.37, 26.48) --
	( 93.37, 28.73);

\path[draw=drawColor,line width= 0.5pt,line join=round] (144.63, 26.48) --
	(144.63, 28.73);

\path[draw=drawColor,line width= 0.5pt,line join=round] (195.89, 26.48) --
	(195.89, 28.73);
\end{scope}
\begin{scope}
\path[clip] (  0.00,  0.00) rectangle (209.58,115.63);
\definecolor{drawColor}{gray}{0.30}

\node[text=drawColor,anchor=base,inner sep=0pt, outer sep=0pt, scale=  0.70] at ( 42.11, 19.86) {4};

\node[text=drawColor,anchor=base,inner sep=0pt, outer sep=0pt, scale=  0.70] at ( 93.37, 19.86) {8};

\node[text=drawColor,anchor=base,inner sep=0pt, outer sep=0pt, scale=  0.70] at (144.63, 19.86) {12};

\node[text=drawColor,anchor=base,inner sep=0pt, outer sep=0pt, scale=  0.70] at (195.89, 19.86) {16};
\end{scope}
\begin{scope}
\path[clip] (  0.00,  0.00) rectangle (209.58,115.63);
\definecolor{drawColor}{RGB}{0,0,0}

\node[text=drawColor,anchor=base,inner sep=0pt, outer sep=0pt, scale=  0.80] at (119.00, 10.74) {$n$};
\end{scope}
\begin{scope}
\path[clip] (  0.00,  0.00) rectangle (209.58,115.63);
\definecolor{drawColor}{RGB}{0,0,0}

\node[text=drawColor,rotate= 90.00,anchor=base,inner sep=0pt, outer sep=0pt, scale=  0.80] at (  7.51, 71.18) {time (ms)};
\end{scope}
\begin{scope}
\path[clip] (  0.00,  0.00) rectangle (209.58,115.63);

\path[] ( 34.42,  1.00) rectangle (210.27,  7.18);
\end{scope}
\begin{scope}
\path[clip] (  0.00,  0.00) rectangle (209.58,115.63);
\definecolor{drawColor}{RGB}{247,192,26}

\path[draw=drawColor,line width= 0.5pt,line join=round] ( 17.50,  4.09) -- ( 26.17,  4.09);
\end{scope}
\begin{scope}
\path[clip] (  0.00,  0.00) rectangle (209.58,115.63);
\definecolor{drawColor}{RGB}{247,192,26}
\definecolor{fillColor}{RGB}{247,192,26}

\path[draw=drawColor,line width= 0.4pt,line join=round,line cap=round,fill=fillColor] ( 21.84,  4.09) circle (  1.53);
\end{scope}
\begin{scope}
\path[clip] (  0.00,  0.00) rectangle (209.58,115.63);
\definecolor{drawColor}{RGB}{78,155,133}

\path[draw=drawColor,line width= 0.5pt,dash pattern=on 4pt off 4pt ,line join=round] ( 60.10,  4.09) -- ( 68.77,  4.09);
\end{scope}
\begin{scope}
\path[clip] (  0.00,  0.00) rectangle (209.58,115.63);
\definecolor{fillColor}{RGB}{78,155,133}

\path[fill=fillColor] ( 64.43,  6.48) --
	( 66.50,  2.90) --
	( 62.37,  2.90) --
	cycle;
\end{scope}
\begin{scope}
\path[clip] (  0.00,  0.00) rectangle (209.58,115.63);
\definecolor{drawColor}{RGB}{37,122,164}

\path[draw=drawColor,line width= 0.5pt,dash pattern=on 1pt off 3pt ,line join=round] (108.14,  4.09) -- (116.81,  4.09);
\end{scope}
\begin{scope}
\path[clip] (  0.00,  0.00) rectangle (209.58,115.63);
\definecolor{fillColor}{RGB}{37,122,164}

\path[fill=fillColor] (110.94,  2.56) --
	(114.01,  2.56) --
	(114.01,  5.62) --
	(110.94,  5.62) --
	cycle;
\end{scope}
\begin{scope}
\path[clip] (  0.00,  0.00) rectangle (209.58,115.63);
\definecolor{drawColor}{RGB}{86,51,94}

\path[draw=drawColor,line width= 0.5pt,dash pattern=on 7pt off 3pt ,line join=round] (158.32,  4.09) -- (166.99,  4.09);
\end{scope}
\begin{scope}
\path[clip] (  0.00,  0.00) rectangle (209.58,115.63);
\definecolor{drawColor}{RGB}{86,51,94}

\path[draw=drawColor,line width= 0.4pt,line join=round,line cap=round] (161.12,  2.56) -- (164.19,  5.62);

\path[draw=drawColor,line width= 0.4pt,line join=round,line cap=round] (161.12,  5.62) -- (164.19,  2.56);

\path[draw=drawColor,line width= 0.4pt,line join=round,line cap=round] (160.49,  4.09) -- (164.82,  4.09);

\path[draw=drawColor,line width= 0.4pt,line join=round,line cap=round] (162.65,  1.92) -- (162.65,  6.26);
\end{scope}
\begin{scope}
\path[clip] (  0.00,  0.00) rectangle (209.58,115.63);
\definecolor{drawColor}{RGB}{0,0,0}

\node[text=drawColor,anchor=base west,inner sep=0pt, outer sep=0pt, scale=  0.70] at ( 27.26,  1.68) {DECOR};
\end{scope}
\begin{scope}
\path[clip] (  0.00,  0.00) rectangle (209.58,115.63);
\definecolor{drawColor}{RGB}{0,0,0}

\node[text=drawColor,anchor=base west,inner sep=0pt, outer sep=0pt, scale=  0.70] at ( 69.85,  1.68) {DECOR+};
\end{scope}
\begin{scope}
\path[clip] (  0.00,  0.00) rectangle (209.58,115.63);
\definecolor{drawColor}{RGB}{0,0,0}

\node[text=drawColor,anchor=base west,inner sep=0pt, outer sep=0pt, scale=  0.70] at (117.90,  1.68) {A-DECOR};
\end{scope}
\begin{scope}
\path[clip] (  0.00,  0.00) rectangle (209.58,115.63);
\definecolor{drawColor}{RGB}{0,0,0}

\node[text=drawColor,anchor=base west,inner sep=0pt, outer sep=0pt, scale=  0.70] at (168.07,  1.68) {Brute-Force};
\end{scope}
\end{tikzpicture}

%% file: files/decor_revised_plot_k=ndiv2_phases_decorplus.tex
\begin{tikzpicture}[x=1pt,y=1pt]
\definecolor{fillColor}{RGB}{255,255,255}
\path[use as bounding box,fill=fillColor,fill opacity=0.00] (0,0) rectangle (209.58,115.63);
\begin{scope}
\path[clip] (  0.00,  0.00) rectangle (209.58,115.63);
\definecolor{drawColor}{RGB}{255,255,255}
\definecolor{fillColor}{RGB}{255,255,255}

\path[draw=drawColor,line width= 0.5pt,line join=round,line cap=round,fill=fillColor] (  0.00,  0.00) rectangle (209.58,115.63);
\end{scope}
\begin{scope}
\path[clip] ( 34.42, 28.73) rectangle (203.58,113.63);
\definecolor{fillColor}{RGB}{255,255,255}

\path[fill=fillColor] ( 34.42, 28.73) rectangle (203.58,113.63);
\definecolor{fillColor}{RGB}{78,155,133}

\path[fill=fillColor] ( 42.11, 28.73) rectangle ( 47.10, 38.63);

\path[fill=fillColor] (155.52, 28.73) rectangle (160.51, 77.61);

\path[fill=fillColor] (132.84, 28.73) rectangle (137.83, 69.30);

\path[fill=fillColor] ( 87.47, 28.73) rectangle ( 92.46, 53.90);

\path[fill=fillColor] (178.20, 28.73) rectangle (183.19, 84.84);

\path[fill=fillColor] ( 64.79, 28.73) rectangle ( 69.78, 46.55);

\path[fill=fillColor] (110.15, 28.73) rectangle (115.14, 61.87);
\definecolor{fillColor}{RGB}{247,192,26}

\path[fill=fillColor] ( 48.46, 28.73) rectangle ( 53.45, 37.65);

\path[fill=fillColor] (161.87, 28.73) rectangle (166.86, 70.81);

\path[fill=fillColor] (139.19, 28.73) rectangle (144.18, 64.04);

\path[fill=fillColor] ( 93.82, 28.73) rectangle ( 98.81, 50.55);

\path[fill=fillColor] (184.55, 28.73) rectangle (189.54, 77.75);

\path[fill=fillColor] ( 71.14, 28.73) rectangle ( 76.13, 43.74);

\path[fill=fillColor] (116.50, 28.73) rectangle (121.49, 57.29);
\definecolor{fillColor}{RGB}{37,122,164}

\path[fill=fillColor] ( 54.81, 28.73) rectangle ( 59.80, 41.36);

\path[fill=fillColor] (168.22, 28.73) rectangle (173.21, 78.56);

\path[fill=fillColor] (145.54, 28.73) rectangle (150.53, 70.58);

\path[fill=fillColor] (100.17, 28.73) rectangle (105.16, 55.72);

\path[fill=fillColor] (190.90, 28.73) rectangle (195.89, 85.74);

\path[fill=fillColor] ( 77.49, 28.73) rectangle ( 82.48, 48.55);

\path[fill=fillColor] (122.86, 28.73) rectangle (127.85, 63.32);
\end{scope}
\begin{scope}
\path[clip] (  0.00,  0.00) rectangle (209.58,115.63);
\definecolor{drawColor}{RGB}{0,0,0}

\path[draw=drawColor,line width= 0.5pt,line join=round] ( 34.42, 28.73) --
	( 34.42,113.63);

\path[draw=drawColor,line width= 0.5pt,line join=round] ( 35.55,111.66) --
	( 34.42,113.63) --
	( 33.28,111.66);
\end{scope}
\begin{scope}
\path[clip] (  0.00,  0.00) rectangle (209.58,115.63);
\definecolor{drawColor}{gray}{0.30}

\node[text=drawColor,anchor=base east,inner sep=0pt, outer sep=0pt, scale=  0.70] at ( 30.37, 42.24) {1e-01};

\node[text=drawColor,anchor=base east,inner sep=0pt, outer sep=0pt, scale=  0.70] at ( 30.37, 63.46) {1e+01};

\node[text=drawColor,anchor=base east,inner sep=0pt, outer sep=0pt, scale=  0.70] at ( 30.37, 84.69) {1e+03};

\node[text=drawColor,anchor=base east,inner sep=0pt, outer sep=0pt, scale=  0.70] at ( 30.37,105.92) {1e+05};
\end{scope}
\begin{scope}
\path[clip] (  0.00,  0.00) rectangle (209.58,115.63);
\definecolor{drawColor}{gray}{0.20}

\path[draw=drawColor,line width= 0.5pt,line join=round] ( 32.17, 44.65) --
	( 34.42, 44.65);

\path[draw=drawColor,line width= 0.5pt,line join=round] ( 32.17, 65.87) --
	( 34.42, 65.87);

\path[draw=drawColor,line width= 0.5pt,line join=round] ( 32.17, 87.10) --
	( 34.42, 87.10);

\path[draw=drawColor,line width= 0.5pt,line join=round] ( 32.17,108.33) --
	( 34.42,108.33);
\end{scope}
\begin{scope}
\path[clip] (  0.00,  0.00) rectangle (209.58,115.63);
\definecolor{drawColor}{RGB}{0,0,0}

\path[draw=drawColor,line width= 0.5pt,line join=round] ( 34.42, 28.73) --
	(203.58, 28.73);

\path[draw=drawColor,line width= 0.5pt,line join=round] (201.61, 27.59) --
	(203.58, 28.73) --
	(201.61, 29.87);
\end{scope}
\begin{scope}
\path[clip] (  0.00,  0.00) rectangle (209.58,115.63);
\definecolor{drawColor}{gray}{0.20}

\path[draw=drawColor,line width= 0.5pt,line join=round] ( 50.95, 26.48) --
	( 50.95, 28.73);

\path[draw=drawColor,line width= 0.5pt,line join=round] ( 73.63, 26.48) --
	( 73.63, 28.73);

\path[draw=drawColor,line width= 0.5pt,line join=round] ( 96.32, 26.48) --
	( 96.32, 28.73);

\path[draw=drawColor,line width= 0.5pt,line join=round] (119.00, 26.48) --
	(119.00, 28.73);

\path[draw=drawColor,line width= 0.5pt,line join=round] (141.68, 26.48) --
	(141.68, 28.73);

\path[draw=drawColor,line width= 0.5pt,line join=round] (164.36, 26.48) --
	(164.36, 28.73);

\path[draw=drawColor,line width= 0.5pt,line join=round] (187.05, 26.48) --
	(187.05, 28.73);
\end{scope}
\begin{scope}
\path[clip] (  0.00,  0.00) rectangle (209.58,115.63);
\definecolor{drawColor}{gray}{0.30}

\node[text=drawColor,anchor=base,inner sep=0pt, outer sep=0pt, scale=  0.70] at ( 50.95, 19.86) {4};

\node[text=drawColor,anchor=base,inner sep=0pt, outer sep=0pt, scale=  0.70] at ( 73.63, 19.86) {6};

\node[text=drawColor,anchor=base,inner sep=0pt, outer sep=0pt, scale=  0.70] at ( 96.32, 19.86) {8};

\node[text=drawColor,anchor=base,inner sep=0pt, outer sep=0pt, scale=  0.70] at (119.00, 19.86) {10};

\node[text=drawColor,anchor=base,inner sep=0pt, outer sep=0pt, scale=  0.70] at (141.68, 19.86) {12};

\node[text=drawColor,anchor=base,inner sep=0pt, outer sep=0pt, scale=  0.70] at (164.36, 19.86) {14};

\node[text=drawColor,anchor=base,inner sep=0pt, outer sep=0pt, scale=  0.70] at (187.05, 19.86) {16};
\end{scope}
\begin{scope}
\path[clip] (  0.00,  0.00) rectangle (209.58,115.63);
\definecolor{drawColor}{RGB}{0,0,0}

\node[text=drawColor,anchor=base,inner sep=0pt, outer sep=0pt, scale=  0.80] at (119.00, 10.74) {$n$};
\end{scope}
\begin{scope}
\path[clip] (  0.00,  0.00) rectangle (209.58,115.63);
\definecolor{drawColor}{RGB}{0,0,0}

\node[text=drawColor,rotate= 90.00,anchor=base,inner sep=0pt, outer sep=0pt, scale=  0.80] at (  7.51, 71.18) {time (ms)};
\end{scope}
\begin{scope}
\path[clip] (  0.00,  0.00) rectangle (209.58,115.63);

\path[] ( 51.04,  1.00) rectangle (186.96,  7.18);
\end{scope}
\begin{scope}
\path[clip] (  0.00,  0.00) rectangle (209.58,115.63);
\definecolor{fillColor}{RGB}{78,155,133}

\path[fill=fillColor] ( 51.75,  1.71) rectangle ( 61.17,  6.47);
\end{scope}
\begin{scope}
\path[clip] (  0.00,  0.00) rectangle (209.58,115.63);
\definecolor{fillColor}{RGB}{78,155,133}

\path[fill=fillColor] ( 51.75,  1.71) rectangle ( 61.17,  6.47);
\end{scope}
\begin{scope}
\path[clip] (  0.00,  0.00) rectangle (209.58,115.63);
\definecolor{fillColor}{RGB}{78,155,133}

\path[fill=fillColor] ( 51.75,  1.71) rectangle ( 61.17,  6.47);
\end{scope}
\begin{scope}
\path[clip] (  0.00,  0.00) rectangle (209.58,115.63);
\definecolor{fillColor}{RGB}{247,192,26}

\path[fill=fillColor] (102.84,  1.71) rectangle (112.26,  6.47);
\end{scope}
\begin{scope}
\path[clip] (  0.00,  0.00) rectangle (209.58,115.63);
\definecolor{fillColor}{RGB}{247,192,26}

\path[fill=fillColor] (102.84,  1.71) rectangle (112.26,  6.47);
\end{scope}
\begin{scope}
\path[clip] (  0.00,  0.00) rectangle (209.58,115.63);
\definecolor{fillColor}{RGB}{247,192,26}

\path[fill=fillColor] (102.84,  1.71) rectangle (112.26,  6.47);
\end{scope}
\begin{scope}
\path[clip] (  0.00,  0.00) rectangle (209.58,115.63);
\definecolor{fillColor}{RGB}{37,122,164}

\path[fill=fillColor] (154.70,  1.71) rectangle (164.11,  6.47);
\end{scope}
\begin{scope}
\path[clip] (  0.00,  0.00) rectangle (209.58,115.63);
\definecolor{fillColor}{RGB}{37,122,164}

\path[fill=fillColor] (154.70,  1.71) rectangle (164.11,  6.47);
\end{scope}
\begin{scope}
\path[clip] (  0.00,  0.00) rectangle (209.58,115.63);
\definecolor{fillColor}{RGB}{37,122,164}

\path[fill=fillColor] (154.70,  1.71) rectangle (164.11,  6.47);
\end{scope}
\begin{scope}
\path[clip] (  0.00,  0.00) rectangle (209.58,115.63);
\definecolor{drawColor}{RGB}{0,0,0}

\node[text=drawColor,anchor=base west,inner sep=0pt, outer sep=0pt, scale=  0.70] at ( 61.88,  1.68) {Candidates};
\end{scope}
\begin{scope}
\path[clip] (  0.00,  0.00) rectangle (209.58,115.63);
\definecolor{drawColor}{RGB}{0,0,0}

\node[text=drawColor,anchor=base west,inner sep=0pt, outer sep=0pt, scale=  0.70] at (112.97,  1.68) {Verification};
\end{scope}
\begin{scope}
\path[clip] (  0.00,  0.00) rectangle (209.58,115.63);
\definecolor{drawColor}{RGB}{0,0,0}

\node[text=drawColor,anchor=base west,inner sep=0pt, outer sep=0pt, scale=  0.70] at (164.82,  1.68) {Total};
\end{scope}
\end{tikzpicture}

%% file: files/decor_revised_plot_k=nsub1_times.tex
\begin{tikzpicture}[x=1pt,y=1pt]
\definecolor{fillColor}{RGB}{255,255,255}
\path[use as bounding box,fill=fillColor,fill opacity=0.00] (0,0) rectangle (209.58,115.63);
\begin{scope}
\path[clip] (  0.00,  0.00) rectangle (209.58,115.63);
\definecolor{drawColor}{RGB}{255,255,255}
\definecolor{fillColor}{RGB}{255,255,255}

\path[draw=drawColor,line width= 0.5pt,line join=round,line cap=round,fill=fillColor] (  0.00,  0.00) rectangle (209.58,115.63);
\end{scope}
\begin{scope}
\path[clip] ( 34.42, 28.73) rectangle (203.58,113.63);
\definecolor{fillColor}{RGB}{255,255,255}

\path[fill=fillColor] ( 34.42, 28.73) rectangle (203.58,113.63);
\definecolor{drawColor}{RGB}{247,192,26}

\path[draw=drawColor,line width= 0.5pt,line join=round] ( 42.11, 39.14) --
	( 67.74, 47.09) --
	( 93.37, 54.10) --
	(119.00, 62.03) --
	(144.63, 68.99) --
	(170.26, 77.04) --
	(195.89, 84.18);
\definecolor{drawColor}{RGB}{78,155,133}

\path[draw=drawColor,line width= 0.5pt,dash pattern=on 4pt off 4pt ,line join=round] ( 42.11, 41.63) --
	( 67.74, 48.95) --
	( 93.37, 55.86) --
	(119.00, 63.38) --
	(144.63, 70.28) --
	(170.26, 78.07) --
	(195.89, 85.18);
\definecolor{drawColor}{RGB}{37,122,164}

\path[draw=drawColor,line width= 0.5pt,dash pattern=on 1pt off 3pt ,line join=round] ( 42.11, 45.73) --
	( 67.74, 56.43) --
	( 93.37, 66.22) --
	(119.00, 75.47) --
	(144.63, 84.29) --
	(170.26, 93.67) --
	(195.89,102.78);
\definecolor{drawColor}{RGB}{86,51,94}

\path[draw=drawColor,line width= 0.5pt,dash pattern=on 7pt off 3pt ,line join=round] ( 42.11, 43.40) --
	( 67.74, 48.51) --
	( 93.37, 59.18) --
	(119.00, 61.90) --
	(144.63, 74.21) --
	(170.26, 81.93) --
	(195.89, 89.91);
\definecolor{fillColor}{RGB}{78,155,133}

\path[fill=fillColor] ( 67.74, 51.34) --
	( 69.80, 47.76) --
	( 65.67, 47.76) --
	cycle;
\definecolor{fillColor}{RGB}{37,122,164}

\path[fill=fillColor] ( 91.83, 64.68) --
	( 94.90, 64.68) --
	( 94.90, 67.75) --
	( 91.83, 67.75) --
	cycle;
\definecolor{drawColor}{RGB}{247,192,26}
\definecolor{fillColor}{RGB}{247,192,26}

\path[draw=drawColor,line width= 0.4pt,line join=round,line cap=round,fill=fillColor] (144.63, 68.99) circle (  1.53);

\path[draw=drawColor,line width= 0.4pt,line join=round,line cap=round,fill=fillColor] (170.26, 77.04) circle (  1.53);
\definecolor{fillColor}{RGB}{78,155,133}

\path[fill=fillColor] ( 42.11, 44.01) --
	( 44.17, 40.43) --
	( 40.04, 40.43) --
	cycle;
\definecolor{fillColor}{RGB}{247,192,26}

\path[draw=drawColor,line width= 0.4pt,line join=round,line cap=round,fill=fillColor] ( 67.74, 47.09) circle (  1.53);
\definecolor{fillColor}{RGB}{78,155,133}

\path[fill=fillColor] (170.26, 80.46) --
	(172.33, 76.88) --
	(168.20, 76.88) --
	cycle;
\definecolor{drawColor}{RGB}{86,51,94}

\path[draw=drawColor,line width= 0.4pt,line join=round,line cap=round] (168.73, 80.39) -- (171.80, 83.46);

\path[draw=drawColor,line width= 0.4pt,line join=round,line cap=round] (168.73, 83.46) -- (171.80, 80.39);

\path[draw=drawColor,line width= 0.4pt,line join=round,line cap=round] (168.09, 81.93) -- (172.43, 81.93);

\path[draw=drawColor,line width= 0.4pt,line join=round,line cap=round] (170.26, 79.76) -- (170.26, 84.10);

\path[draw=drawColor,line width= 0.4pt,line join=round,line cap=round] (143.10, 72.67) -- (146.16, 75.74);

\path[draw=drawColor,line width= 0.4pt,line join=round,line cap=round] (143.10, 75.74) -- (146.16, 72.67);

\path[draw=drawColor,line width= 0.4pt,line join=round,line cap=round] (142.46, 74.21) -- (146.80, 74.21);

\path[draw=drawColor,line width= 0.4pt,line join=round,line cap=round] (144.63, 72.04) -- (144.63, 76.38);

\path[fill=fillColor] (195.89, 87.56) --
	(197.96, 83.99) --
	(193.83, 83.99) --
	cycle;

\path[fill=fillColor] (119.00, 65.76) --
	(121.06, 62.18) --
	(116.93, 62.18) --
	cycle;
\definecolor{fillColor}{RGB}{37,122,164}

\path[fill=fillColor] ( 66.20, 54.90) --
	( 69.27, 54.90) --
	( 69.27, 57.97) --
	( 66.20, 57.97) --
	cycle;

\path[draw=drawColor,line width= 0.4pt,line join=round,line cap=round] (194.36, 88.38) -- (197.43, 91.44);

\path[draw=drawColor,line width= 0.4pt,line join=round,line cap=round] (194.36, 91.44) -- (197.43, 88.38);

\path[draw=drawColor,line width= 0.4pt,line join=round,line cap=round] (193.72, 89.91) -- (198.06, 89.91);

\path[draw=drawColor,line width= 0.4pt,line join=round,line cap=round] (195.89, 87.74) -- (195.89, 92.08);
\definecolor{fillColor}{RGB}{78,155,133}

\path[fill=fillColor] (144.63, 72.67) --
	(146.70, 69.09) --
	(142.57, 69.09) --
	cycle;
\definecolor{fillColor}{RGB}{37,122,164}

\path[fill=fillColor] ( 40.57, 44.19) --
	( 43.64, 44.19) --
	( 43.64, 47.26) --
	( 40.57, 47.26) --
	cycle;
\definecolor{drawColor}{RGB}{247,192,26}
\definecolor{fillColor}{RGB}{247,192,26}

\path[draw=drawColor,line width= 0.4pt,line join=round,line cap=round,fill=fillColor] ( 42.11, 39.14) circle (  1.53);
\definecolor{fillColor}{RGB}{37,122,164}

\path[fill=fillColor] (194.36,101.24) --
	(197.43,101.24) --
	(197.43,104.31) --
	(194.36,104.31) --
	cycle;
\definecolor{drawColor}{RGB}{86,51,94}

\path[draw=drawColor,line width= 0.4pt,line join=round,line cap=round] ( 66.20, 46.97) -- ( 69.27, 50.04);

\path[draw=drawColor,line width= 0.4pt,line join=round,line cap=round] ( 66.20, 50.04) -- ( 69.27, 46.97);

\path[draw=drawColor,line width= 0.4pt,line join=round,line cap=round] ( 65.57, 48.51) -- ( 69.91, 48.51);

\path[draw=drawColor,line width= 0.4pt,line join=round,line cap=round] ( 67.74, 46.34) -- ( 67.74, 50.68);

\path[draw=drawColor,line width= 0.4pt,line join=round,line cap=round] ( 91.83, 57.65) -- ( 94.90, 60.71);

\path[draw=drawColor,line width= 0.4pt,line join=round,line cap=round] ( 91.83, 60.71) -- ( 94.90, 57.65);

\path[draw=drawColor,line width= 0.4pt,line join=round,line cap=round] ( 91.20, 59.18) -- ( 95.54, 59.18);

\path[draw=drawColor,line width= 0.4pt,line join=round,line cap=round] ( 93.37, 57.01) -- ( 93.37, 61.35);
\definecolor{drawColor}{RGB}{247,192,26}
\definecolor{fillColor}{RGB}{247,192,26}

\path[draw=drawColor,line width= 0.4pt,line join=round,line cap=round,fill=fillColor] ( 93.37, 54.10) circle (  1.53);
\definecolor{fillColor}{RGB}{37,122,164}

\path[fill=fillColor] (168.73, 92.14) --
	(171.80, 92.14) --
	(171.80, 95.21) --
	(168.73, 95.21) --
	cycle;
\definecolor{drawColor}{RGB}{86,51,94}

\path[draw=drawColor,line width= 0.4pt,line join=round,line cap=round] ( 40.57, 41.87) -- ( 43.64, 44.94);

\path[draw=drawColor,line width= 0.4pt,line join=round,line cap=round] ( 40.57, 44.94) -- ( 43.64, 41.87);

\path[draw=drawColor,line width= 0.4pt,line join=round,line cap=round] ( 39.94, 43.40) -- ( 44.27, 43.40);

\path[draw=drawColor,line width= 0.4pt,line join=round,line cap=round] ( 42.11, 41.23) -- ( 42.11, 45.57);
\definecolor{drawColor}{RGB}{247,192,26}
\definecolor{fillColor}{RGB}{247,192,26}

\path[draw=drawColor,line width= 0.4pt,line join=round,line cap=round,fill=fillColor] (119.00, 62.03) circle (  1.53);
\definecolor{drawColor}{RGB}{86,51,94}

\path[draw=drawColor,line width= 0.4pt,line join=round,line cap=round] (117.47, 60.37) -- (120.53, 63.44);

\path[draw=drawColor,line width= 0.4pt,line join=round,line cap=round] (117.47, 63.44) -- (120.53, 60.37);

\path[draw=drawColor,line width= 0.4pt,line join=round,line cap=round] (116.83, 61.90) -- (121.17, 61.90);

\path[draw=drawColor,line width= 0.4pt,line join=round,line cap=round] (119.00, 59.73) -- (119.00, 64.07);
\definecolor{fillColor}{RGB}{37,122,164}

\path[fill=fillColor] (143.10, 82.76) --
	(146.16, 82.76) --
	(146.16, 85.83) --
	(143.10, 85.83) --
	cycle;

\path[fill=fillColor] (117.47, 73.94) --
	(120.53, 73.94) --
	(120.53, 77.00) --
	(117.47, 77.00) --
	cycle;
\definecolor{drawColor}{RGB}{247,192,26}
\definecolor{fillColor}{RGB}{247,192,26}

\path[draw=drawColor,line width= 0.4pt,line join=round,line cap=round,fill=fillColor] (195.89, 84.18) circle (  1.53);
\definecolor{fillColor}{RGB}{78,155,133}

\path[fill=fillColor] ( 93.37, 58.24) --
	( 95.43, 54.67) --
	( 91.30, 54.67) --
	cycle;
\end{scope}
\begin{scope}
\path[clip] (  0.00,  0.00) rectangle (209.58,115.63);
\definecolor{drawColor}{RGB}{0,0,0}

\path[draw=drawColor,line width= 0.5pt,line join=round] ( 34.42, 28.73) --
	( 34.42,113.63);

\path[draw=drawColor,line width= 0.5pt,line join=round] ( 35.55,111.66) --
	( 34.42,113.63) --
	( 33.28,111.66);
\end{scope}
\begin{scope}
\path[clip] (  0.00,  0.00) rectangle (209.58,115.63);
\definecolor{drawColor}{gray}{0.30}

\node[text=drawColor,anchor=base east,inner sep=0pt, outer sep=0pt, scale=  0.70] at ( 30.37, 42.24) {1e-01};

\node[text=drawColor,anchor=base east,inner sep=0pt, outer sep=0pt, scale=  0.70] at ( 30.37, 63.46) {1e+01};

\node[text=drawColor,anchor=base east,inner sep=0pt, outer sep=0pt, scale=  0.70] at ( 30.37, 84.69) {1e+03};

\node[text=drawColor,anchor=base east,inner sep=0pt, outer sep=0pt, scale=  0.70] at ( 30.37,105.92) {1e+05};
\end{scope}
\begin{scope}
\path[clip] (  0.00,  0.00) rectangle (209.58,115.63);
\definecolor{drawColor}{gray}{0.20}

\path[draw=drawColor,line width= 0.5pt,line join=round] ( 32.17, 44.65) --
	( 34.42, 44.65);

\path[draw=drawColor,line width= 0.5pt,line join=round] ( 32.17, 65.87) --
	( 34.42, 65.87);

\path[draw=drawColor,line width= 0.5pt,line join=round] ( 32.17, 87.10) --
	( 34.42, 87.10);

\path[draw=drawColor,line width= 0.5pt,line join=round] ( 32.17,108.33) --
	( 34.42,108.33);
\end{scope}
\begin{scope}
\path[clip] (  0.00,  0.00) rectangle (209.58,115.63);
\definecolor{drawColor}{RGB}{0,0,0}

\path[draw=drawColor,line width= 0.5pt,line join=round] ( 34.42, 28.73) --
	(203.58, 28.73);

\path[draw=drawColor,line width= 0.5pt,line join=round] (201.61, 27.59) --
	(203.58, 28.73) --
	(201.61, 29.87);
\end{scope}
\begin{scope}
\path[clip] (  0.00,  0.00) rectangle (209.58,115.63);
\definecolor{drawColor}{gray}{0.20}

\path[draw=drawColor,line width= 0.5pt,line join=round] ( 42.11, 26.48) --
	( 42.11, 28.73);

\path[draw=drawColor,line width= 0.5pt,line join=round] ( 93.37, 26.48) --
	( 93.37, 28.73);

\path[draw=drawColor,line width= 0.5pt,line join=round] (144.63, 26.48) --
	(144.63, 28.73);

\path[draw=drawColor,line width= 0.5pt,line join=round] (195.89, 26.48) --
	(195.89, 28.73);
\end{scope}
\begin{scope}
\path[clip] (  0.00,  0.00) rectangle (209.58,115.63);
\definecolor{drawColor}{gray}{0.30}

\node[text=drawColor,anchor=base,inner sep=0pt, outer sep=0pt, scale=  0.70] at ( 42.11, 19.86) {4};

\node[text=drawColor,anchor=base,inner sep=0pt, outer sep=0pt, scale=  0.70] at ( 93.37, 19.86) {8};

\node[text=drawColor,anchor=base,inner sep=0pt, outer sep=0pt, scale=  0.70] at (144.63, 19.86) {12};

\node[text=drawColor,anchor=base,inner sep=0pt, outer sep=0pt, scale=  0.70] at (195.89, 19.86) {16};
\end{scope}
\begin{scope}
\path[clip] (  0.00,  0.00) rectangle (209.58,115.63);
\definecolor{drawColor}{RGB}{0,0,0}

\node[text=drawColor,anchor=base,inner sep=0pt, outer sep=0pt, scale=  0.80] at (119.00, 10.74) {$n$};
\end{scope}
\begin{scope}
\path[clip] (  0.00,  0.00) rectangle (209.58,115.63);
\definecolor{drawColor}{RGB}{0,0,0}

\node[text=drawColor,rotate= 90.00,anchor=base,inner sep=0pt, outer sep=0pt, scale=  0.80] at (  7.51, 71.18) {time (ms)};
\end{scope}
\begin{scope}
\path[clip] (  0.00,  0.00) rectangle (209.58,115.63);

\path[] ( 34.42,  1.00) rectangle (210.27,  7.18);
\end{scope}
\begin{scope}
\path[clip] (  0.00,  0.00) rectangle (209.58,115.63);
\definecolor{drawColor}{RGB}{247,192,26}

\path[draw=drawColor,line width= 0.5pt,line join=round] ( 17.50,  4.09) -- ( 26.17,  4.09);
\end{scope}
\begin{scope}
\path[clip] (  0.00,  0.00) rectangle (209.58,115.63);
\definecolor{drawColor}{RGB}{247,192,26}
\definecolor{fillColor}{RGB}{247,192,26}

\path[draw=drawColor,line width= 0.4pt,line join=round,line cap=round,fill=fillColor] ( 21.84,  4.09) circle (  1.53);
\end{scope}
\begin{scope}
\path[clip] (  0.00,  0.00) rectangle (209.58,115.63);
\definecolor{drawColor}{RGB}{78,155,133}

\path[draw=drawColor,line width= 0.5pt,dash pattern=on 4pt off 4pt ,line join=round] ( 60.10,  4.09) -- ( 68.77,  4.09);
\end{scope}
\begin{scope}
\path[clip] (  0.00,  0.00) rectangle (209.58,115.63);
\definecolor{fillColor}{RGB}{78,155,133}

\path[fill=fillColor] ( 64.43,  6.48) --
	( 66.50,  2.90) --
	( 62.37,  2.90) --
	cycle;
\end{scope}
\begin{scope}
\path[clip] (  0.00,  0.00) rectangle (209.58,115.63);
\definecolor{drawColor}{RGB}{37,122,164}

\path[draw=drawColor,line width= 0.5pt,dash pattern=on 1pt off 3pt ,line join=round] (108.14,  4.09) -- (116.81,  4.09);
\end{scope}
\begin{scope}
\path[clip] (  0.00,  0.00) rectangle (209.58,115.63);
\definecolor{fillColor}{RGB}{37,122,164}

\path[fill=fillColor] (110.94,  2.56) --
	(114.01,  2.56) --
	(114.01,  5.62) --
	(110.94,  5.62) --
	cycle;
\end{scope}
\begin{scope}
\path[clip] (  0.00,  0.00) rectangle (209.58,115.63);
\definecolor{drawColor}{RGB}{86,51,94}

\path[draw=drawColor,line width= 0.5pt,dash pattern=on 7pt off 3pt ,line join=round] (158.32,  4.09) -- (166.99,  4.09);
\end{scope}
\begin{scope}
\path[clip] (  0.00,  0.00) rectangle (209.58,115.63);
\definecolor{drawColor}{RGB}{86,51,94}

\path[draw=drawColor,line width= 0.4pt,line join=round,line cap=round] (161.12,  2.56) -- (164.19,  5.62);

\path[draw=drawColor,line width= 0.4pt,line join=round,line cap=round] (161.12,  5.62) -- (164.19,  2.56);

\path[draw=drawColor,line width= 0.4pt,line join=round,line cap=round] (160.49,  4.09) -- (164.82,  4.09);

\path[draw=drawColor,line width= 0.4pt,line join=round,line cap=round] (162.65,  1.92) -- (162.65,  6.26);
\end{scope}
\begin{scope}
\path[clip] (  0.00,  0.00) rectangle (209.58,115.63);
\definecolor{drawColor}{RGB}{0,0,0}

\node[text=drawColor,anchor=base west,inner sep=0pt, outer sep=0pt, scale=  0.70] at ( 27.26,  1.68) {DECOR};
\end{scope}
\begin{scope}
\path[clip] (  0.00,  0.00) rectangle (209.58,115.63);
\definecolor{drawColor}{RGB}{0,0,0}

\node[text=drawColor,anchor=base west,inner sep=0pt, outer sep=0pt, scale=  0.70] at ( 69.85,  1.68) {DECOR+};
\end{scope}
\begin{scope}
\path[clip] (  0.00,  0.00) rectangle (209.58,115.63);
\definecolor{drawColor}{RGB}{0,0,0}

\node[text=drawColor,anchor=base west,inner sep=0pt, outer sep=0pt, scale=  0.70] at (117.90,  1.68) {A-DECOR};
\end{scope}
\begin{scope}
\path[clip] (  0.00,  0.00) rectangle (209.58,115.63);
\definecolor{drawColor}{RGB}{0,0,0}

\node[text=drawColor,anchor=base west,inner sep=0pt, outer sep=0pt, scale=  0.70] at (168.07,  1.68) {Brute-Force};
\end{scope}
\end{tikzpicture}

%% file: files/decor_revised_plot_k=nsub1_phases_decorplus.tex
\begin{tikzpicture}[x=1pt,y=1pt]
\definecolor{fillColor}{RGB}{255,255,255}
\path[use as bounding box,fill=fillColor,fill opacity=0.00] (0,0) rectangle (209.58,115.63);
\begin{scope}
\path[clip] (  0.00,  0.00) rectangle (209.58,115.63);
\definecolor{drawColor}{RGB}{255,255,255}
\definecolor{fillColor}{RGB}{255,255,255}

\path[draw=drawColor,line width= 0.5pt,line join=round,line cap=round,fill=fillColor] (  0.00,  0.00) rectangle (209.58,115.63);
\end{scope}
\begin{scope}
\path[clip] ( 34.42, 28.73) rectangle (203.58,113.63);
\definecolor{fillColor}{RGB}{255,255,255}

\path[fill=fillColor] ( 34.42, 28.73) rectangle (203.58,113.63);
\definecolor{fillColor}{RGB}{78,155,133}

\path[fill=fillColor] ( 64.79, 28.73) rectangle ( 69.78, 47.31);

\path[fill=fillColor] ( 42.11, 28.73) rectangle ( 47.10, 39.24);

\path[fill=fillColor] (155.52, 28.73) rectangle (160.51, 77.04);

\path[fill=fillColor] (178.20, 28.73) rectangle (183.19, 84.13);

\path[fill=fillColor] (110.15, 28.73) rectangle (115.14, 62.06);

\path[fill=fillColor] (132.84, 28.73) rectangle (137.83, 69.01);

\path[fill=fillColor] ( 87.47, 28.73) rectangle ( 92.46, 54.24);
\definecolor{fillColor}{RGB}{247,192,26}

\path[fill=fillColor] ( 71.14, 28.73) rectangle ( 76.13, 43.39);

\path[fill=fillColor] ( 48.46, 28.73) rectangle ( 53.45, 37.46);

\path[fill=fillColor] (161.87, 28.73) rectangle (166.86, 70.70);

\path[fill=fillColor] (184.55, 28.73) rectangle (189.54, 77.84);

\path[fill=fillColor] (116.50, 28.73) rectangle (121.49, 56.95);

\path[fill=fillColor] (139.19, 28.73) rectangle (144.18, 63.75);

\path[fill=fillColor] ( 93.82, 28.73) rectangle ( 98.81, 50.24);
\definecolor{fillColor}{RGB}{37,122,164}

\path[fill=fillColor] ( 77.49, 28.73) rectangle ( 82.48, 48.95);

\path[fill=fillColor] ( 54.81, 28.73) rectangle ( 59.80, 41.63);

\path[fill=fillColor] (168.22, 28.73) rectangle (173.21, 78.07);

\path[fill=fillColor] (190.90, 28.73) rectangle (195.89, 85.18);

\path[fill=fillColor] (122.86, 28.73) rectangle (127.85, 63.38);

\path[fill=fillColor] (145.54, 28.73) rectangle (150.53, 70.28);

\path[fill=fillColor] (100.17, 28.73) rectangle (105.16, 55.86);
\end{scope}
\begin{scope}
\path[clip] (  0.00,  0.00) rectangle (209.58,115.63);
\definecolor{drawColor}{RGB}{0,0,0}

\path[draw=drawColor,line width= 0.5pt,line join=round] ( 34.42, 28.73) --
	( 34.42,113.63);

\path[draw=drawColor,line width= 0.5pt,line join=round] ( 35.55,111.66) --
	( 34.42,113.63) --
	( 33.28,111.66);
\end{scope}
\begin{scope}
\path[clip] (  0.00,  0.00) rectangle (209.58,115.63);
\definecolor{drawColor}{gray}{0.30}

\node[text=drawColor,anchor=base east,inner sep=0pt, outer sep=0pt, scale=  0.70] at ( 30.37, 42.24) {1e-01};

\node[text=drawColor,anchor=base east,inner sep=0pt, outer sep=0pt, scale=  0.70] at ( 30.37, 63.46) {1e+01};

\node[text=drawColor,anchor=base east,inner sep=0pt, outer sep=0pt, scale=  0.70] at ( 30.37, 84.69) {1e+03};

\node[text=drawColor,anchor=base east,inner sep=0pt, outer sep=0pt, scale=  0.70] at ( 30.37,105.92) {1e+05};
\end{scope}
\begin{scope}
\path[clip] (  0.00,  0.00) rectangle (209.58,115.63);
\definecolor{drawColor}{gray}{0.20}

\path[draw=drawColor,line width= 0.5pt,line join=round] ( 32.17, 44.65) --
	( 34.42, 44.65);

\path[draw=drawColor,line width= 0.5pt,line join=round] ( 32.17, 65.87) --
	( 34.42, 65.87);

\path[draw=drawColor,line width= 0.5pt,line join=round] ( 32.17, 87.10) --
	( 34.42, 87.10);

\path[draw=drawColor,line width= 0.5pt,line join=round] ( 32.17,108.33) --
	( 34.42,108.33);
\end{scope}
\begin{scope}
\path[clip] (  0.00,  0.00) rectangle (209.58,115.63);
\definecolor{drawColor}{RGB}{0,0,0}

\path[draw=drawColor,line width= 0.5pt,line join=round] ( 34.42, 28.73) --
	(203.58, 28.73);

\path[draw=drawColor,line width= 0.5pt,line join=round] (201.61, 27.59) --
	(203.58, 28.73) --
	(201.61, 29.87);
\end{scope}
\begin{scope}
\path[clip] (  0.00,  0.00) rectangle (209.58,115.63);
\definecolor{drawColor}{gray}{0.20}

\path[draw=drawColor,line width= 0.5pt,line join=round] ( 50.95, 26.48) --
	( 50.95, 28.73);

\path[draw=drawColor,line width= 0.5pt,line join=round] ( 73.63, 26.48) --
	( 73.63, 28.73);

\path[draw=drawColor,line width= 0.5pt,line join=round] ( 96.32, 26.48) --
	( 96.32, 28.73);

\path[draw=drawColor,line width= 0.5pt,line join=round] (119.00, 26.48) --
	(119.00, 28.73);

\path[draw=drawColor,line width= 0.5pt,line join=round] (141.68, 26.48) --
	(141.68, 28.73);

\path[draw=drawColor,line width= 0.5pt,line join=round] (164.36, 26.48) --
	(164.36, 28.73);

\path[draw=drawColor,line width= 0.5pt,line join=round] (187.05, 26.48) --
	(187.05, 28.73);
\end{scope}
\begin{scope}
\path[clip] (  0.00,  0.00) rectangle (209.58,115.63);
\definecolor{drawColor}{gray}{0.30}

\node[text=drawColor,anchor=base,inner sep=0pt, outer sep=0pt, scale=  0.70] at ( 50.95, 19.86) {4};

\node[text=drawColor,anchor=base,inner sep=0pt, outer sep=0pt, scale=  0.70] at ( 73.63, 19.86) {6};

\node[text=drawColor,anchor=base,inner sep=0pt, outer sep=0pt, scale=  0.70] at ( 96.32, 19.86) {8};

\node[text=drawColor,anchor=base,inner sep=0pt, outer sep=0pt, scale=  0.70] at (119.00, 19.86) {10};

\node[text=drawColor,anchor=base,inner sep=0pt, outer sep=0pt, scale=  0.70] at (141.68, 19.86) {12};

\node[text=drawColor,anchor=base,inner sep=0pt, outer sep=0pt, scale=  0.70] at (164.36, 19.86) {14};

\node[text=drawColor,anchor=base,inner sep=0pt, outer sep=0pt, scale=  0.70] at (187.05, 19.86) {16};
\end{scope}
\begin{scope}
\path[clip] (  0.00,  0.00) rectangle (209.58,115.63);
\definecolor{drawColor}{RGB}{0,0,0}

\node[text=drawColor,anchor=base,inner sep=0pt, outer sep=0pt, scale=  0.80] at (119.00, 10.74) {$n$};
\end{scope}
\begin{scope}
\path[clip] (  0.00,  0.00) rectangle (209.58,115.63);
\definecolor{drawColor}{RGB}{0,0,0}

\node[text=drawColor,rotate= 90.00,anchor=base,inner sep=0pt, outer sep=0pt, scale=  0.80] at (  7.51, 71.18) {time (ms)};
\end{scope}
\begin{scope}
\path[clip] (  0.00,  0.00) rectangle (209.58,115.63);

\path[] ( 51.04,  1.00) rectangle (186.96,  7.18);
\end{scope}
\begin{scope}
\path[clip] (  0.00,  0.00) rectangle (209.58,115.63);
\definecolor{fillColor}{RGB}{78,155,133}

\path[fill=fillColor] ( 51.75,  1.71) rectangle ( 61.17,  6.47);
\end{scope}
\begin{scope}
\path[clip] (  0.00,  0.00) rectangle (209.58,115.63);
\definecolor{fillColor}{RGB}{78,155,133}

\path[fill=fillColor] ( 51.75,  1.71) rectangle ( 61.17,  6.47);
\end{scope}
\begin{scope}
\path[clip] (  0.00,  0.00) rectangle (209.58,115.63);
\definecolor{fillColor}{RGB}{78,155,133}

\path[fill=fillColor] ( 51.75,  1.71) rectangle ( 61.17,  6.47);
\end{scope}
\begin{scope}
\path[clip] (  0.00,  0.00) rectangle (209.58,115.63);
\definecolor{fillColor}{RGB}{247,192,26}

\path[fill=fillColor] (102.84,  1.71) rectangle (112.26,  6.47);
\end{scope}
\begin{scope}
\path[clip] (  0.00,  0.00) rectangle (209.58,115.63);
\definecolor{fillColor}{RGB}{247,192,26}

\path[fill=fillColor] (102.84,  1.71) rectangle (112.26,  6.47);
\end{scope}
\begin{scope}
\path[clip] (  0.00,  0.00) rectangle (209.58,115.63);
\definecolor{fillColor}{RGB}{247,192,26}

\path[fill=fillColor] (102.84,  1.71) rectangle (112.26,  6.47);
\end{scope}
\begin{scope}
\path[clip] (  0.00,  0.00) rectangle (209.58,115.63);
\definecolor{fillColor}{RGB}{37,122,164}

\path[fill=fillColor] (154.70,  1.71) rectangle (164.11,  6.47);
\end{scope}
\begin{scope}
\path[clip] (  0.00,  0.00) rectangle (209.58,115.63);
\definecolor{fillColor}{RGB}{37,122,164}

\path[fill=fillColor] (154.70,  1.71) rectangle (164.11,  6.47);
\end{scope}
\begin{scope}
\path[clip] (  0.00,  0.00) rectangle (209.58,115.63);
\definecolor{fillColor}{RGB}{37,122,164}

\path[fill=fillColor] (154.70,  1.71) rectangle (164.11,  6.47);
\end{scope}
\begin{scope}
\path[clip] (  0.00,  0.00) rectangle (209.58,115.63);
\definecolor{drawColor}{RGB}{0,0,0}

\node[text=drawColor,anchor=base west,inner sep=0pt, outer sep=0pt, scale=  0.70] at ( 61.88,  1.68) {Candidates};
\end{scope}
\begin{scope}
\path[clip] (  0.00,  0.00) rectangle (209.58,115.63);
\definecolor{drawColor}{RGB}{0,0,0}

\node[text=drawColor,anchor=base west,inner sep=0pt, outer sep=0pt, scale=  0.70] at (112.97,  1.68) {Verification};
\end{scope}
\begin{scope}
\path[clip] (  0.00,  0.00) rectangle (209.58,115.63);
\definecolor{drawColor}{RGB}{0,0,0}

\node[text=drawColor,anchor=base west,inner sep=0pt, outer sep=0pt, scale=  0.70] at (164.82,  1.68) {Total};
\end{scope}
\end{tikzpicture}

%% file: files/decor_revised_plot_k=n_times.tex
\begin{tikzpicture}[x=1pt,y=1pt]
\definecolor{fillColor}{RGB}{255,255,255}
\path[use as bounding box,fill=fillColor,fill opacity=0.00] (0,0) rectangle (209.58,115.63);
\begin{scope}
\path[clip] (  0.00,  0.00) rectangle (209.58,115.63);
\definecolor{drawColor}{RGB}{255,255,255}
\definecolor{fillColor}{RGB}{255,255,255}

\path[draw=drawColor,line width= 0.5pt,line join=round,line cap=round,fill=fillColor] (  0.00,  0.00) rectangle (209.58,115.63);
\end{scope}
\begin{scope}
\path[clip] ( 34.42, 28.73) rectangle (203.58,113.63);
\definecolor{fillColor}{RGB}{255,255,255}

\path[fill=fillColor] ( 34.42, 28.73) rectangle (203.58,113.63);
\definecolor{drawColor}{RGB}{247,192,26}

\path[draw=drawColor,line width= 0.5pt,line join=round] ( 42.11, 41.79) --
	( 64.07, 47.55) --
	( 86.04, 54.59) --
	(108.01, 60.70) --
	(129.98, 67.59) --
	(151.95, 73.99) --
	(173.92, 81.13) --
	(195.89, 87.41);
\definecolor{drawColor}{RGB}{78,155,133}

\path[draw=drawColor,line width= 0.5pt,dash pattern=on 4pt off 4pt ,line join=round] ( 42.11, 44.56) --
	( 64.07, 49.58) --
	( 86.04, 55.89) --
	(108.01, 61.98) --
	(129.98, 68.71) --
	(151.95, 75.04) --
	(173.92, 82.13) --
	(195.89, 88.30);
\definecolor{drawColor}{RGB}{37,122,164}

\path[draw=drawColor,line width= 0.5pt,dash pattern=on 1pt off 3pt ,line join=round] ( 42.11, 41.24) --
	( 64.07, 53.59) --
	( 86.04, 63.06) --
	(108.01, 71.65) --
	(129.98, 79.88) --
	(151.95, 87.67) --
	(173.92, 95.61) --
	(195.89,104.09);
\definecolor{drawColor}{RGB}{86,51,94}

\path[draw=drawColor,line width= 0.5pt,dash pattern=on 7pt off 3pt ,line join=round] ( 42.11, 41.19) --
	( 64.07, 45.33) --
	( 86.04, 50.70) --
	(108.01, 56.74) --
	(129.98, 63.04) --
	(151.95, 69.02) --
	(173.92, 75.06) --
	(195.89, 81.42);
\definecolor{fillColor}{RGB}{78,155,133}

\path[fill=fillColor] ( 64.07, 51.97) --
	( 66.14, 48.39) --
	( 62.01, 48.39) --
	cycle;
\definecolor{drawColor}{RGB}{247,192,26}
\definecolor{fillColor}{RGB}{247,192,26}

\path[draw=drawColor,line width= 0.4pt,line join=round,line cap=round,fill=fillColor] (129.98, 67.59) circle (  1.53);
\definecolor{fillColor}{RGB}{37,122,164}

\path[fill=fillColor] ( 40.57, 39.71) --
	( 43.64, 39.71) --
	( 43.64, 42.77) --
	( 40.57, 42.77) --
	cycle;

\path[fill=fillColor] ( 62.54, 52.06) --
	( 65.61, 52.06) --
	( 65.61, 55.12) --
	( 62.54, 55.12) --
	cycle;
\definecolor{drawColor}{RGB}{86,51,94}

\path[draw=drawColor,line width= 0.4pt,line join=round,line cap=round] ( 84.51, 49.16) -- ( 87.58, 52.23);

\path[draw=drawColor,line width= 0.4pt,line join=round,line cap=round] ( 84.51, 52.23) -- ( 87.58, 49.16);

\path[draw=drawColor,line width= 0.4pt,line join=round,line cap=round] ( 83.88, 50.70) -- ( 88.21, 50.70);

\path[draw=drawColor,line width= 0.4pt,line join=round,line cap=round] ( 86.04, 48.53) -- ( 86.04, 52.87);
\definecolor{fillColor}{RGB}{78,155,133}

\path[fill=fillColor] (108.01, 64.36) --
	(110.08, 60.79) --
	(105.95, 60.79) --
	cycle;
\definecolor{drawColor}{RGB}{247,192,26}
\definecolor{fillColor}{RGB}{247,192,26}

\path[draw=drawColor,line width= 0.4pt,line join=round,line cap=round,fill=fillColor] (195.89, 87.41) circle (  1.53);
\definecolor{fillColor}{RGB}{78,155,133}

\path[fill=fillColor] ( 42.11, 46.94) --
	( 44.17, 43.36) --
	( 40.04, 43.36) --
	cycle;
\definecolor{drawColor}{RGB}{86,51,94}

\path[draw=drawColor,line width= 0.4pt,line join=round,line cap=round] (150.42, 67.48) -- (153.49, 70.55);

\path[draw=drawColor,line width= 0.4pt,line join=round,line cap=round] (150.42, 70.55) -- (153.49, 67.48);

\path[draw=drawColor,line width= 0.4pt,line join=round,line cap=round] (149.79, 69.02) -- (154.12, 69.02);

\path[draw=drawColor,line width= 0.4pt,line join=round,line cap=round] (151.95, 66.85) -- (151.95, 71.19);

\path[draw=drawColor,line width= 0.4pt,line join=round,line cap=round] (128.45, 61.51) -- (131.52, 64.57);

\path[draw=drawColor,line width= 0.4pt,line join=round,line cap=round] (128.45, 64.57) -- (131.52, 61.51);

\path[draw=drawColor,line width= 0.4pt,line join=round,line cap=round] (127.82, 63.04) -- (132.15, 63.04);

\path[draw=drawColor,line width= 0.4pt,line join=round,line cap=round] (129.98, 60.87) -- (129.98, 65.21);
\definecolor{drawColor}{RGB}{247,192,26}
\definecolor{fillColor}{RGB}{247,192,26}

\path[draw=drawColor,line width= 0.4pt,line join=round,line cap=round,fill=fillColor] (151.95, 73.99) circle (  1.53);
\definecolor{fillColor}{RGB}{37,122,164}

\path[fill=fillColor] (106.48, 70.12) --
	(109.55, 70.12) --
	(109.55, 73.19) --
	(106.48, 73.19) --
	cycle;

\path[fill=fillColor] ( 84.51, 61.53) --
	( 87.58, 61.53) --
	( 87.58, 64.60) --
	( 84.51, 64.60) --
	cycle;
\definecolor{drawColor}{RGB}{86,51,94}

\path[draw=drawColor,line width= 0.4pt,line join=round,line cap=round] (106.48, 55.21) -- (109.55, 58.28);

\path[draw=drawColor,line width= 0.4pt,line join=round,line cap=round] (106.48, 58.28) -- (109.55, 55.21);

\path[draw=drawColor,line width= 0.4pt,line join=round,line cap=round] (105.85, 56.74) -- (110.18, 56.74);

\path[draw=drawColor,line width= 0.4pt,line join=round,line cap=round] (108.01, 54.58) -- (108.01, 58.91);
\definecolor{drawColor}{RGB}{247,192,26}
\definecolor{fillColor}{RGB}{247,192,26}

\path[draw=drawColor,line width= 0.4pt,line join=round,line cap=round,fill=fillColor] (108.01, 60.70) circle (  1.53);
\definecolor{fillColor}{RGB}{78,155,133}

\path[fill=fillColor] (173.92, 84.52) --
	(175.99, 80.94) --
	(171.86, 80.94) --
	cycle;

\path[fill=fillColor] ( 86.04, 58.27) --
	( 88.11, 54.70) --
	( 83.98, 54.70) --
	cycle;
\definecolor{fillColor}{RGB}{37,122,164}

\path[fill=fillColor] (194.36,102.55) --
	(197.43,102.55) --
	(197.43,105.62) --
	(194.36,105.62) --
	cycle;
\definecolor{fillColor}{RGB}{247,192,26}

\path[draw=drawColor,line width= 0.4pt,line join=round,line cap=round,fill=fillColor] ( 86.04, 54.59) circle (  1.53);
\definecolor{drawColor}{RGB}{86,51,94}

\path[draw=drawColor,line width= 0.4pt,line join=round,line cap=round] (172.39, 73.53) -- (175.46, 76.59);

\path[draw=drawColor,line width= 0.4pt,line join=round,line cap=round] (172.39, 76.59) -- (175.46, 73.53);

\path[draw=drawColor,line width= 0.4pt,line join=round,line cap=round] (171.75, 75.06) -- (176.09, 75.06);

\path[draw=drawColor,line width= 0.4pt,line join=round,line cap=round] (173.92, 72.89) -- (173.92, 77.23);
\definecolor{fillColor}{RGB}{37,122,164}

\path[fill=fillColor] (128.45, 78.34) --
	(131.52, 78.34) --
	(131.52, 81.41) --
	(128.45, 81.41) --
	cycle;
\definecolor{fillColor}{RGB}{78,155,133}

\path[fill=fillColor] (129.98, 71.10) --
	(132.05, 67.52) --
	(127.92, 67.52) --
	cycle;

\path[fill=fillColor] (151.95, 77.43) --
	(154.02, 73.85) --
	(149.89, 73.85) --
	cycle;

\path[draw=drawColor,line width= 0.4pt,line join=round,line cap=round] (194.36, 79.88) -- (197.43, 82.95);

\path[draw=drawColor,line width= 0.4pt,line join=round,line cap=round] (194.36, 82.95) -- (197.43, 79.88);

\path[draw=drawColor,line width= 0.4pt,line join=round,line cap=round] (193.72, 81.42) -- (198.06, 81.42);

\path[draw=drawColor,line width= 0.4pt,line join=round,line cap=round] (195.89, 79.25) -- (195.89, 83.59);
\definecolor{drawColor}{RGB}{247,192,26}
\definecolor{fillColor}{RGB}{247,192,26}

\path[draw=drawColor,line width= 0.4pt,line join=round,line cap=round,fill=fillColor] (173.92, 81.13) circle (  1.53);

\path[draw=drawColor,line width= 0.4pt,line join=round,line cap=round,fill=fillColor] ( 64.07, 47.55) circle (  1.53);
\definecolor{fillColor}{RGB}{37,122,164}

\path[fill=fillColor] (150.42, 86.13) --
	(153.49, 86.13) --
	(153.49, 89.20) --
	(150.42, 89.20) --
	cycle;
\definecolor{drawColor}{RGB}{86,51,94}

\path[draw=drawColor,line width= 0.4pt,line join=round,line cap=round] ( 62.54, 43.80) -- ( 65.61, 46.86);

\path[draw=drawColor,line width= 0.4pt,line join=round,line cap=round] ( 62.54, 46.86) -- ( 65.61, 43.80);

\path[draw=drawColor,line width= 0.4pt,line join=round,line cap=round] ( 61.91, 45.33) -- ( 66.24, 45.33);

\path[draw=drawColor,line width= 0.4pt,line join=round,line cap=round] ( 64.07, 43.16) -- ( 64.07, 47.50);
\definecolor{fillColor}{RGB}{78,155,133}

\path[fill=fillColor] (195.89, 90.69) --
	(197.96, 87.11) --
	(193.83, 87.11) --
	cycle;

\path[draw=drawColor,line width= 0.4pt,line join=round,line cap=round] ( 40.57, 39.66) -- ( 43.64, 42.73);

\path[draw=drawColor,line width= 0.4pt,line join=round,line cap=round] ( 40.57, 42.73) -- ( 43.64, 39.66);

\path[draw=drawColor,line width= 0.4pt,line join=round,line cap=round] ( 39.94, 41.19) -- ( 44.27, 41.19);

\path[draw=drawColor,line width= 0.4pt,line join=round,line cap=round] ( 42.11, 39.02) -- ( 42.11, 43.36);
\definecolor{fillColor}{RGB}{37,122,164}

\path[fill=fillColor] (172.39, 94.08) --
	(175.46, 94.08) --
	(175.46, 97.15) --
	(172.39, 97.15) --
	cycle;
\definecolor{drawColor}{RGB}{247,192,26}
\definecolor{fillColor}{RGB}{247,192,26}

\path[draw=drawColor,line width= 0.4pt,line join=round,line cap=round,fill=fillColor] ( 42.11, 41.79) circle (  1.53);
\end{scope}
\begin{scope}
\path[clip] (  0.00,  0.00) rectangle (209.58,115.63);
\definecolor{drawColor}{RGB}{0,0,0}

\path[draw=drawColor,line width= 0.5pt,line join=round] ( 34.42, 28.73) --
	( 34.42,113.63);

\path[draw=drawColor,line width= 0.5pt,line join=round] ( 35.55,111.66) --
	( 34.42,113.63) --
	( 33.28,111.66);
\end{scope}
\begin{scope}
\path[clip] (  0.00,  0.00) rectangle (209.58,115.63);
\definecolor{drawColor}{gray}{0.30}

\node[text=drawColor,anchor=base east,inner sep=0pt, outer sep=0pt, scale=  0.70] at ( 30.37, 31.03) {1e-03};

\node[text=drawColor,anchor=base east,inner sep=0pt, outer sep=0pt, scale=  0.70] at ( 30.37, 49.90) {1e-01};

\node[text=drawColor,anchor=base east,inner sep=0pt, outer sep=0pt, scale=  0.70] at ( 30.37, 68.77) {1e+01};

\node[text=drawColor,anchor=base east,inner sep=0pt, outer sep=0pt, scale=  0.70] at ( 30.37, 87.64) {1e+03};

\node[text=drawColor,anchor=base east,inner sep=0pt, outer sep=0pt, scale=  0.70] at ( 30.37,106.50) {1e+05};
\end{scope}
\begin{scope}
\path[clip] (  0.00,  0.00) rectangle (209.58,115.63);
\definecolor{drawColor}{gray}{0.20}

\path[draw=drawColor,line width= 0.5pt,line join=round] ( 32.17, 33.45) --
	( 34.42, 33.45);

\path[draw=drawColor,line width= 0.5pt,line join=round] ( 32.17, 52.31) --
	( 34.42, 52.31);

\path[draw=drawColor,line width= 0.5pt,line join=round] ( 32.17, 71.18) --
	( 34.42, 71.18);

\path[draw=drawColor,line width= 0.5pt,line join=round] ( 32.17, 90.05) --
	( 34.42, 90.05);

\path[draw=drawColor,line width= 0.5pt,line join=round] ( 32.17,108.92) --
	( 34.42,108.92);
\end{scope}
\begin{scope}
\path[clip] (  0.00,  0.00) rectangle (209.58,115.63);
\definecolor{drawColor}{RGB}{0,0,0}

\path[draw=drawColor,line width= 0.5pt,line join=round] ( 34.42, 28.73) --
	(203.58, 28.73);

\path[draw=drawColor,line width= 0.5pt,line join=round] (201.61, 27.59) --
	(203.58, 28.73) --
	(201.61, 29.87);
\end{scope}
\begin{scope}
\path[clip] (  0.00,  0.00) rectangle (209.58,115.63);
\definecolor{drawColor}{gray}{0.20}

\path[draw=drawColor,line width= 0.5pt,line join=round] ( 64.07, 26.48) --
	( 64.07, 28.73);

\path[draw=drawColor,line width= 0.5pt,line join=round] (108.01, 26.48) --
	(108.01, 28.73);

\path[draw=drawColor,line width= 0.5pt,line join=round] (151.95, 26.48) --
	(151.95, 28.73);

\path[draw=drawColor,line width= 0.5pt,line join=round] (195.89, 26.48) --
	(195.89, 28.73);
\end{scope}
\begin{scope}
\path[clip] (  0.00,  0.00) rectangle (209.58,115.63);
\definecolor{drawColor}{gray}{0.30}

\node[text=drawColor,anchor=base,inner sep=0pt, outer sep=0pt, scale=  0.70] at ( 64.07, 19.86) {4};

\node[text=drawColor,anchor=base,inner sep=0pt, outer sep=0pt, scale=  0.70] at (108.01, 19.86) {8};

\node[text=drawColor,anchor=base,inner sep=0pt, outer sep=0pt, scale=  0.70] at (151.95, 19.86) {12};

\node[text=drawColor,anchor=base,inner sep=0pt, outer sep=0pt, scale=  0.70] at (195.89, 19.86) {16};
\end{scope}
\begin{scope}
\path[clip] (  0.00,  0.00) rectangle (209.58,115.63);
\definecolor{drawColor}{RGB}{0,0,0}

\node[text=drawColor,anchor=base,inner sep=0pt, outer sep=0pt, scale=  0.80] at (119.00, 10.74) {$n$};
\end{scope}
\begin{scope}
\path[clip] (  0.00,  0.00) rectangle (209.58,115.63);
\definecolor{drawColor}{RGB}{0,0,0}

\node[text=drawColor,rotate= 90.00,anchor=base,inner sep=0pt, outer sep=0pt, scale=  0.80] at (  7.51, 71.18) {time (ms)};
\end{scope}
\begin{scope}
\path[clip] (  0.00,  0.00) rectangle (209.58,115.63);

\path[] ( 34.42,  1.00) rectangle (210.27,  7.18);
\end{scope}
\begin{scope}
\path[clip] (  0.00,  0.00) rectangle (209.58,115.63);
\definecolor{drawColor}{RGB}{247,192,26}

\path[draw=drawColor,line width= 0.5pt,line join=round] ( 17.50,  4.09) -- ( 26.17,  4.09);
\end{scope}
\begin{scope}
\path[clip] (  0.00,  0.00) rectangle (209.58,115.63);
\definecolor{drawColor}{RGB}{247,192,26}
\definecolor{fillColor}{RGB}{247,192,26}

\path[draw=drawColor,line width= 0.4pt,line join=round,line cap=round,fill=fillColor] ( 21.84,  4.09) circle (  1.53);
\end{scope}
\begin{scope}
\path[clip] (  0.00,  0.00) rectangle (209.58,115.63);
\definecolor{drawColor}{RGB}{78,155,133}

\path[draw=drawColor,line width= 0.5pt,dash pattern=on 4pt off 4pt ,line join=round] ( 60.10,  4.09) -- ( 68.77,  4.09);
\end{scope}
\begin{scope}
\path[clip] (  0.00,  0.00) rectangle (209.58,115.63);
\definecolor{fillColor}{RGB}{78,155,133}

\path[fill=fillColor] ( 64.43,  6.48) --
	( 66.50,  2.90) --
	( 62.37,  2.90) --
	cycle;
\end{scope}
\begin{scope}
\path[clip] (  0.00,  0.00) rectangle (209.58,115.63);
\definecolor{drawColor}{RGB}{37,122,164}

\path[draw=drawColor,line width= 0.5pt,dash pattern=on 1pt off 3pt ,line join=round] (108.14,  4.09) -- (116.81,  4.09);
\end{scope}
\begin{scope}
\path[clip] (  0.00,  0.00) rectangle (209.58,115.63);
\definecolor{fillColor}{RGB}{37,122,164}

\path[fill=fillColor] (110.94,  2.56) --
	(114.01,  2.56) --
	(114.01,  5.62) --
	(110.94,  5.62) --
	cycle;
\end{scope}
\begin{scope}
\path[clip] (  0.00,  0.00) rectangle (209.58,115.63);
\definecolor{drawColor}{RGB}{86,51,94}

\path[draw=drawColor,line width= 0.5pt,dash pattern=on 7pt off 3pt ,line join=round] (158.32,  4.09) -- (166.99,  4.09);
\end{scope}
\begin{scope}
\path[clip] (  0.00,  0.00) rectangle (209.58,115.63);
\definecolor{drawColor}{RGB}{86,51,94}

\path[draw=drawColor,line width= 0.4pt,line join=round,line cap=round] (161.12,  2.56) -- (164.19,  5.62);

\path[draw=drawColor,line width= 0.4pt,line join=round,line cap=round] (161.12,  5.62) -- (164.19,  2.56);

\path[draw=drawColor,line width= 0.4pt,line join=round,line cap=round] (160.49,  4.09) -- (164.82,  4.09);

\path[draw=drawColor,line width= 0.4pt,line join=round,line cap=round] (162.65,  1.92) -- (162.65,  6.26);
\end{scope}
\begin{scope}
\path[clip] (  0.00,  0.00) rectangle (209.58,115.63);
\definecolor{drawColor}{RGB}{0,0,0}

\node[text=drawColor,anchor=base west,inner sep=0pt, outer sep=0pt, scale=  0.70] at ( 27.26,  1.68) {DECOR};
\end{scope}
\begin{scope}
\path[clip] (  0.00,  0.00) rectangle (209.58,115.63);
\definecolor{drawColor}{RGB}{0,0,0}

\node[text=drawColor,anchor=base west,inner sep=0pt, outer sep=0pt, scale=  0.70] at ( 69.85,  1.68) {DECOR+};
\end{scope}
\begin{scope}
\path[clip] (  0.00,  0.00) rectangle (209.58,115.63);
\definecolor{drawColor}{RGB}{0,0,0}

\node[text=drawColor,anchor=base west,inner sep=0pt, outer sep=0pt, scale=  0.70] at (117.90,  1.68) {A-DECOR};
\end{scope}
\begin{scope}
\path[clip] (  0.00,  0.00) rectangle (209.58,115.63);
\definecolor{drawColor}{RGB}{0,0,0}

\node[text=drawColor,anchor=base west,inner sep=0pt, outer sep=0pt, scale=  0.70] at (168.07,  1.68) {Brute-Force};
\end{scope}
\end{tikzpicture}

%% file: files/decor_revised_plot_k=n_phases_decorplus.tex
\begin{tikzpicture}[x=1pt,y=1pt]
\definecolor{fillColor}{RGB}{255,255,255}
\path[use as bounding box,fill=fillColor,fill opacity=0.00] (0,0) rectangle (209.58,115.63);
\begin{scope}
\path[clip] (  0.00,  0.00) rectangle (209.58,115.63);
\definecolor{drawColor}{RGB}{255,255,255}
\definecolor{fillColor}{RGB}{255,255,255}

\path[draw=drawColor,line width= 0.5pt,line join=round,line cap=round,fill=fillColor] (  0.00,  0.00) rectangle (209.58,115.63);
\end{scope}
\begin{scope}
\path[clip] ( 34.42, 28.73) rectangle (203.58,113.63);
\definecolor{fillColor}{RGB}{255,255,255}

\path[fill=fillColor] ( 34.42, 28.73) rectangle (203.58,113.63);
\definecolor{fillColor}{RGB}{78,155,133}

\path[fill=fillColor] ( 61.87, 28.73) rectangle ( 66.22, 47.88);

\path[fill=fillColor] (101.41, 28.73) rectangle (105.76, 60.67);

\path[fill=fillColor] ( 42.11, 28.73) rectangle ( 46.45, 42.20);

\path[fill=fillColor] (160.71, 28.73) rectangle (165.06, 81.34);

\path[fill=fillColor] ( 81.64, 28.73) rectangle ( 85.99, 54.60);

\path[fill=fillColor] (121.17, 28.73) rectangle (125.52, 67.59);

\path[fill=fillColor] (140.94, 28.73) rectangle (145.29, 73.94);

\path[fill=fillColor] (180.48, 28.73) rectangle (184.82, 87.47);
\definecolor{fillColor}{RGB}{247,192,26}

\path[fill=fillColor] ( 67.41, 28.73) rectangle ( 71.76, 45.17);

\path[fill=fillColor] (106.94, 28.73) rectangle (111.29, 56.66);

\path[fill=fillColor] ( 47.64, 28.73) rectangle ( 51.99, 41.16);

\path[fill=fillColor] (166.24, 28.73) rectangle (170.59, 75.02);

\path[fill=fillColor] ( 87.17, 28.73) rectangle ( 91.52, 50.53);

\path[fill=fillColor] (126.71, 28.73) rectangle (131.06, 62.86);

\path[fill=fillColor] (146.48, 28.73) rectangle (150.82, 69.12);

\path[fill=fillColor] (186.01, 28.73) rectangle (190.36, 81.36);
\definecolor{fillColor}{RGB}{37,122,164}

\path[fill=fillColor] ( 72.94, 28.73) rectangle ( 77.29, 49.58);

\path[fill=fillColor] (112.48, 28.73) rectangle (116.82, 61.98);

\path[fill=fillColor] ( 53.17, 28.73) rectangle ( 57.52, 44.56);

\path[fill=fillColor] (171.78, 28.73) rectangle (176.13, 82.13);

\path[fill=fillColor] ( 92.71, 28.73) rectangle ( 97.06, 55.89);

\path[fill=fillColor] (132.24, 28.73) rectangle (136.59, 68.71);

\path[fill=fillColor] (152.01, 28.73) rectangle (156.36, 75.04);

\path[fill=fillColor] (191.54, 28.73) rectangle (195.89, 88.30);
\end{scope}
\begin{scope}
\path[clip] (  0.00,  0.00) rectangle (209.58,115.63);
\definecolor{drawColor}{RGB}{0,0,0}

\path[draw=drawColor,line width= 0.5pt,line join=round] ( 34.42, 28.73) --
	( 34.42,113.63);

\path[draw=drawColor,line width= 0.5pt,line join=round] ( 35.55,111.66) --
	( 34.42,113.63) --
	( 33.28,111.66);
\end{scope}
\begin{scope}
\path[clip] (  0.00,  0.00) rectangle (209.58,115.63);
\definecolor{drawColor}{gray}{0.30}

\node[text=drawColor,anchor=base east,inner sep=0pt, outer sep=0pt, scale=  0.70] at ( 30.37, 31.03) {1e-03};

\node[text=drawColor,anchor=base east,inner sep=0pt, outer sep=0pt, scale=  0.70] at ( 30.37, 49.90) {1e-01};

\node[text=drawColor,anchor=base east,inner sep=0pt, outer sep=0pt, scale=  0.70] at ( 30.37, 68.77) {1e+01};

\node[text=drawColor,anchor=base east,inner sep=0pt, outer sep=0pt, scale=  0.70] at ( 30.37, 87.64) {1e+03};

\node[text=drawColor,anchor=base east,inner sep=0pt, outer sep=0pt, scale=  0.70] at ( 30.37,106.50) {1e+05};
\end{scope}
\begin{scope}
\path[clip] (  0.00,  0.00) rectangle (209.58,115.63);
\definecolor{drawColor}{gray}{0.20}

\path[draw=drawColor,line width= 0.5pt,line join=round] ( 32.17, 33.45) --
	( 34.42, 33.45);

\path[draw=drawColor,line width= 0.5pt,line join=round] ( 32.17, 52.31) --
	( 34.42, 52.31);

\path[draw=drawColor,line width= 0.5pt,line join=round] ( 32.17, 71.18) --
	( 34.42, 71.18);

\path[draw=drawColor,line width= 0.5pt,line join=round] ( 32.17, 90.05) --
	( 34.42, 90.05);

\path[draw=drawColor,line width= 0.5pt,line join=round] ( 32.17,108.92) --
	( 34.42,108.92);
\end{scope}
\begin{scope}
\path[clip] (  0.00,  0.00) rectangle (209.58,115.63);
\definecolor{drawColor}{RGB}{0,0,0}

\path[draw=drawColor,line width= 0.5pt,line join=round] ( 34.42, 28.73) --
	(203.58, 28.73);

\path[draw=drawColor,line width= 0.5pt,line join=round] (201.61, 27.59) --
	(203.58, 28.73) --
	(201.61, 29.87);
\end{scope}
\begin{scope}
\path[clip] (  0.00,  0.00) rectangle (209.58,115.63);
\definecolor{drawColor}{gray}{0.20}

\path[draw=drawColor,line width= 0.5pt,line join=round] ( 49.81, 26.48) --
	( 49.81, 28.73);

\path[draw=drawColor,line width= 0.5pt,line join=round] ( 69.58, 26.48) --
	( 69.58, 28.73);

\path[draw=drawColor,line width= 0.5pt,line join=round] ( 89.35, 26.48) --
	( 89.35, 28.73);

\path[draw=drawColor,line width= 0.5pt,line join=round] (109.12, 26.48) --
	(109.12, 28.73);

\path[draw=drawColor,line width= 0.5pt,line join=round] (128.88, 26.48) --
	(128.88, 28.73);

\path[draw=drawColor,line width= 0.5pt,line join=round] (148.65, 26.48) --
	(148.65, 28.73);

\path[draw=drawColor,line width= 0.5pt,line join=round] (168.42, 26.48) --
	(168.42, 28.73);

\path[draw=drawColor,line width= 0.5pt,line join=round] (188.18, 26.48) --
	(188.18, 28.73);
\end{scope}
\begin{scope}
\path[clip] (  0.00,  0.00) rectangle (209.58,115.63);
\definecolor{drawColor}{gray}{0.30}

\node[text=drawColor,anchor=base,inner sep=0pt, outer sep=0pt, scale=  0.70] at ( 49.81, 19.86) {2};

\node[text=drawColor,anchor=base,inner sep=0pt, outer sep=0pt, scale=  0.70] at ( 69.58, 19.86) {4};

\node[text=drawColor,anchor=base,inner sep=0pt, outer sep=0pt, scale=  0.70] at ( 89.35, 19.86) {6};

\node[text=drawColor,anchor=base,inner sep=0pt, outer sep=0pt, scale=  0.70] at (109.12, 19.86) {8};

\node[text=drawColor,anchor=base,inner sep=0pt, outer sep=0pt, scale=  0.70] at (128.88, 19.86) {10};

\node[text=drawColor,anchor=base,inner sep=0pt, outer sep=0pt, scale=  0.70] at (148.65, 19.86) {12};

\node[text=drawColor,anchor=base,inner sep=0pt, outer sep=0pt, scale=  0.70] at (168.42, 19.86) {14};

\node[text=drawColor,anchor=base,inner sep=0pt, outer sep=0pt, scale=  0.70] at (188.18, 19.86) {16};
\end{scope}
\begin{scope}
\path[clip] (  0.00,  0.00) rectangle (209.58,115.63);
\definecolor{drawColor}{RGB}{0,0,0}

\node[text=drawColor,anchor=base,inner sep=0pt, outer sep=0pt, scale=  0.80] at (119.00, 10.74) {$n$};
\end{scope}
\begin{scope}
\path[clip] (  0.00,  0.00) rectangle (209.58,115.63);
\definecolor{drawColor}{RGB}{0,0,0}

\node[text=drawColor,rotate= 90.00,anchor=base,inner sep=0pt, outer sep=0pt, scale=  0.80] at (  7.51, 71.18) {time (ms)};
\end{scope}
\begin{scope}
\path[clip] (  0.00,  0.00) rectangle (209.58,115.63);

\path[] ( 51.04,  1.00) rectangle (186.96,  7.18);
\end{scope}
\begin{scope}
\path[clip] (  0.00,  0.00) rectangle (209.58,115.63);
\definecolor{fillColor}{RGB}{78,155,133}

\path[fill=fillColor] ( 51.75,  1.71) rectangle ( 61.17,  6.47);
\end{scope}
\begin{scope}
\path[clip] (  0.00,  0.00) rectangle (209.58,115.63);
\definecolor{fillColor}{RGB}{78,155,133}

\path[fill=fillColor] ( 51.75,  1.71) rectangle ( 61.17,  6.47);
\end{scope}
\begin{scope}
\path[clip] (  0.00,  0.00) rectangle (209.58,115.63);
\definecolor{fillColor}{RGB}{78,155,133}

\path[fill=fillColor] ( 51.75,  1.71) rectangle ( 61.17,  6.47);
\end{scope}
\begin{scope}
\path[clip] (  0.00,  0.00) rectangle (209.58,115.63);
\definecolor{fillColor}{RGB}{247,192,26}

\path[fill=fillColor] (102.84,  1.71) rectangle (112.26,  6.47);
\end{scope}
\begin{scope}
\path[clip] (  0.00,  0.00) rectangle (209.58,115.63);
\definecolor{fillColor}{RGB}{247,192,26}

\path[fill=fillColor] (102.84,  1.71) rectangle (112.26,  6.47);
\end{scope}
\begin{scope}
\path[clip] (  0.00,  0.00) rectangle (209.58,115.63);
\definecolor{fillColor}{RGB}{247,192,26}

\path[fill=fillColor] (102.84,  1.71) rectangle (112.26,  6.47);
\end{scope}
\begin{scope}
\path[clip] (  0.00,  0.00) rectangle (209.58,115.63);
\definecolor{fillColor}{RGB}{37,122,164}

\path[fill=fillColor] (154.70,  1.71) rectangle (164.11,  6.47);
\end{scope}
\begin{scope}
\path[clip] (  0.00,  0.00) rectangle (209.58,115.63);
\definecolor{fillColor}{RGB}{37,122,164}

\path[fill=fillColor] (154.70,  1.71) rectangle (164.11,  6.47);
\end{scope}
\begin{scope}
\path[clip] (  0.00,  0.00) rectangle (209.58,115.63);
\definecolor{fillColor}{RGB}{37,122,164}

\path[fill=fillColor] (154.70,  1.71) rectangle (164.11,  6.47);
\end{scope}
\begin{scope}
\path[clip] (  0.00,  0.00) rectangle (209.58,115.63);
\definecolor{drawColor}{RGB}{0,0,0}

\node[text=drawColor,anchor=base west,inner sep=0pt, outer sep=0pt, scale=  0.70] at ( 61.88,  1.68) {Candidates};
\end{scope}
\begin{scope}
\path[clip] (  0.00,  0.00) rectangle (209.58,115.63);
\definecolor{drawColor}{RGB}{0,0,0}

\node[text=drawColor,anchor=base west,inner sep=0pt, outer sep=0pt, scale=  0.70] at (112.97,  1.68) {Verification};
\end{scope}
\begin{scope}
\path[clip] (  0.00,  0.00) rectangle (209.58,115.63);
\definecolor{drawColor}{RGB}{0,0,0}

\node[text=drawColor,anchor=base west,inner sep=0pt, outer sep=0pt, scale=  0.70] at (164.82,  1.68) {Total};
\end{scope}
\end{tikzpicture}

%% file: files/decor_revised_plot_groups_vary_g.tex
\begin{tikzpicture}[x=1pt,y=1pt]
\definecolor{fillColor}{RGB}{255,255,255}
\path[use as bounding box,fill=fillColor,fill opacity=0.00] (0,0) rectangle (209.58,115.63);
\begin{scope}
\path[clip] (  0.00,  0.00) rectangle (209.58,115.63);
\definecolor{drawColor}{RGB}{255,255,255}
\definecolor{fillColor}{RGB}{255,255,255}

\path[draw=drawColor,line width= 0.5pt,line join=round,line cap=round,fill=fillColor] (  0.00,  0.00) rectangle (209.58,115.63);
\end{scope}
\begin{scope}
\path[clip] ( 34.42, 28.73) rectangle (203.58,113.63);
\definecolor{fillColor}{RGB}{255,255,255}

\path[fill=fillColor] ( 34.42, 28.73) rectangle (203.58,113.63);
\definecolor{drawColor}{RGB}{247,192,26}

\path[draw=drawColor,line width= 0.5pt,line join=round] ( 42.11, 39.35) --
	( 67.74, 47.56) --
	( 93.37, 54.40) --
	(119.00, 62.23) --
	(144.63, 69.82) --
	(170.26, 78.83) --
	(195.89, 87.13);
\definecolor{drawColor}{RGB}{78,155,133}

\path[draw=drawColor,line width= 0.5pt,dash pattern=on 4pt off 4pt ,line join=round] ( 42.11, 41.94) --
	( 67.74, 49.79) --
	( 93.37, 56.16) --
	(119.00, 63.88) --
	(144.63, 71.23) --
	(170.26, 79.87) --
	(195.89, 88.06);
\definecolor{drawColor}{RGB}{37,122,164}

\path[draw=drawColor,line width= 0.5pt,dash pattern=on 1pt off 3pt ,line join=round] ( 42.11, 45.71) --
	( 67.74, 56.07) --
	( 93.37, 65.74) --
	(119.00, 75.13) --
	(144.63, 83.84) --
	(170.26, 92.64) --
	(195.89,101.62);
\definecolor{drawColor}{RGB}{86,51,94}

\path[draw=drawColor,line width= 0.5pt,dash pattern=on 7pt off 3pt ,line join=round] ( 42.11, 46.18) --
	( 67.74, 60.93) --
	( 93.37, 75.08) --
	(119.00, 88.77) --
	(144.63,102.14);
\definecolor{drawColor}{RGB}{247,192,26}
\definecolor{fillColor}{RGB}{247,192,26}

\path[draw=drawColor,line width= 0.4pt,line join=round,line cap=round,fill=fillColor] (170.26, 78.83) circle (  1.53);
\definecolor{fillColor}{RGB}{37,122,164}

\path[fill=fillColor] ( 91.83, 64.20) --
	( 94.90, 64.20) --
	( 94.90, 67.27) --
	( 91.83, 67.27) --
	cycle;
\definecolor{fillColor}{RGB}{247,192,26}

\path[draw=drawColor,line width= 0.4pt,line join=round,line cap=round,fill=fillColor] (195.89, 87.13) circle (  1.53);
\definecolor{drawColor}{RGB}{86,51,94}

\path[draw=drawColor,line width= 0.4pt,line join=round,line cap=round] ( 66.20, 59.39) -- ( 69.27, 62.46);

\path[draw=drawColor,line width= 0.4pt,line join=round,line cap=round] ( 66.20, 62.46) -- ( 69.27, 59.39);

\path[draw=drawColor,line width= 0.4pt,line join=round,line cap=round] ( 65.57, 60.93) -- ( 69.91, 60.93);

\path[draw=drawColor,line width= 0.4pt,line join=round,line cap=round] ( 67.74, 58.76) -- ( 67.74, 63.09);
\definecolor{fillColor}{RGB}{78,155,133}

\path[fill=fillColor] (144.63, 73.62) --
	(146.70, 70.04) --
	(142.57, 70.04) --
	cycle;

\path[draw=drawColor,line width= 0.4pt,line join=round,line cap=round] ( 40.57, 44.64) -- ( 43.64, 47.71);

\path[draw=drawColor,line width= 0.4pt,line join=round,line cap=round] ( 40.57, 47.71) -- ( 43.64, 44.64);

\path[draw=drawColor,line width= 0.4pt,line join=round,line cap=round] ( 39.94, 46.18) -- ( 44.27, 46.18);

\path[draw=drawColor,line width= 0.4pt,line join=round,line cap=round] ( 42.11, 44.01) -- ( 42.11, 48.34);
\definecolor{fillColor}{RGB}{37,122,164}

\path[fill=fillColor] ( 66.20, 54.53) --
	( 69.27, 54.53) --
	( 69.27, 57.60) --
	( 66.20, 57.60) --
	cycle;
\definecolor{drawColor}{RGB}{247,192,26}
\definecolor{fillColor}{RGB}{247,192,26}

\path[draw=drawColor,line width= 0.4pt,line join=round,line cap=round,fill=fillColor] ( 93.37, 54.40) circle (  1.53);
\definecolor{fillColor}{RGB}{37,122,164}

\path[fill=fillColor] (117.47, 73.59) --
	(120.53, 73.59) --
	(120.53, 76.66) --
	(117.47, 76.66) --
	cycle;
\definecolor{fillColor}{RGB}{247,192,26}

\path[draw=drawColor,line width= 0.4pt,line join=round,line cap=round,fill=fillColor] ( 42.11, 39.35) circle (  1.53);
\definecolor{fillColor}{RGB}{37,122,164}

\path[fill=fillColor] (194.36,100.09) --
	(197.43,100.09) --
	(197.43,103.16) --
	(194.36,103.16) --
	cycle;
\definecolor{fillColor}{RGB}{78,155,133}

\path[fill=fillColor] (119.00, 66.27) --
	(121.06, 62.69) --
	(116.93, 62.69) --
	cycle;
\definecolor{drawColor}{RGB}{86,51,94}

\path[draw=drawColor,line width= 0.4pt,line join=round,line cap=round] (117.47, 87.23) -- (120.53, 90.30);

\path[draw=drawColor,line width= 0.4pt,line join=round,line cap=round] (117.47, 90.30) -- (120.53, 87.23);

\path[draw=drawColor,line width= 0.4pt,line join=round,line cap=round] (116.83, 88.77) -- (121.17, 88.77);

\path[draw=drawColor,line width= 0.4pt,line join=round,line cap=round] (119.00, 86.60) -- (119.00, 90.94);
\definecolor{drawColor}{RGB}{247,192,26}
\definecolor{fillColor}{RGB}{247,192,26}

\path[draw=drawColor,line width= 0.4pt,line join=round,line cap=round,fill=fillColor] (144.63, 69.82) circle (  1.53);
\definecolor{drawColor}{RGB}{86,51,94}

\path[draw=drawColor,line width= 0.4pt,line join=round,line cap=round] (143.10,100.61) -- (146.16,103.68);

\path[draw=drawColor,line width= 0.4pt,line join=round,line cap=round] (143.10,103.68) -- (146.16,100.61);

\path[draw=drawColor,line width= 0.4pt,line join=round,line cap=round] (142.46,102.14) -- (146.80,102.14);

\path[draw=drawColor,line width= 0.4pt,line join=round,line cap=round] (144.63, 99.97) -- (144.63,104.31);

\path[draw=drawColor,line width= 0.4pt,line join=round,line cap=round] ( 91.83, 73.55) -- ( 94.90, 76.62);

\path[draw=drawColor,line width= 0.4pt,line join=round,line cap=round] ( 91.83, 76.62) -- ( 94.90, 73.55);

\path[draw=drawColor,line width= 0.4pt,line join=round,line cap=round] ( 91.20, 75.08) -- ( 95.54, 75.08);

\path[draw=drawColor,line width= 0.4pt,line join=round,line cap=round] ( 93.37, 72.91) -- ( 93.37, 77.25);
\definecolor{fillColor}{RGB}{78,155,133}

\path[fill=fillColor] (195.89, 90.45) --
	(197.96, 86.87) --
	(193.83, 86.87) --
	cycle;
\definecolor{fillColor}{RGB}{37,122,164}

\path[fill=fillColor] (143.10, 82.31) --
	(146.16, 82.31) --
	(146.16, 85.38) --
	(143.10, 85.38) --
	cycle;
\definecolor{drawColor}{RGB}{247,192,26}
\definecolor{fillColor}{RGB}{247,192,26}

\path[draw=drawColor,line width= 0.4pt,line join=round,line cap=round,fill=fillColor] (119.00, 62.23) circle (  1.53);
\definecolor{fillColor}{RGB}{37,122,164}

\path[fill=fillColor] ( 40.57, 44.18) --
	( 43.64, 44.18) --
	( 43.64, 47.24) --
	( 40.57, 47.24) --
	cycle;
\definecolor{fillColor}{RGB}{78,155,133}

\path[fill=fillColor] ( 93.37, 58.55) --
	( 95.43, 54.97) --
	( 91.30, 54.97) --
	cycle;

\path[fill=fillColor] ( 42.11, 44.33) --
	( 44.17, 40.75) --
	( 40.04, 40.75) --
	cycle;
\definecolor{fillColor}{RGB}{37,122,164}

\path[fill=fillColor] (168.73, 91.10) --
	(171.80, 91.10) --
	(171.80, 94.17) --
	(168.73, 94.17) --
	cycle;
\definecolor{fillColor}{RGB}{247,192,26}

\path[draw=drawColor,line width= 0.4pt,line join=round,line cap=round,fill=fillColor] ( 67.74, 47.56) circle (  1.53);
\definecolor{fillColor}{RGB}{78,155,133}

\path[fill=fillColor] ( 67.74, 52.18) --
	( 69.80, 48.60) --
	( 65.67, 48.60) --
	cycle;

\path[fill=fillColor] (170.26, 82.25) --
	(172.33, 78.68) --
	(168.20, 78.68) --
	cycle;
\end{scope}
\begin{scope}
\path[clip] (  0.00,  0.00) rectangle (209.58,115.63);
\definecolor{drawColor}{RGB}{0,0,0}

\path[draw=drawColor,line width= 0.5pt,line join=round] ( 34.42, 28.73) --
	( 34.42,113.63);

\path[draw=drawColor,line width= 0.5pt,line join=round] ( 35.55,111.66) --
	( 34.42,113.63) --
	( 33.28,111.66);
\end{scope}
\begin{scope}
\path[clip] (  0.00,  0.00) rectangle (209.58,115.63);
\definecolor{drawColor}{gray}{0.30}

\node[text=drawColor,anchor=base east,inner sep=0pt, outer sep=0pt, scale=  0.70] at ( 30.37, 42.24) {1e-01};

\node[text=drawColor,anchor=base east,inner sep=0pt, outer sep=0pt, scale=  0.70] at ( 30.37, 63.46) {1e+01};

\node[text=drawColor,anchor=base east,inner sep=0pt, outer sep=0pt, scale=  0.70] at ( 30.37, 84.69) {1e+03};

\node[text=drawColor,anchor=base east,inner sep=0pt, outer sep=0pt, scale=  0.70] at ( 30.37,105.92) {1e+05};
\end{scope}
\begin{scope}
\path[clip] (  0.00,  0.00) rectangle (209.58,115.63);
\definecolor{drawColor}{gray}{0.20}

\path[draw=drawColor,line width= 0.5pt,line join=round] ( 32.17, 44.65) --
	( 34.42, 44.65);

\path[draw=drawColor,line width= 0.5pt,line join=round] ( 32.17, 65.87) --
	( 34.42, 65.87);

\path[draw=drawColor,line width= 0.5pt,line join=round] ( 32.17, 87.10) --
	( 34.42, 87.10);

\path[draw=drawColor,line width= 0.5pt,line join=round] ( 32.17,108.33) --
	( 34.42,108.33);
\end{scope}
\begin{scope}
\path[clip] (  0.00,  0.00) rectangle (209.58,115.63);
\definecolor{drawColor}{RGB}{0,0,0}

\path[draw=drawColor,line width= 0.5pt,line join=round] ( 34.42, 28.73) --
	(203.58, 28.73);

\path[draw=drawColor,line width= 0.5pt,line join=round] (201.61, 27.59) --
	(203.58, 28.73) --
	(201.61, 29.87);
\end{scope}
\begin{scope}
\path[clip] (  0.00,  0.00) rectangle (209.58,115.63);
\definecolor{drawColor}{gray}{0.20}

\path[draw=drawColor,line width= 0.5pt,line join=round] ( 42.11, 26.48) --
	( 42.11, 28.73);

\path[draw=drawColor,line width= 0.5pt,line join=round] ( 67.74, 26.48) --
	( 67.74, 28.73);

\path[draw=drawColor,line width= 0.5pt,line join=round] ( 93.37, 26.48) --
	( 93.37, 28.73);

\path[draw=drawColor,line width= 0.5pt,line join=round] (119.00, 26.48) --
	(119.00, 28.73);

\path[draw=drawColor,line width= 0.5pt,line join=round] (144.63, 26.48) --
	(144.63, 28.73);

\path[draw=drawColor,line width= 0.5pt,line join=round] (170.26, 26.48) --
	(170.26, 28.73);

\path[draw=drawColor,line width= 0.5pt,line join=round] (195.89, 26.48) --
	(195.89, 28.73);
\end{scope}
\begin{scope}
\path[clip] (  0.00,  0.00) rectangle (209.58,115.63);
\definecolor{drawColor}{gray}{0.30}

\node[text=drawColor,anchor=base,inner sep=0pt, outer sep=0pt, scale=  0.70] at ( 42.11, 19.86) {2};

\node[text=drawColor,anchor=base,inner sep=0pt, outer sep=0pt, scale=  0.70] at ( 67.74, 19.86) {3};

\node[text=drawColor,anchor=base,inner sep=0pt, outer sep=0pt, scale=  0.70] at ( 93.37, 19.86) {4};

\node[text=drawColor,anchor=base,inner sep=0pt, outer sep=0pt, scale=  0.70] at (119.00, 19.86) {5};

\node[text=drawColor,anchor=base,inner sep=0pt, outer sep=0pt, scale=  0.70] at (144.63, 19.86) {6};

\node[text=drawColor,anchor=base,inner sep=0pt, outer sep=0pt, scale=  0.70] at (170.26, 19.86) {7};

\node[text=drawColor,anchor=base,inner sep=0pt, outer sep=0pt, scale=  0.70] at (195.89, 19.86) {8};
\end{scope}
\begin{scope}
\path[clip] (  0.00,  0.00) rectangle (209.58,115.63);
\definecolor{drawColor}{RGB}{0,0,0}

\node[text=drawColor,anchor=base,inner sep=0pt, outer sep=0pt, scale=  0.80] at (119.00, 10.74) {$g$};
\end{scope}
\begin{scope}
\path[clip] (  0.00,  0.00) rectangle (209.58,115.63);
\definecolor{drawColor}{RGB}{0,0,0}

\node[text=drawColor,rotate= 90.00,anchor=base,inner sep=0pt, outer sep=0pt, scale=  0.80] at (  7.51, 71.18) {time (ms)};
\end{scope}
\begin{scope}
\path[clip] (  0.00,  0.00) rectangle (209.58,115.63);

\path[] ( 34.42,  1.00) rectangle (210.27,  7.18);
\end{scope}
\begin{scope}
\path[clip] (  0.00,  0.00) rectangle (209.58,115.63);
\definecolor{drawColor}{RGB}{247,192,26}

\path[draw=drawColor,line width= 0.5pt,line join=round] ( 17.50,  4.09) -- ( 26.17,  4.09);
\end{scope}
\begin{scope}
\path[clip] (  0.00,  0.00) rectangle (209.58,115.63);
\definecolor{drawColor}{RGB}{247,192,26}
\definecolor{fillColor}{RGB}{247,192,26}

\path[draw=drawColor,line width= 0.4pt,line join=round,line cap=round,fill=fillColor] ( 21.84,  4.09) circle (  1.53);
\end{scope}
\begin{scope}
\path[clip] (  0.00,  0.00) rectangle (209.58,115.63);
\definecolor{drawColor}{RGB}{78,155,133}

\path[draw=drawColor,line width= 0.5pt,dash pattern=on 4pt off 4pt ,line join=round] ( 60.10,  4.09) -- ( 68.77,  4.09);
\end{scope}
\begin{scope}
\path[clip] (  0.00,  0.00) rectangle (209.58,115.63);
\definecolor{fillColor}{RGB}{78,155,133}

\path[fill=fillColor] ( 64.43,  6.48) --
	( 66.50,  2.90) --
	( 62.37,  2.90) --
	cycle;
\end{scope}
\begin{scope}
\path[clip] (  0.00,  0.00) rectangle (209.58,115.63);
\definecolor{drawColor}{RGB}{37,122,164}

\path[draw=drawColor,line width= 0.5pt,dash pattern=on 1pt off 3pt ,line join=round] (108.14,  4.09) -- (116.81,  4.09);
\end{scope}
\begin{scope}
\path[clip] (  0.00,  0.00) rectangle (209.58,115.63);
\definecolor{fillColor}{RGB}{37,122,164}

\path[fill=fillColor] (110.94,  2.56) --
	(114.01,  2.56) --
	(114.01,  5.62) --
	(110.94,  5.62) --
	cycle;
\end{scope}
\begin{scope}
\path[clip] (  0.00,  0.00) rectangle (209.58,115.63);
\definecolor{drawColor}{RGB}{86,51,94}

\path[draw=drawColor,line width= 0.5pt,dash pattern=on 7pt off 3pt ,line join=round] (158.32,  4.09) -- (166.99,  4.09);
\end{scope}
\begin{scope}
\path[clip] (  0.00,  0.00) rectangle (209.58,115.63);
\definecolor{drawColor}{RGB}{86,51,94}

\path[draw=drawColor,line width= 0.4pt,line join=round,line cap=round] (161.12,  2.56) -- (164.19,  5.62);

\path[draw=drawColor,line width= 0.4pt,line join=round,line cap=round] (161.12,  5.62) -- (164.19,  2.56);

\path[draw=drawColor,line width= 0.4pt,line join=round,line cap=round] (160.49,  4.09) -- (164.82,  4.09);

\path[draw=drawColor,line width= 0.4pt,line join=round,line cap=round] (162.65,  1.92) -- (162.65,  6.26);
\end{scope}
\begin{scope}
\path[clip] (  0.00,  0.00) rectangle (209.58,115.63);
\definecolor{drawColor}{RGB}{0,0,0}

\node[text=drawColor,anchor=base west,inner sep=0pt, outer sep=0pt, scale=  0.70] at ( 27.26,  1.68) {DECOR};
\end{scope}
\begin{scope}
\path[clip] (  0.00,  0.00) rectangle (209.58,115.63);
\definecolor{drawColor}{RGB}{0,0,0}

\node[text=drawColor,anchor=base west,inner sep=0pt, outer sep=0pt, scale=  0.70] at ( 69.85,  1.68) {DECOR+};
\end{scope}
\begin{scope}
\path[clip] (  0.00,  0.00) rectangle (209.58,115.63);
\definecolor{drawColor}{RGB}{0,0,0}

\node[text=drawColor,anchor=base west,inner sep=0pt, outer sep=0pt, scale=  0.70] at (117.90,  1.68) {A-DECOR};
\end{scope}
\begin{scope}
\path[clip] (  0.00,  0.00) rectangle (209.58,115.63);
\definecolor{drawColor}{RGB}{0,0,0}

\node[text=drawColor,anchor=base west,inner sep=0pt, outer sep=0pt, scale=  0.70] at (168.07,  1.68) {Brute-Force};
\end{scope}
\end{tikzpicture}

%% file: files/decor_revised_plot_groups_vary_s.tex
\begin{tikzpicture}[x=1pt,y=1pt]
\definecolor{fillColor}{RGB}{255,255,255}
\path[use as bounding box,fill=fillColor,fill opacity=0.00] (0,0) rectangle (209.58,115.63);
\begin{scope}
\path[clip] (  0.00,  0.00) rectangle (209.58,115.63);
\definecolor{drawColor}{RGB}{255,255,255}
\definecolor{fillColor}{RGB}{255,255,255}

\path[draw=drawColor,line width= 0.5pt,line join=round,line cap=round,fill=fillColor] (  0.00,  0.00) rectangle (209.58,115.63);
\end{scope}
\begin{scope}
\path[clip] ( 34.42, 28.73) rectangle (203.58,113.63);
\definecolor{fillColor}{RGB}{255,255,255}

\path[fill=fillColor] ( 34.42, 28.73) rectangle (203.58,113.63);
\definecolor{drawColor}{RGB}{247,192,26}

\path[draw=drawColor,line width= 0.5pt,line join=round] ( 42.11, 39.35) --
	( 67.74, 47.31) --
	( 93.37, 54.14) --
	(119.00, 61.87) --
	(144.63, 69.24) --
	(170.26, 77.09) --
	(195.89, 83.91);
\definecolor{drawColor}{RGB}{78,155,133}

\path[draw=drawColor,line width= 0.5pt,dash pattern=on 4pt off 4pt ,line join=round] ( 42.11, 41.94) --
	( 67.74, 49.08) --
	( 93.37, 55.85) --
	(119.00, 63.62) --
	(144.63, 70.55) --
	(170.26, 78.07) --
	(195.89, 85.02);
\definecolor{drawColor}{RGB}{37,122,164}

\path[draw=drawColor,line width= 0.5pt,dash pattern=on 1pt off 3pt ,line join=round] ( 42.11, 45.71) --
	( 67.74, 56.39) --
	( 93.37, 66.24) --
	(119.00, 75.42) --
	(144.63, 84.21) --
	(170.26, 92.90) --
	(195.89,101.97);
\definecolor{drawColor}{RGB}{86,51,94}

\path[draw=drawColor,line width= 0.5pt,dash pattern=on 7pt off 3pt ,line join=round] ( 42.11, 46.18) --
	( 67.74, 59.11) --
	( 93.37, 70.93) --
	(119.00, 84.34) --
	(144.63, 98.42);
\definecolor{fillColor}{RGB}{78,155,133}

\path[fill=fillColor] (119.00, 66.00) --
	(121.06, 62.42) --
	(116.93, 62.42) --
	cycle;
\definecolor{drawColor}{RGB}{247,192,26}
\definecolor{fillColor}{RGB}{247,192,26}

\path[draw=drawColor,line width= 0.4pt,line join=round,line cap=round,fill=fillColor] ( 93.37, 54.14) circle (  1.53);
\definecolor{fillColor}{RGB}{37,122,164}

\path[fill=fillColor] ( 66.20, 54.85) --
	( 69.27, 54.85) --
	( 69.27, 57.92) --
	( 66.20, 57.92) --
	cycle;
\definecolor{drawColor}{RGB}{86,51,94}

\path[draw=drawColor,line width= 0.4pt,line join=round,line cap=round] ( 91.83, 69.40) -- ( 94.90, 72.47);

\path[draw=drawColor,line width= 0.4pt,line join=round,line cap=round] ( 91.83, 72.47) -- ( 94.90, 69.40);

\path[draw=drawColor,line width= 0.4pt,line join=round,line cap=round] ( 91.20, 70.93) -- ( 95.54, 70.93);

\path[draw=drawColor,line width= 0.4pt,line join=round,line cap=round] ( 93.37, 68.77) -- ( 93.37, 73.10);

\path[draw=drawColor,line width= 0.4pt,line join=round,line cap=round] ( 66.20, 57.57) -- ( 69.27, 60.64);

\path[draw=drawColor,line width= 0.4pt,line join=round,line cap=round] ( 66.20, 60.64) -- ( 69.27, 57.57);

\path[draw=drawColor,line width= 0.4pt,line join=round,line cap=round] ( 65.57, 59.11) -- ( 69.91, 59.11);

\path[draw=drawColor,line width= 0.4pt,line join=round,line cap=round] ( 67.74, 56.94) -- ( 67.74, 61.28);
\definecolor{drawColor}{RGB}{247,192,26}
\definecolor{fillColor}{RGB}{247,192,26}

\path[draw=drawColor,line width= 0.4pt,line join=round,line cap=round,fill=fillColor] (195.89, 83.91) circle (  1.53);
\definecolor{drawColor}{RGB}{86,51,94}

\path[draw=drawColor,line width= 0.4pt,line join=round,line cap=round] (143.10, 96.88) -- (146.16, 99.95);

\path[draw=drawColor,line width= 0.4pt,line join=round,line cap=round] (143.10, 99.95) -- (146.16, 96.88);

\path[draw=drawColor,line width= 0.4pt,line join=round,line cap=round] (142.46, 98.42) -- (146.80, 98.42);

\path[draw=drawColor,line width= 0.4pt,line join=round,line cap=round] (144.63, 96.25) -- (144.63,100.59);

\path[draw=drawColor,line width= 0.4pt,line join=round,line cap=round] ( 40.57, 44.64) -- ( 43.64, 47.71);

\path[draw=drawColor,line width= 0.4pt,line join=round,line cap=round] ( 40.57, 47.71) -- ( 43.64, 44.64);

\path[draw=drawColor,line width= 0.4pt,line join=round,line cap=round] ( 39.94, 46.18) -- ( 44.27, 46.18);

\path[draw=drawColor,line width= 0.4pt,line join=round,line cap=round] ( 42.11, 44.01) -- ( 42.11, 48.34);
\definecolor{fillColor}{RGB}{37,122,164}

\path[fill=fillColor] (143.10, 82.67) --
	(146.16, 82.67) --
	(146.16, 85.74) --
	(143.10, 85.74) --
	cycle;
\definecolor{drawColor}{RGB}{247,192,26}
\definecolor{fillColor}{RGB}{247,192,26}

\path[draw=drawColor,line width= 0.4pt,line join=round,line cap=round,fill=fillColor] ( 42.11, 39.35) circle (  1.53);
\definecolor{fillColor}{RGB}{78,155,133}

\path[fill=fillColor] (170.26, 80.46) --
	(172.33, 76.88) --
	(168.20, 76.88) --
	cycle;
\definecolor{fillColor}{RGB}{37,122,164}

\path[fill=fillColor] (194.36,100.44) --
	(197.43,100.44) --
	(197.43,103.50) --
	(194.36,103.50) --
	cycle;
\definecolor{fillColor}{RGB}{247,192,26}

\path[draw=drawColor,line width= 0.4pt,line join=round,line cap=round,fill=fillColor] (119.00, 61.87) circle (  1.53);
\definecolor{fillColor}{RGB}{78,155,133}

\path[fill=fillColor] (144.63, 72.94) --
	(146.70, 69.36) --
	(142.57, 69.36) --
	cycle;
\definecolor{fillColor}{RGB}{247,192,26}

\path[draw=drawColor,line width= 0.4pt,line join=round,line cap=round,fill=fillColor] ( 67.74, 47.31) circle (  1.53);
\definecolor{fillColor}{RGB}{37,122,164}

\path[fill=fillColor] ( 40.57, 44.18) --
	( 43.64, 44.18) --
	( 43.64, 47.24) --
	( 40.57, 47.24) --
	cycle;
\definecolor{drawColor}{RGB}{86,51,94}

\path[draw=drawColor,line width= 0.4pt,line join=round,line cap=round] (117.47, 82.81) -- (120.53, 85.87);

\path[draw=drawColor,line width= 0.4pt,line join=round,line cap=round] (117.47, 85.87) -- (120.53, 82.81);

\path[draw=drawColor,line width= 0.4pt,line join=round,line cap=round] (116.83, 84.34) -- (121.17, 84.34);

\path[draw=drawColor,line width= 0.4pt,line join=round,line cap=round] (119.00, 82.17) -- (119.00, 86.51);

\path[fill=fillColor] (168.73, 91.36) --
	(171.80, 91.36) --
	(171.80, 94.43) --
	(168.73, 94.43) --
	cycle;
\definecolor{fillColor}{RGB}{78,155,133}

\path[fill=fillColor] ( 93.37, 58.24) --
	( 95.43, 54.66) --
	( 91.30, 54.66) --
	cycle;

\path[fill=fillColor] ( 42.11, 44.33) --
	( 44.17, 40.75) --
	( 40.04, 40.75) --
	cycle;
\definecolor{drawColor}{RGB}{247,192,26}
\definecolor{fillColor}{RGB}{247,192,26}

\path[draw=drawColor,line width= 0.4pt,line join=round,line cap=round,fill=fillColor] (144.63, 69.24) circle (  1.53);
\definecolor{fillColor}{RGB}{37,122,164}

\path[fill=fillColor] ( 91.83, 64.70) --
	( 94.90, 64.70) --
	( 94.90, 67.77) --
	( 91.83, 67.77) --
	cycle;

\path[fill=fillColor] (117.47, 73.88) --
	(120.53, 73.88) --
	(120.53, 76.95) --
	(117.47, 76.95) --
	cycle;
\definecolor{fillColor}{RGB}{78,155,133}

\path[fill=fillColor] (195.89, 87.40) --
	(197.96, 83.83) --
	(193.83, 83.83) --
	cycle;
\definecolor{fillColor}{RGB}{247,192,26}

\path[draw=drawColor,line width= 0.4pt,line join=round,line cap=round,fill=fillColor] (170.26, 77.09) circle (  1.53);
\definecolor{fillColor}{RGB}{78,155,133}

\path[fill=fillColor] ( 67.74, 51.46) --
	( 69.80, 47.89) --
	( 65.67, 47.89) --
	cycle;
\end{scope}
\begin{scope}
\path[clip] (  0.00,  0.00) rectangle (209.58,115.63);
\definecolor{drawColor}{RGB}{0,0,0}

\path[draw=drawColor,line width= 0.5pt,line join=round] ( 34.42, 28.73) --
	( 34.42,113.63);

\path[draw=drawColor,line width= 0.5pt,line join=round] ( 35.55,111.66) --
	( 34.42,113.63) --
	( 33.28,111.66);
\end{scope}
\begin{scope}
\path[clip] (  0.00,  0.00) rectangle (209.58,115.63);
\definecolor{drawColor}{gray}{0.30}

\node[text=drawColor,anchor=base east,inner sep=0pt, outer sep=0pt, scale=  0.70] at ( 30.37, 42.24) {1e-01};

\node[text=drawColor,anchor=base east,inner sep=0pt, outer sep=0pt, scale=  0.70] at ( 30.37, 63.46) {1e+01};

\node[text=drawColor,anchor=base east,inner sep=0pt, outer sep=0pt, scale=  0.70] at ( 30.37, 84.69) {1e+03};

\node[text=drawColor,anchor=base east,inner sep=0pt, outer sep=0pt, scale=  0.70] at ( 30.37,105.92) {1e+05};
\end{scope}
\begin{scope}
\path[clip] (  0.00,  0.00) rectangle (209.58,115.63);
\definecolor{drawColor}{gray}{0.20}

\path[draw=drawColor,line width= 0.5pt,line join=round] ( 32.17, 44.65) --
	( 34.42, 44.65);

\path[draw=drawColor,line width= 0.5pt,line join=round] ( 32.17, 65.87) --
	( 34.42, 65.87);

\path[draw=drawColor,line width= 0.5pt,line join=round] ( 32.17, 87.10) --
	( 34.42, 87.10);

\path[draw=drawColor,line width= 0.5pt,line join=round] ( 32.17,108.33) --
	( 34.42,108.33);
\end{scope}
\begin{scope}
\path[clip] (  0.00,  0.00) rectangle (209.58,115.63);
\definecolor{drawColor}{RGB}{0,0,0}

\path[draw=drawColor,line width= 0.5pt,line join=round] ( 34.42, 28.73) --
	(203.58, 28.73);

\path[draw=drawColor,line width= 0.5pt,line join=round] (201.61, 27.59) --
	(203.58, 28.73) --
	(201.61, 29.87);
\end{scope}
\begin{scope}
\path[clip] (  0.00,  0.00) rectangle (209.58,115.63);
\definecolor{drawColor}{gray}{0.20}

\path[draw=drawColor,line width= 0.5pt,line join=round] ( 42.11, 26.48) --
	( 42.11, 28.73);

\path[draw=drawColor,line width= 0.5pt,line join=round] ( 67.74, 26.48) --
	( 67.74, 28.73);

\path[draw=drawColor,line width= 0.5pt,line join=round] ( 93.37, 26.48) --
	( 93.37, 28.73);

\path[draw=drawColor,line width= 0.5pt,line join=round] (119.00, 26.48) --
	(119.00, 28.73);

\path[draw=drawColor,line width= 0.5pt,line join=round] (144.63, 26.48) --
	(144.63, 28.73);

\path[draw=drawColor,line width= 0.5pt,line join=round] (170.26, 26.48) --
	(170.26, 28.73);

\path[draw=drawColor,line width= 0.5pt,line join=round] (195.89, 26.48) --
	(195.89, 28.73);
\end{scope}
\begin{scope}
\path[clip] (  0.00,  0.00) rectangle (209.58,115.63);
\definecolor{drawColor}{gray}{0.30}

\node[text=drawColor,anchor=base,inner sep=0pt, outer sep=0pt, scale=  0.70] at ( 42.11, 19.86) {2};

\node[text=drawColor,anchor=base,inner sep=0pt, outer sep=0pt, scale=  0.70] at ( 67.74, 19.86) {3};

\node[text=drawColor,anchor=base,inner sep=0pt, outer sep=0pt, scale=  0.70] at ( 93.37, 19.86) {4};

\node[text=drawColor,anchor=base,inner sep=0pt, outer sep=0pt, scale=  0.70] at (119.00, 19.86) {5};

\node[text=drawColor,anchor=base,inner sep=0pt, outer sep=0pt, scale=  0.70] at (144.63, 19.86) {6};

\node[text=drawColor,anchor=base,inner sep=0pt, outer sep=0pt, scale=  0.70] at (170.26, 19.86) {7};

\node[text=drawColor,anchor=base,inner sep=0pt, outer sep=0pt, scale=  0.70] at (195.89, 19.86) {8};
\end{scope}
\begin{scope}
\path[clip] (  0.00,  0.00) rectangle (209.58,115.63);
\definecolor{drawColor}{RGB}{0,0,0}

\node[text=drawColor,anchor=base,inner sep=0pt, outer sep=0pt, scale=  0.80] at (119.00, 10.74) {$s$};
\end{scope}
\begin{scope}
\path[clip] (  0.00,  0.00) rectangle (209.58,115.63);
\definecolor{drawColor}{RGB}{0,0,0}

\node[text=drawColor,rotate= 90.00,anchor=base,inner sep=0pt, outer sep=0pt, scale=  0.80] at (  7.51, 71.18) {time (ms)};
\end{scope}
\begin{scope}
\path[clip] (  0.00,  0.00) rectangle (209.58,115.63);

\path[] ( 34.42,  1.00) rectangle (210.27,  7.18);
\end{scope}
\begin{scope}
\path[clip] (  0.00,  0.00) rectangle (209.58,115.63);
\definecolor{drawColor}{RGB}{247,192,26}

\path[draw=drawColor,line width= 0.5pt,line join=round] ( 17.50,  4.09) -- ( 26.17,  4.09);
\end{scope}
\begin{scope}
\path[clip] (  0.00,  0.00) rectangle (209.58,115.63);
\definecolor{drawColor}{RGB}{247,192,26}
\definecolor{fillColor}{RGB}{247,192,26}

\path[draw=drawColor,line width= 0.4pt,line join=round,line cap=round,fill=fillColor] ( 21.84,  4.09) circle (  1.53);
\end{scope}
\begin{scope}
\path[clip] (  0.00,  0.00) rectangle (209.58,115.63);
\definecolor{drawColor}{RGB}{78,155,133}

\path[draw=drawColor,line width= 0.5pt,dash pattern=on 4pt off 4pt ,line join=round] ( 60.10,  4.09) -- ( 68.77,  4.09);
\end{scope}
\begin{scope}
\path[clip] (  0.00,  0.00) rectangle (209.58,115.63);
\definecolor{fillColor}{RGB}{78,155,133}

\path[fill=fillColor] ( 64.43,  6.48) --
	( 66.50,  2.90) --
	( 62.37,  2.90) --
	cycle;
\end{scope}
\begin{scope}
\path[clip] (  0.00,  0.00) rectangle (209.58,115.63);
\definecolor{drawColor}{RGB}{37,122,164}

\path[draw=drawColor,line width= 0.5pt,dash pattern=on 1pt off 3pt ,line join=round] (108.14,  4.09) -- (116.81,  4.09);
\end{scope}
\begin{scope}
\path[clip] (  0.00,  0.00) rectangle (209.58,115.63);
\definecolor{fillColor}{RGB}{37,122,164}

\path[fill=fillColor] (110.94,  2.56) --
	(114.01,  2.56) --
	(114.01,  5.62) --
	(110.94,  5.62) --
	cycle;
\end{scope}
\begin{scope}
\path[clip] (  0.00,  0.00) rectangle (209.58,115.63);
\definecolor{drawColor}{RGB}{86,51,94}

\path[draw=drawColor,line width= 0.5pt,dash pattern=on 7pt off 3pt ,line join=round] (158.32,  4.09) -- (166.99,  4.09);
\end{scope}
\begin{scope}
\path[clip] (  0.00,  0.00) rectangle (209.58,115.63);
\definecolor{drawColor}{RGB}{86,51,94}

\path[draw=drawColor,line width= 0.4pt,line join=round,line cap=round] (161.12,  2.56) -- (164.19,  5.62);

\path[draw=drawColor,line width= 0.4pt,line join=round,line cap=round] (161.12,  5.62) -- (164.19,  2.56);

\path[draw=drawColor,line width= 0.4pt,line join=round,line cap=round] (160.49,  4.09) -- (164.82,  4.09);

\path[draw=drawColor,line width= 0.4pt,line join=round,line cap=round] (162.65,  1.92) -- (162.65,  6.26);
\end{scope}
\begin{scope}
\path[clip] (  0.00,  0.00) rectangle (209.58,115.63);
\definecolor{drawColor}{RGB}{0,0,0}

\node[text=drawColor,anchor=base west,inner sep=0pt, outer sep=0pt, scale=  0.70] at ( 27.26,  1.68) {DECOR};
\end{scope}
\begin{scope}
\path[clip] (  0.00,  0.00) rectangle (209.58,115.63);
\definecolor{drawColor}{RGB}{0,0,0}

\node[text=drawColor,anchor=base west,inner sep=0pt, outer sep=0pt, scale=  0.70] at ( 69.85,  1.68) {DECOR+};
\end{scope}
\begin{scope}
\path[clip] (  0.00,  0.00) rectangle (209.58,115.63);
\definecolor{drawColor}{RGB}{0,0,0}

\node[text=drawColor,anchor=base west,inner sep=0pt, outer sep=0pt, scale=  0.70] at (117.90,  1.68) {A-DECOR};
\end{scope}
\begin{scope}
\path[clip] (  0.00,  0.00) rectangle (209.58,115.63);
\definecolor{drawColor}{RGB}{0,0,0}

\node[text=drawColor,anchor=base west,inner sep=0pt, outer sep=0pt, scale=  0.70] at (168.07,  1.68) {Brute-Force};
\end{scope}
\end{tikzpicture}

%% file: files/decor_revised_plot_avg_times_apriori.tex
\begin{tikzpicture}[x=1pt,y=1pt]
\definecolor{fillColor}{RGB}{255,255,255}
\path[use as bounding box,fill=fillColor,fill opacity=0.00] (0,0) rectangle (209.58,115.63);
\begin{scope}
\path[clip] (  0.00,  0.00) rectangle (209.58,115.63);
\definecolor{drawColor}{RGB}{255,255,255}
\definecolor{fillColor}{RGB}{255,255,255}

\path[draw=drawColor,line width= 0.5pt,line join=round,line cap=round,fill=fillColor] (  0.00,  0.00) rectangle (209.58,115.63);
\end{scope}
\begin{scope}
\path[clip] ( 34.42, 47.23) rectangle (203.58,113.63);
\definecolor{fillColor}{RGB}{255,255,255}

\path[fill=fillColor] ( 34.42, 47.23) rectangle (203.58,113.63);
\definecolor{drawColor}{RGB}{247,192,26}

\path[draw=drawColor,line width= 0.5pt,line join=round] ( 42.11, 56.93) --
	( 64.07, 66.40) --
	( 86.04, 73.77) --
	(108.01, 80.50) --
	(129.98, 86.95) --
	(151.95, 93.03) --
	(173.92, 99.21) --
	(195.89,105.53);
\definecolor{drawColor}{RGB}{78,155,133}

\path[draw=drawColor,line width= 0.5pt,dash pattern=on 4pt off 4pt ,line join=round] ( 42.11, 57.07) --
	( 64.07, 66.46) --
	( 86.04, 73.82) --
	(108.01, 80.50) --
	(129.98, 86.91) --
	(151.95, 93.05) --
	(173.92, 99.18) --
	(195.89,105.54);
\definecolor{drawColor}{RGB}{247,192,26}
\definecolor{fillColor}{RGB}{247,192,26}

\path[draw=drawColor,line width= 0.4pt,line join=round,line cap=round,fill=fillColor] (173.92, 99.21) circle (  1.53);

\path[draw=drawColor,line width= 0.4pt,line join=round,line cap=round,fill=fillColor] ( 64.07, 66.40) circle (  1.53);

\path[draw=drawColor,line width= 0.4pt,line join=round,line cap=round,fill=fillColor] (151.95, 93.03) circle (  1.53);

\path[draw=drawColor,line width= 0.4pt,line join=round,line cap=round,fill=fillColor] (129.98, 86.95) circle (  1.53);

\path[draw=drawColor,line width= 0.4pt,line join=round,line cap=round,fill=fillColor] ( 42.11, 56.93) circle (  1.53);

\path[draw=drawColor,line width= 0.4pt,line join=round,line cap=round,fill=fillColor] ( 86.04, 73.77) circle (  1.53);

\path[draw=drawColor,line width= 0.4pt,line join=round,line cap=round,fill=fillColor] (195.89,105.53) circle (  1.53);

\path[draw=drawColor,line width= 0.4pt,line join=round,line cap=round,fill=fillColor] (108.01, 80.50) circle (  1.53);
\definecolor{fillColor}{RGB}{78,155,133}

\path[fill=fillColor] (173.92,101.56) --
	(175.99, 97.98) --
	(171.86, 97.98) --
	cycle;

\path[fill=fillColor] ( 64.07, 68.85) --
	( 66.14, 65.27) --
	( 62.01, 65.27) --
	cycle;

\path[fill=fillColor] (151.95, 95.44) --
	(154.02, 91.86) --
	(149.89, 91.86) --
	cycle;

\path[fill=fillColor] (129.98, 89.30) --
	(132.05, 85.72) --
	(127.92, 85.72) --
	cycle;

\path[fill=fillColor] ( 42.11, 59.46) --
	( 44.17, 55.88) --
	( 40.04, 55.88) --
	cycle;

\path[fill=fillColor] ( 86.04, 76.20) --
	( 88.11, 72.62) --
	( 83.98, 72.62) --
	cycle;

\path[fill=fillColor] (195.89,107.93) --
	(197.96,104.35) --
	(193.83,104.35) --
	cycle;

\path[fill=fillColor] (108.01, 82.89) --
	(110.08, 79.31) --
	(105.95, 79.31) --
	cycle;
\end{scope}
\begin{scope}
\path[clip] (  0.00,  0.00) rectangle (209.58,115.63);
\definecolor{drawColor}{RGB}{0,0,0}

\path[draw=drawColor,line width= 0.5pt,line join=round] ( 34.42, 47.23) --
	( 34.42,113.63);

\path[draw=drawColor,line width= 0.5pt,line join=round] ( 35.55,111.66) --
	( 34.42,113.63) --
	( 33.28,111.66);
\end{scope}
\begin{scope}
\path[clip] (  0.00,  0.00) rectangle (209.58,115.63);
\definecolor{drawColor}{gray}{0.30}

\node[text=drawColor,anchor=base east,inner sep=0pt, outer sep=0pt, scale=  0.70] at ( 30.37, 48.51) {1e-03};

\node[text=drawColor,anchor=base east,inner sep=0pt, outer sep=0pt, scale=  0.70] at ( 30.37, 63.26) {1e-01};

\node[text=drawColor,anchor=base east,inner sep=0pt, outer sep=0pt, scale=  0.70] at ( 30.37, 78.02) {1e+01};

\node[text=drawColor,anchor=base east,inner sep=0pt, outer sep=0pt, scale=  0.70] at ( 30.37, 92.78) {1e+03};

\node[text=drawColor,anchor=base east,inner sep=0pt, outer sep=0pt, scale=  0.70] at ( 30.37,107.53) {1e+05};
\end{scope}
\begin{scope}
\path[clip] (  0.00,  0.00) rectangle (209.58,115.63);
\definecolor{drawColor}{gray}{0.20}

\path[draw=drawColor,line width= 0.5pt,line join=round] ( 32.17, 50.92) --
	( 34.42, 50.92);

\path[draw=drawColor,line width= 0.5pt,line join=round] ( 32.17, 65.67) --
	( 34.42, 65.67);

\path[draw=drawColor,line width= 0.5pt,line join=round] ( 32.17, 80.43) --
	( 34.42, 80.43);

\path[draw=drawColor,line width= 0.5pt,line join=round] ( 32.17, 95.19) --
	( 34.42, 95.19);

\path[draw=drawColor,line width= 0.5pt,line join=round] ( 32.17,109.94) --
	( 34.42,109.94);
\end{scope}
\begin{scope}
\path[clip] (  0.00,  0.00) rectangle (209.58,115.63);
\definecolor{drawColor}{RGB}{0,0,0}

\path[draw=drawColor,line width= 0.5pt,line join=round] ( 34.42, 47.23) --
	(203.58, 47.23);

\path[draw=drawColor,line width= 0.5pt,line join=round] (201.61, 46.09) --
	(203.58, 47.23) --
	(201.61, 48.37);
\end{scope}
\begin{scope}
\path[clip] (  0.00,  0.00) rectangle (209.58,115.63);
\definecolor{drawColor}{gray}{0.20}

\path[draw=drawColor,line width= 0.5pt,line join=round] ( 64.07, 44.98) --
	( 64.07, 47.23);

\path[draw=drawColor,line width= 0.5pt,line join=round] (108.01, 44.98) --
	(108.01, 47.23);

\path[draw=drawColor,line width= 0.5pt,line join=round] (151.95, 44.98) --
	(151.95, 47.23);

\path[draw=drawColor,line width= 0.5pt,line join=round] (195.89, 44.98) --
	(195.89, 47.23);
\end{scope}
\begin{scope}
\path[clip] (  0.00,  0.00) rectangle (209.58,115.63);
\definecolor{drawColor}{gray}{0.30}

\node[text=drawColor,anchor=base,inner sep=0pt, outer sep=0pt, scale=  0.70] at ( 64.07, 38.36) {4};

\node[text=drawColor,anchor=base,inner sep=0pt, outer sep=0pt, scale=  0.70] at (108.01, 38.36) {8};

\node[text=drawColor,anchor=base,inner sep=0pt, outer sep=0pt, scale=  0.70] at (151.95, 38.36) {12};

\node[text=drawColor,anchor=base,inner sep=0pt, outer sep=0pt, scale=  0.70] at (195.89, 38.36) {16};
\end{scope}
\begin{scope}
\path[clip] (  0.00,  0.00) rectangle (209.58,115.63);
\definecolor{drawColor}{RGB}{0,0,0}

\node[text=drawColor,anchor=base,inner sep=0pt, outer sep=0pt, scale=  0.80] at (119.00, 29.24) {$n$};
\end{scope}
\begin{scope}
\path[clip] (  0.00,  0.00) rectangle (209.58,115.63);
\definecolor{drawColor}{RGB}{0,0,0}

\node[text=drawColor,rotate= 90.00,anchor=base,inner sep=0pt, outer sep=0pt, scale=  0.80] at (  7.51, 80.43) {time (ms)};
\end{scope}
\begin{scope}
\path[clip] (  0.00,  0.00) rectangle (209.58,115.63);

\path[] ( 66.39, 19.50) rectangle (171.61, 25.68);
\end{scope}
\begin{scope}
\path[clip] (  0.00,  0.00) rectangle (209.58,115.63);
\definecolor{drawColor}{RGB}{247,192,26}

\path[draw=drawColor,line width= 0.5pt,line join=round] ( 67.47, 22.59) -- ( 76.15, 22.59);
\end{scope}
\begin{scope}
\path[clip] (  0.00,  0.00) rectangle (209.58,115.63);
\definecolor{drawColor}{RGB}{247,192,26}
\definecolor{fillColor}{RGB}{247,192,26}

\path[draw=drawColor,line width= 0.4pt,line join=round,line cap=round,fill=fillColor] ( 71.81, 22.59) circle (  1.53);
\end{scope}
\begin{scope}
\path[clip] (  0.00,  0.00) rectangle (209.58,115.63);
\definecolor{drawColor}{RGB}{78,155,133}

\path[draw=drawColor,line width= 0.5pt,dash pattern=on 4pt off 4pt ,line join=round] (117.65, 22.59) -- (126.33, 22.59);
\end{scope}
\begin{scope}
\path[clip] (  0.00,  0.00) rectangle (209.58,115.63);
\definecolor{fillColor}{RGB}{78,155,133}

\path[fill=fillColor] (121.99, 24.98) --
	(124.06, 21.40) --
	(119.92, 21.40) --
	cycle;
\end{scope}
\begin{scope}
\path[clip] (  0.00,  0.00) rectangle (209.58,115.63);
\definecolor{drawColor}{RGB}{0,0,0}

\node[text=drawColor,anchor=base west,inner sep=0pt, outer sep=0pt, scale=  0.70] at ( 77.23, 20.18) {A-DECOR};
\end{scope}
\begin{scope}
\path[clip] (  0.00,  0.00) rectangle (209.58,115.63);
\definecolor{drawColor}{RGB}{0,0,0}

\node[text=drawColor,anchor=base west,inner sep=0pt, outer sep=0pt, scale=  0.70] at (127.41, 20.18) {CC-DECOR};
\end{scope}
\end{tikzpicture}

%% file: files/decor_revised_plot_avg_times_heuristics.tex
\begin{tikzpicture}[x=1pt,y=1pt]
\definecolor{fillColor}{RGB}{255,255,255}
\path[use as bounding box,fill=fillColor,fill opacity=0.00] (0,0) rectangle (209.58,115.63);
\begin{scope}
\path[clip] (  0.00,  0.00) rectangle (209.58,115.63);
\definecolor{drawColor}{RGB}{255,255,255}
\definecolor{fillColor}{RGB}{255,255,255}

\path[draw=drawColor,line width= 0.5pt,line join=round,line cap=round,fill=fillColor] (  0.00,  0.00) rectangle (209.58,115.63);
\end{scope}
\begin{scope}
\path[clip] ( 34.42, 47.09) rectangle (203.58,113.63);
\definecolor{fillColor}{RGB}{255,255,255}

\path[fill=fillColor] ( 34.42, 47.09) rectangle (203.58,113.63);
\definecolor{drawColor}{RGB}{247,192,26}

\path[draw=drawColor,line width= 0.5pt,line join=round] ( 42.11, 58.34) --
	( 64.07, 62.98) --
	( 86.04, 68.18) --
	(108.01, 72.96) --
	(129.98, 78.29) --
	(151.95, 83.64) --
	(173.92, 89.69) --
	(195.89, 95.80);
\definecolor{drawColor}{RGB}{78,155,133}

\path[draw=drawColor,line width= 0.5pt,dash pattern=on 4pt off 4pt ,line join=round] ( 42.11, 58.42) --
	( 64.07, 62.99) --
	( 86.04, 68.16) --
	(108.01, 72.95) --
	(129.98, 78.26) --
	(151.95, 83.62) --
	(173.92, 89.65) --
	(195.89, 95.80);
\definecolor{drawColor}{RGB}{37,122,164}

\path[draw=drawColor,line width= 0.5pt,dash pattern=on 1pt off 3pt ,line join=round] ( 42.11, 58.49) --
	( 64.07, 63.14) --
	( 86.04, 68.17) --
	(108.01, 72.96) --
	(129.98, 78.28) --
	(151.95, 83.64) --
	(173.92, 89.66) --
	(195.89, 95.80);
\definecolor{drawColor}{RGB}{86,51,94}

\path[draw=drawColor,line width= 0.5pt,dash pattern=on 7pt off 3pt ,line join=round] ( 42.11, 58.56) --
	( 64.07, 63.03) --
	( 86.04, 68.18) --
	(108.01, 72.98) --
	(129.98, 78.29) --
	(151.95, 83.64) --
	(173.92, 89.68) --
	(195.89, 95.81);
\definecolor{drawColor}{RGB}{126,41,84}

\path[draw=drawColor,line width= 0.5pt,dash pattern=on 2pt off 2pt on 6pt off 2pt ,line join=round] ( 42.11, 58.47) --
	( 64.07, 63.01) --
	( 86.04, 68.15) --
	(108.01, 72.98) --
	(129.98, 78.29) --
	(151.95, 83.66) --
	(173.92, 89.68) --
	(195.89, 95.82);
\definecolor{drawColor}{RGB}{247,192,26}
\definecolor{fillColor}{RGB}{247,192,26}

\path[draw=drawColor,line width= 0.4pt,line join=round,line cap=round,fill=fillColor] (173.92, 89.69) circle (  1.53);

\path[draw=drawColor,line width= 0.4pt,line join=round,line cap=round,fill=fillColor] ( 64.07, 62.98) circle (  1.53);

\path[draw=drawColor,line width= 0.4pt,line join=round,line cap=round,fill=fillColor] (151.95, 83.64) circle (  1.53);

\path[draw=drawColor,line width= 0.4pt,line join=round,line cap=round,fill=fillColor] (129.98, 78.29) circle (  1.53);

\path[draw=drawColor,line width= 0.4pt,line join=round,line cap=round,fill=fillColor] ( 42.11, 58.34) circle (  1.53);

\path[draw=drawColor,line width= 0.4pt,line join=round,line cap=round,fill=fillColor] ( 86.04, 68.18) circle (  1.53);

\path[draw=drawColor,line width= 0.4pt,line join=round,line cap=round,fill=fillColor] (195.89, 95.80) circle (  1.53);

\path[draw=drawColor,line width= 0.4pt,line join=round,line cap=round,fill=fillColor] (108.01, 72.96) circle (  1.53);
\definecolor{drawColor}{RGB}{126,41,84}

\path[draw=drawColor,line width= 0.4pt,line join=round,line cap=round] (173.92, 87.76) --
	(175.85, 89.68) --
	(173.92, 91.60) --
	(172.00, 89.68) --
	cycle;

\path[draw=drawColor,line width= 0.4pt,line join=round,line cap=round] ( 64.07, 61.09) --
	( 66.00, 63.01) --
	( 64.07, 64.93) --
	( 62.15, 63.01) --
	cycle;

\path[draw=drawColor,line width= 0.4pt,line join=round,line cap=round] (151.95, 81.73) --
	(153.88, 83.66) --
	(151.95, 85.58) --
	(150.03, 83.66) --
	cycle;

\path[draw=drawColor,line width= 0.4pt,line join=round,line cap=round] (129.98, 76.37) --
	(131.91, 78.29) --
	(129.98, 80.21) --
	(128.06, 78.29) --
	cycle;

\path[draw=drawColor,line width= 0.4pt,line join=round,line cap=round] ( 42.11, 56.54) --
	( 44.03, 58.47) --
	( 42.11, 60.39) --
	( 40.18, 58.47) --
	cycle;

\path[draw=drawColor,line width= 0.4pt,line join=round,line cap=round] ( 86.04, 66.22) --
	( 87.97, 68.15) --
	( 86.04, 70.07) --
	( 84.12, 68.15) --
	cycle;

\path[draw=drawColor,line width= 0.4pt,line join=round,line cap=round] (195.89, 93.90) --
	(197.82, 95.82) --
	(195.89, 97.74) --
	(193.97, 95.82) --
	cycle;

\path[draw=drawColor,line width= 0.4pt,line join=round,line cap=round] (108.01, 71.05) --
	(109.94, 72.98) --
	(108.01, 74.90) --
	(106.09, 72.98) --
	cycle;
\definecolor{drawColor}{RGB}{86,51,94}

\path[draw=drawColor,line width= 0.4pt,line join=round,line cap=round] (172.39, 88.15) -- (175.46, 91.22);

\path[draw=drawColor,line width= 0.4pt,line join=round,line cap=round] (172.39, 91.22) -- (175.46, 88.15);

\path[draw=drawColor,line width= 0.4pt,line join=round,line cap=round] (171.75, 89.68) -- (176.09, 89.68);

\path[draw=drawColor,line width= 0.4pt,line join=round,line cap=round] (173.92, 87.52) -- (173.92, 91.85);

\path[draw=drawColor,line width= 0.4pt,line join=round,line cap=round] ( 62.54, 61.50) -- ( 65.61, 64.56);

\path[draw=drawColor,line width= 0.4pt,line join=round,line cap=round] ( 62.54, 64.56) -- ( 65.61, 61.50);

\path[draw=drawColor,line width= 0.4pt,line join=round,line cap=round] ( 61.91, 63.03) -- ( 66.24, 63.03);

\path[draw=drawColor,line width= 0.4pt,line join=round,line cap=round] ( 64.07, 60.86) -- ( 64.07, 65.20);

\path[draw=drawColor,line width= 0.4pt,line join=round,line cap=round] (150.42, 82.11) -- (153.49, 85.18);

\path[draw=drawColor,line width= 0.4pt,line join=round,line cap=round] (150.42, 85.18) -- (153.49, 82.11);

\path[draw=drawColor,line width= 0.4pt,line join=round,line cap=round] (149.79, 83.64) -- (154.12, 83.64);

\path[draw=drawColor,line width= 0.4pt,line join=round,line cap=round] (151.95, 81.47) -- (151.95, 85.81);

\path[draw=drawColor,line width= 0.4pt,line join=round,line cap=round] (128.45, 76.75) -- (131.52, 79.82);

\path[draw=drawColor,line width= 0.4pt,line join=round,line cap=round] (128.45, 79.82) -- (131.52, 76.75);

\path[draw=drawColor,line width= 0.4pt,line join=round,line cap=round] (127.82, 78.29) -- (132.15, 78.29);

\path[draw=drawColor,line width= 0.4pt,line join=round,line cap=round] (129.98, 76.12) -- (129.98, 80.46);

\path[draw=drawColor,line width= 0.4pt,line join=round,line cap=round] ( 40.57, 57.03) -- ( 43.64, 60.10);

\path[draw=drawColor,line width= 0.4pt,line join=round,line cap=round] ( 40.57, 60.10) -- ( 43.64, 57.03);

\path[draw=drawColor,line width= 0.4pt,line join=round,line cap=round] ( 39.94, 58.56) -- ( 44.27, 58.56);

\path[draw=drawColor,line width= 0.4pt,line join=round,line cap=round] ( 42.11, 56.39) -- ( 42.11, 60.73);

\path[draw=drawColor,line width= 0.4pt,line join=round,line cap=round] ( 84.51, 66.65) -- ( 87.58, 69.71);

\path[draw=drawColor,line width= 0.4pt,line join=round,line cap=round] ( 84.51, 69.71) -- ( 87.58, 66.65);

\path[draw=drawColor,line width= 0.4pt,line join=round,line cap=round] ( 83.88, 68.18) -- ( 88.21, 68.18);

\path[draw=drawColor,line width= 0.4pt,line join=round,line cap=round] ( 86.04, 66.01) -- ( 86.04, 70.35);

\path[draw=drawColor,line width= 0.4pt,line join=round,line cap=round] (194.36, 94.28) -- (197.43, 97.34);

\path[draw=drawColor,line width= 0.4pt,line join=round,line cap=round] (194.36, 97.34) -- (197.43, 94.28);

\path[draw=drawColor,line width= 0.4pt,line join=round,line cap=round] (193.72, 95.81) -- (198.06, 95.81);

\path[draw=drawColor,line width= 0.4pt,line join=round,line cap=round] (195.89, 93.64) -- (195.89, 97.98);

\path[draw=drawColor,line width= 0.4pt,line join=round,line cap=round] (106.48, 71.44) -- (109.55, 74.51);

\path[draw=drawColor,line width= 0.4pt,line join=round,line cap=round] (106.48, 74.51) -- (109.55, 71.44);

\path[draw=drawColor,line width= 0.4pt,line join=round,line cap=round] (105.85, 72.98) -- (110.18, 72.98);

\path[draw=drawColor,line width= 0.4pt,line join=round,line cap=round] (108.01, 70.81) -- (108.01, 75.14);
\definecolor{fillColor}{RGB}{78,155,133}

\path[fill=fillColor] (173.92, 92.04) --
	(175.99, 88.46) --
	(171.86, 88.46) --
	cycle;

\path[fill=fillColor] ( 64.07, 65.38) --
	( 66.14, 61.80) --
	( 62.01, 61.80) --
	cycle;

\path[fill=fillColor] (151.95, 86.00) --
	(154.02, 82.43) --
	(149.89, 82.43) --
	cycle;

\path[fill=fillColor] (129.98, 80.65) --
	(132.05, 77.07) --
	(127.92, 77.07) --
	cycle;

\path[fill=fillColor] ( 42.11, 60.81) --
	( 44.17, 57.23) --
	( 40.04, 57.23) --
	cycle;

\path[fill=fillColor] ( 86.04, 70.55) --
	( 88.11, 66.97) --
	( 83.98, 66.97) --
	cycle;

\path[fill=fillColor] (195.89, 98.18) --
	(197.96, 94.61) --
	(193.83, 94.61) --
	cycle;

\path[fill=fillColor] (108.01, 75.34) --
	(110.08, 71.76) --
	(105.95, 71.76) --
	cycle;
\definecolor{fillColor}{RGB}{37,122,164}

\path[fill=fillColor] (172.39, 88.13) --
	(175.46, 88.13) --
	(175.46, 91.19) --
	(172.39, 91.19) --
	cycle;

\path[fill=fillColor] ( 62.54, 61.61) --
	( 65.61, 61.61) --
	( 65.61, 64.67) --
	( 62.54, 64.67) --
	cycle;

\path[fill=fillColor] (150.42, 82.11) --
	(153.49, 82.11) --
	(153.49, 85.18) --
	(150.42, 85.18) --
	cycle;

\path[fill=fillColor] (128.45, 76.74) --
	(131.52, 76.74) --
	(131.52, 79.81) --
	(128.45, 79.81) --
	cycle;

\path[fill=fillColor] ( 40.57, 56.96) --
	( 43.64, 56.96) --
	( 43.64, 60.02) --
	( 40.57, 60.02) --
	cycle;

\path[fill=fillColor] ( 84.51, 66.63) --
	( 87.58, 66.63) --
	( 87.58, 69.70) --
	( 84.51, 69.70) --
	cycle;

\path[fill=fillColor] (194.36, 94.27) --
	(197.43, 94.27) --
	(197.43, 97.34) --
	(194.36, 97.34) --
	cycle;

\path[fill=fillColor] (106.48, 71.43) --
	(109.55, 71.43) --
	(109.55, 74.49) --
	(106.48, 74.49) --
	cycle;
\end{scope}
\begin{scope}
\path[clip] (  0.00,  0.00) rectangle (209.58,115.63);
\definecolor{drawColor}{RGB}{0,0,0}

\path[draw=drawColor,line width= 0.5pt,line join=round] ( 34.42, 47.09) --
	( 34.42,113.63);

\path[draw=drawColor,line width= 0.5pt,line join=round] ( 35.55,111.66) --
	( 34.42,113.63) --
	( 33.28,111.66);
\end{scope}
\begin{scope}
\path[clip] (  0.00,  0.00) rectangle (209.58,115.63);
\definecolor{drawColor}{gray}{0.30}

\node[text=drawColor,anchor=base east,inner sep=0pt, outer sep=0pt, scale=  0.70] at ( 30.37, 48.38) {1e-03};

\node[text=drawColor,anchor=base east,inner sep=0pt, outer sep=0pt, scale=  0.70] at ( 30.37, 63.16) {1e-01};

\node[text=drawColor,anchor=base east,inner sep=0pt, outer sep=0pt, scale=  0.70] at ( 30.37, 77.95) {1e+01};

\node[text=drawColor,anchor=base east,inner sep=0pt, outer sep=0pt, scale=  0.70] at ( 30.37, 92.74) {1e+03};

\node[text=drawColor,anchor=base east,inner sep=0pt, outer sep=0pt, scale=  0.70] at ( 30.37,107.52) {1e+05};
\end{scope}
\begin{scope}
\path[clip] (  0.00,  0.00) rectangle (209.58,115.63);
\definecolor{drawColor}{gray}{0.20}

\path[draw=drawColor,line width= 0.5pt,line join=round] ( 32.17, 50.79) --
	( 34.42, 50.79);

\path[draw=drawColor,line width= 0.5pt,line join=round] ( 32.17, 65.58) --
	( 34.42, 65.58);

\path[draw=drawColor,line width= 0.5pt,line join=round] ( 32.17, 80.36) --
	( 34.42, 80.36);

\path[draw=drawColor,line width= 0.5pt,line join=round] ( 32.17, 95.15) --
	( 34.42, 95.15);

\path[draw=drawColor,line width= 0.5pt,line join=round] ( 32.17,109.94) --
	( 34.42,109.94);
\end{scope}
\begin{scope}
\path[clip] (  0.00,  0.00) rectangle (209.58,115.63);
\definecolor{drawColor}{RGB}{0,0,0}

\path[draw=drawColor,line width= 0.5pt,line join=round] ( 34.42, 47.09) --
	(203.58, 47.09);

\path[draw=drawColor,line width= 0.5pt,line join=round] (201.61, 45.95) --
	(203.58, 47.09) --
	(201.61, 48.23);
\end{scope}
\begin{scope}
\path[clip] (  0.00,  0.00) rectangle (209.58,115.63);
\definecolor{drawColor}{gray}{0.20}

\path[draw=drawColor,line width= 0.5pt,line join=round] ( 64.07, 44.84) --
	( 64.07, 47.09);

\path[draw=drawColor,line width= 0.5pt,line join=round] (108.01, 44.84) --
	(108.01, 47.09);

\path[draw=drawColor,line width= 0.5pt,line join=round] (151.95, 44.84) --
	(151.95, 47.09);

\path[draw=drawColor,line width= 0.5pt,line join=round] (195.89, 44.84) --
	(195.89, 47.09);
\end{scope}
\begin{scope}
\path[clip] (  0.00,  0.00) rectangle (209.58,115.63);
\definecolor{drawColor}{gray}{0.30}

\node[text=drawColor,anchor=base,inner sep=0pt, outer sep=0pt, scale=  0.70] at ( 64.07, 38.22) {4};

\node[text=drawColor,anchor=base,inner sep=0pt, outer sep=0pt, scale=  0.70] at (108.01, 38.22) {8};

\node[text=drawColor,anchor=base,inner sep=0pt, outer sep=0pt, scale=  0.70] at (151.95, 38.22) {12};

\node[text=drawColor,anchor=base,inner sep=0pt, outer sep=0pt, scale=  0.70] at (195.89, 38.22) {16};
\end{scope}
\begin{scope}
\path[clip] (  0.00,  0.00) rectangle (209.58,115.63);
\definecolor{drawColor}{RGB}{0,0,0}

\node[text=drawColor,anchor=base,inner sep=0pt, outer sep=0pt, scale=  0.80] at (119.00, 29.10) {$n$};
\end{scope}
\begin{scope}
\path[clip] (  0.00,  0.00) rectangle (209.58,115.63);
\definecolor{drawColor}{RGB}{0,0,0}

\node[text=drawColor,rotate= 90.00,anchor=base,inner sep=0pt, outer sep=0pt, scale=  0.80] at (  7.51, 80.36) {time (ms)};
\end{scope}
\begin{scope}
\path[clip] (  0.00,  0.00) rectangle (209.58,115.63);

\path[] ( 44.52,  1.00) rectangle (193.48, 25.55);
\end{scope}
\begin{scope}
\path[clip] (  0.00,  0.00) rectangle (209.58,115.63);
\definecolor{drawColor}{RGB}{247,192,26}

\path[draw=drawColor,line width= 0.5pt,line join=round] ( 45.60, 22.45) -- ( 54.28, 22.45);
\end{scope}
\begin{scope}
\path[clip] (  0.00,  0.00) rectangle (209.58,115.63);
\definecolor{drawColor}{RGB}{247,192,26}
\definecolor{fillColor}{RGB}{247,192,26}

\path[draw=drawColor,line width= 0.4pt,line join=round,line cap=round,fill=fillColor] ( 49.94, 22.45) circle (  1.53);
\end{scope}
\begin{scope}
\path[clip] (  0.00,  0.00) rectangle (209.58,115.63);
\definecolor{drawColor}{RGB}{78,155,133}

\path[draw=drawColor,line width= 0.5pt,dash pattern=on 4pt off 4pt ,line join=round] (121.35, 22.45) -- (130.02, 22.45);
\end{scope}
\begin{scope}
\path[clip] (  0.00,  0.00) rectangle (209.58,115.63);
\definecolor{fillColor}{RGB}{78,155,133}

\path[fill=fillColor] (125.68, 24.84) --
	(127.75, 21.26) --
	(123.62, 21.26) --
	cycle;
\end{scope}
\begin{scope}
\path[clip] (  0.00,  0.00) rectangle (209.58,115.63);
\definecolor{drawColor}{RGB}{37,122,164}

\path[draw=drawColor,line width= 0.5pt,dash pattern=on 1pt off 3pt ,line join=round] ( 45.60, 13.27) -- ( 54.28, 13.27);
\end{scope}
\begin{scope}
\path[clip] (  0.00,  0.00) rectangle (209.58,115.63);
\definecolor{fillColor}{RGB}{37,122,164}

\path[fill=fillColor] ( 48.41, 11.74) --
	( 51.47, 11.74) --
	( 51.47, 14.81) --
	( 48.41, 14.81) --
	cycle;
\end{scope}
\begin{scope}
\path[clip] (  0.00,  0.00) rectangle (209.58,115.63);
\definecolor{drawColor}{RGB}{86,51,94}

\path[draw=drawColor,line width= 0.5pt,dash pattern=on 7pt off 3pt ,line join=round] (121.35, 13.27) -- (130.02, 13.27);
\end{scope}
\begin{scope}
\path[clip] (  0.00,  0.00) rectangle (209.58,115.63);
\definecolor{drawColor}{RGB}{86,51,94}

\path[draw=drawColor,line width= 0.4pt,line join=round,line cap=round] (124.15, 11.74) -- (127.22, 14.81);

\path[draw=drawColor,line width= 0.4pt,line join=round,line cap=round] (124.15, 14.81) -- (127.22, 11.74);

\path[draw=drawColor,line width= 0.4pt,line join=round,line cap=round] (123.51, 13.27) -- (127.85, 13.27);

\path[draw=drawColor,line width= 0.4pt,line join=round,line cap=round] (125.68, 11.10) -- (125.68, 15.44);
\end{scope}
\begin{scope}
\path[clip] (  0.00,  0.00) rectangle (209.58,115.63);
\definecolor{drawColor}{RGB}{126,41,84}

\path[draw=drawColor,line width= 0.5pt,dash pattern=on 2pt off 2pt on 6pt off 2pt ,line join=round] ( 45.60,  4.09) -- ( 54.28,  4.09);
\end{scope}
\begin{scope}
\path[clip] (  0.00,  0.00) rectangle (209.58,115.63);
\definecolor{drawColor}{RGB}{126,41,84}

\path[draw=drawColor,line width= 0.4pt,line join=round,line cap=round] ( 49.94,  2.17) --
	( 51.86,  4.09) --
	( 49.94,  6.01) --
	( 48.02,  4.09) --
	cycle;
\end{scope}
\begin{scope}
\path[clip] (  0.00,  0.00) rectangle (209.58,115.63);
\definecolor{drawColor}{RGB}{0,0,0}

\node[text=drawColor,anchor=base west,inner sep=0pt, outer sep=0pt, scale=  0.70] at ( 55.36, 20.04) {DECOR+};
\end{scope}
\begin{scope}
\path[clip] (  0.00,  0.00) rectangle (209.58,115.63);
\definecolor{drawColor}{RGB}{0,0,0}

\node[text=drawColor,anchor=base west,inner sep=0pt, outer sep=0pt, scale=  0.70] at (131.10, 20.04) {DECOR+ (SBF)};
\end{scope}
\begin{scope}
\path[clip] (  0.00,  0.00) rectangle (209.58,115.63);
\definecolor{drawColor}{RGB}{0,0,0}

\node[text=drawColor,anchor=base west,inner sep=0pt, outer sep=0pt, scale=  0.70] at ( 55.36, 10.86) {DECOR+ (LGF)};
\end{scope}
\begin{scope}
\path[clip] (  0.00,  0.00) rectangle (209.58,115.63);
\definecolor{drawColor}{RGB}{0,0,0}

\node[text=drawColor,anchor=base west,inner sep=0pt, outer sep=0pt, scale=  0.70] at (131.10, 10.86) {DECOR+ (SCSF)};
\end{scope}
\begin{scope}
\path[clip] (  0.00,  0.00) rectangle (209.58,115.63);
\definecolor{drawColor}{RGB}{0,0,0}

\node[text=drawColor,anchor=base west,inner sep=0pt, outer sep=0pt, scale=  0.70] at ( 55.36,  1.68) {DECOR+ (SMCF)};
\end{scope}
\end{tikzpicture}